\newcommand\goto{\rightarrow}
\numberwithin{equation}{section}
\DeclareMathOperator*{\plim}{plim}
\DeclareMathOperator*{\argmin}{argmin}
\DeclareMathOperator*{\PP}{\mathbb{P}}
\DeclareMathOperator*{\E}{\mathbb{E}}
\DeclareMathOperator{\prox}{\textup{prox}}
\DeclareMathOperator{\supp}{supp}
\DeclareMathOperator{\sgn}{sign}
\def\ve{\varepsilon}
\def\im{{\rm im}}
\newcommand{\supps}{\mathrm{supp}^{\star}}
\newcommand{\R}{\mathbb{R}}
\newcommand{\be}{\begin{equation}}
	\newcommand{\ee}{\end{equation}}
\newcommand{\ben}{\begin{equation*}}
	\newcommand{\een}{\end{equation*}}
\newcommand{\norm}[1]{\lVert#1\rVert}
\renewcommand{\mathbf}{\bm}
\newcommand{\bet}{\boldsymbol{\beta}}
\newcommand{\p}{p}
\newcommand{\n}{n}
\newcommand{\B}{\mathbf{B}}
\newcommand{\X}{\mathbf{X}}
\newcommand{\w}{\mathbf{w}}
\newcommand{\y}{\mathbf{y}}
\newcommand{\x}{\mathbf{x}}
\newcommand{\rr}{\mathbf{r}}
\newcommand{\blam}{\boldsymbol \lambda}
\newcommand{\thet}{\boldsymbol \theta}
\newcommand{\z}{\mathbf{z}}
\newcommand{\F}{\textsf{F}}
\newcommand{\bfalph}{\boldsymbol \alpha}
\title{Algorithmic Analysis and Statistical Estimation of SLOPE via Approximate Message Passing}
\author{Zhiqi Bu\thanks{Department of Applied Mathematics and Computational Science, University of Pennsylvania, Philadelphia, PA 19104, USA. Email: {\tt zbu@sas.upenn.edu} } 
 \and Jason Klusowski\thanks{Department of Statistics, Rutgers University, New Brunswick, NJ 08854, USA. Email: {\tt jason.klusowski@rutgers.edu}} 
 \and Cynthia Rush\thanks{Department of Statistics, Columbia University, New York, NY 10027, USA. Email: {\tt cynthia.rush@columbia.edu}} 
  \and Weijie Su\thanks{Department of Statistics, University of Pennsylvania, Philadelphia, PA 19104, USA. Email: {\tt suw@wharton.upenn.edu }  This work was supported in part by NSF $\#1217023$.} 
}
\newtheorem{theorem}{Theorem}
\newtheorem{othertheorem}{othertheorem}[section]
\newtheorem{lemma}[othertheorem]{Lemma}
\newtheorem{corollary}[othertheorem]{Corollary}
\newtheorem{proposition}[othertheorem]{Proposition}
\newtheorem{definition}[othertheorem]{Definition}
\newtheorem{fact}[othertheorem]{Fact}
\newtheorem{rem}[othertheorem]{Remark}
\newtheorem{theoremG}{Theorem G.\ignorespaces}
\begin{document}
\maketitle

\begin{abstract}
SLOPE is a relatively new convex optimization procedure for high-dimensional linear regression via the sorted $\ell_1$ penalty: the larger the rank of the fitted coefficient, the larger the penalty. This non-separable penalty renders many existing techniques invalid or inconclusive in analyzing the SLOPE solution. In this paper, we develop an asymptotically exact characterization of the SLOPE solution under Gaussian random designs through solving the SLOPE problem using approximate message passing (AMP). This algorithmic approach allows us to approximate the SLOPE solution via the much more amenable AMP iterates. Explicitly, we characterize the asymptotic dynamics of the AMP iterates relying on a recently developed state evolution analysis for non-separable penalties, thereby overcoming the difficulty caused by the sorted $\ell_1$ penalty. Moreover, we prove that the AMP iterates converge to the SLOPE solution in an asymptotic sense, and numerical simulations show that the convergence is surprisingly fast. Our proof rests on a novel technique that specifically leverages the SLOPE problem. In contrast to prior literature, our work not only yields an asymptotically sharp analysis but also offers an algorithmic, flexible, and constructive approach to understanding the SLOPE problem.
\end{abstract}


\section{Introduction}\label{sec_intro}

Consider observing linear measurements $\y \in \R^{\n}$ that are modeled by the equation
\be
\y = \X \bet + \w,
\label{eq:model}
\ee
where $\X \in \R^{\n \times \p}$ is a known measurement matrix, $\bet \in \R^p$ is an unknown signal, and $\w \in \R^{\n}$ is the measurement noise. Among numerous methods that seek to recover the signal $\bet$ from the observed data, especially in the setting where $\bet$ is sparse and $p$ is larger than $n$, SLOPE has recently emerged as a useful procedure that allows for estimation and model selection \cite{SLOPE1}. This method reconstructs the signal by solving the minimization problem
\be
\widehat{\bet} := \arg \min_{\bm b} ~ \frac{1}{2}\| \y - \X \bm b\|^2 + \sum_{i=1}^{\p} \lambda_i|\bm b|_{(i)},
\label{eq:SLOPE_est}
\ee
where $\|\cdot\|$ denotes the $\ell_2$ norm, $\lambda_1\geq \cdots \geq \lambda_p\geq 0$ (with at least one strict inequality) is a sequence of thresholds, and $|\bm b|_{(1)}\geq \cdots \geq |\bm b|_{(p)}$ are the order statistics of the fitted coefficients in absolute value. The regularizer $\sum \lambda_i|\bm b|_{(i)}$ is a \textit{sorted $\ell_1$-norm} (denoted as $J_{\blam}(\bm b)$ henceforth), which is \textit{non-separable} due to the sorting operation involved in its calculation. Notably, SLOPE has two attractive features that are not simultaneously present in other methods for linear regression including the LASSO \cite{lasso_paper} and knockoffs \cite{barber2015controlling}. Explicitly, on the estimation side, SLOPE achieves minimax estimation properties under certain random designs \textit{without} requiring any knowledge of the sparsity degree of $\bet$ \cite{SLOPE2,bellec2018slope}. On the testing side, SLOPE controls the false discovery rate in the case of independent
predictors \cite{SLOPE1,brzyski2018group}. For completeness, we remark that \cite{bondell2008simultaneous,zeng2014decreasing,figueiredo2016ordered} proposed similar non-separable regularizers to encourage grouping of correlated
predictors.

This work is concerned with the algorithmic aspects of SLOPE through the lens of \textit{approximate message passing} (AMP) \cite{amp2, amp1, krz12, Rangan11}. AMP is a class of computationally efficient and easy-to-implement algorithms for a broad range of statistical estimation problems, including compressed sensing and the LASSO \cite{lassorisk}. When applied to SLOPE, AMP takes the following form: at initial iteration $t=0$, assign $\bet^{0}=\bm 0, \z^{0}=\bm y$, and for $t \geq 0,$
\begin{subequations}\label{eq:amp_slope}
\begin{align}
\bet^{t+1}&=\prox_{J_{\thet_t}}(\X^\top \z^t+\bet^t),
\label{eq:AMP0}
\\
\z^{t+1}&=\y-\X\bet^{t+1}+\frac{\z^{t}}{n}\Big[\nabla \prox_{J_{\thet_{t}}} (\X^\top \z^{t}+\bet^{t})\Big].   \label{eq:AMP1}
\end{align}
\end{subequations}
The non-increasing sequence $\thet_t$ is proportional to $\bm\lambda = (\lambda_1, \lambda_2, \ldots, \lambda_p)$ and will be given explicitly in Section \ref{sec:calibr-betw-bmlambda}. Here, $\prox_{J_{\thet}}$ is the proximal operator of the sorted $\ell_1$ norm, that is,
\be
\label{eq:prox}
\prox_{J_{\thet}}(\bm x) := \argmin_{\bm b} ~\frac12 \|\bm x - \bm b\|^2 + J_{\thet}(\bm b),
\ee
and $\nabla \prox_{J_{\theta}}$ denotes the divergence of the proximal operator (see an equivalent, but more explicit form, of this algorithm in Section~\ref{sec:calibr-betw-bmlambda} and further discussion of SLOPE and the prox operator in Section~\ref{sec:prel-slope-amp}). Compared to the proximal gradient descent (ISTA) \cite{chambolle1998nonlinear,daubechies2004iterative,parikh2014proximal}, AMP has an extra correction term in its residual step that adjusts the iteration in a non-trivial way and seeks to provide improved convergence performance \cite{amp1}.

\begin{figure}
	\begin{minipage}{0.53\textwidth}
		\includegraphics[width=\textwidth]{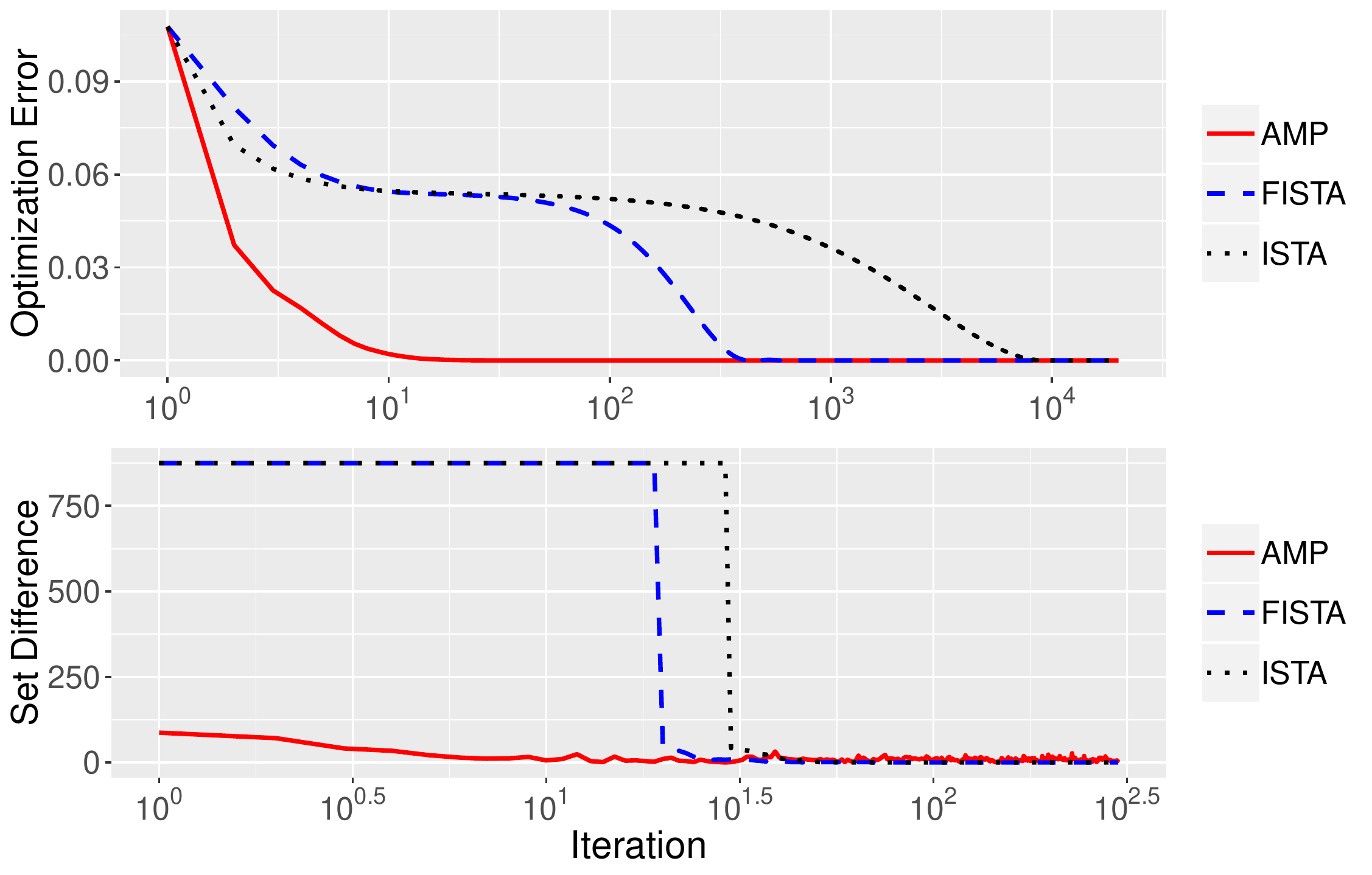}
		\captionof{figure}{Optimization errors, $||\bet^t-\widehat{\bet}||^2/\p$, and (symmetric) set difference of $\supp(\bet^t)$ and $\supp(\widehat{\bet})$.}
		\label{fig:AMPfaster}
	\end{minipage}
\hfill
	\begin{minipage}{0.45\textwidth}
			\resizebox{\linewidth}{!}{
			\begin{tabular}{lc|ccccc}
				\toprule 
				&&\multicolumn{5}{c}{Optimization errors}
				\\
				\midrule 
				& Set Diff & $10^{-2}$ &$10^{-3}$  & $10^{-4}$& $10^{-5}$& $10^{-6}$\\
				\midrule 
				ISTA  & 60 & 4048  & 7326  &   8569   & 9007 &9161\\
				FISTA & 47 & 275  & 374   &   412   & 593 &604 \\
				AMP   & 30 &  6   & 13     &  22    & 32  &40 \\
				\bottomrule 
			\end{tabular}
		}
	\captionof{table}{First iteration $t$ for which there is zero set difference or optimization error $||\bet^t-\widehat{\bet}||^2/\p$ falls below a threshold.}
	\label{tab:table1}
	\caption*{Figure \ref{fig:AMPfaster} and Table \ref{tab:table1} Details: Design $X$ is $500 \times 1000$ with i.i.d.\ $\mathcal{N}(0,1/500)$ entries. True signal $\bm\beta$ is i.i.d.\ Gaussian-Bernoulli: $\mathcal{N}(0, 1)$ with probability $0.1$ and 0 otherwise. Noise variance $\sigma^2_w=0$.  A careful calibration between the thresholds $\bm \theta_t$ in AMP and $\blam$ is SLOPE is used (details in Sec.\ \ref{sec:calibr-betw-bmlambda}).}
	\end{minipage}
\end{figure}

The \textit{empirical} performance of AMP in solving SLOPE under i.i.d.\ Gaussian matrix $\X$ is illustrated in Figure~\ref{fig:AMPfaster} and Table~\ref{tab:table1}, which suggest the superiority of AMP over ISTA and FISTA \cite{beck2009fast}---perhaps the two most popular proximal gradient descent methods---in terms of speed of convergence in this setting. However, the vast AMP literature thus far remains silent on whether AMP \textit{provably} solves SLOPE and, if so, whether one can leverage AMP to get insights into the statistical properties of SLOPE. This vacuum in the literature is due to the \textit{non-separability} of the SLOPE regularizer, making it a major challenge to apply AMP to SLOPE directly. In stark contrast, AMP theory has been rigorously applied to the LASSO \cite{lassorisk}, showing both good empirical performance and nice theoretical properties of solving the LASSO using
AMP. Moreover, AMP in this setting allows for asymptotically exact statistical characterization of its output, which converges to the LASSO solution, thereby providing a powerful tool in fine-grained analyses of the LASSO \cite{lassorisk2, lassopath, lassoamp,rush1}.

\textbf{Main contributions}. In this work, we prove that the AMP algorithm \eqref{eq:amp_slope} solves the SLOPE problem in an asymptotically \textit{exact} sense under independent Gaussian random designs. Our proof uses the recently extended AMP theory for non-separable denoisers \cite{nonseparable} and applies this tool to derive the state evolution that describes the asymptotically exact behaviors of the AMP iterates $\bm\beta^t$ in \eqref{eq:amp_slope}. The next step, which is the core of our proof, is to relate the AMP estimates to the SLOPE solution. This presents several challenges that \textit{cannot} be resolved only within the AMP framework. In particular, unlike the LASSO, the number of nonzeros in the SLOPE solution can exceed the number of observations. This fact imposes substantially more difficulties on showing that the distance between the
SLOPE solution and the AMP iterates goes to zero than in the LASSO case due to the possible \textit{non-strong convexity} of the SLOPE problem, even restricted to the solution support. To overcome these challenges, we develop novel techniques that are tailored to the characteristics of the SLOPE solution. For example, our proof relies on the crucial property of SLOPE that the \textit{unique} nonzero components of its solution never outnumber the observation units.

As a byproduct, our analysis gives rise to an \textit{exact} asymptotic characterization of the SLOPE solution under independent Gaussian random designs through leveraging the statistical aspect of the AMP theory. In more detail, the probability distribution of the SLOPE solution is completely specified by a few parameters that are the solution to a certain fixed-point equation in an asymptotic sense. This provides a powerful tool for fine-grained statistical analysis of SLOPE as it was for the LASSO problem. We note that a recent paper \cite{SLOPEasymptotic}---which takes an entirely different path---gives an asymptotic characterization of the SLOPE solution that matches our asymptotic analysis deduced from our AMP theory for SLOPE. However, our AMP-based approach is more
algorithmic in nature and offers a more concrete connection between the finite-sample behaviors of the SLOPE problem and its asymptotic distribution via the computationally efficient AMP algorithm.

\textbf{Paper outline}. In Section~\ref{sec:calibr-betw-bmlambda} we develop an AMP algorithm for finding the SLOPE estimator in \eqref{eq:SLOPE_est}.  Specifically, it is through the threshold values $\bm \theta_t$ in the AMP algorithm in \eqref{eq:amp_slope} that one can ensure the AMP estimates converge to the SLOPE estimator with parameter $\blam$, so in Section~\ref{sec:calibr-betw-bmlambda} we provide details for how one should calibrate the thresholds of the AMP iterations in \eqref{eq:amp_slope} in order for the algorithm to solve SLOPE cost in \eqref{eq:SLOPE_est}.  Then in Section \ref{sec:asympt-char-slope}, we state theoretical guarantees showing that the AMP algorithm solves the SLOPE optimization asymptotically and we leverage theoretical guarantees for the AMP algorithm to exactly characterize the mean square error (more generally, any pseudo-Lipschitz error) of the SLOPE estimator in the large system limit. 
This is done by applying recent theoretical results for AMP algorithms that use a non-separable non-linearity \cite{nonseparable}, like the one in \eqref{eq:amp_slope}.  Finally, Sections~\ref{sec_mainproof}-\ref{sec:conds} prove rigorously the theoretical results stated in Section~\ref{sec:asympt-char-slope} and we end with a discussion in Section~\ref{sec_discussion}.


\section{Algorithmic Development}
\label{sec:calibr-betw-bmlambda}

To begin with, we state assumptions under which our theoretical results will hold and give some preliminary ideas about SLOPE that will be useful in the development of the AMP algorithm.  

\textbf{Assumptions.} Concerning the linear model \eqref{eq:model} and parameter vector in \eqref{eq:SLOPE_est}, we assume:

\begin{itemize}
\item[\textbf{(A1)}] The measurement matrix $\X$ has independent and identically-distributed (i.i.d.) Gaussian entries that have mean $0$ and variance $1/{\n}$.
\item[\textbf{(A2)}]  The signal $\bet$ has elements that are i.i.d.\ $B$, with $\E(B^2\max\{0,\log B\}) < \infty$.
\item[\textbf{(A3)}] 
The noise $\bm w$ is elementwise i.i.d.\ $W$, with $\sigma_w^2:=\E(W^2)<\infty$.
\item[\textbf{(A4)}]\label{assumptionA4} The vector $\blam(\p) = (\lambda_1, \ldots, \lambda_p)$ is elementwise i.i.d.\ $\Lambda$, with $\E(\Lambda^2)<\infty$ and $\min\{\blam(\p)\} > 0$.

\item[\textbf{(A5)}] The ratio $n/\p$ approaches a constant $\delta \in (0,\infty)$ in the large system limit, as $\n, \p \rightarrow \infty$.
\end{itemize}
\textbf{Remark: (A4)} can be relaxed as $\lambda_1,\ldots,\lambda_p$ having an empirical distribution that converges weakly to probability measure $\Lambda$ on $\R$ with $\E(\Lambda^2)<\infty$ and $\|\blam(p)\|^2/p\to\E(\Lambda^2)$ and $\min\{\blam(\p)\} > 0$.  A similar relaxation can be made for the distributional assumptions \textbf{(A2)} and \textbf{(A3)}.

\textbf{SLOPE preliminaries.}
For a vector $\mathbf{v} \in \R^{\p}$, the divergence of the proximal operator, $\nabla\prox_{f}(\mathbf{v})$, is given by the following:
\begin{align}
\nabla  \prox_{f}(\mathbf{v})&:= \sum_{i=1}^{\p} \frac{\partial }{\partial v_i}  [\prox_{f}(\mathbf{v})]_i =  \Big(\frac{\partial}{\partial v_1},\frac{\partial}{\partial v_2}, \ldots, \frac{\partial}{\partial v_{\p}}\Big)\cdot \prox_{f}(\mathbf{v}),
\end{align}
where \cite[proof of Fact 3.4]{SLOPE2},
\begin{align}
\frac{\partial [ \prox_{J_{\blam}}(\mathbf{v})]_i }{\partial v_j}=
\begin{cases}
\frac{\text{sign}([ \prox_{J_{\blam}}(\mathbf{v})]_i ) \cdot\text{sign}([ \prox_{J_{\blam}}(\mathbf{v})]_j)}{\text{\#\{$1\leq k\leq \p: | [ \prox_{J_{\blam}}(\mathbf{v})]_k  |=| [ \prox_{J_{\blam}}(\mathbf{v})]_j  |$\}}}, &\text{if } | [\prox_{J_{\blam}}(\mathbf{v})]_j|=\lvert [ \prox_{J_{\blam}}(\mathbf{v})]_i \lvert, \\
0, &\text{otherwise}.
\end{cases}
\label{eq:dif_prox}
\end{align}
Hence the divergence takes the simplified form
\begin{align}
\nabla\prox_{J_{\blam}}(\mathbf{v})&=\|\prox_{J_{\blam}}(\mathbf{v})\|_0^*,
\label{eq:div_prox}
\end{align}
where $\|\cdot\|_0^*$ counts the unique non-zero magnitudes in a vector, e.g.\ $\|(0, 1, -2, 0, 2)\|_0^* = 2$. This explicit form of divergence not only waives the need to use approximation in calculation but also speed up the recursion, since it only depends on the proximal operator as a whole instead of on $\thet_{t-1},\X,\z^{t-1},\bet^{t-1}$. Therefore, we have
\begin{lemma}
In AMP, \eqref{eq:AMP1} is equivalent to
$
\z^{t+1}=\y-\X\bet^{t+1}+\frac{\z^{t}}{\delta \p}\|\bet^{t+1}\|_0^*.
$
\end{lemma}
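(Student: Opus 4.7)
The plan is to observe that this lemma is essentially a direct substitution: one plugs the closed-form expression for the divergence of $\prox_{J_{\thet}}$ from \eqref{eq:div_prox} into the residual update \eqref{eq:AMP1}, then identifies the resulting proximal evaluation with $\bet^{t+1}$ via \eqref{eq:AMP0}, and finally uses the calibration $n = \delta p$ (the ratio fixed at $\delta$ in the finite-sample statement, or in the large-system limit under assumption (A5)) to convert the $1/n$ prefactor into $1/(\delta p)$.

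First I would note that by \eqref{eq:AMP0}, the argument of the proximal operator appearing inside the divergence term in \eqref{eq:AMP1} is exactly the vector that produces $\bet^{t+1}$, i.e.\ $\prox_{J_{\thet_t}}(\X^\top \z^t + \bet^t) = \bet^{t+1}$. Next, I would invoke the identity \eqref{eq:div_prox}, which was derived from the partial-derivative formula \eqref{eq:dif_prox} for the sorted $\ell_1$ prox, to write
\begin{equation*}
\nabla \prox_{J_{\thet_t}}(\X^\top \z^t + \bet^t) \;=\; \bigl\|\prox_{J_{\thet_t}}(\X^\top \z^t + \bet^t)\bigr\|_0^{\ast} \;=\; \|\bet^{t+1}\|_0^{\ast}.
\end{equation*}
Substituting this into \eqref{eq:AMP1} replaces the bracketed divergence term with $\|\bet^{t+1}\|_0^{\ast}$, and then writing $n = \delta p$ yields the claimed recursion
\begin{equation*}
\z^{t+1} \;=\; \y - \X\bet^{t+1} + \frac{\z^t}{\delta p}\,\|\bet^{t+1}\|_0^{\ast}.
\end{equation*}

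Since each step is a direct substitution using results already stated in the excerpt, there is no real obstacle to overcome; the only content in the lemma is the convenient observation that \eqref{eq:div_prox} allows the Onsager correction to be expressed purely in terms of the number of distinct nonzero magnitudes of the current SLOPE iterate, eliminating any need to differentiate through the prox or to approximate the divergence numerically. I would, however, briefly remark on why this reformulation is useful downstream: it decouples the residual update from $\thet_t$, $\X$, $\z^t$, and $\bet^t$ individually, depending only on $\bet^{t+1}$ as a whole, which will simplify both implementation and the later state-evolution analysis.
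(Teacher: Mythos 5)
Your proposal is correct and matches the paper's implicit argument exactly: the paper does not supply a separate proof of this lemma but simply states it immediately after deriving the divergence formula $\nabla\prox_{J_{\blam}}(\mathbf{v})=\|\prox_{J_{\blam}}(\mathbf{v})\|_0^*$ in \eqref{eq:div_prox}, treating it as the obvious substitution you spell out (identify the prox argument with $\bet^{t+1}$ via \eqref{eq:AMP0}, plug in \eqref{eq:div_prox}, and replace $1/n$ by $1/(\delta p)$ under \textbf{(A5)}). You also correctly flag the one subtle point, namely that $n=\delta p$ is the large-system normalization rather than an exact finite-$p$ identity.
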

\noindent
Other details and background on SLOPE and the prox operator are found in Section~\ref{sec:prel-slope-amp}.  Now we discuss the details of an AMP algorithm that can be used for finding the SLOPE estimator in \eqref{eq:SLOPE_est}.

\subsection{AMP Background}
An attractive feature of AMP is that its statistical properties can be exactly characterized at each iteration $t$, at least asymptotically, via a one-dimensional recursion known as state evolution \cite{amp2, nonseparable, rush1,javanmard2013state}.  Specifically, it can be shown that the pseudo-data, meaning the input $\X^\top \z^t+\bet^t$ for the estimate of the unknown signal in \eqref{eq:AMP0}, is asymptotically equal in distribution to the true signal plus independent, Gaussian noise, i.e.\ $\bet + \tau_t \mathbf{Z}$, where the noise variance $\tau_t$ is defined by the state evolution.  For this reason, the function used to update the estimate in \eqref{eq:AMP0}, in our case, the proximal operator, $\prox_{J_{\thet_t}}( \cdot)$, is usually referred to as a `denoiser' in the AMP literature.   

This statistical characterization of the pseudo-data was first rigorously shown to be true in the case of `separable' denoisers by Bayati and Montanari \cite{amp2}, and an analysis of the rate of this convergence was given in \cite{rush1}.  A `separable' denoiser is one that applies the same (possibly non-linear) function to each element of its input.  Recent work \cite{nonseparable} proves that the pseudo-data has distribution  $\bet + \tau_t \mathbf{Z}$ asymptotically, even when the `denoisers' used in the AMP algorithm are non-separable, like the SLOPE prox operator in \eqref{eq:AMP0}.

As mentioned previously, the dynamics of the AMP iterations are tracked by a recursive sequence referred to as the state evolution, defined as follows.    For $\mathbf{B}$ elementwise i.i.d.\ $B$ independent of 
$\mathbf{Z} \sim \mathcal{N}(0, \mathbb{I}_{\p})$, let $\tau_0^2 =  \sigma_w^2 +  \E[B^2]/\delta$ and for $t \geq 0$,
\be
\begin{split}
\tau_{t+1}^2 & = \sigma_w^2 + \lim_{\p} \frac{1}{\delta \p}  \E \norm{  \prox_{J_{\thet_t}}(\B + \tau_t \mathbf{Z}) - \B}^2.
\label{eq:SE2}
\end{split}
\ee
Below we make rigorous the way that the recursion in \eqref{eq:SE2} relates to the AMP iteration  \eqref{eq:amp_slope}.

We note that throughout, we let $ \mathcal{N}(\mu, \sigma^2)$ denote the Gaussian density with mean $\mu$ and variance $\sigma^2$ and we use $\mathbb{I}_{\p}$ to indicate a $\p \times \p$ identity matrix.

\subsection{Analysis of the AMP State Evolution}

As the state evolution \eqref{eq:SE2} predicts the performance of the AMP algorithm  \eqref{eq:amp_slope} (the pseudo-data, $\X^\top \z^t+\bet^t$, is asymptotically equal in distribution $\bet + \tau_t \mathbf{Z}$), it is of interest to study the large $t$ asymptotics of \eqref{eq:SE2}.  Moreover, recall that through the sequence of thresholds $\thet_t$, one can relate the AMP algorithm to the SLOPE estimator in \eqref{eq:SLOPE_est} for a specific $\blam$, and the explicit form of this calibration, given in Section~\ref{sec:calibration}, is motivated by such asymptotic analysis of the state evolution.

 It turns out that a finite-size approximation, which we denote $\tau_{t}^2(\p)$, will be easier to analyze than \eqref{eq:SE2}.  The definition of $\tau_{t+1}^2(\p)$ is stated explicitly in \eqref{eq:SE2_new} below.
 Throughout the work, we will define thresholds $\thet_t := \bfalph \tau_{t}(\p)$ for every iteration $t$ where the vector $\bfalph$ is fixed via a calibration made explicit in Section~\ref{sec:calibration}. 
We can interpret this to mean that within the AMP algorithm, $\bfalph$ plays the role of the regularizer parameter, $\blam$. 
Now we define $\tau_{t+1}^2(\p)$, for large $\p$, as a finite-sample approximation to \eqref{eq:SE2}, namely
\be
\begin{split}
	\tau_{t+1}^2(\p) & = \sigma_w^2 +  \frac{1}{\delta \p}  \E \norm{  \prox_{J_{\bfalph \tau_t(\p)}}(\bet + \tau_t(\p) \mathbf{Z}) - \bet}^2,
	\label{eq:SE2_new}
\end{split}
\ee
where the difference between \eqref{eq:SE2_new} and the state evolution \eqref{eq:SE2} is via the large system limit in $\p$.  When we refer to the recursion in \eqref{eq:SE2_new}, we will always specify the $\p$ dependence explicitly as $\tau_t(\p).$    An analysis of the limiting properties (in $t$) of \eqref{eq:SE2_new} is given in Theorem~\ref{thm:SE1} below, after which it is then argued that because interchanging limits and differentiation is justified, the large $t$ analysis of \eqref{eq:SE2_new} holds for \eqref{eq:SE2} as well.
Before presenting Theorem~\ref{thm:SE1}, however, we give the following result which motivates why the AMP iteration should relate at all to the SLOPE estimator. 
\begin{lemma}\label{lm:slope_stationary}
Any stationary point $\widehat \bet$ (with corresponding $\widehat \z$) in the AMP algorithm  \eqref{eq:AMP0}-\eqref{eq:AMP1} with $\thet_* = \bfalph \tau_*$ is a minimizer of the SLOPE cost function in \eqref{eq:SLOPE_est} with
\[\blam = \thet_*\Big(1 - \frac1{\delta \p} \left(\nabla\prox_{J_{\thet_*}}(\widehat\bet + \X^\top \widehat \z) \right) \Big)=\thet_*\Big(1 - \frac{1}{n} \left\|\prox_{J_{\thet_*}}(\widehat\bet + \X^\top \widehat \z) \right\|_0^* \Big).\]
\end{lemma}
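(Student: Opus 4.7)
The plan is to translate the AMP fixed-point relations directly into the first-order (KKT) optimality conditions for the SLOPE objective, by means of the standard proximal characterization $\bm u = \prox_{f}(\bm x)\iff \bm x - \bm u\in\partial f(\bm u)$.

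First, I would spell out the two stationarity conditions. At a fixed point $(\widehat{\bet},\widehat{\z})$ of \eqref{eq:AMP0}--\eqref{eq:AMP1} with threshold vector $\thet_*$, the denoising update gives $\widehat{\bet} = \prox_{J_{\thet_*}}(\widehat{\bet} + \X^\top\widehat{\z})$, while the residual update, in the equivalent form provided by the preceding Lemma, becomes $\widehat{\z} = \y - \X\widehat{\bet} + (\widehat{\z}/n)\|\widehat{\bet}\|_0^*$. Setting $c := 1 - \|\widehat{\bet}\|_0^*/n$, this rearranges to $\widehat{\z} = (\y-\X\widehat{\bet})/c$, expressing the residual purely in terms of $\widehat{\bet}$.

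Next, I would turn the prox identity into a subgradient inclusion and read off the SLOPE optimality condition. The proximal characterization applied to $\widehat{\bet} = \prox_{J_{\thet_*}}(\widehat{\bet}+\X^\top\widehat{\z})$ gives $\X^\top\widehat{\z} \in \partial J_{\thet_*}(\widehat{\bet})$. Substituting $\widehat{\z}=(\y-\X\widehat{\bet})/c$ and using that $J_{\blam}(\bm b)=\sum_i\lambda_i|\bm b|_{(i)}$ is linear in $\blam$, so that $c J_{\thet_*} = J_{c\thet_*}$ and, for $c\ge 0$, $c\,\partial J_{\thet_*} = \partial J_{c\thet_*}$, I would conclude
\[
\X^\top(\y - \X\widehat{\bet}) \in \partial J_{c\thet_*}(\widehat{\bet}),
\]
which is exactly the first-order condition for minimizing the convex SLOPE cost \eqref{eq:SLOPE_est} with regularizer $\blam = c\thet_* = \thet_*(1-\|\widehat{\bet}\|_0^*/n)$. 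The two equivalent expressions for $\blam$ in the lemma statement then follow from the divergence identity $\nabla\prox_{J_{\thet_*}}(\widehat{\bet}+\X^\top\widehat{\z}) = \|\prox_{J_{\thet_*}}(\widehat{\bet}+\X^\top\widehat{\z})\|_0^* = \|\widehat{\bet}\|_0^*$ (the last step reusing the prox fixed-point relation for $\widehat{\bet}$) together with $n = \delta\p$.

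The main obstacle I anticipate is ensuring $c\ge 0$, since both the subgradient scaling argument and the meaningfulness of $\blam$ as a nonnegative threshold vector depend on it. This is exactly the structural SLOPE property flagged in the Introduction: the number of distinct nonzero magnitudes in any SLOPE solution (and hence in any AMP stationary point) is at most $n$, forcing $\|\widehat{\bet}\|_0^*\le n$ and $c\in[0,1]$. In the degenerate boundary case $c=0$, the residual identity forces $\y = \X\widehat{\bet}$, so $\blam = 0$ and the optimality condition is trivially satisfied; the lemma's statement then continues to hold as a limiting case.
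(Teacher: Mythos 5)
Your proof follows the paper's argument step for step: read off the two fixed-point relations, rewrite the residual update as $\widehat{\z}=(\y-\X\widehat{\bet})/(1-\omega)$ where $\omega = \|\widehat{\bet}\|_0^*/n$, invoke the prox--subgradient correspondence (the paper's Fact~\ref{fact:sub_prox}) to obtain $\X^\top\widehat{\z}\in\partial J_{\thet_*}(\widehat{\bet})$, and then use the positive-homogeneity of $J_{\blam}$ in $\blam$ to rescale the subgradient and land on the KKT condition for SLOPE with $\blam = (1-\omega)\thet_*$. The divergence identity $\nabla\prox = \|\cdot\|_0^*$ and $n=\delta p$ account for the two displayed forms of $\blam$, exactly as in the paper.

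The one place where you go beyond the paper is the discussion of $c=1-\omega\ge 0$. This is a genuine implicit assumption that the paper leaves unaddressed, so it is good that you flag it; however, your justification is circular as stated. You argue that the AMP stationary point has at most $n$ distinct nonzero magnitudes because SLOPE solutions do --- but the lemma you are proving is precisely the claim that the stationary point is a SLOPE minimizer, so you cannot borrow the $\|\widehat{\bet}\|_0^*\le n$ bound from that conclusion. The prox identity $\widehat{\bet}=\prox_{J_{\thet_*}}(\widehat{\bet}+\X^\top\widehat{\z})$ alone does not cap the number of distinct magnitudes at $n$ (the prox of a separable-looking $\thet_*$ can output up to $p$ distinct values), so one would have to derive the bound directly from the joint fixed-point conditions (rank considerations on $\X^\top\widehat{\z}$) or, as the paper does implicitly, simply assume the stationary point arises from the calibrated regime where $1-\omega>0$ holds. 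Everything else in your write-up is correct.
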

\begin{proof}[Proof of Lemma~\ref{lm:slope_stationary}]
Denote, $\omega := ( \nabla \prox_{J_{\thet_*}} (\widehat\bet + \X^\top \widehat \z) )/({\delta \p})$.  Now, by stationarity,
\begin{align}
\widehat\bet &= \prox_{J_{\thet_*}}(\widehat\bet + \X^\top \widehat \z), \qquad \text{ and } \qquad \widehat \z = \y - \X\widehat\bet + \frac{\widehat \z}{\delta \p}(\nabla \prox_{J_{\thet_*}} (\widehat\bet + \X^\top \widehat \z) ). 
\label{eq:station2}
\end{align}
From \eqref{eq:station2}, notice that
$
\widehat \z = \frac{\y - \X \widehat\bet}{1 - \omega}.
$
By Fact \ref{fact:sub_prox}, $\X^\top \widehat \z \in \partial J_{\thet_*}(\widehat\bet)$, where $ \partial J_{\thet_*}(\widehat\bet)$ is the subgradient  of $J_{\thet_*}(\cdot)$ at $\widehat\bet$ (a precise definition of a subgradient is given in Section~\ref{sec:prel-slope-amp}).  Then, $\X^\top \widehat \z = \frac{\X^\top (\y - \X \widehat\bet)}{1- \omega} \in J_{\thet_*}(\widehat\bet),$ and therefore $\X^\top (\y - \X \widehat\bet) \in J_{\thet_*(1- \omega)}(\widehat\bet)$
which is \textit{exactly} the stationary condition of SLOPE with regularization parameter $\blam = (1-\omega)\thet_*$, as desired.
\end{proof}

Now we present Theorem~\ref{thm:SE1}, which provides results about the $t$ asymptotics of the recursion in \eqref{eq:SE2_new} and its proof is given in Appendix \ref{app_SE}.   First, some notation must be introduced: let $\bm A_{\min}(\delta)$ be the set of solutions to
\be
\label{eq:littlef_def}
\delta = f(\bfalph), \quad \text{where} \quad   f(\bfalph) := \frac{1}{\p} \sum_{i = 1}^{\p} \E \Big \{\Big(1-|[\prox_{J_{\bfalph}}(\mathbf{Z})]_i| \sum_{j \in I_i}\alpha_j \Big)\Big/{ [\bm D(\prox_{J_{\bfalph}}(\mathbf{Z}))]_i}\Big\}.
\ee %
Here $\odot$ represents elementwise multiplication of vectors and for vector $\bm v \in \R^{\p}$, $\bm D$ is defined elementwise as $[\bm D(\bm v)]_i=\#\{j : |v_j|=|v_i|\}$ if  $v_i\neq 0$ and $\infty$ otherwise. Let $I_i = \{j: 1 \leq j \leq \p \text{ and } |[\prox_{J_{\bfalph}}(\mathbf{Z})]_j| =  |[\prox_{J_{\bfalph}}(\mathbf{Z})]_i|\}$.  The expectation in \eqref{eq:littlef_def} is taken with respect to $\mathbf{Z},$ a $\p$-length vector of i.i.d.\ standard Gaussians.  Finally, for $\bm u\in\R^m,$ the notation $\langle\bm u\rangle:=\sum_{i=1}^m u_i/m$ and we say $\bm u$ is strictly larger than $\bm v \in\R^m$ if $u_i>v_i$ for all elements $i \in \{1, 2, \ldots, m\}$.  For the simple case of $p=2$, we illustrate an example of the set $\bm A_{\min}(\delta)$ in Figure~\ref{fig:A_min}.

\begin{theorem}\label{thm:SE1}
For any $\bfalph$ strictly larger than at least one element in the set $\bm A_{\min}(\delta)$, the recursion in \eqref{eq:SE2_new} has a unique fixed point that  we denote as $\tau^2_*(\p)$. Then $\tau_t(\p) \rightarrow \tau_*(\p)$ monotonically for any initial condition. Define a function $\F: \R \times \R^{\p} \rightarrow \R$ as
\be
\F\Big(\tau^2(\p), \bfalph \tau(\p)\Big) := \sigma_w^2 + \frac{1}{\delta \p}  \E \norm{  \prox_{J_{\bfalph \tau(\p)}}(\mathbf{B} + \tau(\p) \mathbf{Z}) - \mathbf{B}}^2,
\label{eq:SE_F}
\ee
where $\mathbf{B}$ is elementwise i.i.d.\ $B$ independent of $\mathbf{Z} \sim \mathcal{N}(0, \mathbb{I}_{\p})$, so that $\tau_{t+1}^2(\p) = \F(\tau^2_t(\p), \bfalph \tau_t(\p))$.  Then $\lvert  \frac{\partial \F}{\partial \tau^2(\p)} (\tau^2(\p), \bfalph \tau(\p)) \lvert < 1$ at $\tau(\p)= \tau_*(\p)$. Moreover, for $f(\bfalph)$ defined in \eqref{eq:littlef_def}, we show that  $f(\bfalph)= \delta \lim_{\tau(\p)\to\infty} {d \F}/{d\tau^2(\p)}$.
\end{theorem}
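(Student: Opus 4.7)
My plan is to view $G(u):=\F(u,\bfalph\sqrt u)$ as a scalar map of $u=\tau^2(\p)$ and reduce the four assertions of the theorem (unique fixed point, monotone convergence, strict contractivity at the fixed point, and the limit identity $f(\bfalph)=\delta\lim_{\tau(\p)\to\infty}d\F/d\tau^2(\p)$) to a standard fixed-point analysis of a concave, non-decreasing function. The structural ingredients I will lean on are: positive $1$-homogeneity of the SLOPE prox, $\prox_{J_{c\blam}}(c\bm x)=c\,\prox_{J_{\blam}}(\bm x)$ for $c>0$ (inherited from that of $J_{\blam}$); its firm non-expansiveness, giving $\nabla\prox_{J_{\bfalph}}$ spectrum in $[0,1]$; and the divergence identity \eqref{eq:div_prox}, $\nabla\prox_{J_{\bfalph}}(\bm v)=\|\prox_{J_{\bfalph}}(\bm v)\|_0^*$.

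First I would establish the limit identity. Applying homogeneity, $\prox_{J_{\bfalph\sqrt u}}(\B+\sqrt u\Z)=\sqrt u\,\prox_{J_{\bfalph}}(\Z+\B/\sqrt u)$, so expanding the square in \eqref{eq:SE_F} gives
\begin{equation*}
G(u)=\sigma_w^2+\frac{u}{\delta\p}\E\|\prox_{J_{\bfalph}}(\Z+\B/\sqrt u)\|^2-\frac{2\sqrt u}{\delta\p}\E\langle\prox_{J_{\bfalph}}(\Z+\B/\sqrt u),\B\rangle+\frac{\E\|\B\|^2}{\delta\p}.
\end{equation*}
The $1$-Lipschitz property of the prox plus dominated convergence handles the first expectation; the cross term is $O(1/\sqrt u)$ by Taylor expansion around $\B=0$, exploiting $\E\,\prox_{J_{\bfalph}}(\Z)=0$ from odd symmetry. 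Hence $\lim_{u\to\infty}dG/du=\E\|\prox_{J_{\bfalph}}(\Z)\|^2/(\delta\p)$. To match this with $f(\bfalph)/\delta$, I combine Stein's lemma and \eqref{eq:div_prox} to get $\E\langle\prox_{J_{\bfalph}}(\Z),\Z\rangle=\E\|\prox_{J_{\bfalph}}(\Z)\|_0^*$, together with the first-order optimality $\Z-\prox_{J_{\bfalph}}(\Z)\in\partial J_{\bfalph}(\prox_{J_{\bfalph}}(\Z))$ and Euler's identity $\langle\bm x,\partial J_{\bfalph}(\bm x)\rangle=J_{\bfalph}(\bm x)$ for the $1$-homogeneous $J_{\bfalph}$, yielding
\begin{equation*}
\E\|\prox_{J_{\bfalph}}(\Z)\|^2=\E\|\prox_{J_{\bfalph}}(\Z)\|_0^*-\E\,J_{\bfalph}(\prox_{J_{\bfalph}}(\Z)).
\end{equation*}
Grouping the right-hand side by the tied-magnitude blocks $\{I_i\}$ rewrites it as $\p\cdot f(\bfalph)$.

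Next I prove $G$ is non-decreasing and concave on $[0,\infty)$: monotonicity comes from firm non-expansiveness, and concavity from a heat-equation representation $\partial_u\E[\psi(\B+\sqrt u\Z)]=\tfrac12\E[\Delta_v\psi]$ applied after the scaling identity absorbs the $\sqrt u$-dependence of the threshold. Component-wise monotonicity of $f(\bfalph)$ in $\bfalph$---raising thresholds shrinks $|\prox_{J_{\bfalph}}(\Z)|$---converts the hypothesis that $\bfalph$ strictly dominates some element of $\bm A_{\min}(\delta)$ into the strict slope bound $G'(\infty)=f(\bfalph)/\delta<1$. Combined with $G(0)=\sigma_w^2+\E[B^2]/\delta>0$, the graph of a concave non-decreasing function with $G'(\infty)<1$ meets the diagonal transversally at a unique $u_\ast=\tau_\ast^2(\p)$, at which $0\le G'(u_\ast)<1$ strictly; monotone convergence of $\tau_{t+1}^2(\p)=G(\tau_t^2(\p))$ from any initial condition then follows by the elementary cobweb argument.

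The main obstacle I expect is the block-and-index bookkeeping inside the limit identity: the positional sum $\sum_{j\in I_i}\alpha_j$ in \eqref{eq:littlef_def} must agree, inside the expectation over $\Z$, with the rank-indexed sum that appears naturally in $\E\,J_{\bfalph}(\prox_{J_{\bfalph}}(\Z))$. The SLOPE KKT averaging structure---the effective threshold on each tied block is exactly the mean of the corresponding $\alpha$'s---is designed to guarantee this agreement, but the index-matching is delicate and is precisely where the non-separability of SLOPE, harmlessly absorbed by the scaling step, re-emerges and forces use of the explicit subdifferential formula from \cite{SLOPE2}.
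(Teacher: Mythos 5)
Your overall strategy---view $G(u)=\F(u,\bfalph\sqrt u)$ as a scalar map of $u=\tau^2(\p)$, prove concavity and monotonicity, obtain the strict slope bound $G'(\infty)<1$ from the hypothesis on $\bfalph$, and close with the cobweb fixed-point argument---is exactly the paper's strategy. Your derivation of the limit identity via the homogeneity rescaling $\prox_{J_{\bfalph\sqrt u}}(\B+\sqrt u\Z)=\sqrt u\,\prox_{J_{\bfalph}}(\Z+\B/\sqrt u)$ combined with Stein's lemma, the optimality condition $\Z-\prox_{J_{\bfalph}}(\Z)\in\partial J_{\bfalph}(\prox_{J_{\bfalph}}(\Z))$, and Euler's identity is a genuinely cleaner route than the paper's, which instead differentiates $[\prox_{J_{\bfalph\tau}}(\B+\tau\Z)]_i$ in $\tau^2$ through a two-argument Jacobian, applies Stein inside the expectation, and only then sends $\tau\to\infty$; both arrive at the same combinatorial object, and you are right to flag the index-versus-rank matching in $\sum_{j\in I_i}\alpha_j$ as the step that needs to be written out carefully.

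The gap is the concavity step. After the homogeneity rescaling absorbs the $\sqrt u$ in the threshold, the prox input becomes $\Z+\B/\sqrt u$, whose law is $\mathcal{N}(\B/\sqrt u,\mathbb{I})$: a \emph{shrinking drift}, not a growing variance. The heat-semigroup identity $\partial_u\E[\psi(\B+\sqrt u\Z)]=\tfrac12\E[\Delta\psi]$ you invoke applies to the family $\mathcal{N}(\B,u\mathbb{I})$; for $\mathcal{N}(\B/\sqrt u,\mathbb{I})$ the $u$-derivative produces a first-order drift term rather than a Laplacian, so the representation and the concavity mechanism built on it are not available. If you skip the rescaling to keep the growing-variance form $\B+\sqrt u\Z$, then the denoiser itself varies with $u$ through $\bfalph\sqrt u$, and the semigroup structure is again broken. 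This obstruction is precisely what the paper's proof is organized around: it computes $\partial\F/\partial\tau^2$ by implicit differentiation in both prox arguments, isolates the distributional ``kink'' contributions as Dirac terms $\delta(B_i+\tau Z_i\pm\alpha_i\tau)$, and establishes $\partial^2\F/\partial(\tau^2)^2\le 0$ only after a delicate regrouping that reduces the sign question to the pointwise inequality $\phi(\alpha_i-B_i/\tau)\ge\phi(\alpha_i+B_i/\tau)$ for $B_i\ge 0$. Your monotonicity claim has a related issue: firm non-expansiveness controls the dependence on the input argument but not the simultaneous scaling of the threshold, and the paper in fact deduces that $G$ is increasing \emph{from} concavity together with $G'(\infty)=f(\bfalph)/\delta>0$, rather than independently. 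Until the concavity (and hence monotonicity) step is replaced by a correct argument, the unique-fixed-point, monotone-convergence, and contractivity conclusions remain unproved.
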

Beyond providing the large $t$ asymptotics of the state evolution sequence, notice that Theorem \ref{thm:SE1} gives necessary conditions on the calibration vector $\bfalph$ under which the recursion in \eqref{eq:SE2_new}, and equivalently, the calibration detailed in Section~\ref{sec:calibration} below are well-defined.

Recall that it is actually the state evolution in \eqref{eq:SE2} (and not that in \eqref{eq:SE2_new}) that predicts the performance of the AMP algorithm, and therefore we would really like a version of Theorem \ref{thm:SE1} studying the large system limit in $\p$.  We argue that because interchanging differentiation and the limit, the proof of Theorem \ref{thm:SE1} analyzing \eqref{eq:SE2_new}, can easily be used to give an analogous result 
for \eqref{eq:SE2}.  
In particular analyzing \eqref{eq:SE2} via the strategy given in the proof of Theorem \ref{thm:SE1} requires that we study the partial derivative of
$
\lim_p  \E \norm{  \prox_{J_{\bfalph \tau}}(\mathbf{B} + \tau \mathbf{Z}) - \mathbf{B}}^2/(\delta \p),
$
with respect to $ \tau^2 $. Indeed, to directly make use our proof for the finite-$\p$ case given in Theorem \ref{thm:SE1}, it is enough that
\be
\label{eq:interchange}
\frac{\partial}{\partial \tau^2}\lim_p  \E \norm{  \prox_{J_{\bfalph \tau}}(\mathbf{B} + \tau \mathbf{Z}) - \mathbf{B}}^2/(\delta \p) = \lim_p \frac{\partial}{\partial \tau^2} \E \norm{  \prox_{J_{\bfalph \tau}}(\mathbf{B} + \tau \mathbf{Z}) - \mathbf{B}}^2/(\delta \p).
\ee
Note that we already have an argument (based on dominated convergence for fixed $ p $, see \eqref{eq:total_deriv_F_v1} and Lemma \ref{lem:DC_res1}) showing that
$$
\frac{\partial}{\partial \tau^2} \E \norm{  \prox_{J_{\bfalph \tau}}(\mathbf{B} + \tau \mathbf{Z}) - \mathbf{B}}^2 =  \E \Big\{\frac{\partial}{\partial \tau^2}\norm{  \prox_{J_{\bfalph \tau}}(\mathbf{B} + \tau \mathbf{Z}) - \mathbf{B}}^2\Big\}.
$$
The next lemma gives us a roadmap for how to proceed (c.f., \cite[Theorem 7.17]{rudin1964principles}) to justify the interchange in \eqref{eq:interchange}.
\begin{lemma}
Suppose $ \{g_m\} $ is a sequence of functions that converge pointwise to $ g $ on a compact domain $ D $ and whose derivatives $ \{ g'_m \} $ converge uniformly to a function $ h $ on $ D $. Then $ h = g' $ on $ D $.
\end{lemma}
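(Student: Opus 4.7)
My plan is to follow the classical argument behind Rudin's Theorem 7.17. Fix an arbitrary point $x \in D$ and define the difference quotients
$$\phi_m(t) := \frac{g_m(t) - g_m(x)}{t - x}, \qquad \phi(t) := \frac{g(t) - g(x)}{t - x}, \qquad t \in D \setminus \{x\}.$$
Pointwise convergence $g_m \to g$ immediately gives $\phi_m(t) \to \phi(t)$ for every such $t$. Thus the whole problem reduces to showing $\lim_{t \to x} \phi(t)$ exists and equals $h(x)$, which is exactly the statement $g'(x) = h(x)$.

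The crucial step I would carry out next is to promote this pointwise convergence of $\phi_m \to \phi$ to \emph{uniform} convergence on $D \setminus \{x\}$. For this I would apply the mean value theorem to the differentiable function $g_m - g_n$ on the segment joining $x$ to $t$: there exists an intermediate point $\xi$ for which
$$\phi_m(t) - \phi_n(t) = \frac{(g_m - g_n)(t) - (g_m - g_n)(x)}{t-x} = g_m'(\xi) - g_n'(\xi).$$
Since $\{g_m'\}$ converges uniformly on $D$ by hypothesis, it is uniformly Cauchy, so the right-hand side can be made smaller than any prescribed $\varepsilon > 0$ for all $m,n$ beyond some index, uniformly in $\xi$ (and hence uniformly in $t$). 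Therefore $\{\phi_m\}$ is uniformly Cauchy, hence uniformly convergent to $\phi$ on $D \setminus \{x\}$.

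With uniform convergence in hand, I would conclude by interchanging the two limits $m \to \infty$ and $t \to x$. For each fixed $m$, the definition of derivative gives $\lim_{t \to x} \phi_m(t) = g_m'(x)$, and by hypothesis $\lim_{m \to \infty} g_m'(x) = h(x)$. The standard double-limit theorem, which applies precisely because $\phi_m \to \phi$ uniformly on $D \setminus \{x\}$ and each $\phi_m$ has a finite limit as $t \to x$, yields
$$\lim_{t \to x} \phi(t) = \lim_{t \to x}\lim_{m \to \infty} \phi_m(t) = \lim_{m \to \infty} \lim_{t \to x} \phi_m(t) = \lim_{m \to \infty} g_m'(x) = h(x),$$
which is exactly $g'(x) = h(x)$. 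The only mild subtlety I anticipate is that the mean value theorem step requires the segment between $x$ and $t$ to lie inside $D$, so strictly speaking $D$ should be a compact interval (or a union of such); this is benign since, in the intended application to \eqref{eq:interchange}, the lemma is invoked with $D$ a compact interval of values of $\tau^2$. No deeper obstacle appears beyond this classical two-step passage from uniform Cauchyness of $\{g_m'\}$ to uniform convergence of the difference quotients.
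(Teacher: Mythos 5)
Your proof is correct and is precisely the classical argument behind Rudin's Theorem 7.17, which is what the paper itself invokes (the paper states this lemma without proof, citing Rudin). Your caveat about $D$ needing to be an interval (or union of intervals) for the mean value theorem step is appropriate and is indeed satisfied in the paper's application where $D = \{\tau : 0 < r \le |\tau| \le R\}$.
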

Therefore, taking $\{g_p\}=\{\F(\tau^2(p), \bfalph \tau(p))\}$, it suffices to show that if
\begin{align*}
\frac{\partial \F}{\partial \tau^2(p)} (\tau^2(p), \bfalph \tau(p)) & = \frac{\partial}{\partial \tau^2(p)}  \E \norm{  \prox_{J_{\bfalph \tau(p)}}(\mathbf{B} + \tau(p) \mathbf{Z}) - \mathbf{B}}^2/(\delta \p),
\end{align*}
then the sequence $ \{ \frac{\partial \F}{\partial \tau^2} (\tau^2, \bfalph \tau) \}_p $ converges uniformly as $ p \rightarrow \infty $.
The main tool for proving such a result is given in the following lemma.
\begin{lemma}
Suppose $ \{g_m\} $ is a sequence of $ L $-Lipschitz functions (where $ L $ is independent of $ m $) that converge pointwise to a function $ g $ on a compact domain $ D $. Then, the convergence is also uniform on $ D $.
\end{lemma}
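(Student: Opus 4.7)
The plan is to run the classical ``$\epsilon/3$'' argument, exploiting the fact that a uniform Lipschitz bound automatically gives equicontinuity of the family $\{g_m\}$, together with the compactness of $D$ (so that finitely many balls suffice).

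First, I would observe that the pointwise limit $g$ inherits the $L$-Lipschitz property: passing to the limit in $|g_m(x) - g_m(y)| \le L\|x-y\|$ yields $|g(x) - g(y)| \le L\|x - y\|$ for all $x, y \in D$. This step costs essentially nothing but is what lets me control $g$ on the ``off-center'' points in the cover.

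Next, I would fix $\epsilon > 0$ and use compactness of $D$ to extract a finite subcover by open balls of radius $\epsilon/(3L)$, say centered at $x_1, \ldots, x_K \in D$. Pointwise convergence at each of these finitely many centers gives an index $N_i$ such that $|g_m(x_i) - g(x_i)| < \epsilon/3$ for $m \ge N_i$. Setting $N = \max_{1 \le i \le K} N_i$ turns finitely many pointwise statements into a single uniform one at the centers.

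Finally, for an arbitrary $x \in D$, I would pick a center $x_i$ with $\|x - x_i\| < \epsilon/(3L)$ and split via the triangle inequality
\[
|g_m(x) - g(x)| \;\le\; |g_m(x) - g_m(x_i)| + |g_m(x_i) - g(x_i)| + |g(x_i) - g(x)|.
\]
For $m \ge N$, the first and third terms are each bounded by $L \cdot \epsilon/(3L) = \epsilon/3$ using the Lipschitz estimates for $g_m$ and $g$ respectively, and the middle term is bounded by $\epsilon/3$ by the choice of $N$. This gives $\|g_m - g\|_{\infty, D} < \epsilon$ for all $m \ge N$, i.e.\ uniform convergence. There is no real obstacle here; the only thing to be careful about is ensuring that the same constant $L$ is used for both $g_m$ and the limit $g$, which is why the preliminary observation that $g$ is $L$-Lipschitz is recorded first.
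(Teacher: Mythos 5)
Your proof is correct and is the standard $\epsilon/3$ argument for this classical fact; the paper states the lemma without proof (as a known tool, essentially equicontinuity plus compactness), so there is no alternative approach in the paper to compare against. The one step worth highlighting — that the pointwise limit $g$ is itself $L$-Lipschitz, so that the same constant controls the third term in the triangle-inequality split — is exactly the observation you record up front, and it is what makes the finite $\epsilon/(3L)$-net sufficient.
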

Using this lemma, the essential idea is to show that there exists a constant $ L > 0 $, independent of $ p $, such that for all $ p $ and all $ \tau_1 $, $ \tau_2 $ in a bounded set $ D = \{ \tau : 0 < r \leq |\tau| \leq R \} $,
$$
\Big |\frac{\partial \F}{\partial \tau^2} (\tau^2_1, \bfalph \tau_1)-\frac{\partial \F}{\partial \tau^2} (\tau^2_2, \bfalph \tau_2) \Big| \leq L|\tau_1-\tau_2|.
$$
This follows  by the mean value theorem and \eqref{eq:final_deriv_new}, with $ L = \sup_{p, \tau \in D}|\frac{\partial}{\partial \tau^2}\frac{\partial \F}{\partial\tau^2}(\tau^2, \bfalph \tau)| < +\infty $.
\begin{rem}
	The boundedness of $\{\tau_t(p)\}$ is guaranteed by Proposition \ref{inverse_possible}. In particular, since $\bfalph$ satisfies the assumption of Theorem \ref{thm:SE1}, Proposition \ref{inverse_possible} guarantees $\blam$ is bounded and, consequently, so is $\tau$ (see the calibration in \eqref{eq:blam_alph_mapping} below).
\end{rem}

\subsection{Threshold Calibration} \label{sec:calibration}

Motivated by Lemma \ref{lm:slope_stationary} and the result of Theorem \ref{thm:SE1}, we define  a calibration from  the regularization parameter $\blam$, to the corresponding threshold $\bfalph$ used to define the AMP algorithm. In practice, we will be given finite-length $\blam$ and then we want to design the AMP iteration to solve the corresponding SLOPE cost.  We do this by choosing $\bfalph$ as the vector that solves $\blam=\blam(\bfalph)$ where
\begin{align}
\label{eq:blam_alph_mapping}
\blam(\bfalph)&:=\bfalph\tau_*(\p)\Big(1-\frac{1}{n}\E\| \prox_{J_{\bfalph \tau_*(\p)}}(\mathbf{B}+\tau_*(\p) \mathbf{Z})\|_0^*\Big),
\end{align}
where $\mathbf{B}$ is elementwise i.i.d.\ $B$ independent of $\mathbf{Z} \sim \mathcal{N}(0, \mathbb{I}_{\p})$ and $\tau_*(\p)$ is the limiting value defined in Theorem \ref{thm:SE1}.  We note the fact that the calibration in \eqref{eq:blam_alph_mapping} sets $\bfalph$ as a vector \emph{in the same direction} as $\blam$, but that is scaled by a constant value (for each $\p$), where the scaling constant value is 
$
\tau_*(\p)(1-\E\|\prox_{J_{\bfalph\tau_*(\p)}}(\mathbf{B}+\tau_*(\p) \mathbf{Z})\|_0^*/n).
$

In Proposition~\ref{inverse_possible} we show that the  calibration \eqref{eq:blam_alph_mapping} and its inverse $\blam\mapsto \bfalph(\blam)$ are well-defined
\begin{wrapfigure}[13]{r}{0.3\textwidth}
	\includegraphics[width=0.3\textwidth]{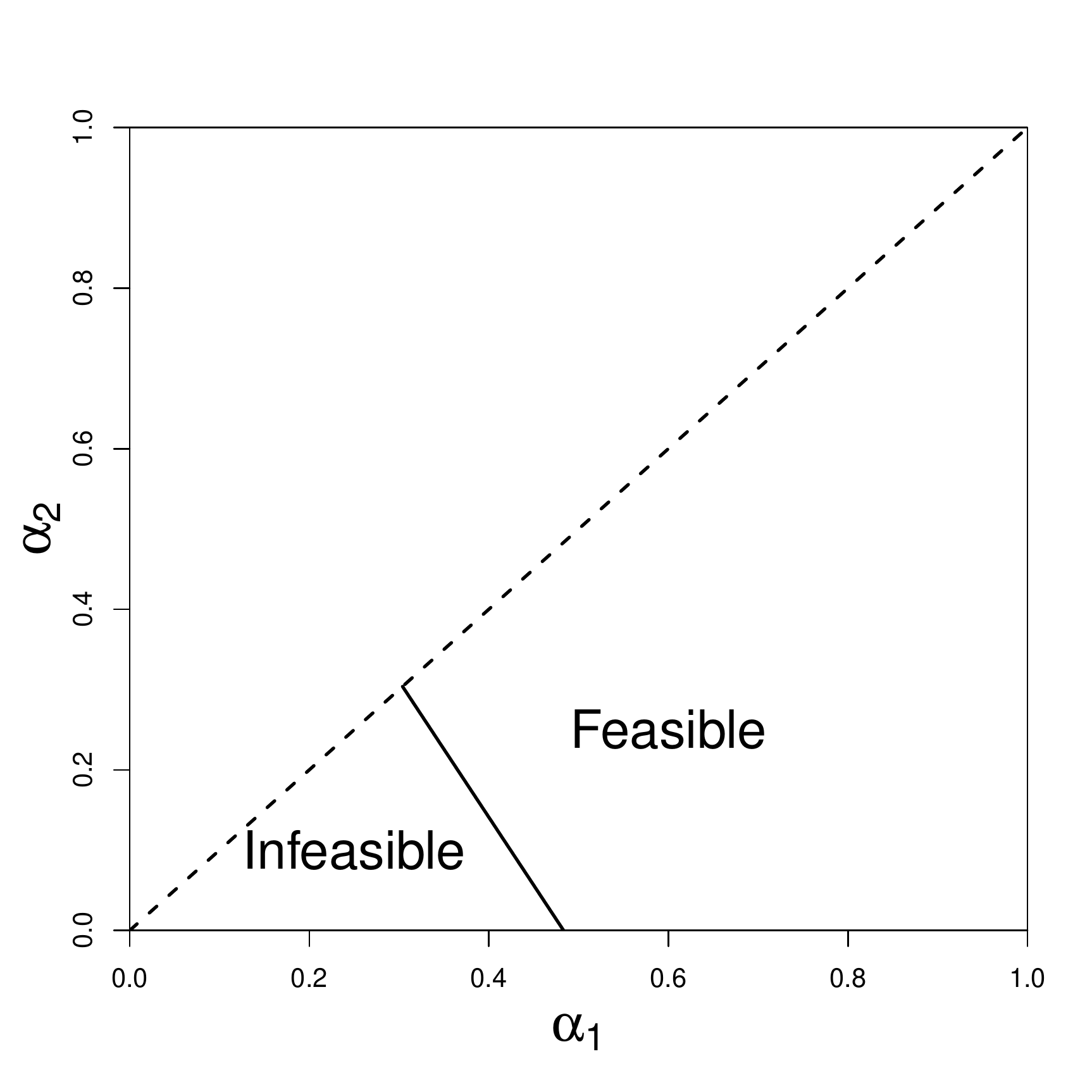}		
	\caption{$\bm A_{\min}$ (black curve) when $p=2$ and $\delta=0.6$.}
	\label{fig:A_min}
\end{wrapfigure}
and in Algorithm~\ref{alg:cali} we show that determining the calibration is straightforward in practice. 

\begin{proposition}
	\label{inverse_possible}
The function $\bfalph\mapsto \blam(\bfalph)$ defined in \eqref{eq:blam_alph_mapping} is continuous on $\{\bfalph : f(\bfalph)<\delta\}$ for $f(\cdot)$ defined in \eqref{eq:littlef_def}  with $\blam(\bm A_{\min})=-\infty$ and $\lim_{\bfalph\to\infty}\blam(\bfalph)=\infty$ (where the limit is taken elementwise). Therefore the function $\blam\mapsto \bfalph(\blam)$ satisfying \eqref{eq:blam_alph_mapping} exists. As $\p\goto\infty$, the function $\bfalph\mapsto \blam(\bfalph)$ becomes invertible (given $\blam$, $\bfalph$ satisfying \eqref{eq:blam_alph_mapping} exists uniquely). Furthermore, the inverse function is continuous non-decreasing for any $\blam>\bm 0$.
\end{proposition}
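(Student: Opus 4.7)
The plan is to split the proposition into four claims and dispatch them in order: (a) continuity of $\bfalph \mapsto \blam(\bfalph)$ on the open set $\{\bfalph : f(\bfalph) < \delta\}$; (b) the two boundary limits $\blam(\bm A_{\min}) = -\infty$ and $\lim_{\bfalph \to \infty}\blam(\bfalph) = \infty$; (c) existence of an inverse; and (d) uniqueness, continuity, and the non-decreasing property of that inverse in the large-$\p$ limit.

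For (a), the crux is showing that the fixed point $\tau_*(\p)$ of \eqref{eq:SE2_new} depends continuously on $\bfalph$ on the admissible region. I would apply the implicit function theorem to $G(\bfalph, \tau^2) := \F(\tau^2, \bfalph\tau) - \tau^2$: Theorem~\ref{thm:SE1} guarantees $|\partial_{\tau^2}\F| < 1$ at $\tau_*(\p)$, so $\partial_{\tau^2}G \neq 0$, while smooth dependence on $\bfalph$ comes from differentiating under the expectation in \eqref{eq:SE_F}, already justified by the dominated-convergence arguments used in the proof of Theorem~\ref{thm:SE1}. Continuity of $\blam(\bfalph)$ then reduces to continuity of $(\bfalph, \tau_*) \mapsto \E\|\prox_{J_{\bfalph\tau_*}}(\B + \tau_*\mathbf{Z})\|_0^*$, which follows from bounded convergence using the uniform bound $\|\cdot\|_0^* \leq \p$ together with the a.s.\ continuity of the SLOPE prox in its parameters off a measure-zero set of ``tie'' configurations.

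For (b), as $\bfalph$ approaches $\bm A_{\min}$ from inside the admissible region, the fixed point must diverge: any bounded subsequential limit would be a fixed point with $|\partial_{\tau^2}\F|<1$, but the representation $f(\bfalph) = \delta\lim_{\tau\to\infty} d\F/d\tau^2 \to \delta$ rules this out. Using the positive homogeneity $\prox_{J_{\bfalph\tau}}(\tau\mathbf{u}) = \tau\prox_{J_{\bfalph}}(\mathbf{u})$ and $\B/\tau_*(\p) \to \mathbf{0}$, I would conclude
$$
\frac{1}{\n}\E\|\prox_{J_{\bfalph\tau_*(\p)}}(\B + \tau_*(\p)\mathbf{Z})\|_0^* \longrightarrow \frac{1}{\n}\E\|\prox_{J_{\bfalph}}(\mathbf{Z})\|_0^* = \frac{f(\bfalph)}{\delta},
$$
where the last identity follows from the divergence formula \eqref{eq:div_prox} and the Stein-type expression for $\partial_{\tau^2}\F$ recorded in the proof of Theorem~\ref{thm:SE1}. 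Hence the bracket in \eqref{eq:blam_alph_mapping} tends to $1 - f(\bfalph)/\delta$, which approaches $0$ from the negative side as $\bfalph$ nears $\bm A_{\min}$, and combined with $\bfalph\tau_*(\p) \to \infty$ a rate estimate yields $\blam(\bfalph) \to -\infty$. For $\bfalph \to \infty$ componentwise, the prox drives its argument to $\mathbf{0}$ almost surely, so $\E\|\prox\|_0^*/\n \to 0$, the bracket tends to $1$, and $\blam(\bfalph) \to \infty$.

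Part (c)---existence of an $\bfalph$ realising any prescribed $\blam > \bm 0$---then follows from (a) and (b) by an intermediate-value argument along rays through the origin. The main obstacle is (d): uniqueness of the inverse in the limit $\p \to \infty$, since for finite $\p$ pathwise injectivity on $\R^\p$ need not hold. The plan is to pass to the asymptotic description in which $\bfalph$ and $\blam$ are encoded by the quantile functions of the limiting laws $A$ and $\Lambda$, reducing \eqref{eq:blam_alph_mapping} to a scalar relation between those quantiles, and then to prove strict monotonicity of that scalar map. The hard part will be that the non-separability of $J_{\bfalph}$ couples coordinates through the tie structure encoded in $I_i$ and $[\bm D(\cdot)]_i$ from \eqref{eq:littlef_def}, so differentiation with respect to $\bfalph$ is not purely diagonal as in the LASSO case; I would exploit the explicit divergence formula \eqref{eq:div_prox} together with the Stein-type identity for $\partial_{\tau^2}\F$ to show that coordinatewise increasing $\bfalph$ strictly increases $\blam(\bfalph)$ in the large-$\p$ limit. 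Unique invertibility, the non-decreasing property of $\blam\mapsto\bfalph(\blam)$, and its continuity then all follow from this monotonicity combined with (a) and (b).
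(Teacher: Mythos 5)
Your part (a) matches the paper's implicit function theorem argument, and your part (b) description of $\tau_*(\bfalph)\to\infty$ near $\bm A_{\min}$ and $\tau_*$ bounded as $\bfalph\to\infty$ is the right picture (though the paper gets the divergence near $\bm A_{\min}$ via a clean concavity bound rather than a subsequence argument).

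However, the identity you invoke in (b),
\[
\frac{1}{\n}\E\big\|\prox_{J_{\bfalph}}(\mathbf{Z})\big\|_0^* \;=\; \frac{f(\bfalph)}{\delta},
\]
is false, and this matters. Unpacking \eqref{eq:littlef_def} using $[\bm D]_i = |I_i|$ and $\|\eta\|_0^* = \sum_i 1/[\bm D]_i$, one has
\[
f(\bfalph) \;=\; \frac{1}{\p}\,\E\big\|\prox_{J_{\bfalph}}(\mathbf{Z})\big\|_0^* \;-\; \frac{1}{\p}\sum_i \E\Big\{\frac{|[\prox_{J_{\bfalph}}(\mathbf{Z})]_i|\sum_{j\in I_i}\alpha_j}{[\bm D(\prox_{J_{\bfalph}}(\mathbf{Z}))]_i}\Big\},
\]
and the subtracted term is strictly positive a.s. So at $\bm a_{\min}$ the bracket in \eqref{eq:blam_alph_mapping} tends to $\ell_* = 1 - \tfrac{1}{\n}\E\|\prox_{J_{\bm a_{\min}}}(\mathbf{Z})\|_0^* < 1 - f(\bm a_{\min})/\delta = 0$, a strictly negative constant, not a quantity approaching $0$ from below. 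This is actually good news for your argument --- no delicate rate comparison against $\bfalph\tau_*\to\infty$ is needed, just (negative constant)$\times(+\infty) = -\infty$ --- but as written your claim rests on an incorrect equality, and the proposed rate estimate is unnecessary once you have the strict inequality. The paper establishes exactly this strict gap between $f(\bfalph)$ and $\E\|\cdot\|_0^*/\p$.

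Part (d) is a genuine gap. You outline a plan to reduce the calibration to a scalar relation between quantile functions of $A$ and $\Lambda$ and prove strict monotonicity by differentiating through the Stein identity, but you correctly flag --- and then do not resolve --- the obstruction that the tie structure couples coordinates, so the differentiation is not diagonal. The paper avoids this entirely with a different and cleaner device: assume $\blam(\bfalph_1)=\blam(\bfalph_2)$, apply Theorem~\ref{thm:main_result3} twice --- first with $\psi(\bm x,\bm y)=\langle(\bm x-\bm y)^2\rangle$ together with Corollary~\ref{corol1} to force $\tau_*(\bfalph_1)=\tau_*(\bfalph_2)$, then with $\psi(\bm x,\bm y)=\langle|\bm x|\rangle$ and strict monotonicity of $\thet\mapsto\E\langle\|\prox_{J_{\thet}}(\bet+\tau_*\mathbf{Z})\|_1\rangle$ to force $\bfalph_1\tau_*(\bfalph_1)=\bfalph_2\tau_*(\bfalph_2)$ --- yielding a contradiction without ever differentiating the prox in $\bfalph$. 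If you want to salvage your approach, you would need to actually carry out the quantile-function reduction and prove the scalar strict monotonicity; as stated this is a sketch of what would need to happen, not a proof.
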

In \cite[Proposition 1.4 (first introduced in \cite{donoho2011noise}) and Corollary 1.7]{lassorisk} this is proven rigorously for the analogous LASSO calibration and in Appendix \ref{app_SE} we show how to adapt this proof to SLOPE case.
This proposition motivates Algorithm \ref{alg:cali} which uses a bisection method to find the unique $\bfalph$ for each $\blam$. It suffices to find two guesses of $\bfalph$ parallel to $\blam$ that, when mapped via \eqref{eq:blam_alph_mapping}, sandwich the true $\blam$.
\begin{algorithm} 
	\caption{Calibration from $\bm{\lambda}\to\bm{\alpha}$}
	\begin{algorithmic}
		\STATE{1. Initialize $\alpha_1=\alpha_{\min}$ such that $\alpha_{\min}\bm\ell\in\bm A_{\min}$, where $\bm\ell:=\bm{\lambda}/\lambda_1$; Initialize $\alpha_2=2\alpha_1$}
		\WHILE{ $L(\alpha_2)<0$ where $L:\R\to\R; \alpha\mapsto\sgn(\blam(\alpha\bm\ell)-\blam)$}
		\STATE{2. Set $\alpha_1=\alpha_2, \alpha_2=2\alpha_2$}
		\ENDWHILE
		\STATE{3. \textbf{return BISECTION} $(L(\alpha),\alpha_1,\alpha_2)$}
	\end{algorithmic}
	\label{alg:cali}
	\quad\\
Remark: $\sgn(\blam(\cdot)-\blam)\in\R$ is well-defined since $\blam(\cdot)\parallel\blam$ implies all entries share the same sign. The function ``\textbf{BISECTION}$(L,a,b)$'' finds the root of $L$ in $[a,b]$ via the bisection method.
\end{algorithm}

The calibration in \eqref{eq:blam_alph_mapping} is exact when $\p \rightarrow \infty$, so we study the mapping between $\bfalph$ and $\blam$ in this limit. Recall from \textbf{(A4)}, that the sequence of vectors $\{\blam(\p)\}_{\p \geq 0}$ are drawn i.i.d.\ from distribution $\Lambda$. 
It follows that the sequence $\{\bfalph(\p)\}_{\p \geq 0}$ defined for each $\p$ by the finite-sample calibration \eqref{eq:blam_alph_mapping} are i.i.d.\ from a distribution $A$, where $A$ satisfies $\E(A^2)<\infty$, and is defined via
\begin{align}
\label{eq:lambda_func}
\Lambda &=A \tau_* \Big( 1 - \lim_{\p}\frac{1}{\delta \p} \E || \prox_{J_{\bm A(\p)\tau_*}}
( \mathbf{B}+ \tau_* \mathbf{Z}) ||_0^* \Big),
\end{align}
We note, moreover, that the calibrations presented in this section are well-defined:

\begin{fact}\label{fact:limits}
	The limits in \eqref{eq:SE2}  and \eqref{eq:lambda_func} exist.
\end{fact}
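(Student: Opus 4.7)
The plan is to handle both limits uniformly by exploiting the exchangeability of the input vectors and a strong law argument, followed by an upgrade from almost-sure to $L^1$ convergence via uniform integrability.

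\textbf{Step 1: Almost-sure convergence.} By assumptions \textbf{(A2)} and \textbf{(A4)}, the signal $\B$ and the (unsorted) threshold $\bfalph\tau$, with $\bfalph$ having i.i.d.\ entries drawn from the distribution $A$, both have i.i.d.\ entries; the Gaussian $\mathbf{Z}$ is independent with i.i.d.\ entries. Since $\prox_{J_{\bfalph\tau}}(\cdot)$ is permutation-equivariant in the paired sense---applying a common permutation to the inputs and to the threshold vector produces the same permutation on the output---both $\frac{1}{\p}\|\prox_{J_{\bfalph\tau}}(\B+\tau\mathbf{Z})-\B\|^2$ and $\frac{1}{\p}\|\prox_{J_{\bfalph\tau}}(\B+\tau\mathbf{Z})\|_0^{*}$ are functionals of the empirical joint distribution of the triples $(B_i, Z_i, \alpha_i)$. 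By the Glivenko--Cantelli theorem, this empirical distribution converges almost surely (in Wasserstein-$2$) to the product law of $(B, Z, A)$. Since the SLOPE prox admits a pool-adjacent-violators representation acting on the sorted absolute inputs against the sorted thresholds, each of the two functionals is a continuous function of this empirical distribution, and so converges almost surely to a deterministic limit.

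\textbf{Step 2: Upgrade to expectation.} The proximal operator of $J_{\bfalph\tau}$ is $1$-Lipschitz with $\prox_{J_{\bfalph\tau}}(\mathbf{0})=\mathbf{0}$, which gives the pointwise bound $\frac{1}{\p}\|\prox_{J_{\bfalph\tau}}(\B+\tau\mathbf{Z})-\B\|^2 \le \frac{2}{\p}\bigl(\|\B+\tau\mathbf{Z}\|^2+\|\B\|^2\bigr)$. Since $\E B^2<\infty$, $\E Z^2=1$, and $\tau$ is bounded (see Proposition~\ref{inverse_possible}), the right-hand side is uniformly integrable by the strong law of large numbers, which together with Step 1 yields existence of the limit in \eqref{eq:SE2}. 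For the second limit, the trivial bound $\frac{1}{\p}\|\cdot\|_0^{*}\le 1$ gives uniform integrability immediately, and combined with Step 1 yields existence of the limit in \eqref{eq:lambda_func}.

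\textbf{Main obstacle.} The delicate point is handling $\|\cdot\|_0^{*}$ in Step 1, since it is discontinuous as a functional of the output vector. The crucial saving fact is that $B+\tau_* Z$ has an absolutely continuous density, so ties among the absolute values of $\B+\tau_*\mathbf{Z}$ beyond those forced by coincident thresholds arise with vanishing probability as $\p\to\infty$. This permits $\|\cdot\|_0^{*}$ to be represented as an almost-everywhere continuous functional of the empirical distribution of $(B_i, Z_i, \alpha_i)$, to which the Portmanteau theorem applies to deliver the required convergence.
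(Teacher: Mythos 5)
Your overall architecture (functionals of the empirical distribution $\to$ a.s.\ convergence $\to$ upgrade to expectation by uniform integrability) is a sensible route, and the UI step for the squared-error term does go through via generalized dominated convergence since the dominating sequence $\frac{2}{\p}(\|\B+\tau\mathbf{Z}\|^2+\|\B\|^2)$ converges a.s.\ with constant expectation. The difficulty is with Step 1.

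The claim that ``each of the two functionals is a continuous function of this empirical distribution'' because the SLOPE prox ``admits a pool-adjacent-violators representation'' is exactly the substance of the asymptotic-separability result (Proposition 1 of \cite{SLOPEasymptotic}, restated as Lemma~\ref{lem:yue_sep}), which the paper's proof explicitly invokes as the nontrivial ingredient: one must show $\norm{\prox_{J_{\blam(\p)}}(\bm v(\p)) - h(\bm v(\p))}^2/\p \to 0$ for a scalar limiting function $h$. This is a theorem in its own right, not a formal consequence of PAV plus Glivenko--Cantelli, because the averaging groups produced by PAV are global and data-dependent, and Wasserstein-$2$ continuity of the resulting squared-error functional is what the proposition establishes. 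You are assuming the hardest part of the argument.

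The treatment of $\|\cdot\|_0^*$ has a more concrete error. You argue that ``ties among the absolute values of $\B+\tau_*\mathbf{Z}$ ... arise with vanishing probability,'' and conclude that the discontinuity set of $\|\cdot\|_0^{*}$ is negligible. But the SLOPE prox \emph{generically} creates ties in its output even when the input coordinates are all distinct --- the whole point of the $\|\cdot\|_0^{*}$ operator in this paper (see \eqref{eq:div_prox}) is to count unique nonzero magnitudes, which is strictly smaller than $\|\cdot\|_0$ precisely because of these averaging-induced ties. So the discontinuity cannot be dismissed as rare: it is structural. The correct resolution, which is what Eq.~\eqref{eqn:non flat} encodes, is to pass through the separable limit $h$ and identify $\lim_\p \frac{1}{\p}\E\|\cdot\|_0^{*}$ with the probability that $B+\tau_* Z$ lands in the region where the quantile of $h$ is not flat --- a set characterized by Lebesgue measure of level sets of $h$, not by the absence of ties in the input.
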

This fact is proven in Appendix \ref{app_fact}. One idea used in the proof of Fact \ref{fact:limits} is that the prox operator is \emph{asymptotically} separable, a result shown by \cite[Proposition 1]{SLOPEasymptotic}.   Specifically, for sequences of input, $\{\bm v{(p)}\}$, and thresholds, $\{\bm \lambda{(p)}\}$, having empirical distributions that weakly converge to distributions $V$ and $\Lambda$, respectively, then there exists a limiting scalar function $h(\cdot):= h(\bm v(p);V,\Lambda)$ (determined by $V$ and $\Lambda$) of the proximal operator $\prox_{J_{\blam}}(\bm v(\p))$. Further details are given in Lemma \ref{lem:yue_sep} in Section~\ref{sec:asympt-char-slope}.  Using $h(\cdot) := h(\cdot; B + \tau_* Z, A \tau_*)$, this argument implies that \eqref{eq:SE2}  can be represented as
\ben
\tau_*^2 := \sigma_w^2 + \frac{1}{\delta}  \E (h(B+\tau_* Z)-B)^2,
\een
and if we denote $m$ as the Lebesgue measure, then the limit in \eqref{eq:lambda_func} can be represented as
\begin{align}
\PP\left(B+\tau_* Z\in \Big\{x \,\,\Big|  \,\, h(x)\neq 0 \quad  \text{and} \quad m\{z  \,\,|  \,\, |h(z)|=|h(x)|\}=0\Big\}\right).
\label{eqn:non flat}
\end{align}
In other words, the limit in \eqref{eq:lambda_func} is the Lebesgue measure of the domain of the quantile function of $h$ for which the quantile of $ h $ assumes unique values (i.e., is not flat).


\section{Asymptotic Characterization of SLOPE}
\label{sec:asympt-char-slope}

\subsection{AMP Recovers the SLOPE Estimate}
Here we show that the AMP algorithm converges in $\ell_2$ to the SLOPE estimator, implying that the AMP iterates can be used as a surrogate for the global optimum of the SLOPE cost function. The schema of the proof is similar to \cite[Lemma 3.1]{lassorisk}, however, major differences lie in the fact that the proximal operator used in the AMP updates \eqref{eq:AMP0}-\eqref{eq:AMP1} is non-separable.  
We sketch the proof here, and a forthcoming article will be devoted to giving a complete and detailed argument.
\begin{theorem}
\label{thm:main_result2}
Under assumptions  \textbf{(A1)} - \textbf{(A5)}, for the output of the AMP algorithm in \eqref{eq:AMP0} and the SLOPE estimate \eqref{eq:SLOPE_est},%
\be
\plim_{p \goto \infty} ~ \frac{1}{\p}\|\widehat{\bet} -  \bet^{t}\|^2 = c_t, \quad \text{where}\quad \lim_{t \rightarrow \infty} c_t = 0.
\label{eq:sim1}
\ee
\end{theorem}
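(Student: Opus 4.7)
My plan is to combine the state-evolution convergence of Theorem~\ref{thm:SE1} with a near-stationarity argument tailored to the sorted-$\ell_1$ geometry. The existence of $c_t$ in \eqref{eq:sim1} for each fixed $t$ follows from joint convergence in distribution: the AMP iterate $\bet^{t+1}$ is asymptotically $\prox_{J_{\thet_t}}(\bet + \tau_t\mathbf{Z})$ by the non-separable state-evolution result of \cite{nonseparable}, while $\widehat\bet$ admits an analogous asymptotic representation via the separability result of \cite{SLOPEasymptotic} (cf.\ Lemma~\ref{lem:yue_sep} in Section~\ref{sec:asympt-char-slope}), so $\|\bet^{t+1}-\widehat\bet\|^2/p$ concentrates on a deterministic constant $c_t$. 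The substance of the theorem is $c_t\to 0$, which I tackle in two further steps.

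First, the AMP update \eqref{eq:AMP0} is an exact proximal step, so its first-order optimality condition gives the subgradient inclusion $\bm g^t := (\X^\top \z^t + \bet^t) - \bet^{t+1} \in \partial J_{\thet_t}(\bet^{t+1})$. Eliminating $\z^t$ via Lemma~1 and rearranging (exactly as in the fixed-point manipulation used to prove Lemma~\ref{lm:slope_stationary}) rewrites this as $\X^\top(\y - \X\bet^{t+1}) = \tilde{\bm g}^{t} + \mathcal{R}_t$, where $\tilde{\bm g}^t \in \partial J_{\thet_t(1-\|\bet^{t+1}\|_0^*/n)}(\bet^{t+1})$ and $\mathcal R_t$ depends linearly on the consecutive differences $\bet^{t+1}-\bet^t$ and $\z^t-\z^{t-1}$. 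Theorem~\ref{thm:SE1} combined with \cite{nonseparable} shows that $\|\bet^{t+1}-\bet^t\|^2/p$ and $\|\z^{t+1}-\z^t\|^2/n$ converge in probability to deterministic limits that decay to zero in $t$, so $\|\mathcal R_t\|^2/p\to 0$. Simultaneously $\thet_t(1-\|\bet^{t+1}\|_0^*/n)\to\blam$ by the calibration \eqref{eq:blam_alph_mapping}, so $\bet^{t+1}$ asymptotically satisfies the SLOPE stationarity $\X^\top(\y-\X\bet^{t+1})\in\partial J_\blam(\bet^{t+1})$. Setting $\widehat C(\bm b) := \tfrac{1}{2}\|\y-\X\bm b\|^2 + J_\blam(\bm b)$, convexity of $\widehat C$ and the subgradient inequality at $\widehat\bet$ yield
\[
\widehat C(\bet^{t+1}) - \widehat C(\widehat\bet) \;\geq\; \tfrac{1}{2}\|\X(\bet^{t+1}-\widehat\bet)\|^2,
\]
while a Cauchy--Schwarz bound using the near-subgradient just constructed forces the left-hand side, normalized by $p$, to $0$. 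Hence $\|\X(\bet^{t+1}-\widehat\bet)\|^2/p \to 0$ after taking $p\to\infty$ and then $t\to\infty$.

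The main obstacle is the final step: upgrading vanishing $\|\X(\bet^{t+1}-\widehat\bet)\|^2/p$ to vanishing $\|\bet^{t+1}-\widehat\bet\|^2/p$. Because $|\supp(\widehat\bet)|$ can exceed $n$, $\widehat C$ is not strongly convex on the solution support and the LASSO-style argument of \cite{lassorisk} does not apply. The crucial SLOPE-specific input is the bound $\|\widehat\bet\|_0^* \leq n$ on the number of \emph{unique} nonzero magnitudes of the SLOPE solution, with an analogous bound for the iterates $\bet^{t+1}$ inherited from the prox structure. Grouping coordinates sharing a common absolute value and collapsing each group to a signed representative reduces $\widehat\bet$ and $\bet^{t+1}$ to effective vectors of dimension at most $n$; a careful combinatorial analysis of the evolution of these group structures across iterates, combined with high-probability lower bounds on the minimum singular value of the Gaussian design restricted to sub-$n$-dimensional subspaces, yields a restricted-eigenvalue bound that converts control in the $\X$-seminorm into $\ell_2$ control, delivering $c_t\to 0$ and hence \eqref{eq:sim1}.
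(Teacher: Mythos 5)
Your proposal gets the first two stages right and they essentially coincide with the paper's route: the near-stationarity of $\bet^{t+1}$ is obtained exactly as in Lemma~\ref{lem:3.3} (subgradient from the prox step, plus consecutive-difference control from Lemma~\ref{lem:4.3}), and the convexity/Cauchy--Schwarz step that upgrades a small subgradient norm plus bounded iterates to $\|\X(\bet^{t}-\widehat\bet)\|^2/p\to 0$ is precisely steps $(a)$--$(e)$ in the proof of Lemma~\ref{lem:opt_lem}. You also correctly flag the key structural fact that the SLOPE solution's \emph{unique} nonzero magnitudes number at most $n$, which is indeed what replaces LASSO's support-size bound.

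The genuine gap is in the last stage, which is the heart of the theorem. Your plan is to collapse each equivalence class of $\widehat\bet$ and of $\bet^{t+1}$ to a signed representative so that each lives in an ``effective'' space of dimension $\le n$, then apply a restricted eigenvalue bound. This cannot work directly because the quantity you must control is the residual $\rr=\widehat\bet-\bet^{t}$, and $\rr$ does \emph{not} inherit either group structure: $\widehat\bet$ and $\bet^{t}$ have different equivalence classes, so their difference can have up to $p$ distinct entries and cannot be collapsed to $\le n$ representatives. The paper instead (i) decomposes $\rr=\rr^\parallel+\rr^\perp$ with $\rr^\parallel\in\ker(\X)$, handling $\rr^\perp$ directly from $\|\X\rr\|^2/p\to 0$ and the minimum nonzero singular value of $\X$; (ii) for $\rr^\parallel$, works not with $\supp^\star(\widehat\bet)$ or $\supp^\star(\bet^t)$ but with the ``near-threshold'' set $S_t(c_2)=\{i:|\nu_i|\geq(1-c_2)\widehat\lambda_i\}$, whose size is shown to satisfy $|s_t(c_2)|\le p(\delta-c)$ via a state-evolution computation (Proposition~\ref{propo:PSD}); (iii) bounds $\|\rr^\parallel_{\bar S(c_2)}\|_1$ using the near-stationarity inequality and then converts $\ell_1$ control to $\ell_2$ control by a shelling decomposition of $\bar S(c_2)$ into blocks of size $\approx pc_3$; and (iv) for the small remaining block $S_+=S_t(c_2)\cup S_1$ appeals to a minimum-singular-value bound $\sigma_{\min}(\X_{S_t(c_2)\cup S'})\ge c_4$ uniformly over small maximal-atom sets $S'$, which requires conditioning on the $\sigma$-algebra generated by the AMP trajectory (Lemmas~\ref{lemma:MinS}--\ref{lemma:ConvergenceSupport}) since $S_t(c_2)$ is data-dependent. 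None of (ii)--(iv) is present in your sketch; the phrase ``a careful combinatorial analysis of the evolution of these group structures'' does not correspond to anything that closes the argument. In particular, a plain restricted-eigenvalue statement for Gaussian matrices restricted to sub-$n$-dimensional subspaces is not enough, because the relevant subspace is chosen adaptively from $\X$ itself; the conditioning trick is essential.

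A smaller point: your claim that the existence of the constant $c_t$ follows from combining the non-separable state evolution with the asymptotic separability of the prox is not circular-free as stated, since in the paper that characterization (Theorem~\ref{thm:main_result3}) is \emph{derived} from this theorem; the existence of $c_t$ for each fixed $t$ actually falls out of the Lemma~\ref{lem:opt_lem} machinery together with property \textbf{(P2)} and Lemma~\ref{lem:bounded_vals}, rather than being established up front.
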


The proof of Theorem  \ref{thm:main_result2} can be found in Section \ref{sec_mainproof}.  At a high level, the proof requires dealing carefully with the fact that the SLOPE cost function, $
	\mathcal{C}(\mathbf{b}) := \frac{1}{2}\| \y - \X \mathbf{b}\|^2+J_{\blam}(\mathbf{b}),
$ given in \eqref{eq:SLOPE_est} is \emph{not} necessarily strongly convex, meaning that we could encounter the undesirable situation where $\mathcal{C}(\widehat{\bet})$ is close to $\mathcal{C}(\bet)$ but $\widehat{\bet}$ is not close to $\bet$, meaning the statistical recovery of $\bet$ would be poor. 

In the LASSO case, one works around this challenge by showing that the (LASSO) cost function does have nice properties when considering just the elements of the non-zero support of $\bet^t$ at any (large) iteration $t$.  In the LASSO case, the non-zero support of $\bet$ has size no larger than $\n < \p$.  

In the SLOPE problem, however, it is possible that the support set has size exceeding $\n$, and therefore the LASSO analysis is not immediately applicable.  Our proof develops novel techniques that are tailored to the characteristics of the SLOPE solution.  Specifically, when considering the SLOPE problem, one can show nice properties (similar to those in the LASSO case) by considering a support-like set, that being the \emph{unique} non-zeros in the estimate $\bet^t$ at any (large) iteration $t$. In other words, if we define an equivalence relation $x\sim y$ when $|x|=|y|$, then entries of AMP estimate at any iteration $t$ are partitioned into equivalence classes. Then we observe from \eqref{eq:blam_alph_mapping}, and the non-negativity of $\blam$, that the number of equivalence classes is no larger than $\n$. We see an analogy between SLOPE's equivalence class (or `maximal atom' as described in Appendix \ref{sec:prel-slope-amp}) and LASSO's support set. This approach allows us to deal with the lack of a strongly convex cost.

Theorem \ref{thm:main_result2} ensures that the AMP algorithm solves the SLOPE problem in an asymptotic sense. To better appreciate the convergence guarantee, it calls for elaboration on \eqref{eq:sim1}. First, it implies that $\|\widehat{\bet} -  \bet^{t}\|^2/\p$ converges in probability to a constant, say $c_t$. Next, \eqref{eq:sim1} says $c_t \goto 0$ as $t \goto \infty$.

\subsection{Exact Asymptotic Characterization of the SLOPE Estimate}

A consequence of Theorem ~\ref{thm:main_result1}, is that the SLOPE estimator $\widehat{\bet}$ inherits performance guarantees provided by the AMP state evolution, in the sense of Theorem \ref{thm:main_result3} below.
Theorem \ref{thm:main_result3} provides as asymptotic characterization 
of pseudo-Lipschitz loss between $\widehat{\bet}$  and the truth $\bet$.   
\begin{definition}
\label{def:unifPLfunc}
{\textbf{\emph{Uniformly pseudo-Lipschitz functions}}}~\cite{nonseparable}: For $k\in \mathbb{N}_{>0}$, a function $\phi : \R^{d} \to \R$ is \emph{pseudo-Lipschitz of order $k$} if there exists a constant $L$, 
such that for $\mathbf{a}, \mathbf{b} \in \R^{d}$,
\begin{equation}\label{eq:pseudo-Lipschitz}
\|\phi(\bm a)-\phi(\bm b)\| \leq L\Big(1+ ({\|\bm a\|}/{\sqrt{d}})^{k-1}+ ({\|\bm b\|}/{\sqrt{d}})^{k-1}\Big) \Big({\|\bm a - \bm b\|}/{\sqrt{d}}\Big).
\end{equation}
A sequence (in $\p$) of pseudo-Lipschitz functions $\{\phi_{\p}\}_{\p\in \mathbb{N}_{>0}}$
is \emph{uniformly pseudo-Lipschitz}
of order $k$ if, denoting by $L_{\p}$ the pseudo-Lipschitz constant of $\phi_{\p}$, $L_{\p} < \infty$ for each $\p$ and $\lim\sup_{\p\to\infty}L_{\p} < \infty$.
\end{definition}

\begin{theorem}\label{thm:main_result3}
Under assumptions  \textbf{(A1)} - \textbf{(A5)}, for any uniformly pseudo-Lipschitz sequence of functions $\psi_{\p}: \R^{\p} \times \R^{\p} \rightarrow \R$ and for 
$\mathbf{Z}  \sim \mathcal{N}(0, \mathbb{I}_{\p})$, 
\begin{align*}
\plim_{\p} \psi_{\p}(  \widehat{ \bet},  \bet) =  \lim_t \plim_{\p} \E_{\mathbf{Z}}[ \psi_{\p}(  \prox_{J_{\bfalph(p) \tau_t}}(\bet + \tau_t \mathbf{Z}), \bet)],
\end{align*}
where $\tau_t$ is defined in \eqref{eq:SE2} and the expectation is taken with respect to $\mathbf{Z}.$
\end{theorem}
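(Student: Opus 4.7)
The plan is to prove the theorem by a sandwich argument combining the AMP state evolution (applied via \cite{nonseparable} to the non-separable denoiser $\prox_{J_{\bm\theta_t}}$) with the $\ell_2$ convergence to the SLOPE optimizer given in Theorem \ref{thm:main_result2}. Specifically, I will compare $\psi_\p(\widehat{\bet},\bet)$ to $\psi_\p(\bet^t,\bet)$ using the pseudo-Lipschitz bound, identify the latter via state evolution, and then let $t\to\infty$.

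First, for each fixed $t$, I would invoke the non-separable state evolution theorem of \cite{nonseparable} applied to our iteration \eqref{eq:amp_slope}. Because the pseudo-data $\X^\top \z^t+\bet^t$ is asymptotically equal in distribution to $\bet+\tau_t \mathbf{Z}$ and $\bet^{t+1}=\prox_{J_{\bfalph(\p)\tau_t(\p)}}(\X^\top \z^t+\bet^t)$, this yields, for any uniformly pseudo-Lipschitz sequence $\{\psi_\p\}$,
\[
A_t := \plim_{\p}\psi_\p(\bet^t,\bet) = \plim_{\p}\E_{\mathbf{Z}}\bigl[\psi_\p(\prox_{J_{\bfalph(\p)\tau_t}}(\bet+\tau_t\mathbf{Z}),\bet)\bigr].
\]
(One must also verify the limiting values $\tau_t(\p)\to\tau_t$ and $\bfalph(\p)$ plug in consistently; this is exactly the content of the remark following Theorem \ref{thm:SE1} and of Proposition \ref{inverse_possible}.)

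Second, I would control the difference $\psi_\p(\widehat\bet,\bet)-\psi_\p(\bet^t,\bet)$ via the uniform pseudo-Lipschitz property: there is $L<\infty$ (independent of $\p$) such that
\[
|\psi_\p(\widehat\bet,\bet)-\psi_\p(\bet^t,\bet)| \leq L\Bigl(1+\tfrac{\|\widehat\bet\|^{k-1}}{\p^{(k-1)/2}}+\tfrac{\|\bet^t\|^{k-1}}{\p^{(k-1)/2}}+\tfrac{\|\bet\|^{k-1}}{\p^{(k-1)/2}}\Bigr)\tfrac{\|\widehat\bet-\bet^t\|}{\sqrt{\p}}.
\]
The prefactor norms are each bounded in probability: $\|\bet\|^2/\p\to\E[B^2]$ by \textbf{(A2)}, $\|\bet^t\|^2/\p$ has a finite in-probability limit (apply the state-evolution step above to the pseudo-Lipschitz choice $\psi_\p(\mathbf{a},\mathbf{b})=\|\mathbf{a}\|^2/\p$), and $\|\widehat\bet\|/\sqrt{\p}$ is controlled by the triangle inequality combined with Theorem \ref{thm:main_result2}. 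The remaining factor $\|\widehat\bet-\bet^t\|/\sqrt{\p}$ converges in probability to $\sqrt{c_t}$ by Theorem \ref{thm:main_result2}. Thus there is a constant $L'$ (independent of $t$) such that
\[
\plim_{\p}|\psi_\p(\widehat\bet,\bet)-A_t| \leq L'\sqrt{c_t}.
\]
Since $c_t\to 0$ by Theorem \ref{thm:main_result2}, the family $\{A_t\}$ is Cauchy, and $\plim_{\p}\psi_\p(\widehat\bet,\bet)$ exists and equals $\lim_t A_t$, which is precisely the claimed identity.

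The main obstacle will be bookkeeping for the iterated-limit order ($\plim_\p$ first, then $\lim_t$) together with ensuring the pseudo-Lipschitz prefactor is controlled \emph{uniformly} enough in $\p$ to sandwich $\psi_\p(\widehat\bet,\bet)$ between two quantities whose $\plim_\p$ values can be made arbitrarily close by choosing $t$ large. A subsidiary technical point is transferring the finite-$\p$ state-evolution statements (which are phrased in terms of $\tau_t(\p)$) to their large-$\p$ counterparts with $\tau_t$, which relies on the uniform-convergence argument sketched after Theorem \ref{thm:SE1}.
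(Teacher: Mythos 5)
Your proposal matches the paper's proof in all essentials: bound $|\psi_\p(\widehat\bet,\bet)-\psi_\p(\bet^t,\bet)|$ via the uniform pseudo-Lipschitz property, control the prefactor norms (Lemma~\ref{lem:bounded_vals} and \textbf{(P2)}), invoke Theorem~\ref{thm:main_result2} to kill $\|\widehat\bet-\bet^t\|/\sqrt{\p}$, and apply the non-separable state-evolution result of \cite{nonseparable} (Lemma~\ref{thm:main_result1}) to identify $\plim_\p\psi_\p(\bet^t,\bet)$. Your explicit Cauchy argument for $\{A_t\}$ is a slightly more careful rendering of the paper's informal iterated-limit step and is correct.
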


Theorem \ref{thm:main_result3} tells us that under uniformly pseudo-Lipschitz loss, in the large system limit, distributionally the SLOPE optimizer acts as a `denoised' version of the truth corrupted by additive Gaussian noise where the denoising function is given by the proximal operator, i.e.\ within uniformly pseudo-Lipschitz loss $\widehat{\beta}$ can be replaced with $\prox_{J_{\bfalph(p) \tau_t}}(\bet + \tau_t \mathbf{Z})$ for large $\p, t$.  

The proof of Theorem \ref{thm:main_result3} can be found in Section \ref{sec_mainproof}.  We show that Theorem \ref{thm:main_result3} follows from Theorem \ref{thm:main_result2} and  recent AMP theory dealing with the state evolution analysis in the case of non-separable denoisers \cite{nonseparable}, which can be used to demonstrate that the state evolution given in \eqref{eq:SE2} characterizes the performance of the SLOPE AMP \eqref{eq:amp_slope} via pseudo-Lipschitz loss functions.

We note that \cite[Theorem 1]{SLOPEasymptotic} follows by Theorem \ref{thm:main_result3} and their separability result \cite[Proposition 1]{SLOPEasymptotic}.  To see this, we use the following lemma that is a simple application of the Law of Large Numbers.
\begin{lemma}
\label{lem:beta_expectation}
For any function $f:\R^{\p} \rightarrow \R$ that is asymptotically separable, in the sense that there exists some function $\widetilde{f}: \R \rightarrow \R$, such that
\[\Big \lvert f(\bet) - \frac{1}{\p} \sum_{i=1}^n \widetilde{f}(\beta_i)\Big \lvert \rightarrow 0, \quad \text{ as } \quad p \rightarrow \infty,\]
where $ \widetilde{f}(B)$ is Lebesgue integrable then $\plim_p\Big(f(\bet) - \E_{\bm B} [\widetilde{f}(\bm B)]\Big) = 0,$ where $\bm B \sim $ i.i.d.\ $B$.
\end{lemma}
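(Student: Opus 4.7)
\textbf{Proof proposal for Lemma \ref{lem:beta_expectation}.}
The plan is to decompose the quantity $f(\bet) - \E_{\bm B}[\widetilde{f}(\bm B)]$ via the triangle inequality and handle each resulting term separately: one using the asymptotic separability hypothesis, and the other via the weak law of large numbers.

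First, I write
\begin{equation*}
\bigl|f(\bet) - \E_{\bm B}[\widetilde{f}(\bm B)]\bigr| \;\leq\; \Bigl|f(\bet) - \tfrac{1}{\p}\sum_{i=1}^{\p} \widetilde{f}(\beta_i)\Bigr| \;+\; \Bigl|\tfrac{1}{\p}\sum_{i=1}^{\p} \widetilde{f}(\beta_i) - \E_{\bm B}[\widetilde{f}(\bm B)]\Bigr|.
\end{equation*}
The first term on the right-hand side converges to $0$ (in probability, as $\p \to \infty$) directly by the asymptotic separability hypothesis. For the second term, I would invoke the weak law of large numbers: since $\beta_1, \beta_2, \ldots$ are i.i.d.\ with distribution $B$ (assumption \textbf{(A2)}), the random variables $\{\widetilde{f}(\beta_i)\}_{i\ge 1}$ are themselves i.i.d., and Lebesgue integrability of $\widetilde{f}(B)$ means $\E_{\bm B}|\widetilde{f}(\bm B)| < \infty$, so Khintchine's WLLN yields
\begin{equation*}
\tfrac{1}{\p}\sum_{i=1}^{\p} \widetilde{f}(\beta_i) \;\xrightarrow{\P}\; \E_{\bm B}[\widetilde{f}(\bm B)].
\end{equation*}
Combining the two bounds via a standard $\varepsilon/2$ argument yields the claimed convergence in probability.

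The argument is essentially a two-line application of the triangle inequality plus WLLN, so there is no serious obstacle. The only subtle point worth being explicit about is the mode of convergence in the separability hypothesis: the statement ``$|f(\bet) - \tfrac{1}{\p}\sum_i \widetilde{f}(\beta_i)| \to 0$'' must be interpreted as convergence in probability (since $\bet$ is random), and this interpretation is consistent with how the result is used in the preceding discussion of \eqref{eqn:non flat}. Under this reading the proof above is complete with no further machinery required.
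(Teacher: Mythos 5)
Your proof is correct and takes the same approach the paper itself indicates but does not spell out: the paper introduces this lemma only with the remark that it ``is a simple application of the Law of Large Numbers.'' Your triangle-inequality split, with the first term handled by the separability hypothesis and the second by Khintchine's WLLN (using $\E|\widetilde{f}(B)|<\infty$), is precisely that application, and your note about interpreting the hypothesis as convergence in probability is a reasonable clarification of a point the paper leaves implicit (the paper also has a typo — the sum should run to $\p$, not $n$ — which you silently correct).
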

Now to show the result \cite[Theorem 1]{SLOPEasymptotic},
consider a special case of Theorem \ref{thm:main_result3} where $\psi_p(\bm x, \bm y) = \frac1p \sum \psi(x_i, y_i)$ for function $\psi: \R \times \R \rightarrow \R$ that is pseudo-Lipschitz of order $k=2$. It is easy to show that $\psi_p(\cdot, \cdot)$ is uniformly pseudo-Lipschitz of order $k=2$.  The result of Theorem \ref{thm:main_result3} then says that
\[\plim_{\p} \frac{1}{\p} \sum_{i=1}^{\p} \psi(  \widehat{ \beta}_i,  \beta_{i}) = \lim_t \plim_{\p}\frac{1}{\p} \sum_{i=1}^{\p}   \E_{\mathbf{Z}}[\psi(  [\prox_{J_{\bfalph(p) \tau_t}}(\bet + \tau_t \mathbf{Z})]_i , \beta_{i})].\]
Then \cite[Theorem 1]{SLOPEasymptotic} follows by \cite[Proposition 1]{SLOPEasymptotic}, restated below in Lemma \ref{lem:yue_sep}, the Law of Large Numbers, and Theorem \ref{thm:SE1}. 
Now we restate in Lemma \ref{lem:yue_sep}, the result given in \cite[Proposition 1]{SLOPEasymptotic}, which says that $\prox_{J_{\bfalph \tau_t}}( \cdot)$ becomes asymptotically separable as $p\to\infty$, for convenience.
	\begin{lemma}[Proposition 1, \cite{SLOPEasymptotic}]
		For an input sequence $\{\bm v{(p)}\}$, and a sequence of thresholds $\{\bm \lambda{(p)}\}$, both  having empirical distributions that weakly converge to distributions $V$ and $\Lambda$, respectively, then there exists a limiting scalar function $h$ (determined by $V$ and $\Lambda$) such that as $\p \rightarrow \infty,$
		\be
		\norm{\prox_{J_{\blam{(p)}}}(\bm v{(p)}) - h (\bm v{(p)}; V, \Lambda)}^2/\p \rightarrow 0,
		\label{eq:yue_limit}
		\ee
		where $h$ applies $h(\cdot; V, \Lambda)$ coordinate-wise to $\bm v{(\p)}$ (hence it is separable) and $h$ is Lipschitz(1).
		\label{lem:yue_sep}
\end{lemma}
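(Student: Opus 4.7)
The plan is to exploit the classical reformulation of the SLOPE prox as an isotonic regression on sorted inputs. Given $\bm v(p)$ with sorted magnitudes $|v|_{(1)}\geq\cdots\geq|v|_{(p)}$ and decreasing thresholds $\lambda_1\geq\cdots\geq\lambda_p$, the magnitudes of $\prox_{J_{\blam(p)}}(\bm v(p))$, arranged in the same decreasing order $w_1\geq\cdots\geq w_p$, solve
\[
\min_{w_1\geq\cdots\geq w_p\geq 0}\tfrac{1}{2}\sum_{i=1}^p \bigl(w_i-(|v|_{(i)}-\lambda_i)\bigr)^2,
\]
a standard isotonic regression problem computable by the pool-adjacent-violators (PAV) algorithm followed by truncation at zero. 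Signs and the permutation matching the ranks of $\bm v(p)$ are then restored coordinatewise, reconstructing $\prox_{J_{\blam(p)}}(\bm v(p))$ from $w$.

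I would next lift this discrete problem to the continuum. Because the empirical measures of $\bm v(p)$ and $\blam(p)$ converge weakly to $V$ and $\Lambda$ with finite second moments, the piecewise-constant step functions $u\mapsto |v|_{(\lceil pu\rceil)}(p)$ and $u\mapsto\lambda_{\lceil pu\rceil}(p)$ converge in $L^2([0,1])$ to the reversed quantile functions $Q_{|V|}(1-u)$ and $Q_\Lambda(1-u)$. Set $g(u):=Q_{|V|}(1-u)-Q_\Lambda(1-u)$ and let $\tilde g$ be the $L^2$-projection of $g$ onto the closed convex cone $\mathcal{K}$ of non-negative, non-increasing functions on $[0,1]$; $\tilde g$ admits a Grenander-type characterization as the derivative of the greatest convex minorant of the cumulative integral of $g$. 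Since $L^2$-projection onto a closed convex cone is $1$-Lipschitz in $L^2$, the $L^2$ convergence of the inputs combined with a discretization estimate forces the step-function outputs of PAV to converge to $\tilde g$ in $L^2([0,1])$.

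I would then define the scalar denoiser by $h(v;V,\Lambda):=\sgn(v)\,\tilde g(1-F_{|V|}(|v|))$, which reads off the continuum-regression value at the quantile level of $v$ under $|V|$. The $L^2$ convergence of the PAV outputs combined with the preserved sign/rank correspondence yields
\[
\tfrac{1}{p}\|\prox_{J_{\blam(p)}}(\bm v(p))-h(\bm v(p);V,\Lambda)\|^2\to 0,
\]
which is \eqref{eq:yue_limit}. For the Lipschitz claim, given scalars $v_1\neq v_2$, construct sequences $\bm v^{(1)}(p),\bm v^{(2)}(p)$ with the same limiting empirical distribution $V$ that differ in a single coordinate whose value is $v_1$ versus $v_2$; $\ell_2$ non-expansiveness of $\prox_{J_{\blam(p)}}$ bounds the corresponding output discrepancy at that coordinate by $|v_1-v_2|$, and the display above lets us pass to the limit to deduce $|h(v_1;V,\Lambda)-h(v_2;V,\Lambda)|\leq|v_1-v_2|$.

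The main obstacle I anticipate is handling the interaction between discontinuities of $Q_{|V|}$ (at atoms of $V$) and the flat pieces of $\tilde g$ that arise from PAV pools. Coordinates of $\bm v(p)$ whose magnitudes fall inside such a pool are all assigned the same output, so the well-definedness of $h$ as a function of a scalar input hinges on showing that the fraction of indices whose pool membership is ambiguous as $p\to\infty$ vanishes in $L^2$ average. I would control this via a careful level-set analysis relating the finite-$p$ PAV output to the level sets of the greatest convex minorant of the integrated continuum input, using the finite second moment of $V$ and $\Lambda$ to obtain uniform integrability along boundaries of flat regions, and using monotonicity of $Q_{|V|}$ to single out a canonical representative within each equivalence class.
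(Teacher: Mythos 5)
The paper itself does not prove this lemma; it quotes it verbatim from [SLOPEasymptotic, Proposition 1] and relies on it as an external result, so there is no in-paper proof to compare against. Judged on its own terms, your reduction of the SLOPE prox to isotonic regression on sorted magnitudes (PAV followed by truncation at zero, then restoring signs and ranks) is the standard reformulation and is correct, and lifting it to an $L^2([0,1])$ cone projection of $g(u)=Q_{|V|}(1-u)-Q_\Lambda(1-u)$ is a sensible continuum analogue of what the cited reference does.

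Two concrete gaps remain. First, weak convergence of the empirical measures does \emph{not} by itself give $L^2$ convergence of the reversed quantile step functions to $g$; that is Wasserstein-2 convergence, and you need the additional second-moment convergence $\|\blam(p)\|^2/p\to\E[\Lambda^2]$ (and its analogue for $\bm v(p)$) that the paper supplies via its relaxed version of \textbf{(A4)}. Your discretization estimate silently uses this, so it should be invoked explicitly. Second, and more seriously, the Lipschitz(1) step does not close. The statement $\frac{1}{\p}\|\prox_{J_{\blam(p)}}(\bm v(p))-h(\bm v(p))\|^2\to 0$ is an average-$\ell^2$ result and gives no control on the value at a single distinguished coordinate $j_p$; hence the $\ell^2$ nonexpansiveness bound at that coordinate cannot be ``passed to the limit'' to conclude $|h(v_1)-h(v_2)|\le|v_1-v_2|$. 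The Lipschitz bound is better read off your own continuum construction: writing $u_k=1-F_{|V|}(|v_k|)$, on intervals where no pooling occurs $\tilde g=g$ and, since $Q_{|V|}(F_{|V|}(|v_k|))=|v_k|$, one gets $h(v_2)-h(v_1)=(v_2-v_1)-\big(Q_\Lambda(F_{|V|}(v_2))-Q_\Lambda(F_{|V|}(v_1))\big)\le v_2-v_1$ by monotonicity of $Q_\Lambda$, together with $h(v_2)-h(v_1)\ge 0$ since $\tilde g$ is nonincreasing; on pools $\tilde g$ is constant, which only shrinks differences. That variational argument replaces the one-coordinate perturbation argument and is where the Lipschitz constant actually comes from.
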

Then \cite[Theorem 1]{SLOPEasymptotic} follows from Theorem \ref{thm:main_result3}  by using the asymptotic separability of the prox operator. 
Namely, 
the result of Lemma~\ref{lem:yue_sep} (using that $\bfalph(p) \tau_t$ has an empirical distribution that converges weakly to $A \tau_t$ for $A$ defined by \eqref{eq:lambda_func}), along with Cauchy-Schwarz and the fact that $\psi$ is pseudo-Lipschitz, allow us to apply a dominated convergence argument (see Lemma~\ref{lem:expectations}), from which it follows for some limiting scalar function $h^t$ as specified by Lemma~\ref{lem:yue_sep},
\ben
 \frac{1}{p} \Big \lvert\sum_{i=1}^{\p}\E_{\bm Z} [  \psi(  [\prox_{J_{\bfalph(p) \tau_t}}(\bet + \tau_t \mathbf{Z})]_i , \beta_{i}) ] -  \sum_{i=1}^{\p} \E_{\bm Z}[   \psi(  [h^t(\bet + \tau_t \mathbf{Z})]_i , \beta_{i})] \Big \lvert  \rightarrow 0.
 \een
Then the above allows us to apply Lemma~\ref{lem:beta_expectation} and the Law of Large Numbers to show
\ben
\begin{split}
\plim_{\p}\frac{1}{\p} \sum_{i=1}^{\p}   \E_{\mathbf{Z}}[\psi(  [\prox_{J_{\bfalph(p) \tau_t}}(\bet + \tau_t \mathbf{Z})]_i , \beta_{i})] &= \lim_{\p}\frac{1}{\p} \sum_{i=1}^{\p}   \E_{\mathbf{Z}, \bm B}[\psi( h^t([\bm B + \tau_t \mathbf{Z}]_i), B_{i})] \\
&=  \E_{Z, B}[\psi( h^t(B + \tau_t Z), B)],
%
\end{split}
\een
Finally we note that the result of \cite[Theorem 1]{SLOPEasymptotic} follows since
\ben
\begin{split}
\lim_t    \E_{Z, B}[\psi( h^t(B + \tau_t Z), B)] =   \E_{Z, B}[\psi( h^*(B + \tau_* Z), B)].
%
\end{split}
\een

We highlight that our Theorem \ref{thm:main_result3} allows the consideration of a non-asymptotic case in $t$. While Theorem \ref{thm:SE1} motivates an algorithmic way to find a value $\tau_t(\p)$ which approximates $\tau_*(\p)$ well, Theorem \ref{thm:main_result3} guarantees the accuracy of such approximation for use in practice.   One particular use of Theorem~\ref{thm:main_result3} is to design the optimal sequence $\blam$ that achieves the minimum $\tau_*$ and equivalently minimum error \cite{SLOPEasymptotic}, though a concrete algorithm for doing so is still under investigation.

Finally we show how we use Theorem~\ref{thm:main_result3} to study the asymptotic mean-square error between the SLOPE estimator and the truth \cite{celentano2019fundamental}.  

\begin{corollary}
	\label{corol1}
 Under assumptions $\textbf{(A1)} - \textbf{(A5)}$,
$\plim_{\p} \norm{ \widehat{ \bet}- \bet}^2/\p  = \delta(\tau_{*}^2 - \sigma_w^2).$
\end{corollary}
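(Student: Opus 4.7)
The plan is to apply Theorem~\ref{thm:main_result3} with the quadratic loss $\psi_p(\bm a, \bm b) := \|\bm a - \bm b\|^2/p$ and then identify the resulting asymptotic expression via the state evolution recursion. Before invoking Theorem~\ref{thm:main_result3}, I must verify that the sequence $\{\psi_p\}$ is uniformly pseudo-Lipschitz of order $k = 2$ in the sense of Definition~\ref{def:unifPLfunc} (viewing each $\psi_p$ as a function on $\R^{2p}$). Using the identity $\|\bm u\|^2 - \|\bm v\|^2 = \langle \bm u - \bm v,\, \bm u + \bm v\rangle$ together with Cauchy--Schwarz and the triangle inequality, one obtains
\[
\lvert\psi_p(\bm a_1, \bm b_1) - \psi_p(\bm a_2, \bm b_2)\rvert \leq L \cdot \frac{\|(\bm a_1 - \bm a_2,\, \bm b_1 - \bm b_2)\|}{\sqrt{2p}} \cdot \Big(1 + \frac{\|(\bm a_1, \bm b_1)\|}{\sqrt{2p}} + \frac{\|(\bm a_2, \bm b_2)\|}{\sqrt{2p}}\Big)
\]
for an absolute constant $L$ independent of $p$, which is exactly the required bound.

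Plugging this $\psi_p$ into Theorem~\ref{thm:main_result3} gives
\[
\plim_p \frac{\|\widehat{\bet} - \bet\|^2}{p} \;=\; \lim_t \plim_p \frac{1}{p}\, \E_{\mathbf{Z}}\,\|\prox_{J_{\bfalph(p)\tau_t}}(\bet + \tau_t \mathbf{Z}) - \bet\|^2,
\]
so the next task is to evaluate the inner $\plim_p$ for each fixed $t$. I would follow exactly the asymptotic-separability route that the excerpt already uses to deduce \cite[Theorem 1]{SLOPEasymptotic} from Theorem~\ref{thm:main_result3}: by Lemma~\ref{lem:yue_sep} the vector $\prox_{J_{\bfalph(p)\tau_t}}(\bet + \tau_t \mathbf{Z})$ is well approximated coordinatewise by a limiting scalar denoiser $h^t$; then Lemma~\ref{lem:beta_expectation} together with the dominated-convergence controls supplied just after Lemma~\ref{lem:yue_sep} let me push the limit through and reduce the inner $\plim_p$ to the deterministic scalar expectation $\E_{Z, B}[(h^t(B + \tau_t Z) - B)^2]$. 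By the separable reformulation of the state evolution \eqref{eq:SE2} recorded immediately after Lemma~\ref{lem:yue_sep}, this last quantity equals exactly $\delta(\tau_{t+1}^2 - \sigma_w^2)$.

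Finally, passing $t \to \infty$ and invoking the monotone convergence $\tau_t \to \tau_*$ guaranteed by Theorem~\ref{thm:SE1} (together with the interchange-of-limits argument stated immediately after Theorem~\ref{thm:SE1}, which lifts the finite-$p$ fixed-point statement from $\tau_t(p)$ to the large-$p$ recursion \eqref{eq:SE2}) yields
\[
\plim_p \frac{\|\widehat{\bet} - \bet\|^2}{p} \;=\; \lim_t \delta(\tau_{t+1}^2 - \sigma_w^2) \;=\; \delta(\tau_*^2 - \sigma_w^2),
\]
which is the claim. The main obstacle in a fully detailed write-up is concentrated in the second step: rigorously swapping the $\E_{\mathbf{Z}}$-expectation evaluated at the \emph{random} vector $\bet$ for a deterministic large-$p$ limit. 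This concentration-type statement is not free, since the prox denoiser is non-separable; it is precisely where the asymptotic separability of Lemma~\ref{lem:yue_sep} does the heavy lifting, reducing a non-separable problem to a separable one amenable to the LLN packaged in Lemma~\ref{lem:beta_expectation}.
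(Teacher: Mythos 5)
Your proposal is correct and follows essentially the same route as the paper's proof: apply Theorem~\ref{thm:main_result3} with the quadratic loss, reduce the inner $\plim_{\p}$ for fixed $t$ to a deterministic quantity by replacing the random $\bet$ with i.i.d.\ $\mathbf{B}$ via Lemma~\ref{lem:beta_expectation} (justified through the asymptotic separability of Lemma~\ref{lem:yue_sep} and the dominated-convergence machinery used for property \textbf{(P2)}), identify this with $\delta(\tau_{t+1}^2-\sigma_w^2)$ from the state evolution \eqref{eq:SE2}, and pass $t\to\infty$ using Theorem~\ref{thm:SE1}. You spell out the pseudo-Lipschitz check and the scalar-denoiser reduction somewhat more explicitly than the paper does, but these are the same steps.
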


\begin{proof}
Applying Theorem~\ref{thm:main_result3} to the pseudo-Lipschitz loss function $\psi^1: \R^{\p} \times \R^{\p} \rightarrow \R$, defined as $\psi^1(\mathbf{x}, \mathbf{y}) = ||\mathbf{x}-\mathbf{y}||^2/\p$, we find
$\plim_{\p} \frac{1}{\p}\norm{ \widehat{ \bet}- \bet}^2  = \lim_t \plim_{\p} \frac{1}{\p} \E_{\mathbf{Z}}[\norm{  \prox_{J_{\bfalph \tau_t}}(\bet + \tau_t \mathbf{Z})- \bet}^2].$  
The desired result follows since $\lim_t \plim_{\p} \frac{1}{\p} \E_{\mathbf{Z}}[\norm{  \prox_{J_{\bfalph \tau_t}}(\bet + \tau_t \mathbf{Z})- \bet}^2] = \delta(\tau_{*}^2 - \sigma_w^2)$.  To see this, note that  $\lim_t \delta(\tau_{t+1}^2 - \sigma_w^2)  = \delta(\tau_{*}^2 - \sigma_w^2) $ and %
\[\plim_{\p} \frac{1}{\p} \E_{\mathbf{Z}}[\norm{  \prox_{J_{\bfalph \tau_t}}(\bet + \tau_t \mathbf{Z})- \bet}^2] = \lim_{\p}  \frac{1}{\p}\E_{\mathbf{Z}, \bm B}[  \norm{  \prox_{J_{\bfalph \tau_t}}(\mathbf{B} + \tau_t \mathbf{Z})- \mathbf{B}}^2] = \delta(\tau_{t+1}^2 - \sigma_w^2),\]
for $\mathbf{B}$ elementwise i.i.d.\ $B$ independent of  $\mathbf{Z}  \sim \mathcal{N}(0, \mathbb{I}_{\p})$.  A rigorous argument for the above requires showing that the assumptions of Lemma~\ref{lem:beta_expectation} are satisfied and follows similarly to that used to prove property \textbf{(P2)} stated in Section \ref{sec_mainproof} and proved in Appendix~\ref{app_props}.
\end{proof}


\section{Proof for Asymptotic Characterization of the SLOPE Estimate}\label{sec_mainproof}

In this section we prove Theorem \ref{thm:main_result3}. To do this, we use a result guaranteeing that the state evolution given in \eqref{eq:SE2} characterizes the performance of the SLOPE AMP algorithm \eqref{eq:AMP1},  given in Lemma \ref{thm:main_result1} below.
Specifically, Lemma \ref{thm:main_result1} relates the state evolution \eqref{eq:SE2} to the output of the AMP iteration \eqref{eq:AMP1} for pseudo-Lipschitz loss functions. This result follows from \cite[Theorem 14]{nonseparable}, which is a general result relating state evolutions to AMP algorithm with non-separable denoisers. In order to apply \cite[Theorem 14]{nonseparable},  we need to demonstrate that our denoiser, i.e.\ the proximal operator $\prox_{J_{\bfalph \tau_t}}(\cdot)$ defined in \eqref{eq:prox}, satisfies two additional properties labeled \textbf{(P1)} and \textbf{(P2)} below.

Define a sequence of denoisers $\{\eta_{\p}^t\}_{\p \in  \mathbb{N}_{>0}}$ where $\eta_{\p}^t: \R^{\p} \rightarrow \R^{\p}$ to be those that apply the proximal operator $\prox_{J_{\bfalph \tau_t}}(\cdot)$ defined in \eqref{eq:prox}, i.e.\ for a vector $\mathbf{v} \in \R^{\p}$, define 
\be
\eta_{\p}^t(\mathbf{v}) := \prox_{J_{\bfalph \tau_t}}( \mathbf{v}).
\label{eq:eta_def}
\ee
\begin{itemize}
\item[\textbf{(P1)}] For each $t$, denoisers $\eta_{\p}^t(\cdot)$ defined in \eqref{eq:eta_def} are uniformly Lipschitz (i.e.\ uniformly pseudo-Lipschitz of order $k=1$) per Definition \ref{def:unifPLfunc}.
\item[\textbf{(P2)}] For any $s, t$ with $(\bm Z, \bm Z')$ a pair of length-$\p$ vectors, where  for $i \in \{1, 2, \ldots, \p\}$, the pair $(Z_i, Z'_i)$ i.i.d.\ $\sim \mathcal{N}(0, \mathbf{\Sigma})$ with $\mathbf{\Sigma}$ any $2 \times 2$ covariance matrix, the following limits exist and are finite.
\begin{align*}
&\plim_{\p\to\infty}\frac{1}{\p} \norm{\bet},\quad \plim_{\p\to\infty}\frac{1}{\p} \E_{\bm Z}[\bet^\top \eta_{\p}^t(\bet + \bm Z)],\quad \text{ and } \quad  \plim_{\p\to\infty}\frac{1}{\p} \E_{\bm Z, \bm Z'}[\eta_{\p}^s(\bet + \bm Z')^\top \eta_{\p}^t(\bet + \bm Z)].
\end{align*}
\end{itemize}

We will show that properties \textbf{(P1)} and \textbf{(P2)} are satisfied for our problem in Appendix \ref{app_props}. 

\begin{lemma}{\cite[Theorem 14]{nonseparable}}\label{thm:main_result1}
Under assumptions  \textbf{(A1)} - \textbf{(A4)}, given that \textbf{(P1)} and \textbf{(P2)} are satisfied, for the AMP algorithm in \eqref{eq:AMP1} and for any uniformly pseudo-Lipschitz sequence of functions $\phi_{\n}: \R^{\n} \times \R^{\n} \rightarrow \R$ and $\psi_{\p}: \R^{\p} \times \R^{\p} \rightarrow \R$, let $\bm Z  \sim \mathcal{N}(0, \mathbb{I}_{\n})$ and $\bm Z'  \sim \mathcal{N}(0, \mathbb{I}_{\p})$, then
\begin{align*}
\plim_{\n} \Big(\phi_{\n}(\z^t, \w) - \E_{\bm Z}[ \phi_{\n}( \w + \sqrt{\tau_t^2 - \sigma_{w}^2} \bm Z, \w)] \Big) &= 0,\\
\plim_{\p} \Big(\psi_{\p} (  \bet^{t} + \X^\top \z^t, \bet) - \E_{\bm Z'}[ \psi_{\p}(  \bet + \tau_t \bm Z', \bet)]  \Big) &= 0,
\end{align*}
where $\tau_t$ is defined in \eqref{eq:SE2}.
\end{lemma}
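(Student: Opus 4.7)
The plan is to recognize that Lemma~\ref{thm:main_result1} is essentially a transcription of the general non-separable AMP result of \cite[Theorem 14]{nonseparable} into the SLOPE setting, so the proof amounts to matching notation, verifying the hypotheses of that theorem, and reading off its conclusions. First I would identify the correspondence: the framework of \cite[Theorem 14]{nonseparable} considers an AMP recursion of the form $x^{t+1} = \eta_t(A^\top z^t + x^t)$ and $z^{t+1} = y - A x^{t+1} + (z^t/n)\dive \eta_t(A^\top z^t + x^t)$, driven by an i.i.d.\ Gaussian design, a signal, a noise, and a denoiser sequence $\{\eta_t\}$. Taking the design to be $\X$ per \textbf{(A1)}, the response to be $\y = \X\bet + \w$ per \eqref{eq:model} and \textbf{(A3)}, and the denoiser to be $\eta_{\p}^t = \prox_{J_{\bfalph \tau_t}}$ per \eqref{eq:eta_def} puts our recursion in this form, and the explicit divergence formula \eqref{eq:div_prox} confirms that the Onsager correction in \eqref{eq:AMP1} has exactly the required shape.

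Next I would verify the hypotheses of \cite[Theorem 14]{nonseparable} term by term. Assumption \textbf{(A1)} supplies the Gaussian design; \textbf{(A2)}--\textbf{(A4)} supply the moment conditions on the signal, noise, and regularizer sequence; \textbf{(A5)} supplies the proportional large-system limit. The two remaining analytic hypotheses of the cited theorem, uniform (pseudo-)Lipschitzness of the denoiser sequence and existence of the limiting state-evolution quadratic forms, are exactly \textbf{(P1)} and \textbf{(P2)} respectively; the lemma assumes both, so all hypotheses are in place. I would then check that the state-evolution covariance recursion produced by the general theorem specializes, under $\eta_{\p}^t = \prox_{J_{\bfalph \tau_t}}$, to our scalar recursion \eqref{eq:SE2}: the cited theorem prescribes the variance $\sigma_w^2 + \plim_{\p} \E \|\eta_{\p}^t(\bet + \tau_t \bm Z) - \bet\|^2/(\delta \p)$, which by \textbf{(P2)} exists and agrees with \eqref{eq:SE2} term for term, so the general theorem's state evolution scalar and our $\tau_t$ coincide.

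With the alignment in place, the two conclusions of Lemma~\ref{thm:main_result1} follow directly from the two conclusions of \cite[Theorem 14]{nonseparable}. The pseudo-data statement asserts that, against any uniformly pseudo-Lipschitz $\psi_{\p}$, $\bet^t + \X^\top \z^t$ is asymptotically indistinguishable from $\bet + \tau_t \bm Z'$ with $\bm Z' \sim \mathcal{N}(0, \mathbb{I}_{\p})$, which is one of the cited theorem's two main conclusions. The residual-side statement follows from the other: after decomposing $\z^t = \w + (\z^t - \w)$, the component orthogonal to $\w$ is asymptotically Gaussian with variance $\tau_t^2 - \sigma_w^2$, and substituting into any uniformly pseudo-Lipschitz $\phi_{\n}$ gives the claimed limit. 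The principal obstacle is purely bookkeeping: one must extract the single-iterate statements of the lemma from the multi-iterate joint covariance structure of the state evolution in \cite[Theorem 14]{nonseparable}, and confirm that the observation noise $\w$ plays the role of a fixed shift while the orthogonal Gaussian component inherits the asymptotic variance $\tau_t^2 - \sigma_w^2$. No new analytic estimate is required; the substantive work has been isolated in \textbf{(P1)} and \textbf{(P2)}, which are verified in Appendix~\ref{app_props}.
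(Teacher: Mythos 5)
Your proposal is correct and matches the paper's approach exactly: the paper also treats Lemma~\ref{thm:main_result1} as a direct application of \cite[Theorem 14]{nonseparable} once the denoiser-specific hypotheses \textbf{(P1)} and \textbf{(P2)} are verified (which the paper does in Appendix~\ref{app_props}), with the SLOPE AMP recursion and state evolution \eqref{eq:SE2} identified as special cases of the general non-separable framework. No substantive difference between your argument and the paper's.
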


We now show that Theorem \ref{thm:main_result3} follows from Lemma \ref{thm:main_result1} and Theorem \ref{thm:main_result2}. 

\begin{proof}[Proof of Theorem \ref{thm:main_result3}]
First, for any fixed $n$ and $t$, the following bound uses that $\psi_n$ is uniformly pseudo-Lipschitz of order $k$ and the Triangle Inequality,
\begin{align*}
\Big \lvert  \psi_{\p}(  \bet^{t}, \bet)- \psi_{\p}(  \widehat{\bet}, \bet)\Big \lvert & \leq L \Big(1 + \Big(\frac{\norm{ (\bet^{t}, \bet)}}{\sqrt{2\p}}\Big)^{k-1} +  \Big(\frac{\norm{(  \widehat{\bet}, \bet)}}{\sqrt{2\p}}\Big)^{k-1}\Big)  \frac{1}{\sqrt{2\p}}\norm{\bet^{t} - \widehat{\bet}} \\
& \leq L \Big(1 + \Big(\frac{\norm{ \bet^{t}}}{\sqrt{2\p}}\Big)^{k-1} +  \Big(\frac{\norm{  \widehat{\bet}}}{\sqrt{2\p}}\Big)^{k-1} + \Big(\frac{\norm{ \bet}}{\sqrt{2\p}}\Big)^{k-1}\Big)  \frac{1}{\sqrt{2\p}}\norm{\bet^{t} - \widehat{\bet}}.
\end{align*}
Now we take limits on either side of the above, first with respect to ${\p}$ and then with respect to $t$.  We note that the term $\frac{1}{\sqrt{n}}\norm{\bet^{t} - \widehat{\bet}}$ vanishes by Theorem \ref{thm:main_result2}.  Then as long as 
\be
\lim_t \plim_{\p} \Big({\norm{ \bet^{t}}}/{\sqrt{{\p}}}\Big)^{k-1}, \qquad  \plim_{\p} \Big({\norm{  \widehat{\bet}}}/{\sqrt{{\p}}}\Big)^{k-1}, \qquad \text{ and } \qquad \plim_{\p} \Big({\norm{ \bet}}/{\sqrt{{\p}}}\Big)^{k-1},
\label{eq:bounded_limits}
\ee
are all finite, we have
$\plim_{\p} \psi_{\p}(  \widehat{\bet}, \bet) = \lim_t \plim_{\p}  \psi_{\p}(  \bet^{t}, \bet).$
But by Theorem \ref{thm:main_result1} we also know that 
$$ \lim_t \plim_{\p}  \psi_{\p}(  \bet^{t}, \bet) =  \lim_t \plim_{\p} \E[ \psi_{\p}(  \eta^{t}( \bet + \tau_t \bm Z),  \bet)],$$
giving the desired result.

Finally we convince ourself that the limits in \eqref{eq:bounded_limits} are finite.  Since $k$ finite, that the third term in \eqref{eq:bounded_limits} is finite follows by property \textbf{(P2)}.  Bounds for the first and second term are demonstrated in Lemma \ref{lem:bounded_vals} found in Appendix \ref{app_AMP}.

\end{proof}


\section{Proof AMP Finds the SLOPE Solutions}\label{sec_lemmaproof}
In this section we aim to prove Theorem \ref{thm:main_result2}.  
Define the SLOPE cost function as follows,
\be
\label{eq:SLOPEcost}
\mathcal{C}(\mathbf{b}) := \frac{1}{2}\| \y - \X \mathbf{b}\|^2+J_{\blam}(\mathbf{b}),
\ee
where $J_{\blam}(\mathbf{b})$ is the sorted $\ell_1$-norm.
The proof of Theorem \ref{thm:main_result2} relies on a technical lemma, Lemma \ref{lem:opt_lem}, stated in Section~\ref{sec:main_tech} below, that deals carefully with the fact that the SLOPE cost function given in \eqref{eq:SLOPEcost} is \emph{not} necessarily strongly convex. 

In the LASSO case, one works around this challenge by showing that the (LASSO) cost function does have nice properties when considering just the elements of the non-zero support of $\bet^t$ at any (large) iteration $t$, using that the non-zero support of $\bet$ has size no larger than $\n < \p$.

In the SLOPE problem, however, it is possible that the support set has size exceeding $\n$, and therefore the LASSO analysis is not immediately applicable.  Our proof develops novel techniques that are tailored to the characteristics of the SLOPE solution.  Specifically, when considering the SLOPE problem, one can show nice properties (similar to those in the LASSO case) by considering a support-like set, that being the \emph{unique} non-zeros in the estimate $\bet^t$ at any (large) iteration $t$. 

In other words, our strategy is to define an equivalence relation $x\sim y$ when $|x|=|y|$ and partition the entries of the AMP estimate at any iteration $t$ into equivalence classes. This allows us to observe, using \eqref{eq:blam_alph_mapping} and the non-negativity of $\blam$, that the number of equivalence classes is no larger than $\n$. (Recall that $\|\cdot\|_0^*$ counts the unique non-zero magnitudes in a vector.) We see an analogy between SLOPE's equivalence class (or `maximal atom' as described  in Section \ref{sec:prel-slope-amp}) and LASSO's support set. This approach, taken in Lemma   \ref{lem:opt_lem} below, allows us to deal with the fact that we are not guaranteed to have a strongly convex cost.  Then Lemma   \ref{lem:opt_lem}  is used to prove Theorem \ref{thm:main_result3}.

Before we state Lemma   \ref{lem:opt_lem}, we include some useful preliminary information on SLOPE that will be needed for the upcoming work.  In particular, we introduce in more details the idea of equivalence classes of elements having the same magnitude, a mapping of vector ranking denoted as $\hat{\Pi}$, and a polytope-related mapping whose image is the set of subgradients denoted as $\mathcal{P}$. These definitions are all given in more detail in Section \ref{sec:prel-slope-amp}.

\subsection{Preliminaries on SLOPE}
\label{sec:prel-slope-amp}

In general, we refer to the function $ \mathcal{C}(\cdot)$ stated in \eqref{eq:SLOPEcost} as the SLOPE cost function and the SLOPE estimator $\hat{\bet}$ is the one that minimizes the SLOPE cost.  We note that the SLOPE cost function  $ \mathcal{C}(\cdot)$ depends on both $\y$ and $\blam$, so technically a notation like $\mathcal{C}_{(\y, \blam)}(\cdot)$ would be more rigorous, however, we don't think that dropping the explicit dependence on $(\y, \blam)$ will cause any confusion.

For a convex function $f: \R^{\p} \rightarrow \R$, we denote the subgradient of $f$ at a point $\mathbf{x} \in \R^{\p}$ as $\partial f(\mathbf{x})$.    We will be interested, particularly, in the subgradient of the SLOPE cost $\partial \mathcal{C}( \mathbf{b})$ which forces us to study the subgradient of the SLOPE norm $\partial J_{\blam}(\mathbf{b})$.  In particular,
\begin{fact}
\label{Csub}
$\partial \mathcal{C}( \mathbf{b}) = - \mathbf{X}^\top(\mathbf{y} - \mathbf{X}  \mathbf{b})  + \partial J_{\boldsymbol \lambda}(\mathbf{b}).$
\end{fact}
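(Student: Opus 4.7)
The plan is to recognize this as a direct application of the standard sum rule for subdifferentials of convex functions. I would split the SLOPE cost as $\mathcal{C}(\mathbf{b}) = f(\mathbf{b}) + g(\mathbf{b})$, where $f(\mathbf{b}) := \tfrac12 \|\y - \X\mathbf{b}\|^2$ is the quadratic data-fitting term and $g(\mathbf{b}) := J_{\blam}(\mathbf{b})$ is the sorted $\ell_1$ regularizer. The quadratic $f$ is continuously differentiable on all of $\R^{\p}$ with gradient $\nabla f(\mathbf{b}) = -\X^\top(\y - \X\mathbf{b})$, obtained by the chain rule, so its subdifferential reduces to the singleton $\{-\X^\top(\y - \X\mathbf{b})\}$. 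The sorted $\ell_1$ seminorm $g$ is proper convex and finite everywhere on $\R^{\p}$, being a nonnegative combination of the convex maps $\mathbf{b}\mapsto \lambda_i |\mathbf{b}|_{(i)}$.

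Next, I would invoke the Moreau--Rockafellar sum rule: for proper convex functions $f, g$ on $\R^{\p}$, provided at least one of them is continuous at some point of $\mathrm{dom}\, f \cap \mathrm{dom}\, g$, one has
$$\partial(f+g)(\mathbf{b}) = \partial f(\mathbf{b}) + \partial g(\mathbf{b})$$
at every $\mathbf{b}$. In our setting both summands are real-valued and continuous on the whole ambient space, so the qualification is automatic. Substituting the singleton expression for $\partial f$ yields precisely
$$\partial \mathcal{C}(\mathbf{b}) = -\X^\top(\y - \X\mathbf{b}) + \partial J_{\blam}(\mathbf{b}),$$
which is the claim.

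There is essentially no real obstacle here; the proof is a one-line appeal to a textbook convex-analysis identity (e.g., Rockafellar, \emph{Convex Analysis}, Theorem 23.8). The only thing worth verifying is the constraint qualification for the sum rule, and because both $f$ and $g$ take finite values on all of $\R^{\p}$, this assumption holds without any additional work. One could alternatively give a self-contained argument directly from the definition of subgradient: $\mathbf{v} \in \partial \mathcal{C}(\mathbf{b})$ iff $\mathcal{C}(\mathbf{b}') \geq \mathcal{C}(\mathbf{b}) + \langle \mathbf{v}, \mathbf{b}' - \mathbf{b}\rangle$ for all $\mathbf{b}'$, and using the exact first-order Taylor expansion of $f$ (its Hessian term is nonnegative) to peel off the smooth part yields the desired decomposition.
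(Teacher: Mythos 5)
Your proof is correct and is essentially the argument one would expect: the paper itself states Fact~\ref{Csub} without a written proof, and the Moreau--Rockafellar sum rule (with the constraint qualification trivially satisfied since both the quadratic loss and the sorted $\ell_1$ penalty are finite and continuous on all of $\R^{\p}$) together with the singleton subdifferential of the smooth quadratic is exactly the right justification. Nothing to flag.
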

We will now describe explicitly the relevant subgradient, $\partial J_{\blam}\subset\mathbb{R}^p$.  We note that the proximal operator given in \eqref{eq:prox} is linked to the subgradient of the SLOPE norm in the following way.
\begin{fact}\label{fact:sub_prox}
If $\prox_{J_{\blam}}(\mathbf{v}_1) = \mathbf{v}_2$, then $\mathbf{v}_1 - \mathbf{v}_2 \in \partial J_{\blam}(\mathbf{v}_2).$
\end{fact}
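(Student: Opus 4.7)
The plan is to derive the claim directly from Fermat's rule applied to the strictly convex minimization problem defining the proximal operator. Since the sorted $\ell_1$ norm is a norm (hence convex, proper, and lower semicontinuous), and the squared-distance term $\tfrac{1}{2}\|\mathbf{v}_1-\mathbf{b}\|^2$ is smooth and strictly convex, the objective
\[
F(\mathbf{b}) := \tfrac{1}{2}\|\mathbf{v}_1-\mathbf{b}\|^2 + J_{\blam}(\mathbf{b})
\]
is strictly convex, so it admits a unique minimizer, which by definition is $\prox_{J_{\blam}}(\mathbf{v}_1)=\mathbf{v}_2$.

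Next, I would invoke the standard first-order optimality condition for convex minimization: a point $\mathbf{v}_2$ minimizes $F$ if and only if $\mathbf{0}\in \partial F(\mathbf{v}_2)$. Because the quadratic term is everywhere differentiable with full domain, the Moreau--Rockafellar sum rule applies and yields
\[
\partial F(\mathbf{b}) = (\mathbf{b}-\mathbf{v}_1) + \partial J_{\blam}(\mathbf{b}).
\]
Substituting $\mathbf{b}=\mathbf{v}_2$ and using the optimality condition gives $\mathbf{0}\in (\mathbf{v}_2-\mathbf{v}_1)+\partial J_{\blam}(\mathbf{v}_2)$, which rearranges to $\mathbf{v}_1-\mathbf{v}_2\in \partial J_{\blam}(\mathbf{v}_2)$, as required.

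There is no real obstacle here: the argument is a textbook application of subdifferential calculus, and everything needed is already built into the definition of $\prox_{J_{\blam}}$ in \eqref{eq:prox} together with convexity and lower semicontinuity of the sorted $\ell_1$ norm. The only thing worth flagging is that one should justify the subdifferential sum rule, which is immediate because the squared-norm summand has domain all of $\R^p$. Accordingly, I expect the proof in the paper to be a one-line invocation of Fermat's rule, essentially identical to the above.
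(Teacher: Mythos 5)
Your proof is correct and is the canonical argument: Fermat's rule for the strictly convex prox objective plus the Moreau--Rockafellar sum rule (which applies since the quadratic term is finite-valued and differentiable everywhere). The paper states this Fact without proof, treating it as a standard property of proximal operators, so your one-paragraph derivation is exactly what is implicitly being invoked.
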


Define a function $\Pi_{\x}:\mathbb{R}^{\p}\goto\mathbb{R}^{\p}$ to be a mapping (not necessarily unique) that sorts its input by magnitude in descending order according to absolute values of entries in $\x$. For example, if $\x=(5,2,-3,-5)$, then there are two possible such mappings $\Pi_{\x}(\bm b)=(|b_1|,|b_4|,|b_3|,|b_2|)$ or $\Pi_{\x}(\bm b) = (|b_4|,|b_1|,|b_3|,|b_2|)$. Using this notation, we can rewrite the SLOPE norm as $J_{\blam}(\bm b)=\blam\cdot\Pi_{\bm b}(\bm b)$. Since such mapping may not be unique, the inverse may not exist and we therefore define a pseudo-inverse mapping, $\hat{\Pi}^{-1}_{\x}$, that is based on the function $\hat{\Pi}_{\x}: \mathbb{R}^{\p}\to \{\text{maximal atoms}\}$. In words, $\hat{\Pi}_{\x}$ finds the maximal atoms of ranking of the absolute values of $\x$. Then $\hat{\Pi}_{\x}$ corresponds to the mapping
$$
\begin{pmatrix}
1&2&3&4	
\\
\{1,2\}&4&3&\{1,2\}
\end{pmatrix}
$$
with $\hat{\Pi}_{\x}(\x)=(\{5,-5\},\{5,-5\},-3,2)$ and $\hat{\Pi}^{-1}_{\x}(\blam)=(\{\lambda_1,\lambda_2\},\lambda_4,\lambda_3,\{\lambda_1,\lambda_2\})$. Then it is not hard to see that there exists $\hat{\blam}\in\hat{\Pi}^{-1}_x(\blam)$ such that 
$J_{\blam}(\bm b)=\blam\cdot\Pi_{\bm b}(\bm b)=\hat{\blam}\cdot |\bm b|.$
In words, this says there are two equivalent ways to consider the calculation of $J_{\blam}(\bm b)$ when $\lambda_1\geq \ldots \geq \lambda_p\geq 0$.  First $\blam\cdot\Pi_{\bm b}(\bm b)$ computes the inner product between $\blam$ and the \emph{sorted} magnitudes of $\bm b$, and in the second case, $\hat{\blam}^{\top} |\bm b|$ computes the inner product between the magnitudes of $\bm b$ (unsorted), with a rearrangement of the $\blam$ vector (based on $\bm b$) that pairs the  values in $\blam$ with the values of $|\bm b|$ by magnitude.

Now we define an equivalence relation $x\sim y$ if $|x|=|y|$. Then $\hat{\Pi}_{\x}$ partitions elements in $\mathbf{x}$ into different equivalence classes $I$. The motivation of using equivalence classes roots from AMP. In calibrating the AMP to the SLOPE problem, we need to calculate $\nabla\prox$, which equals the number of non-zero equivalence classes.  For example, $\frac{\partial\prox}{\partial \bm v}|_{\bm v=(1,0,-1,3)}=(\frac{1}{2},0,\frac{1}{2},1)$ has a sum of $2$.

Now we note that the subgradient of the SLOPE norm can be represented using the idea of the equivalence classes.  For a vector $\bm v \in \R^{\p}$, we use the notation $\bm v_I$ to be the elements of the vector $\bm v$ belonging to equivalence class $I$.  Then,
\begin{fact}
\label{fact:subgradient}
\ben
\partial J_{\blam}(\mathbf{s})= \left\{\mathbf{v} \in \R^{\p}: \textup{ for each equivalent class $I$,}
\begin{cases}
	\text{if $\bm s_I\neq 0$}\implies \bm v_I \in \mathcal{P}([\, \hat{\Pi}^{-1}_{\bm s}(\mathbf{\lambda}) ]_I\,) \,\sgn(\bm s_I);
	\\
	\text{if $\bm s_I= 0$}\implies |\bm v_I| \in \mathcal{P}_0([\, \hat{\Pi}^{-1}_{\bm s}(\mathbf{\lambda})\,]_I)
\end{cases}
\right\}.
\een
\end{fact}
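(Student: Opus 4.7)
The plan is to represent $J_{\blam}$ as a pointwise maximum of linear functionals and then apply the standard subdifferential calculus for such maxima. Concretely, using $J_{\blam}(\bm b) = \blam \cdot \Pi_{\bm b}(\bm b)$ and the fact that the decreasing rearrangement is itself a maximum over permutations, $J_{\blam}(\bm b) = \max_{\pi} \sum_i \lambda_i |b_{\pi(i)}|$; further writing $|b_j| = \max_{\epsilon_j \in \{\pm 1\}} \epsilon_j b_j$ gives
\begin{equation*}
J_{\blam}(\bm b) \;=\; \max_{(\pi, \bm \epsilon)}\; \bm \mu(\pi, \bm \epsilon)^\top \bm b,
\end{equation*}
where $\bm \mu(\pi, \bm \epsilon)$ is a signed, permuted copy of $\blam$. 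Since this is a maximum of linear (hence convex) functions, Danskin's theorem yields $\partial J_{\blam}(\bm s) = \mathrm{conv}\{\bm \mu(\pi, \bm \epsilon) : (\pi, \bm \epsilon) \text{ attains the maximum at } \bm s\}$.

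Next I would characterize the active set. An active pair $(\pi, \bm \epsilon)$ must respect the magnitude order of $\bm s$ and satisfy $\epsilon_j = \sgn(s_j)$ on the non-zero support. For each equivalence class $I = \{j : |s_j| = c\}$ with $c > 0$, a fixed block of $\lambda$-values (the entries of $[\hat{\Pi}^{-1}_{\bm s}(\blam)]_I$, determined by the rank position of $I$ in the sorted magnitudes) is assigned to $I$, and the active permutations correspond to arbitrary reshufflings of this block within $I$. The convex hull of the associated signed reshufflings is by definition $\mathcal{P}([\hat{\Pi}^{-1}_{\bm s}(\blam)]_I)\,\sgn(\bm s_I)$, which reproduces the first case of the claim.

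For an equivalence class $I$ with $\bm s_I = 0$, two additional degrees of freedom appear: the signs $\epsilon_j$ for $j \in I$ may be chosen arbitrarily in $\{\pm 1\}$ (convexifying to $[-1,1]$ on the absolute values), and the block of $\lambda$-values allocated to $I$ may be reshuffled freely since any permutation within $I$ preserves the decreasing-magnitude condition trivially on zero entries. Combining these freedoms and passing to the convex hull yields exactly the constraint $|\bm v_I| \in \mathcal{P}_0([\hat{\Pi}^{-1}_{\bm s}(\blam)]_I)$, matching the second case.

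The main obstacle I anticipate is the careful bookkeeping for the zero equivalence class: verifying that the interaction between sign freedom and permutation freedom on the residual block of $\lambda$'s generates exactly $\mathcal{P}_0$, with no spurious elements missed or included. As a backup route, I would verify the characterization directly from the variational definition $\bm v \in \partial J_{\blam}(\bm s) \iff J_{\blam}(\bm b) \geq J_{\blam}(\bm s) + \bm v^\top(\bm b - \bm s)$ for all $\bm b$, probing one equivalence class at a time with suitable perturbations and invoking the rearrangement inequality to close both inclusions.
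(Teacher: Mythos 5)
Your proposal follows essentially the same route as the paper's sketch: express $J_{\blam}$ as a pointwise maximum of (now genuinely) linear functionals, apply the subdifferential-of-max rule, characterize the active set by equivalence class, and identify the resulting convex hull as a permutohedron, i.e.\ an image of the Birkhoff polytope via doubly (sub)stochastic matrices. The one refinement worth keeping is your explicit inclusion of the sign variables $\epsilon_j\in\{\pm1\}$ inside the max, which makes each branch truly linear and cleanly handles the non-differentiability of $|\cdot|$ at zero—something the paper's sketch glosses over by writing $\blam^\top f_i(\bm s)$ with $f_i$ acting on $|\bm s|$—and your explicit treatment of the zero equivalence class, which the paper defers entirely to \cite[Exercise 8.31]{rockafellar2009variational}.
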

In the above, $\mathcal{P}, \mathcal{P}_0$ are polytope-related mappings,
\begin{align*}
\mathcal{P}(\bm u)&:=
\left\{
\y: \y=\bm A\bm u \text{ for some doubly stochastic matrix $\bm A$} 
\right\}
\\
\mathcal{P}_0(\bm u)&:=
\left\{
\y: \y=\bm A\bm u  \text{ for some doubly sub-stochastic matrix $\bm A$} 
\right\}
\end{align*}
By definition, the doubly stochastic matrix, a.k.a.\ a Birkhoff polytope, is a square matrix of non-negative real numbers, whose row and column sums equal $1$. 
 For example,
\be
\bm A = \begin{pmatrix}
1/3&2/3&0	\\
1/6&1/3&1/2	\\
1/2&0&1/2	
\end{pmatrix}
\label{eq:A_examp}
\ee
is a doubly stochastic matrix.  Similarly, a doubly sub-stochastic matrix is defined as a square matrix of non-negative real numbers, whose row and column sums are at most $1$.  Note that if all entries of $\blam$ take the same value, the subgradient in Fact~\ref{fact:subgradient} gives the usual subgradient of the $\ell_1$ norm.

Using the subgradient definition in Fact~\ref{fact:subgradient}, consider $\mathcal{P}((\lambda_1, \lambda_2, \lambda_3))$, relating to a non-zero equivalence class having three entries.  Then $\bm A$ in \eqref{eq:A_examp} is one possible matrix considered in defining the set $\mathcal{P}((\lambda_1, \lambda_2, \lambda_3))$ and it has the following interpretation. The rows of $\bm A$ determine how the subgradient $\bm v_I$ values are calculated by averaging the corresponding threshold values $\blam$, for example, the first entry of $\bm v_I$ is a weighted average with $1/3$ its weight in $\lambda_1$ and $2/3$ in $\lambda_2$; the second entry of $\bm v_I$ is a weighted average with $1/6$ its weight in $\lambda_1$, $1/3$ in $\lambda_2$, and $1/2$ in $\lambda_2$, etc.  You can think of this as determining the threshold each input value $\bm s_I$ receives, as some weighted combination of all the possible threshold values $\blam$ corresponding to this equivalence class. Similarly, the columns of the doubly-stochastic matrix considered in the mapping $\mathcal{P}$ define how the thresholds $\blam$ are spread out amongst each element of the subgradient, for example, $1/3$ of $\lambda_1$'s value goes to the first element of $\bm v_I$, $1/6$ to the second value, and $1/2$ to the third value, etc.

To see why $\partial J_{\blam}(\bm s)$ takes the form given in Fact~\ref{fact:subgradient}, let's consider again the $\mathcal{P}$ used in the case that $\bm s_I\neq 0$. Recall the $\bm s_I$ looks at only the indices of $\bm s$ appearing in the equivalence class $I$, so all elements of $\bm s_{I}$ have the same absolute value.  This means that there are many ways to share the corresponding $\blam$ threshold values among them. We can think of this as an assignment problem: assign jobs (thresholds $\blam$) to workers ($s_i$) where as assignment according to a doubly stochastic matrix is a natural one (all workers take on the same load, and all jobs must be completed).  On the other hand, $\mathcal{P}_0$ does not require that the sharing of the threshold values $\blam$ amongst the entries of $\bm s_I$ be strict: row and/or column sums can be smaller than one. This difference is rooted in the subgradient of $\ell_1$ norm: i.e.\ 
$\partial|x|=\sgn(x)$ when $x\neq 0$ and $\partial|x| \in [-1,1]$ when $x= 0$.

For a rigorous proof of Fact~\ref{fact:subgradient}, we refer the reader to \cite[Exercise 8.31]{rockafellar2009variational}, but we give a quick sketch here in the case of $\bm s_I \neq 0$. The proof uses that $\mathcal{P}(\bm u)$ is a permutohedron, meaning a convex hull with vertices corresponding to permuted entries of $\bm u$.  Notice that we can rewrite $J_{\blam}(\bm s)$ as a finite max function $J_{\blam}(\bm s): \max\{\blam^{\top}f_1(\bm s),...,\blam^{\top}f_m(\bm s)\}$, where $\{f_i(\bm s)\}_{1 \leq i \leq m}$ is the collection of all possible permutations for the entries of $|\bm s|$.  Notice that the permutation that sorts the magnitudes will be chosen by the maximum function.  For such a function (see \cite[Exercise 8.31]{rockafellar2009variational}) the subgradient takes the form of a convex hull of the partial derivatives of the maximizing elements:
\begin{align}
\partial J_{\blam}(\bm s) \in \text{conv}\{ \nabla_{\bm s} (\blam^\top f_i(\bm s)): i\in A(\bm s)\}\equiv \text{conv}\{ f_i^{-1}(\blam): i\in A(\bm s)\},
\end{align}
where $A(\bm s)=\{i\in\{1,2,\ldots,m\}: \blam^{\top}f_i(\bm s)=J_{\blam}(\bm s)\}$ and in our case, the partial derivatives correspond to permutations of the thresholds.
Now, without loss of generality, let's consider an input that has only one non-zero equivalence class, i.e.\ $\bm s=(s,s,...,s)\in \mathbb{R}^d$. Then clearly there are $m=d!$ possible permutations. 
Therefore, 
$$\partial J_{\blam}(\bm s) \in \text{conv}\{f_i^{-1}(\blam): i\in \{1,2,...,d!\}\}\equiv\text{conv}\{f_i(\blam): i\in \{1,2,...,d!\}\}.$$
In other words, the partial derivative lies in the set that is the convex combination of all possible permutations of the threshold $\blam$. By definition, this is a permutohedron. So, in our case, the subgradient is a convex hull whose vertices are the permutated thresholds, i.e.\ an image of Birkhoff polytope under the thresholds, which can be characterized by doubly stochastic matrices.

\subsection{Main Technical Lemma} \label{sec:main_tech}

Now we state and prove the main technical lemma  that will be used to prove Theorem \ref{thm:main_result2}.  
Before we state Lemma \ref{lem:opt_lem}, let us introduce a very important definition:
\begin{definition}
	Given a vector $\bm v \in \R^p$, a set $I \subset \{1, \ldots, p\}$ is said to be a maximal atom of indices of $\bm v$ if 
	$
	|v_i| = |v_j|
	$
	for all $i, j \in I$ and 
	$
	|v_i| \ne |v_k|
	$
	for $i \in I$ and all $k \notin I$. With this definition in place, we define the star support of the vector $\bm v$ as
	\[
	\supps(\bm v) := \{I: I \subset \{1, \ldots, p\} \text{ is a maximal atom of indices of } \bm v \text{ and } \bm v_I\neq 0\}.
	\]
\end{definition}

For example, if $\bm v = (1, 1, -1, 0, 2, -1)$, then
$
\supps(\bm v) = \left\{ \{1, 2, 3, 6\}, \{5\} \right\}.
$
Now we state and prove Lemma \ref{lem:opt_lem}.

\begin{lemma} \label{lem:opt_lem}
	For constants $c_1,..., c_5 > 0$, if the following conditions are satisfied,
	\begin{itemize}
		\item[(1)] $\frac{1}{\sqrt{\p}}\|\boldsymbol  \beta^{t} -\hat{\boldsymbol  \beta}\| \leq c_1,$
		\item[(2)] There exists a subgradient $sg(\mathcal{C}, \boldsymbol  \beta^{t}) \in \partial \mathcal{C}(\boldsymbol  \beta^{t})$ such that $\frac{1}{\sqrt{\p}}\|sg(\mathcal{C}, \boldsymbol  \beta^{t})\| \leq \epsilon,$
		\item[(3)] Let $\boldsymbol \nu^t := \mathbf{X}^\top(\mathbf{y} - \mathbf{X} \bet^{t}) + sg(\mathcal{C}, \boldsymbol  \beta^{t}) \in \partial J_{\blam}(\bet^{t}) $ (where $ sg(\mathcal{C},\bet^{t})$ is the subgradient from Condition (2)). Denote $s_t(c_2) := \{I \subset [\p]: |\bm \nu^t_I| \succeq  [\mathcal{P}(\hat\Pi^{-1}_{\bet^{t}}(\blam))]_I(1 - c_2)\}$ and $S_t(c_2) := \{i \in I: I\in s(c_2)\}$, where the equivalence classes, $I$, for both sets are defined via the AMP estimation $\bet^t$, and for a vector $\x\in \mathbb{R}^d$ and a set $\mathbf{A}\subset \mathbb{R}^d$, the notation $\x\succeq \mathbf{A}$ means there exists some $\y\in \mathbf{A}$ such that $\x\geq \y$ elementwise. Then for $s'$ being \emph{any} set of maximal atoms in $[\p]$ with $|s'| \leq c_3 \p$ and $S' := \{i \in I: I\in s'\}$, we have $\sigma_{min}(\mathbf{X}_{S_t(c_2) \cup S'}) \geq c_4$. 
		\item[(4)] The minimum non-zero and maximum singular value of $\mathbf{X}$, denoted as $\hat{\sigma}^2_{min}(\mathbf{X})$ and $\sigma^2_{max}(\mathbf{X})$, are bounded: i.e.\
		$\hat{\sigma}^2_{min}(\mathbf{X}) \geq \frac{1}{c_5}$ and $\sigma^2_{max}(\mathbf{X}) \leq c_5.$
		\item[(5)] Define $\mathcal{C}_{\x}(\mathbf{b})= \frac{1}{2}\| \y - \X \mathbf{b}\|^2+\sum_{i=1}^p \hat{\lambda}_i |b_i|$ for some $\hat{\blam}\in\mathcal{P}(\hat{\Pi}^{-1}_{\x}(\blam))$. Then $\mathcal{C}(\bet^t)\geq \mathcal{C}_{\bet^t}(\hat{\bet})$.
	\end{itemize}
	then for some function $f(\epsilon) := f(\epsilon, c_1, c_2, c_3, c_4, c_5)$ such that $f(\epsilon) \rightarrow 0$ as $\epsilon \rightarrow 0$,
	\[\frac{1}{\sqrt{\p}}\|\boldsymbol  \beta^{t} - \hat{\boldsymbol  \beta}\| < f(\epsilon).\]
\end{lemma}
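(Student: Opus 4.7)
My plan is to follow the LASSO template of \cite[Lemma 3.1]{lassorisk} but pass through the weighted-$\ell_1$ surrogate $\mathcal{C}_{\bet^t}$ of Condition~(5) in order to navigate SLOPE's non-separable penalty. The key identity is that because $|\bet^t|$ is constant on each maximal atom determined by $\bet^t$, we have $\hat\blam^\top|\bet^t|=J_\blam(\bet^t)$ for every $\hat\blam\in\mathcal{P}(\hat\Pi^{-1}_{\bet^t}(\blam))$, so $\mathcal{C}_{\bet^t}(\bet^t)=\mathcal{C}(\bet^t)$; combined with Condition~(5) this already gives $\mathcal{C}_{\bet^t}(\bet^t)\ge\mathcal{C}_{\bet^t}(\hat\bet)$. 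First I would bound the SLOPE-cost gap using convexity of $\mathcal{C}$ together with Conditions~(1)--(2):
\[
0\le \mathcal{C}(\bet^t)-\mathcal{C}(\hat\bet)\le \langle sg(\mathcal{C},\bet^t),\,\bet^t-\hat\bet\rangle\le c_1\epsilon p.
\]
Expanding $\mathcal{C}(\bet^t)-\mathcal{C}(\hat\bet)$ around $\hat\bet$ and using that $\X^\top(\y-\X\hat\bet)\in\partial J_\blam(\hat\bet)$ (KKT at the SLOPE minimizer) yields the quadratic lower bound $\mathcal{C}(\bet^t)-\mathcal{C}(\hat\bet)\ge\tfrac12\|\X\bm r\|^2+D_{J_\blam}(\bet^t,\hat\bet)$, where $\bm r:=\bet^t-\hat\bet$ and $D_{J_\blam}$ is the Bregman divergence of $J_\blam$ at $\hat\bet$ in the direction of this KKT subgradient. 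Both terms are nonnegative, so $\|\X\bm r\|^2\le 2c_1\epsilon p$ and $D_{J_\blam}(\bet^t,\hat\bet)\le c_1\epsilon p$.

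Converting the $\|\X\bm r\|$ control to a $\|\bm r\|$ control requires a restricted-eigenvalue argument, for which I would decompose $\bm r$ using the maximal-atom structure. Let $s'$ consist of the maximal atoms of $\hat\bet$ not already contained in $s_t(c_2)$. The SLOPE-specific observation (already noted after Theorem~\ref{thm:main_result2}) that $\hat\bet$ has at most $n=\delta p$ non-zero maximal atoms ensures $|s'|\le c_3 p$ for the admissible constant $c_3$, so Condition~(3) yields $\|\X\bm r_{S_t(c_2)\cup S'}\|^2\ge c_4^2\|\bm r_{S_t(c_2)\cup S'}\|^2$. On the complement, $\hat\bet$ vanishes and the SLOPE subgradient $\bm\nu^t$ from Condition~(3) is $c_2$-slack relative to $\mathcal{P}(\hat\Pi^{-1}_{\bet^t}(\blam))$; the $\mathcal{P}_0$-description in Fact~\ref{fact:subgradient} then translates this slack, together with $D_{J_\blam}(\bet^t,\hat\bet)\le c_1\epsilon p$, into a direct $\ell_1$-bound on $\|\bm r_{(S_t(c_2)\cup S')^c}\|_1$ of order $O(\epsilon)$. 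Combining the two pieces through $\|\bm r\|^2\le 2\|\bm r_{S_t(c_2)\cup S'}\|^2+2\|\bm r_{(S_t(c_2)\cup S')^c}\|^2$ and invoking Condition~(4) to control the cross-terms between $\X\bm r_{S_t(c_2)\cup S'}$ and $\X\bm r_{(S_t(c_2)\cup S')^c}$ gives $\|\bm r\|^2/p\le f(\epsilon)$ with $f(\epsilon)\to 0$ as $\epsilon\to 0$.

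The main obstacle will be the Bregman-slack bookkeeping in the second paragraph: one must show that $D_{J_\blam}(\bet^t,\hat\bet)$ separately controls the components of $\bm r$ on maximal atoms where the SLOPE subgradient is $c_2$-slack. This is where the proof departs most sharply from LASSO, because the extremal subgradients of $J_\blam$ are permutations of $\blam$ blended by doubly stochastic matrices, so the ``support'' on which strong convexity is extracted must be identified atom-by-atom rather than coordinate-by-coordinate, using the polytope description of Fact~\ref{fact:subgradient} and the non-negativity of $\blam$. Once this structural lemma is in place, the restricted-eigenvalue step is a routine adaptation of the LASSO argument, and the function $f$ can be read off as a power of $\epsilon$ with constants depending only on $c_1,\ldots,c_5$.
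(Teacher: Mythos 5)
Your first step is fine: convexity of $\mathcal{C}$ with Conditions (1)--(2) gives $0\le\mathcal{C}(\bet^t)-\mathcal{C}(\hat\bet)\le c_1\epsilon p$, and expanding about $\hat\bet$ with the KKT identity yields $\tfrac12\|\X\bm r\|^2\le c_1\epsilon p$ and $D_{J_\blam}(\bet^t,\hat\bet)\le c_1\epsilon p$. The trouble is that this Bregman divergence is anchored at $\hat\bet$, i.e.\ it measures slack of the KKT subgradient $\X^\top(\y-\X\hat\bet)\in\partial J_\blam(\hat\bet)$, while Condition (3) is stated entirely in terms of the subgradient $\bm\nu^t\in\partial J_\blam(\bet^t)$ at $\bet^t$, with maximal atoms defined by $\bet^t$. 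Those are subgradients of a non-separable penalty at different points with different atom structures, and a bound on one says nothing quantitative about the $c_2$-slack of the other. The paper's proof does not form the Bregman gap at $\hat\bet$ at all; instead it expands $\mathcal{C}_{\bet^t}(\hat\bet)-\mathcal{C}(\bet^t)$ (Condition (5)) and chooses $\hat\blam\in\mathcal{P}(\hat\Pi^{-1}_{\bet^t}(\blam))$ specifically so that $\hat\blam_S=|\bm\nu_S|$ on the star-support of $\bet^t$ and $\hat\blam_{\bar S}\ge|\bm\nu_{\bar S}|$ off it. This alignment is exactly what makes the three summands $\langle\hat\blam_S,|\bet^t_S+\rr_S|-|\bet^t_S|\rangle-\langle\bm\nu_S,\rr_S\rangle$, $\langle\hat\blam_{\bar S},|\rr_{\bar S}|\rangle-\langle\bm\nu_{\bar S},\rr_{\bar S}\rangle$, and $\tfrac1{2p}\|\X\rr\|^2$ simultaneously non-negative and small, and it is the second of these (the Bregman-type gap at $\bet^t$, not $\hat\bet$) that feeds into the slack set $S_t(c_2)$. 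Your ``main obstacle'' is therefore not bookkeeping: the quantity you propose to control sits at the wrong basepoint, and you have not supplied the device (the $\bm\nu$-aligned choice of $\hat\blam$) that the paper uses to make the basepoints match.

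Two further gaps in the second half. First, the choice $s'=$ (maximal atoms of $\hat\bet$ not already in $s_t(c_2)$) need not have size $\le c_3p$: the bound you cite only gives $|\supps(\hat\bet)|\le n=\delta p$, and the constant $c_3$ handed to you by the lemma (and supplied downstream by Proposition~\ref{propo:PSD} via $a_1(c)/2$) can be far smaller than $\delta$. You cannot enlarge $c_3$, since $f$ must depend only on the given $c_1,\dots,c_5$. Second, $\|\X\bm r\|^2\le O(\epsilon p)$ gives no information about the component of $\bm r$ in $\ker(\X)$, and a restricted singular-value bound on $S_t(c_2)\cup S'$ only controls coordinates on that set. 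The paper handles this by decomposing $\rr=\rr^\perp+\rr^\parallel$ with $\rr^\parallel\in\ker(\X)$, controlling $\rr^\perp$ via Condition (4), and controlling $\rr^\parallel$ by combining the off-support slack bound \eqref{eq:Sco} with a chunked $\ell_1$-to-$\ell_2$ argument on $\bar S_t(c_2)$ (the partition into groups of size $\approx pc_3$) before invoking Condition (3) on $S_+$. Your sketch skips the kernel decomposition and the $\ell_1$-$\ell_2$ step, so the ``routine adaptation'' at the end is missing the piece that makes the restricted eigenvalue argument actually close the loop.
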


We wrap up this section by proving Lemma  \ref{lem:opt_lem}.  Once we have proved Lemma  \ref{lem:opt_lem}, we will be able to prove Theorem \ref{thm:main_result2}. The major piece of work in proving Theorem \ref{thm:main_result2} is in showing that the five assumptions of Lemma  \ref{lem:opt_lem} are satisfied.  Then the result of Theorem \ref{thm:main_result2} is immediate. We show the five assumptions are met in Sections \ref{sec:assumption4} - \ref{sec:assumption3}.  Now we prove the Lemma.

\begin{proof}[Proof of Lemma \ref{lem:opt_lem}]
Throughout the proof, we denote $\xi_1, \xi_2,\dots$ as
functions of the constants $c_1,\dots,c_5>0$ and of $\epsilon$ such that
$\xi_i(\epsilon)\to 0$ as $\epsilon\to 0$ (we omit the dependence of
$\xi_i$ on $\epsilon$).  We will think of $t$ as a fixed iteration and we denote the residual we are interested in studying as $\rr=\hat{\bet}-\bet^t$.   

The proof strategy is to show that  $\frac{1}{\p}\|\X \rr\|^2 \leq \xi(\epsilon)$ from which a similar result for  $\frac{1}{\p} \|\rr\|^2$ follows when we have control of the singular values of $\bm X$ as we do with Condition (4). 
Structurally, the proof is similar to that in the LASSO case (cf.\ \cite[Lemma 3.1]{lassorisk}), with the main difference coming through Condition (3), where we need to use star support instead of the support when bounding the minimum singular value of a selection of columns of $\X$.

For a fixed iteration $t$, let $S = \{i \in [p]: i \in I \text{ and } I \in \supp^*(\bet^t)\}$, i.e.\ $S$ is the collection of (unique) indices belonging to the star support of the AMP estimate at iteration $t$. Then for a vector $\bm v \in \R^{\p}$ we denote $\bm v_{S}$ to mean the vector indexed only over the indices in the set $S$ and we let $\bar{S}$ denote the complement of $S$.  
In what follows, we drop the $t$-dependence on $\bm\nu^t$, writing $\bm\nu=\bm\nu^t$  and for $p$-length vectors $\bm u$ and $\bm v$, define $\langle \bm u,\bm v \rangle:=\frac{1}{p}\sum_i u_i v_i$. 

First,
\begin{align*}
	&0  \stackrel{(a)}{\geq} \frac{1}{\p}  (\mathcal{C}_{\bet^t}(\widehat{\bet})-\mathcal{C}(\bet^t))  \stackrel{(b)}{=} \frac{1}{2\p}(\|\y-\X \widehat{\bet}\|^2-\|\y-\X\bet^t\|^2) + \langle \hat{\blam}, |\widehat{\bet}|  - |\bet^t| \rangle \\
	&\stackrel{(c)}{=} \langle\hat{\blam}_S,|\bet^t_S+\rr_S|-|\bet^t_S|\rangle+\langle\hat{\blam}_{\bar{S}}, |\rr_{\bar{S}}|\rangle+ \frac{1}{2\p}(\|\y-\X \bet^t-\X \rr\|^2-\|\y-\X\bet^t\|^2)\\
	& \stackrel{(d)}{=}  \Big[\langle\hat{\blam}_S,|\bet^t_S+\rr_S|-|\bet^t_S|\rangle-\langle \bm \nu_S, \rr_S\rangle \Big]
	+\Big[\langle\hat{\blam}_{\bar{S}}, |\rr_{\bar{S}}|\rangle-\langle \bm \nu_{\bar{S}}, \rr_{\bar{S}}\rangle \Big]
	+\langle \bm \nu, \rr\rangle
	-\langle \y-\X\bet^t, \X\rr \rangle+\frac{\|\X\rr\|^2}{2\p} 
	\\
	& \stackrel{(e)}{=}  \Big[\langle\hat{\blam}_S,|\bet^t_S+\rr_S|-|\bet^t_S|\rangle-\langle \bm \nu_S, \rr_S\rangle \Big]
	+\Big[\langle\hat{\blam}_{\bar{S}}, |\rr_{\bar{S}}|\rangle-\langle \bm \nu_{\bar{S}}, \rr_{\bar{S}}\rangle \Big]
	+\langle sg(\mathcal{C},\bet^t),\rr\rangle+\frac{\|\X\rr\|^2}{2p}.
\end{align*}
In the above, step $(a)$ follows immediately from Condition (5) and step $(b)$ holds \emph{for any}  $\hat\blam \in \mathcal{P}(\hat\Pi_{\bet^t}^{-1}(\blam))$ by the definition of $\mathcal{C}_{\bet^t}(\widehat{\bet})$, noticing that $J_{\blam}(\bet^t) = \hat\blam^{\top}|\bet^t|$ in the SLOPE cost \eqref{eq:SLOPEcost} \emph{since} $\hat\blam \in \mathcal{P}(\hat\Pi_{\bet^t}^{-1}(\blam))$.  Below we will select a specific $\hat\blam \in \mathcal{P}(\hat\Pi_{\bet^t}^{-1}(\blam))$ based on the definition of $\bm \nu$.
Step $(c)$ follows by replacing $ \widehat{\bet} $ with $\bet^t + \rr$ and noticing that $\bet^t_{\bar{S}} = \bm 0$.  Step $(d)$ follows since $\langle \bm \nu, \rr\rangle=\langle \bm \nu_S, \rr_S\rangle+\langle \bm \nu_{\bar{S}}, \rr_{\bar{S}}\rangle$  and step $(e)$ from the definition of $\bm\nu$.

Using Conditions (1) and  (2), we get by Cauchy-Schwarz
\begin{eqnarray}
\Big[\langle\hat{\blam}_S,|\bet^t_S+\rr_S|-|\bet^t_S|\rangle-\langle \bm\nu_S, \rr_S\rangle\Big]
+\Big[\langle\hat{\blam}_{\bar{S}}, |\rr_{\bar{S}}|\rangle-\langle \bm\nu_{\bar{S}}, \rr_{\bar{S}}\rangle\Big]
+\frac{\|\X\rr\|^2}{2p} \leq \frac{\norm{ sg(\mathcal{C},\bet^t)} \norm{\rr}}{\p} \leq c_1\epsilon.
\label{eq:three-non-neg}
\end{eqnarray}
We now show all three terms on the left side of \eqref{eq:three-non-neg} are non-negative.  The idea is then: if all three terms are non-negative and their sum tends to $0$ as $\epsilon\goto 0$, it must be true that each term tends to $0$ too.  
The third term in \eqref{eq:three-non-neg}, $\frac{1}{2p} \|\X\rr\|^2$, is trivially non-negative, so we focus on the first two.

To show that the other terms are non-negative, we consider choosing a specific vector $\hat{\blam} \in \mathcal{P}(\hat\Pi_{\bet^t}^{-1}(\blam))$ such that on the support, $\hat{\blam}_S=|\bm\nu_S|$, and off the support $\hat{\blam}_{\bar{S}}\geq|\bm\nu_{\bar{S}}|$, meaning $\hat{\blam}_I$ is parallel to $|\bm\nu_I|$ for each equivalence class $I$ of $\bet^t$.
That such a $\hat{\blam}$ exists in the set $\mathcal{P}(\hat\Pi_{\bet^t}^{-1}(\blam))$ follows since $\bm \nu$ is a valid subgradient of $J_{\blam}(\bet^{t})$ (see Fact ~\ref{fact:subgradient}). 

Using this $\hat{\blam}$, notice that the sets defined in Condition (3) are equivalent to the following:
$s_t(c_2) := \{I \subset [\p]: |\bm \nu_I| \geq  (1 - c_2)\hat{\blam}_I\}$ and $S_t(c_2) := \{i : |\nu_i| \geq  (1 - c_2)\hat{\lambda}_i\}$, where both use equivalence classes, $I$, defined for $\bet^t$.  To see that this is the case, note that if $I$ is a non-zero equivalence class, by Fact~\ref{fact:subgradient}, since $|\bm \nu_I| \in  [\mathcal{P}(\hat\Pi^{-1}_{\bet^{t}}(\blam))]_I$, we know that $|\bm \nu_I| \succeq  [\mathcal{P}(\hat\Pi^{-1}_{\bet^{t}}(\blam))]_I(1 - c_2)$ and similarly, since $\hat{\blam}_S=|\bm\nu_S|$ we know that $|\bm \nu_I| \geq  (1 - c_2)\hat{\blam}_I$, so $I$ clearly belongs to $s_t(c_2)$ for both definitions.  If $I$ is the zero equivalence class, 
 if $|\bm \nu_I| \geq  (1 - c_2)\hat{\blam}_I$ then obviously $|\bm \nu_I| \succeq  [\mathcal{P}(\hat\Pi^{-1}_{\bet^{t}}(\blam))]_I(1 - c_2)$ since $\hat{\blam} \in \mathcal{P}(\hat\Pi_{\bet^t}^{-1}(\blam))$. In the other direction, if the non-zero equivalence class $I$ is such that $|\bm \nu_I| \succeq  [\mathcal{P}(\hat\Pi^{-1}_{\bet^{t}}(\blam))]_I(1 - c_2)$ then there exists a vector $\widetilde{\bm \nu}_I \in  [\mathcal{P}(\hat\Pi^{-1}_{\bet^{t}}(\blam))]_I$ such that $|\bm \nu_I| \geq \widetilde{\bm \nu}_I(1-c_2)$ elementwise.  However since $\widetilde{\bm \nu}_I \in  [\mathcal{P}(\hat\Pi^{-1}_{\bet^{t}}(\blam))]_I$, this implies that $|\bm \nu_I| \geq  (1 - c_2)\hat{\blam}_I$ is also true since $\hat{\blam}_I \in  [\mathcal{P}(\hat\Pi^{-1}_{\bet^{t}}(\blam))]_I$ in the same direction as $|\bm \nu_I|$.

To visualize the choice of $\hat{\blam}$, we consider an example where $\bm\nu_I=(-1,2)$ for equivalence class $I=\{1,2\}$ with $\blam_I=(4,1)$ in Figure~\ref{fig:subgradient}.  In the figure, the blue shaded region indicates possible subgradient values for zero elements and the black line are possible subgradients for zero elements.  In this example, the equivalence class is that for zero elements, so we notice that $\bm\nu_I$ lies in the blue region.  Then $\blam_I$ is in the same direction as $|\bm\nu_I|$ but lies on the black line (since $\hat{\blam} \in \mathcal{P}(\hat\Pi_{\bet^t}^{-1}(\blam))$).
\begin{figure}[!h]
	\centering
	\includegraphics[width=6cm]{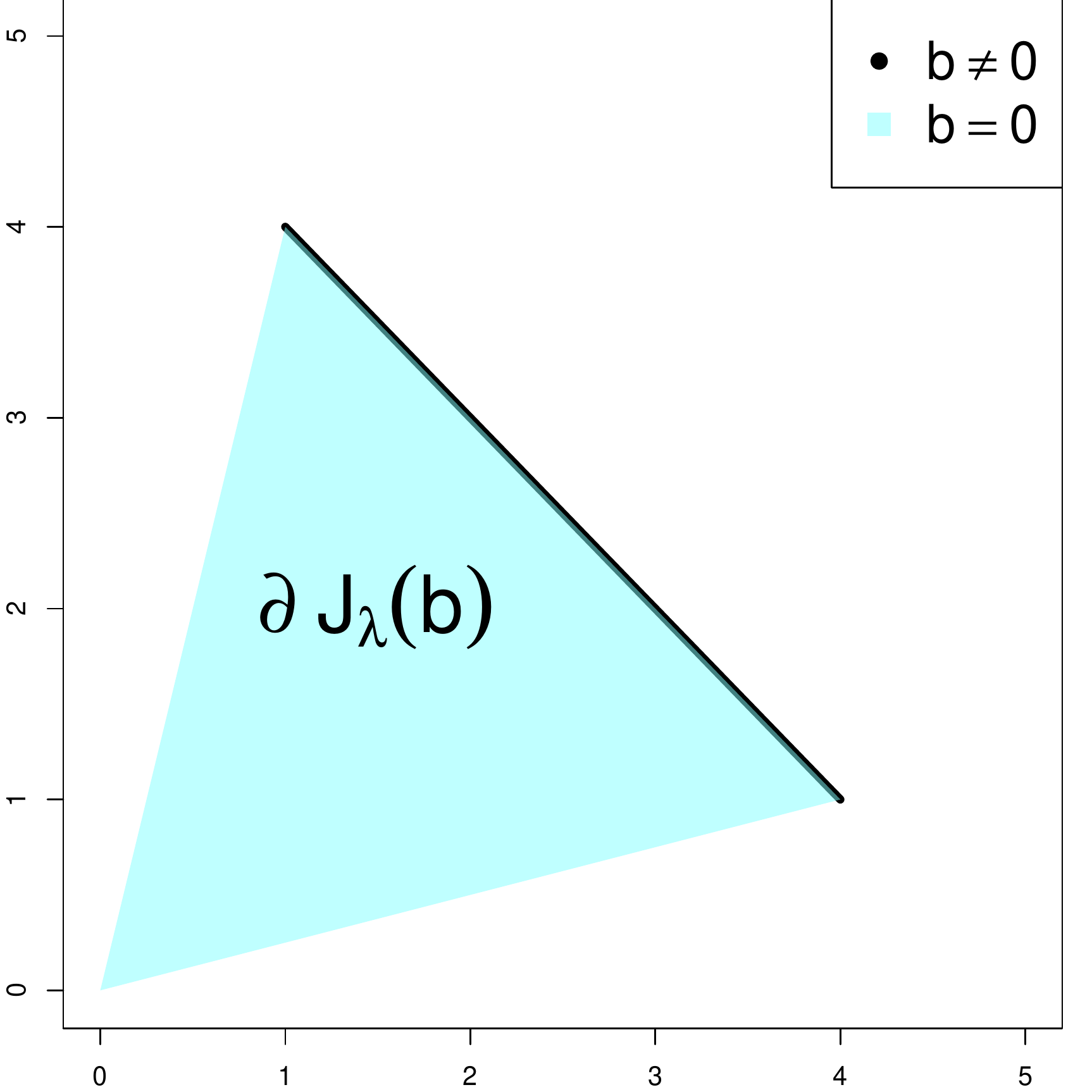}
	\includegraphics[width=6cm]{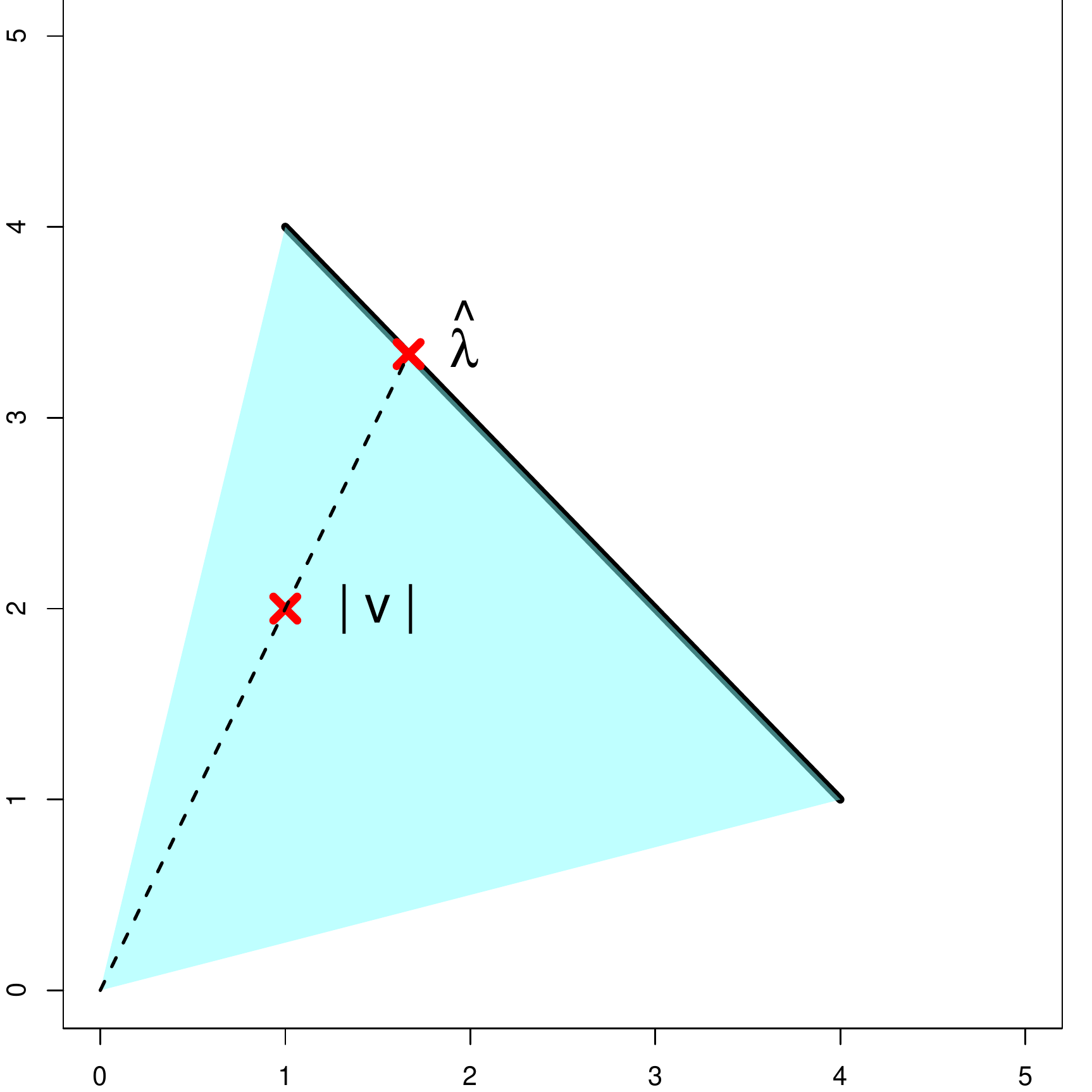}
	\caption{The blue area contained by the black line segment is the set of subgradients; Red crosses are examples of $\bm\nu_I$ and $\hat{\blam}_I$ correspondingly when $\bm b_I=\bm 0$.}
	\label{fig:subgradient}
\end{figure}

Now we would like to show that the first term in \eqref{eq:three-non-neg} is non-negative.  
Specifically, our choice of $\hat{\blam}$ gives $\nu_i=\sgn(\beta^t_i)\hat{\lambda}_i$, for each $i \in S$, and then it suffices, in order to prove the non-negativity of $\langle\hat{\blam}_S,|\bet^t_S+\rr_S|-|\bet^t_S|\rangle-\langle \bm\nu_S, \rr_S\rangle$, to show
\begin{align*}
0 &\leq (|\beta^t_i+r_i|-|\beta^t_i|)-\sgn(\beta^t_i)r_i \\
&=(\beta^t_i+r_i)\sgn(\beta^t_i+r_i)-\beta^t_i\sgn(\beta^t_i)-r_i\sgn(\beta^t_i)
 =(\beta^t_i+r_i)\big[\sgn(\beta^t_i+r_i)-\sgn(\beta^t_i)\big],
\end{align*}
which follows since each $(\beta^t_i+r_i)\left[\sgn(\beta^t_i+r_i)-\sgn(\beta^t_i)\right]$ is either equal to $0$ (when $\sgn(\beta^t_i)=\sgn(\beta^t_i+r_i)$) or equal to $2|\beta^t_i+r_i|$ otherwise. 

Finally, the second term in \eqref{eq:three-non-neg} is also non-negative. It suffices to show for each $i \in \bar{S}$, we have $0\leq \hat{\lambda}_i|r_i|-\nu_i r_i$, or equivalently $ 0\leq \hat{\lambda}_i-\nu_i\sgn( r_i) = \hat{\lambda}_i(1-\sgn(\beta^t_i)\sgn( r_i))$ which is clearly true. Since all three terms in \eqref{eq:three-non-neg} are non-negative and their sum tends to 0 as $\epsilon\goto 0$, it must be true that each term tends to 0,
\begin{eqnarray}
\langle\hat\blam_{\bar{S}}, |\rr_{\bar{S}}|\rangle-\langle {\bm\nu}_{\bar{S}}, \rr_{\bar{S}}\rangle &\leq &\xi_1(\epsilon),\label{eq:Sco}\\
\|\X\rr\|^2 &\leq & p\xi_1(\epsilon).\label{eq:AzBound}
\end{eqnarray}
We now make use of these inequalities to construct the bound for $\frac{1}{p}\|\rr\|^2$. 

Decompose $\rr$ as $\rr = \rr^{\perp}+\rr^\parallel$, with $\rr^\parallel \in \text{ker}(\X)$ and  $\rr^{\perp}\in \text{ker}^{\perp}(\X)$ so that $\X \rr = \X \rr^{\perp}$.  We will now use \eqref{eq:Sco} and \eqref{eq:AzBound} to obtain bounds for $\|\rr^{\perp}\|^{2}$ and $\|\rr^{\parallel}\|^{2}$.  First notice that by \eqref{eq:AzBound} and Condition (4) we have
$\frac{1}{c_5}\|\rr^{\perp}\|^{2} \leq \hat{\sigma}^2_{min}(\X) \|\rr^{\perp}\|^{2} \leq \|\X \rr^{\perp}\|^{2} =  \|\X \rr\|^{2} \leq  p\xi_1(\epsilon).$

In the case $\text{ker}(\X) = \{0\}$, the proof is concluded. Otherwise, we prove a similar bound for $\|\rr^\parallel\|^{2}$. To bound $\|\rr^\parallel\|^{2}$, we use the fact that that this can be done if there exists sets $Q \in [\p]$ and $\bar{Q} \in [\p]/Q$ such that we can bound $\|\rr^\parallel_{\bar{Q}}\|^{2}$ and show a high probability lower bound for $\sigma^2_{min}(\X_Q)$.

In \eqref{eq:Sco}, decompose $\rr_{\bar{S}}=\rr_{\bar{S}}^\perp+\rr_{\bar{S}}^\parallel$ and observe that by Cauchy Schwarz inequality and the bound just obtained,
\be
\langle\hat{\blam}_{\bar{S}}, |\rr^{\perp}_{\bar{S}}|\rangle
\leq \frac{1}{\p}\|\hat{\blam}_{\bar{S}}\| \|\rr^{\perp}_{\bar{S}}\|
\leq \frac{1}{\p} \|\hat{\blam}\| \|\rr^{\perp}\|
\leq \frac{1}{\sqrt{\p}} \|\hat{\blam}\| \sqrt{c_5 \xi_1(\epsilon)} .
\label{eq:new_CS}
\ee 
Then we use the fact that
\begin{align*}
& \langle\hat{\blam}_{\bar{S}}, |\rr_{\bar{S}}^\parallel|\rangle-\langle {\bm\nu}_{\bar{S}}, \rr_{\bar{S}}^\parallel\rangle =  \langle\hat{\blam}_{\bar{S}}, |\rr_{\bar{S}} - \rr_{\bar{S}}^\perp|\rangle-\langle {\bm\nu}_{\bar{S}}, \rr_{\bar{S}} - \rr_{\bar{S}}^\perp\rangle\leq \langle\hat{\blam}_{\bar{S}}, |\rr_{\bar{S}}|\rangle + \langle\hat{\blam}_{\bar{S}}, | \rr_{\bar{S}}^\perp|\rangle-\langle {\bm\nu}_{\bar{S}}, \rr_{\bar{S}}\rangle +\langle {\bm\nu}_{\bar{S}},  \rr_{\bar{S}}^\perp\rangle\\
&= \langle\hat{\blam}_{\bar{S}}, |\rr_{\bar{S}}|\rangle + \langle\hat{\blam}_{\bar{S}}, | \rr_{\bar{S}}^\perp|\rangle-\langle {\bm\nu}_{\bar{S}}, \rr_{\bar{S}}\rangle +\langle \hat{\blam}_{\bar{S}} \sgn(\bet^t_{\bar{S}}),  \rr_{\bar{S}}^\perp\rangle\leq \langle\hat{\blam}_{\bar{S}}, |\rr_{\bar{S}}^\perp|\rangle-\langle {\bm\nu}_{\bar{S}}, \rr_{\bar{S}}\rangle + 2 \langle\hat{\blam}_{\bar{S}}, |\rr_{\bar{S}}^\perp|\rangle,
\end{align*}
to get from \eqref{eq:Sco} and \eqref{eq:new_CS} that
\begin{align}
\langle\hat{\blam}_{\bar{S}}, |\rr_{\bar{S}}^\parallel|\rangle-\langle {\bm\nu}_{\bar{S}}, \rr_{\bar{S}}^\parallel\rangle &\leq \xi_2(\epsilon).\label{eq:Sco2}
\end{align}
Next we would like to show 
\be
\langle\hat{\blam}_{\bar{S}(c_2)}, |\rr_{\bar{S}(c_2)}^\parallel|\rangle-\langle {\bm\nu}_{\bar{S}(c_2)}, \rr_{\bar{S}(c_2)}^\parallel\rangle(1-c_2)^{-1}\geq 0.
\label{eq:new_bound1}
\ee
Note that it suffices again to prove this elementwise for each $i \in \bar{S}(c_2)$.  Specifically, note that $(1-c_2)^{-1}|\nu_i|<\hat{\lambda}_i$ for each $i \in \bar{S}(c_2)$ by the set's definition and therefore $\hat{\lambda}_i|r^{\parallel}_i|-\nu_i r^{\parallel}_i(1-c_2)^{-1} \geq |\nu_i| |r^{\parallel}_i| (1-c_2)^{-1} -\nu_i r^{\parallel}_i(1-c_2)^{-1} \geq 0$.
Therefore,
\be
\begin{split}
&\langle\hat{\blam}_{\bar{S}(c_2)}, |\rr_{\bar{S}(c_2)}^\parallel|\rangle
\overset{(a)}{\leq} \frac{1}{c_2} \langle\blam_{\bar{S}(c_2)}, |\rr_{\bar{S}(c_2)}^\parallel|\rangle- \frac{1}{c_2}\langle {\bm\nu}_{\bar{S}(c_2)},\rr_{\bar{S}(c_2)}^\parallel\rangle
=\frac{1}{c_2}\langle\hat{\blam}_{\bar{S}(c_2)}- {\bm\nu}_{\bar{S}(c_2)}\sgn(\rr_{\bar{S}(c_2)}^\parallel),|\rr_{\bar{S}(c_2)}^\parallel|\rangle\\
&\overset{(b)}{\leq} \frac{1}{c_2} \langle\hat{\blam}_{\bar{S}}- {\bm\nu}_{\bar{S}}\sgn(\rr_{\bar{S}}^\parallel),|\rr_{\bar{S}}^\parallel|\rangle=\frac{1}{c_2} \langle\hat{\blam}_{\bar{S}}, |\rr_{\bar{S}}^\parallel|\rangle- \frac{1}{c_2}\langle {\bm\nu}_{\bar{S}}, \rr_{\bar{S}}^\parallel\rangle \overset{(c)}{\leq} c_2^{-1}\xi_2(\epsilon) \label{eq:Sbar2Sbarc2}.
\end{split}
\ee
In particular, step $(a)$ follows by \eqref{eq:new_bound1}, step $(b)$ since $S \subseteq S_t(c_2)$ implies $\bar{S_t}(c_2)\subseteq \bar{S}$ along with the fact that $\hat{\blam}_{\bar{S}}- {\bm\nu}_{\bar{S}}\sgn(\rr_{\bar{S}}^\parallel) \geq 0$ elementwise (for each $i\in \bar{S}$, we have $\hat{\lambda}_i-\nu_i\sgn(r_i^\parallel)>0$ by $\hat{\lambda}_i\geq |\nu_i|$). Finally step $(c)$ holds by \eqref{eq:Sco2}.  We now use the bound in \eqref{eq:Sbar2Sbarc2} to bound components of $\rr^\parallel$. 

In order to bound $\|\rr^\parallel\|^{2}$, we would like to exploit a relationship between the $\ell_1$ and $\ell_2$ norms.  To do this, we consider an ordering of the elements of the vector $\rr^\parallel$ by magnitude. Recall that $\bar{S_t}(c_2)\subseteq \bar{S}$ and we first assume $|\bar{S_t}(c_2)|\geq \p c_3/2$. Now we partition $\bar{S_t}(c_2) = \cup_{\ell=1}^K S_\ell$, where $(\p c_3/2)\leq |S_{\ell}|\leq \p c_3$, and such that for each $i\in S_{\ell}$ and $j\in S_{\ell+1}$, it follows that  $|r^\parallel_i|\geq |r^\parallel_j|$. Finally, define $\bar{S}_+ := \cup_{\ell=2}^K S_{\ell}\subseteq \bar{S_t}(c_2)$, i.e.\ the set union of all the partitions except the first one corresponding to the indices containing the largest elements in $\rr^\parallel$. Now we note for any $i\in S_{\ell}$, we have $|\rr^{\parallel}_i| \leq \norm{\rr^{\parallel}_{S_{\ell-1}}}/|S_{\ell-1}|$, that is, in terms of absolute value, for any $i$ in group $\ell$, it should be smaller than the average of all the elements in the previous group $\ell-1$.

Then,
\be
\begin{split}
\|\rr^\parallel_{\bar{S}_+}\|^{2} \overset{(a)}{=} \sum_{\ell=2}^K \|\rr^\parallel_{S_{\ell}}\|^{2} \overset{(b)}{\leq}
\sum_{\ell=2}^K |S_{\ell}|
\frac{\|\rr^\parallel_{S_{\ell-1}}\|^2_1}{|S_{\ell-1}|^2} 
&\overset{(c)}{\leq} \frac{4}{p c_3}\sum_{\ell=2}^{K} \|\rr^\parallel_{S_{\ell-1}}\|_1^2 \le
\frac{4}{p c_3} \Big[\sum_{\ell=2}^{K}\|\rr^\parallel_{S_{\ell-1}}\|_1\Big]^2\\
&\overset{(d)}{\leq}
\frac{4}{p c_3}\|\rr^\parallel_{\bar{S}(c_2)}\|_1^2\overset{(e)}{\leq}  \frac{4\xi_2(\epsilon)^2 \p}{c_2^{2}c_3(\min\hat{\blam}_{\bar{S}(c_2)})^2}
=: p\xi_3(\epsilon).
\label{eq:par_res1}
\end{split}
\ee
In the above, step $(a)$ follows from the definition of $\bar{S}_+$, step $(b)$ from the fact that for $i\in S_{\ell}$, we have $|\rr^{\parallel}_i| \leq \norm{\rr^{\parallel}_{S_{\ell-1}}}/|S_{\ell-1}|$, step (c) since $(\p c_3/2)\leq |S_{\ell}|\leq \p c_3$, and step (d) since $\sum_{\ell=2}^{K} S_{\ell} \subset \sum_{\ell=1}^{K} S_{\ell} = \bar{S_t}(c_2)$.  Finally step $(e)$ follows using that $\frac{1}{\p}\min\{\hat{\blam}_{\bar{S}(c_2)}\} \|\rr^\parallel_{\bar{S}(c_2)}\|_1 \leq \langle\hat{\blam}_{\bar{S}(c_2)}, |\rr_{\bar{S}(c_2)}^\parallel|\rangle$.

Now, recalling  $S_+ = S_t(c_2)\cup S_1$ and $|S_1|\le p c_3$, by Condition (3), $\sigma_{min}(\mathbf{X}_{S_+}) \geq c_4$ and therefore,
\begin{align}
c_4^2\|\rr^\parallel_{S_+}\|^{2} \le \sigma^2_{min}(\mathbf{X}_{S_+}) \norm{\rr^{\parallel}_{S_+}}^2 \le \|\X_{S_+}\rr^\parallel_{S_+}\|^2 &\overset{(a)}{=} \|\X_{\bar{S}_+}\rr^\parallel_{\bar{S}_+}\|^2 \overset{(b)}{\leq} 2c_5\|\rr^\parallel_{\bar{S}_+}\|^{2}.
\label{eq:par_res2}
\end{align}
In the above, in step $(a)$ we use that $\bm0 = \X \rr^{\parallel}  =\X_{S_+}\rr^\parallel_{S_+}+\X_{\bar{S}_+}\rr^\parallel_{\bar{S}_+}$.  In step $(b)$ we use Condition (4) and the fact that $\|\X_{\bar{S}_+}\rr^\parallel_{\bar{S}_+}\|^2 \leq \sigma^2_{\max}(\X) \|\rr^\parallel_{\bar{S}_+}\|^2$.  Therefore, to conclude the proof, it is sufficient to prove a  bound for $\|\rr^\parallel_{S_+}\|^{2}$.  

Decomposing $\|\rr^\parallel\|^2 =
\|\rr^\parallel_{S_+}\|^{2}+\|\rr^\parallel_{\bar{S}_+}\|^{2}$, we find from \eqref{eq:par_res1} and \eqref{eq:par_res1} the desired bound:
\begin{align*}
\|\rr^\parallel\|^{2} \leq \|\rr^\parallel_{S_+}\|^{2} + \|\rr^\parallel_{\bar{S}_+}\|^{2}\le \Big(\frac{2c_5}{c_4^2} +1\Big) \|\rr^\parallel_{\bar{S}_+}\|^{2} \le
\Big(\frac{2c_5}{c_4^2} +1\Big) \p \xi_3(\epsilon).
\end{align*}
This finishes the proof when $|\bar{S_t}(c_2)|\ge p c_3/2$.
When $|\bar{S_t}(c_2)| < p c_3/2$, we can take $\bar{S}_+=\emptyset$ and $S_+=[p]$. Hence, the result holds as a special case of the above inequality.
\end{proof}


\section{Expansion of the AMP State Evolution Ideas}\label{app_AMP}

In this section, we develop ideas and notation specifically for the SLOPE AMP algorithm given in \eqref{eq:amp_slope}.  Most are adapted from the  work in \cite{nonseparable} that studies general non-separable AMP algorithms.  These results relate to the performance analysis of the AMP algorithm and will  be useful in proving Lemma \ref{lem:opt_lem}.
Throughout this section, we use the $\{\eta_{\p}^t\}_{\p \in  \mathbb{N}_{>0}}$ notation introduced in Section~\ref{sec_mainproof} and defined in \eqref{eq:eta_def}. Namely, we consider a sequence of denoisers $\eta_{\p}^t: \R^{\p} \rightarrow \R^{\p}$ to be those that apply the proximal operator $\prox_{J_{\bfalph \tau_t}}(\cdot)$ defined in~\eqref{eq:prox}, i.e.\ $\eta_{\p}^t(\mathbf{v}) := \prox_{J_{\bfalph \tau_t}}( \mathbf{v})$ for a vector $\mathbf{v} \in \R^{\p}$.

Given $\w \in \mathbb{R}^{\n}$ and $\bet \in \mathbb{R}^{\p}$, define sequences of column vectors $\mathbf{h}^{t+1} \in \mathbb{R}^{\p}$ and $\mathbf{m}^{t} \in \mathbb{R}^{\n}$ for $t \geq 0$. At each iteration $t$, the sequence $\mathbf{h}^{t+1}$ measures the difference between the truth $\bet$ and the pseudo-data $\mathbf{X}^\top \z^t + \bet^t$, that is the input to the denoiser, and the sequence $\mathbf{m}^{t}$ measures the difference between the noise $\w$ and the AMP residual $\z^t$.  Namely, define $\mathbf{m}^{t}, \mathbf{h}^{t+1}$: for $t \geq 0$, 
\begin{equation}
\begin{split}
\mathbf{h}^{t+1} = \bet - (\mathbf{X}^\top \z^t + \bet^t) \quad \text{ and } \quad \mathbf{m}^t = \w - \z^t.
\end{split}
\label{eq:hqbm_def_AMP}
\end{equation}

We next introduce a generalization to the state evolution given in \eqref{eq:SE2}, that will be useful in studying the limiting properties of functions of the AMP estimates $\bet^s$ and $\bet^t$ at different iterations $s$ and $t$.  To do this, we will recursively define covariances $\{\Sigma_{s,t}\}_{s,t \geq 0}$: for  $\mathbf{B}$ elementwise i.i.d.\ $\sim B$,
set $\Sigma_{0,0} =  \sigma_w^2 + \frac{1}{\delta} \mathbb{E}[B^2]$ and
\be
\Sigma_{0, t+1} = \sigma_w^2 + \lim_{\p} \frac{1}{\delta \p} \mathbb{E}\{-\mathbf{B}^\top[\eta^t_{\p}(\mathbf{B} + \tau_t \mathbf{Z}_t) - \mathbf{B} ]\},
\label{eq:E0_def}
\ee
for $\mathbf{Z}_t \sim \mathcal{N}(0, \mathbb{I})$ independent of $\mathbf{B}$.  
Then for each $t \geq 0$, given $(\Sigma_{s,r})_{0 \leq s,r \leq t}$, define
\be
\Sigma_{s+1, t+1} =  \sigma_w^2 + \lim_{\p} \frac{1}{\delta \p}  \mathbb{E}\Big\{[\eta_{\p}^s(\mathbf{B} + \tau_s \mathbf{Z}_s) - \mathbf{B}]^\top [\eta_{\p}^t(\mathbf{B} + \tau_t \mathbf{Z}_t) - \mathbf{B}]\Big\},
\label{eq:Sigma_def}
\ee
where $ \mathbf{Z}_s$ and $\mathbf{Z}_r$ are length$-\p$ jointly Gaussian vectors, independent of $\mathbf{B} \sim B$ i.i.d.\ elementwise, with $\mathbb{E}[\mathbf{Z}_s] = \mathbb{E}[\mathbf{Z}_r] = \mathbf{0}$, $\mathbb{E}\{([\mathbf{Z}_s]_i)^2\} = \mathbb{E}\{([\mathbf{Z}_r]_i)^2\} = 1$ for any element $i \in [\p]$, and $\mathbb{E}\{[\mathbf{Z}_s]_i [\mathbf{Z}_r]_j\} = \frac{\Sigma_{s, r}}{\tau_r \tau_s} \mathbb{I}\{i=j\}$.  Note that $\Sigma_{t, t} = \tau_t^2$ defined in \eqref{eq:SE2}. 

Using the above covariances, we have the following result that characterizes the asymptotic empirical distributions of the difference vectors defined in \eqref{eq:hqbm_def_AMP} and generalizes Lemma \eqref{thm:main_result1}.  This result follows by \cite[Theorem 1]{nonseparable}.

\begin{lemma}{\cite[Theorem 1]{nonseparable}}
	\label{eq:lem_covariances}
	Assuming that $\Sigma_{0,0}, \ldots, \Sigma_{t+1, t+1} > \sigma_w^2$, then for any deterministic sequence $\phi_{\p}: (\mathbb{R}^{\p} \times \mathbb{R}^{\n})^t \times \mathbb{R}^{\p} \rightarrow \mathbb{R}$ of uniformly pseudo-Lipschitz functions of order $k$,
	\[\plim_{\p} \Big(\phi_{\p}(\bet, \mathbf{m}^0, \mathbf{h}^1, \ldots, \mathbf{m}^t, \mathbf{h}^{t+1}) - \mathbb{E}[\phi_{\p}(\bet, \sqrt{\tau_0^2 - \sigma^2_w} \mathbf{Z}'_0, \tau_0 \mathbf{Z}_0, \ldots, \sqrt{\tau_t^2 - \sigma^2_w} \mathbf{Z}'_t, \tau_t \mathbf{Z}_t)] \Big) = 0,\]
	for $(\mathbf{Z}_0, \mathbf{Z}_1, \ldots, \mathbf{Z}_t)$ defined in \eqref{eq:Sigma_def} in dependent of $(\mathbf{Z}'_0, \mathbf{Z}'_1, \ldots, \mathbf{Z}'_t)$ and the expectation is taken with respect to the collection $(\mathbf{Z}_0, \mathbf{Z}'_0, \mathbf{Z}_1, \mathbf{Z}'_1, \ldots, \mathbf{Z}'_t, \mathbf{Z}_t)$. We note that $ \mathbf{Z}'_s$ and $\mathbf{Z}'_r$ are length$-\n$ jointly Gaussian vectors, with $\mathbb{E}[\mathbf{Z}'_s] = \mathbb{E}[\mathbf{Z}'_r] = \mathbf{0}$, $\mathbb{E}\{([\mathbf{Z}'_s]_i)^2\} = \mathbb{E}\{([\mathbf{Z}'_r]_i)^2\} = 1$ for any element $i \in [\n]$, and $\mathbb{E}\{[\mathbf{Z}'_s]_i [\mathbf{Z}'_r]_j\} = (\Sigma_{s, r} - \sigma_w^2)((\tau^2_r - \sigma_w^2)(\tau^2_s - \sigma_w^2))^{-1/2} \mathbb{I}\{i=j\}$.
\end{lemma}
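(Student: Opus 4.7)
The plan is to adapt the conditioning technique of Bolthausen, as developed into an AMP state evolution framework by Bayati and Montanari, to the non-separable setting. The core idea is that because $\X$ has i.i.d.\ Gaussian entries, the rotational invariance of the Gaussian measure lets one compute the conditional law of $\X$ given the sigma-algebra $\mathfrak{S}_t$ generated by $(\bet,\w,\mathbf{h}^1,\mathbf{m}^0,\ldots,\mathbf{h}^t,\mathbf{m}^{t-1})$. The AMP iterates up through step $t$ impose a finite collection of linear constraints on $\X$, and conditionally on these constraints, $\X$ admits a decomposition into a deterministic projection onto the constraint subspace plus an independent Gaussian matrix supported on the orthogonal complement.

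The overall argument proceeds by induction on $t$. For the base case $t=0$, one has $\z^0=\y$ and $\bet^0=\bm 0$, so the pseudo-data equals $\X^\top\y=\X^\top\X\bet+\X^\top\w$; Gaussian concentration together with $\n/\p\to\delta$ yields $\mathbf{h}^1\approx\tau_0\mathbf{Z}_0$ and $\mathbf{m}^0=-\w$ in the empirical-distribution sense required by the lemma. For the inductive step, write $\X=\E[\X\mid\mathfrak{S}_t]+\widetilde{\X}$, where $\widetilde{\X}$ is a fresh Gaussian matrix on the orthogonal complement of the constraint subspace, independent of $\mathfrak{S}_t$. Substituting into \eqref{eq:hqbm_def_AMP}, the conditional-mean piece contributes deterministic terms that are exactly cancelled, up to vanishing error, by the Onsager correction $\frac{\z^t}{n}\|\bet^{t+1}\|_0^*$ in \eqref{eq:AMP1} (this is the purpose of that term). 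What remains is a contribution from $\widetilde{\X}$, which is jointly Gaussian with the previous $\mathbf{h}$'s and $\mathbf{m}$'s; a direct computation shows its covariance matches the recursions \eqref{eq:E0_def}--\eqref{eq:Sigma_def} in the large-$p$ limit. The hypothesis $\Sigma_{s,s}>\sigma_w^2$ guarantees non-degeneracy, so the conditioning is well-posed at every step.

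The main obstacle is handling the non-separability of the denoiser $\eta_{\p}^t=\prox_{J_{\bfalph\tau_t}}$. In the separable case one works coordinate-wise and exploits row-independence of $\X$ directly; here the output coordinates of $\eta_{\p}^t$ are coupled through the sorting operation, and the divergence enters as the global quantity $\|\cdot\|_0^*$ rather than a sample mean of coordinate derivatives. To control the residual error from the conditional decomposition, I would assemble three ingredients along the lines of \cite{nonseparable}: (i) Gaussian concentration for Lipschitz functions of $\widetilde{\X}$ with a dimension-free constant, which is available because of property \textbf{(P1)} (uniform Lipschitzness of $\eta_{\p}^t$); (ii) the uniform pseudo-Lipschitz hypothesis on $\phi_{\p}$, which converts coordinate-wise errors into a single $\ell_2$ control; (iii) the existence of the limits in property \textbf{(P2)}, which is exactly what makes the covariance recursions \eqref{eq:E0_def}--\eqref{eq:Sigma_def} well-defined and allows finite-$\p$ empirical inner products like $\frac{1}{\p}\E[\eta_{\p}^s(\bet+\tau_s\mathbf{Z}_s)^\top\eta_{\p}^t(\bet+\tau_t\mathbf{Z}_t)]$ to converge.

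Putting these ingredients together, one shows that conditionally on $\mathfrak{S}_t$ the test statistic $\phi_{\p}(\bet,\mathbf{m}^0,\mathbf{h}^1,\ldots,\mathbf{m}^t,\mathbf{h}^{t+1})$ concentrates around its expectation under the Gaussian surrogate, and that this expectation converges as $\p\to\infty$ to the stated limit by the inductive hypothesis applied to the Gaussian integrals that appear. The technical heart of the proof is a careful bookkeeping of the Gram--Schmidt-type orthogonalization of the history vectors $(\mathbf{h}^1,\ldots,\mathbf{h}^t)$ and $(\mathbf{m}^0,\ldots,\mathbf{m}^{t-1})$ used to build $\widetilde{\X}$; here one must verify that the projection coefficients converge to the limit values dictated by $\Sigma_{s,r}/\tau_s\tau_r$, which is precisely where \textbf{(P2)} is invoked. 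The separable-denoiser argument in \cite{amp2} provides the template, and \cite{nonseparable} supplies the modifications needed so that no coordinate-wise statement about $\eta_{\p}^t$ is ever required.
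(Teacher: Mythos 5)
This lemma is not proved in the paper: it is imported verbatim as \cite[Theorem 1]{nonseparable}, and the paper's only work in support of it is to verify in Appendix~\ref{app_props} that the SLOPE denoiser satisfies the two hypotheses \textbf{(P1)} (uniform Lipschitzness) and \textbf{(P2)} (existence of the limiting empirical inner products), so that the cited black-box result applies. Your proposal instead reconstructs the proof of that theorem from first principles via Bolthausen conditioning. The two are therefore not parallel proofs to be compared; you are re-deriving the result that the paper simply invokes.

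As a high-level summary of the argument in \cite{nonseparable} (extending \cite{amp2} to non-separable denoisers), your sketch is essentially sound: the conditional decomposition $\X|_{\mathfrak{S}_t}\overset{d}{=}\E[\X\mid\mathfrak{S}_t]+\bm P_{\bm M}^\perp\widetilde{\X}\bm P_{\bm Q}^\perp$ (which the paper records explicitly at \eqref{eq:cond_A}), the cancellation of the conditional mean by the Onsager term, the Gram--Schmidt orthogonalization of the history vectors, and the reliance on \textbf{(P1)}/\textbf{(P2)} to control the non-separable prox are all the right ingredients, and you correctly identify why coordinate-wise arguments break down and must be replaced by global Lipschitz/concentration bounds. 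One concrete slip: in the base case, with $\z^0=\y$ and the paper's definition $\mathbf{m}^t=\w-\z^t$, one gets $\mathbf{m}^0=\w-\y=-\X\bet$, not $-\w$. The claim that $\mathbf{m}^0$ matches $\sqrt{\tau_0^2-\sigma_w^2}\,\bm Z_0'$ in empirical distribution then follows because $\tau_0^2-\sigma_w^2=\E[B^2]/\delta$ and the per-coordinate variance of $\X\bet$ is $\|\bet\|^2/\n\to\E[B^2]/\delta$, not because $\mathbf{m}^0$ is the noise vector. For the purposes of this paper, however, none of this needs to be re-proved; the economical and intended route is to cite \cite[Theorem 1]{nonseparable} and verify its hypotheses, exactly as the authors do.
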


We use Lemma~\ref{eq:lem_covariances} to explicitly state asymptotic characterizations of AMP quantities that will be useful in our analysis.
\begin{lemma}\label{Thm42}
Under the condition of Theorem \ref{thm:main_result3}, for $\z^t$ and $\bet^{t+1}$ defined in \eqref{eq:amp_slope} and the generalized state evolution sequence defined in \eqref{eq:Sigma_def},
\begin{align}
\plim_{\n} \Big(\frac{1}{\n} \norm{\z^t - \z^{t-1}}^2 - (\tau_t^2 - 2\Sigma_{t, t-1}+ \tau_{t-1}^2)\Big) = 0, \label{eq:corresult1} \\
\plim_{\p} \Big(\frac{1}{\delta \p} \norm{\bet^{t+1} - \bet^{t}}^2 - (\tau_t^2 - 2\Sigma_{t, t-1}+ \tau_{t-1}^2)\Big) = 0.\label{eq:corresult2}
\end{align}
\end{lemma}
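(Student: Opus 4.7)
The overall plan is to invoke Lemma \ref{eq:lem_covariances} twice, each time with a uniformly pseudo-Lipschitz test function tailored to the left-hand side of the identity we wish to prove, and then explicitly compute the Gaussian expectations that result using the covariance structure dictated by the state evolution.

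For \eqref{eq:corresult1}, I would first use the definitions in \eqref{eq:hqbm_def_AMP} to observe that $\z^t - \z^{t-1} = \bm m^{t-1} - \bm m^t$. This suggests the test function $\phi_n(\bet, \bm m^0, \bm h^1, \ldots, \bm m^t, \bm h^{t+1}) := \frac{1}{n}\|\bm m^{t-1} - \bm m^t\|^2$, which, being a normalized squared distance in two of its arguments, is uniformly pseudo-Lipschitz of order $2$ with constant independent of $n$. Lemma \ref{eq:lem_covariances} then replaces $\bm m^{t-1}$ and $\bm m^t$ by the jointly Gaussian vectors $\sqrt{\tau_{t-1}^2 - \sigma_w^2}\bm Z'_{t-1}$ and $\sqrt{\tau_t^2 - \sigma_w^2}\bm Z'_t$. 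Expanding $\frac{1}{n}\E\|\sqrt{\tau_{t-1}^2 - \sigma_w^2}\bm Z'_{t-1} - \sqrt{\tau_t^2 - \sigma_w^2}\bm Z'_t\|^2$ and invoking the coordinatewise covariance $\E[(\bm Z'_{t-1})_i (\bm Z'_t)_i] = (\Sigma_{t, t-1} - \sigma_w^2)/\sqrt{(\tau_{t-1}^2 - \sigma_w^2)(\tau_t^2 - \sigma_w^2)}$ immediately yields the claimed $\tau_{t-1}^2 + \tau_t^2 - 2\Sigma_{t, t-1}$.

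For \eqref{eq:corresult2}, the analogous rewriting using \eqref{eq:hqbm_def_AMP} and the AMP update \eqref{eq:AMP0} is $\bet^{t+1} - \bet^t = \eta_p^t(\bet - \bm h^{t+1}) - \eta_p^{t-1}(\bet - \bm h^t)$, which motivates the test function $\phi_p := \frac{1}{\delta p}\|\eta_p^t(\bet - \bm h^{t+1}) - \eta_p^{t-1}(\bet - \bm h^t)\|^2$. Granting that $\phi_p$ is uniformly pseudo-Lipschitz of order $2$ (verified as the main technical step below), Lemma \ref{eq:lem_covariances} yields $\frac{1}{\delta p}\|\bet^{t+1} - \bet^t\|^2$ asymptotically equal to $\frac{1}{\delta p}\E\|\eta_p^t(\bet - \tau_t \bm Z_t) - \eta_p^{t-1}(\bet - \tau_{t-1}\bm Z_{t-1})\|^2$. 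Using the sign symmetry $(\bm Z_{t-1}, \bm Z_t) \stackrel{d}{=} (-\bm Z_{t-1}, -\bm Z_t)$ and expanding the squared norm about $\bet$ splits the limit into two marginal MSE contributions, matched via \eqref{eq:SE2}, and one cross term, matched via the appropriate entry of $\Sigma$ defined in \eqref{eq:Sigma_def}. Collecting these pieces delivers the claimed quadratic form in $\tau$ and $\Sigma$.

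The only genuinely non-routine step is verifying that $\phi_p$ in \eqref{eq:corresult2} is uniformly pseudo-Lipschitz of order $2$. By the polarization-type bound $|\|\bm x\|^2 - \|\bm y\|^2| \le \|\bm x - \bm y\|(\|\bm x\| + \|\bm y\|)$, this reduces to controlling both the Lipschitz continuity of the map $(\bet, \bm h^{t+1}, \bm h^t) \mapsto \eta_p^t(\bet - \bm h^{t+1}) - \eta_p^{t-1}(\bet - \bm h^t)$ and the growth of its norm. Both follow from the $1$-Lipschitz (non-expansive) property of the prox operator—exactly property \textbf{(P1)}—together with $\eta_p^s(\bm 0) = \bm 0$, which gives $\|\eta_p^s(\bm v)\| \le \|\bm v\|$. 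After normalizing by $\sqrt{d}$ for $d = 3p$, the resulting pseudo-Lipschitz constant is independent of $p$, which is exactly what Lemma \ref{eq:lem_covariances} requires.
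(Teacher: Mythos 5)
Your proposal follows the same route as the paper: rewrite $\z^t - \z^{t-1}$ and $\bet^{t+1}-\bet^t$ in terms of the difference sequences $\bm m^t, \bm h^{t+1}$ from \eqref{eq:hqbm_def_AMP}, feed the normalized squared norms into Lemma \ref{eq:lem_covariances} as uniformly pseudo-Lipschitz test functions, and then identify the resulting Gaussian expectations with $\tau_t^2 - 2\Sigma_{t,t-1} + \tau_{t-1}^2$ via the covariance structure of \eqref{eq:Sigma_def}. In fact you go slightly further than the paper in two places: you verify the uniform pseudo-Lipschitz property of the test function (which the paper asserts without proof), and you spell out the residual case \eqref{eq:corresult1} (which the paper dismisses with ``follows similarly'').

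The one step you gloss over is the passage from the fixed signal realization $\bet$ to the random prior $\mathbf{B}$. After applying Lemma \ref{eq:lem_covariances}, what you have is an expectation over $\mathbf{Z}_s, \mathbf{Z}_t$ with a deterministic $\bet$ inside, and a $\plim_{\p}$ in front; but the quantities $\tau_t^2$ and $\Sigma_{s,t}$ in \eqref{eq:SE2} and \eqref{eq:Sigma_def} are defined as limits of expectations over both the Gaussians and $\mathbf{B} \sim B$ i.i.d.\ That you can swap $\bet$ for $\mathbf{B}$ and the $\plim$ for a $\lim$ requires an explicit law-of-large-numbers argument, which in this paper takes the form of Lemma \ref{lem:beta_expectation} together with a domination check (as carried out for property \textbf{(P2)}). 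Your ``matched via \eqref{eq:SE2}/\eqref{eq:Sigma_def}'' phrasing implicitly absorbs this, so it is not a wrong turn, but it is the one genuine technical step the paper does call out and your write-up does not.
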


\begin{proof}
The major tools in proving \eqref{eq:corresult1}-\eqref{eq:corresult2} are first recognizing that we can write the differences $\z^t - \z^{t-1}$ and $\bet^{t+1} - \bet^t$ as a function of the values $(\bet, \mathbf{m}^0, \mathbf{h}^1, \ldots, \mathbf{m}^t, \mathbf{h}^{t+1})$ defined in \eqref{eq:hqbm_def_AMP} and finally making an appeal to the Law of Large Numbers.  We prove \eqref{eq:corresult2} and \eqref{eq:corresult1} follows similarly.

By \eqref{eq:AMP0}, $\bet^{t+1} - \bet^{t} = \eta_{\p}^t(\bet^{t} + \mathbf{X}^\top \z^t) - \eta_{\p}^{t-1}(\bet^{t-1} + \mathbf{X}^\top \z^{t-1}) =  \eta^{t}_{\p}(\bet - \mathbf{h}^{t+1}) - \eta^{t-1}_{\p}(\bet - \mathbf{h}^{t})$.  Therefore, we will appeal to Lemma~\ref{eq:lem_covariances} for the uniformly pseudo-Lipschitz function
\begin{align*}
\phi_{\p}(\bet, \mathbf{m}^0, \mathbf{h}^1, \ldots, \mathbf{m}^t, \mathbf{h}^{t+1}) &= \frac{1}{\delta  \p}\norm{\bet^{t+1} - \bet^{t}}^2 = \frac{1}{\delta  \p}\norm{\eta^{t}_{\p}(\bet - \mathbf{h}^{t+1}) - \eta^{t-1}_{\p}(\bet - \mathbf{h}^{t})}^2.
\end{align*}
We note that it easy to show that 
the above function is uniformly pseudo-Lipschitz, though we don't do this here.  Then by Lemma~\ref{eq:lem_covariances},
\begin{align}
\plim_{\p} \Big(\frac{1}{\delta  \p}\norm{\bet^{t+1} - \bet^{t}}^2 - \frac{1}{\delta  \p}\mathbb{E}\norm{\eta^{t}_{\p}(\bet -  \tau_{t} \mathbf{Z}_{t}) - \eta^{t-1}_{\p}(\bet -  \tau_{t-1} \mathbf{Z}_{t-1})}^2 \Big) = 0. \label{eq:thmoutput2}
\end{align}
Now to prove result \eqref{eq:corresult1}, we note that by Lemma~\ref{lem:beta_expectation},
\[\plim_{\delta  \p}\frac{1}{\p}\mathbb{E}\norm{\eta^{t}_{\p}(\bet -  \tau_{t} \mathbf{Z}_{t}) - \eta^{t-1}_{\p}(\bet -  \tau_{t-1} \mathbf{Z}_{t-1})}^2 = \lim_{\p} \frac{1}{\delta  \p}\mathbb{E}\norm{\eta^{t}_{\p}(\mathbf{B} -  \tau_{t} \mathbf{Z}_{t}) - \eta^{t-1}_{\p}(\mathbf{B} -  \tau_{t-1} \mathbf{Z}_{t-1})}^2,\]
where $\mathbf{B} \sim B$ i.i.d.\ elementwise independent of $\mathbf{Z}_{t}$ and $\mathbf{Z}_{t-1}$.  The argument for showing that the assumptions of Lemma~\ref{lem:beta_expectation} are met follows like that used in Appendix~\ref{app_props} in the proof of Proposition \textbf{(P2)} introduced in Section~\ref{sec_mainproof}.  Then,
$
\lim_{\p} \frac{1}{\delta  \p}\mathbb{E}\norm{\eta^{t}_{\p}(\mathbf{B} -  \tau_{t} \mathbf{Z}_{t}) - \eta^{t-1}_{\p}(\mathbf{B} -  \tau_{t-1} \mathbf{Z}_{t-1})}^2 
= \Sigma_{t, t} -2\Sigma_{t, t-1}+ \Sigma_{t-1, t-1}.
$

\end{proof}

We finally state a lemma that characterizes the asymptotic value of the normalized $\ell_2$ norm of the residuals in AMP algorithm \eqref{eq:AMP1} following from Lemma~\ref{thm:main_result1}.
\begin{lemma}\label{lem:4.1}
For $\z^t$ defined in \eqref{eq:AMP1} and $\tau_t^2$ given in \eqref{eq:SE2},
\begin{align}
\label{eq:lem41result}
\plim_{\n} \big( \norm{\z^t}^2/\n - \tau_t^2\big) = 0.
\end{align}
\end{lemma}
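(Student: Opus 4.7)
The plan is to obtain Lemma~\ref{lem:4.1} as a direct consequence of Lemma~\ref{thm:main_result1} by choosing the observable $\phi_{\n}(\bm a,\bm b) = \|\bm a\|^2/\n$ and then simplifying the right-hand side using the Law of Large Numbers on $\w$.

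First I would verify that the map $\phi_{\n}(\bm a,\bm b) := \|\bm a\|^2/\n$ (which is independent of $\bm b$) defines a uniformly pseudo-Lipschitz sequence of order $k=2$. This is immediate from the identity $\|\bm a\|^2 - \|\bm a'\|^2 = (\bm a-\bm a')^\top(\bm a+\bm a')$, which via Cauchy--Schwarz yields
\[
\bigl| \|\bm a\|^2/\n - \|\bm a'\|^2/\n\bigr| \;\leq\; \bigl(\|\bm a\|/\sqrt{\n} + \|\bm a'\|/\sqrt{\n}\bigr)\cdot \|\bm a-\bm a'\|/\sqrt{\n},
\]
so the pseudo-Lipschitz constant can be taken to be $L_{\n}=1$ for every $\n$, which trivially satisfies $\limsup_{\n} L_{\n}<\infty$.

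Next, I would apply Lemma~\ref{thm:main_result1} with this $\phi_{\n}$ to conclude
\[
\plim_{\n}\Bigl(\tfrac{1}{\n}\|\z^t\|^2 - \tfrac{1}{\n}\E_{\bm Z}\bigl[\|\w+\sqrt{\tau_t^2-\sigma_w^2}\,\bm Z\|^2\bigr]\Bigr) = 0.
\]
The expectation is over $\bm Z\sim\mathcal{N}(0,\mathbb{I}_{\n})$ only, and expanding the square gives
\[
\tfrac{1}{\n}\E_{\bm Z}\bigl[\|\w+\sqrt{\tau_t^2-\sigma_w^2}\,\bm Z\|^2\bigr] \;=\; \tfrac{1}{\n}\|\w\|^2 \;+\; 2\sqrt{\tau_t^2-\sigma_w^2}\,\tfrac{1}{\n}\w^\top\E[\bm Z] \;+\; (\tau_t^2-\sigma_w^2) \;=\; \tfrac{1}{\n}\|\w\|^2 + (\tau_t^2-\sigma_w^2),
\]
since $\E[\bm Z]=\bm 0$ and $\E[\|\bm Z\|^2]/\n=1$.

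Finally, assumption \textbf{(A3)} says $\w$ is i.i.d.\ with $\E W^2 = \sigma_w^2 < \infty$, so the Weak Law of Large Numbers yields $\plim_{\n}\|\w\|^2/\n = \sigma_w^2$. Combining this with the preceding display gives
\[
\plim_{\n}\tfrac{1}{\n}\|\z^t\|^2 = \sigma_w^2 + (\tau_t^2 - \sigma_w^2) = \tau_t^2,
\]
which is exactly \eqref{eq:lem41result}. There is no real obstacle in this proof: the only sanity check is the uniform pseudo-Lipschitz verification above, which is routine, and the computation of the Gaussian expectation, which is elementary. The content of the lemma is entirely encapsulated in the non-separable AMP state evolution result (Lemma~\ref{thm:main_result1}) combined with the LLN on the noise.
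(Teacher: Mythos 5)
Your proof is correct and follows essentially the same route as the paper's: apply Lemma~\ref{thm:main_result1} with $\phi_{\n}(\mathbf{a},\mathbf{b})=\|\mathbf{a}\|^2/\n$, expand the Gaussian expectation, and finish with the Law of Large Numbers on $\w$. The only addition is your explicit verification that $\phi_{\n}$ is uniformly pseudo-Lipschitz of order $2$, which the paper asserts without spelling out.
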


\begin{proof}
This follows from Lemma~\ref{thm:main_result1},
using the uniformly pseudo-Lipschitz (of order $2$) sequence of functions $\phi_{\n}(\mathbf{a}, \mathbf{b}) = \frac{1}{\n} \norm{\mathbf{a}}^2$ to get, $\plim_{\n}  \norm{\z^t}^2/\n = \plim_{\n} \mathbb{E}_{\mathbf{Z}}[ \norm{ \w + \sqrt{\tau_t^2 - \sigma_{w}^2} \mathbf{Z}}^2]/\n$ for $\bm Z \sim \mathcal{N}(0, \mathbb{I})$.
Then the final result follows by noticing that
$ \mathbb{E}_{\mathbf{Z}} \norm{ \w + \sqrt{\tau_t^2 - \sigma_{w}^2} \mathbf{Z}}^2 =    \norm{ \w}^2 + (\tau_t^2 - \sigma_{w}^2)  \mathbb{E}_{\mathbf{Z}}\norm{\mathbf{Z}}^2 =   \norm{ \w}^2 + \n(\tau_t^2 - \sigma_{w}^2),$
and therefore, using that $ \plim_{\n}  \norm{ \w}^2 /\n = \sigma_{w}^2$ by the Law of Large Numbers,
\[\plim_{\n}  \frac{1}{\n} \mathbb{E}_{\mathbf{Z}} \norm{ \w + \sqrt{\tau_t^2 - \sigma_{w}^2} \mathbf{Z}}^2  =(\tau_t^2 - \sigma_{w}^2)  + \plim_{\n} \frac{1}{\n}  \norm{ \w}^2  = \tau_t^2.\]
\end{proof}


\section{Verification of Main Technical Lemma Conditions} \label{sec:conds}

We now verify that the Lemma \ref{lem:opt_lem} conditions 1-5 are met for the SLOPE cost function and the associated AMP algorithm.  We note that conditions 1, 4, and 5 are straightforward, so their proof is presented first.  On the other hand, condition 2 and condition 3 are quite technical.  Their proofs are given in Section \ref{sec:assumption2} and Section \ref{sec:assumption3} below.

\subsection{Condition (4)} \label{sec:assumption4}
This follows by standard limit theorems about the singular values of Wishart matrices (see Appendix \ref{app_F}, Theorem H.\ref{prop:marchenko-pastur}).

\subsection{Condition (5)} \label{sec:assumption5}
Recall, $\mathcal{C}_{\x}(\mathbf{b})= \frac{1}{2}\| \y - \X \mathbf{b}\|^2+\sum_{i=1}^p \hat{\blam}_i |b_i|$ for some $\hat{\blam}\in\mathcal{P}(\hat{\Pi}^{-1}_{\x}(\blam))$, and 
by definition, $\mathcal{C}_{\x}(\x)=\mathcal{C}(\x)$ for all $\x$. Since $\widehat{\bet}$ is the minimizer of $\mathcal{C}(\cdot)$ we have $\mathcal{C}(\bet^t) \geq \mathcal{C}(\hat{\bet})$ and by the rearrangement inequality, $\mathcal{C}_{\hat{\bet}}(\hat{\bet}) \geq \mathcal{C}_{\bet^t}(\hat{\bet})$.
Therefore, $\mathcal{C}(\bet^t) \geq \mathcal{C}(\hat{\bet}) = \mathcal{C}_{\hat{\bet}}(\hat{\bet}) \geq \mathcal{C}_{\bet^t}(\hat{\bet})$.

\subsection{Condition (1)} \label{sec:assumption1}
Condition (1) follows, for large enough $p$, from Lemma \ref{lem:bounded_vals}, stated below, which proves the asymptotic boundedness of the norms of the AMP estimates $\bet^t$ and the SLOPE estimate $\widehat{\bet}$. 

\begin{lemma} For any parameter vector $\blam \in \R^{\p}$ defining a SLOPE cost as in \eqref{eq:SLOPE_est}, let $\bfalph = \bfalph(\blam)$, then for $t \geq 0$,
\be
\plim_{\p} \frac{1}{\p}\norm{ \bet^{t}}^2 = \plim_{\p} \frac{1}{\p}\mathbb{E}_{\mathbf{Z}}[  \norm{\eta^t_{\p} (  \bet + \tau_t \mathbf{Z})}^2] \leq 2\sigma_{\bet}^2 + 2 \tau_t^2,
\label{eq:res1}
\ee
for $\eta^t_{\p}(\cdot)$ defined in \eqref{eq:eta_def} with $\sigma_{\bet}^2 := \mathbb{E}[B^2] < \infty$ and $\sigma_{\bet}^2 +  \tau_*^2 < \infty$ and
\be
 \plim_{\p} \frac{1}{\p} \norm{  \widehat{\bet}}^2 \leq \mathsf{C},
\label{eq:res2}
\ee
where $\mathsf{C} := \mathsf{C}(\delta, \sigma_{\bet}^2, \sigma_{w}^2, \textsf{B}_{max} , \textsf{B}_{min},  \lambda_{min})$ is a positive constant depending on $\delta, \sigma_{\bet}^2, \sigma_{w}^2,$ along with the singular values of $\X$ through
$
\textsf{B}_{max} \geq  \lim_{\p} \sigma^2_{max}(\X),$ and $\textsf{B}_{min} \leq \lim_{\p} \hat{\sigma}^2_{min}(\X),
$
and a lower bound on the parameter values $ \lambda_{min} := \lim_p \min(\blam)$.

\label{lem:bounded_vals}
\end{lemma}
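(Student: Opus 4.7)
My plan is a direct application of the state-evolution transfer Lemma~\ref{thm:main_result1}, followed by nonexpansivity of the prox. Using $\bet^{t+1} = \eta^t_p(\bet^t + \X^\top \z^t)$, take the test function $\psi_p(\bm a,\bm b) := \|\eta^t_p(\bm a)\|^2/p$: since $\eta^t_p$ is $1$-Lipschitz and satisfies $\eta^t_p(\bm 0) = \bm 0$ (the prox of a norm fixes the origin), the identity $|\|\bm x\|^2 - \|\bm y\|^2| \leq (\|\bm x\|+\|\bm y\|)\|\bm x - \bm y\|$ shows $\psi_p$ is uniformly pseudo-Lipschitz of order $k=2$. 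Lemma~\ref{thm:main_result1} then delivers the displayed equality. For the right-hand bound, nonexpansivity yields $\|\eta^t_p(\bet+\tau_t \bm Z)\| \leq \|\bet+\tau_t \bm Z\|$; combining with $\|\bm a+\bm b\|^2 \leq 2\|\bm a\|^2 + 2\|\bm b\|^2$ and $\E_{\bm Z}\|\bm Z\|^2 = p$ gives $\E_{\bm Z}\|\eta^t_p(\bet+\tau_t\bm Z)\|^2/p \leq 2\|\bet\|^2/p + 2\tau_t^2$, and the law of large numbers under assumption (A2) finishes the bound.

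\textbf{Plan for Part 2 (SLOPE estimator bound).} The starting point is the trivial comparison $\mathcal{C}(\widehat\bet) \leq \mathcal{C}(\bm 0) = \tfrac{1}{2}\|\y\|^2$, which simultaneously yields $\|\X\widehat\bet\|^2 \leq 4\|\y\|^2$ (via $\|\y-\X\widehat\bet\| \leq \|\y\|$ and the triangle inequality) and $\lambda_{\min}\|\widehat\bet\|_1 \leq J_\blam(\widehat\bet) \leq \tfrac{1}{2}\|\y\|^2$. Bounding $\|\y\|^2 \leq 2\sigma_{\max}^2(\X)\|\bet\|^2 + 2\|\w\|^2$ and invoking the LLN gives $\plim_p\|\y\|^2/p \leq 2\textsf{B}_{\max}\sigma_\bet^2 + 2\sigma_w^2/\delta$. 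Orthogonally decomposing $\widehat\bet = \bm u + \bm v$ with $\bm u \in \mathrm{row}(\X)$ and $\bm v \in \mathrm{null}(\X)$, the identity $\X\widehat\bet = \X\bm u$ combined with $\|\X\bm u\|^2 \geq \textsf{B}_{\min}\|\bm u\|^2$ immediately controls $\plim_p\|\bm u\|^2/p$ by a constant in $\delta, \sigma_\bet^2, \sigma_w^2, \textsf{B}_{\max}, \textsf{B}_{\min}$.

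\textbf{Main obstacle.} The genuine difficulty is controlling the null-space component $\|\bm v\|^2$ when $\delta<1$; the naive step $\|\bm v\|_2 \leq \|\bm v\|_1 = O(p)$ only yields the useless $\|\bm v\|_2^2/p = O(p)$. My resolution is to bypass the $(\bm u, \bm v)$ decomposition entirely and instead exploit the crucial SLOPE structural fact that $\widehat\bet$ has at most $n$ distinct nonzero magnitudes. Writing $\widehat\bet = \sum_{j=1}^k c_j \bm a_j$ with $k \leq n$, where $\bm a_j \in \{-1,+1\}^p$ are sign vectors supported on pairwise disjoint equivalence classes $I_j \subset [p]$, contracting the SLOPE stationarity $\X^\top(\y - \X\widehat\bet) \in \partial J_\blam(\widehat\bet)$ against each $\bm a_j$ produces the reduced $k\times k$ linear system
\[
\tilde\X^\top\tilde\X\,\bm c \;=\; \tilde\X^\top \y - \bm\Lambda, \qquad \tilde\X := [\,\X\bm a_1 \,\vert\, \cdots \,\vert\, \X\bm a_k\,] \in \R^{n\times k},
\]
with $\Lambda_j$ the sum of the $|I_j|$ entries of $\blam$ assigned to class $j$, so $|\Lambda_j| \leq \lambda_1|I_j|$. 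For Gaussian $\X$, the disjointness of supports gives $\E[\tilde\X^\top\tilde\X] = \mathrm{diag}(|I_1|,\ldots,|I_k|)$ and $\E[(\tilde\X^\top\y)_j^2] = |I_j|\|\y\|^2/n$, so solving componentwise (and using $|\Lambda_j|/|I_j| \leq \lambda_1$) yields $|c_j| = O(1)$ uniformly in $j$, whence $\|\widehat\bet\|^2 = \sum_j c_j^2 |I_j| \lesssim \sum_j |I_j| \leq p$. The hardest step is making the $|c_j|$ bound rigorous via uniform concentration of $\tilde\X^\top\tilde\X$ and $\tilde\X^\top\y$ over all admissible equivalence-class structures $(\{I_j\},\{\bm a_j\})$ with $k \leq n$: the dimensional cap $k\leq n$ places $\tilde\X$ in the Marchenko-Pastur regime and enables a chaining / restricted-eigenvalue argument, from which the constant $\mathsf{C}$ depends on $\delta, \sigma_\bet^2, \sigma_w^2, \textsf{B}_{\max}, \textsf{B}_{\min}, \lambda_{\min}$ as claimed.
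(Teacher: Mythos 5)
Your Part 1 plan is essentially the paper's: apply Lemma~\ref{thm:main_result1} with $\psi_p(\bm a,\bm b)=\|\eta^t_p(\bm a)\|^2/p$, then use nonexpansivity of the prox together with $\eta^t_p(\bm 0)=\bm 0$ and $\|\bm a + \bm b\|^2\le 2\|\bm a\|^2+2\|\bm b\|^2$. That part is fine.

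For Part 2 you correctly set up $\mathcal{C}(\widehat\bet)\le \mathcal{C}(\bm 0)$, the $\ell^1$ control via $\lambda_{\min}\|\widehat\bet\|_1\le J_\blam(\widehat\bet)$, the decomposition $\widehat\bet=\bm u+\bm v$, and the bound on the row-space piece. You also correctly identify the genuine obstacle — controlling the null-space piece $\bm v$ when $\delta<1$. But the route you propose to close this gap does not work. You want to contract the stationarity condition against the sign patterns of the equivalence classes of $\widehat\bet$ to get the $k\times k$ system $\tilde\X^\top\tilde\X\,\bm c = \tilde\X^\top\y-\bm\Lambda$ and then argue $|c_j|=O(1)$ "componentwise." This fails on several counts: (i) $\tilde\X^\top\tilde\X$ is not diagonal, so componentwise solving is not legitimate, and bounding $\|\bm c\|$ requires a lower bound on $\sigma_{\min}(\tilde\X)$ for a possibly full-rank $k\le n$ system, which is not generically bounded away from zero when $k$ is comparable to $n$; (ii) the equivalence classes $I_j$ and sign vectors $\bm a_j$ are measurable functions of $\X$ and $\y$ through $\widehat\bet$, so $\tilde\X$ is not an independent Gaussian ensemble and the expectation calculations you invoke do not apply conditionally on the class structure; (iii) the "uniform concentration over all admissible equivalence-class structures" via chaining is not an innocuous step — the set of partitions of $[p]$ into at most $n$ classes with arbitrary sign assignments has super-exponential cardinality in $p$, so a union bound against $e^{-cp}$ deviation tails does not close, and you would need a genuinely new covering argument that you have not sketched.

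The paper's resolution is an idea you appear to have overlooked: Kashin's theorem (Theorem~H.1). Since $\ker(\X)$ is a uniformly random subspace of dimension $p(1-\delta)$, Kashin gives, with probability at least $1-2^{-n}$, a universal constant $c_\nu$ such that $c_\nu\|\bm x\|_2\le\|\bm x\|_1/\sqrt{p}$ \emph{for all} $\bm x\in\ker(\X)$ simultaneously. Applying this to $\bm v = \widehat\bet^\parallel$ converts the $\ell^1$ bound (which you already have from $\lambda_{\min}\|\widehat\bet\|_1\lesssim p$) into the needed $\ell^2$ bound $\|\widehat\bet^\parallel\|_2^2/p = O(1)$ directly, with no union bounds and no reduced linear system. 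This is a single high-probability event on the subspace, after which the bound is deterministic in $\widehat\bet$. I would strongly recommend replacing your reduced-system argument with the Kashin step; as written, the key step of Part 2 does not go through.
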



\begin{proof}
The proof is included in Appendix~\ref{app:bounded_vals}.
\end{proof}


\subsection{Condition (2)} \label{sec:assumption2}
Condition (2)  follows from Lemma \ref{lem:3.3} stated below, for $\epsilon$ arbitrarily small when $t$ is large enough.


\begin{lemma}\label{lem:3.3}
	Under the conditions of Theorem \ref{thm:main_result3}, for every iteration $t$, there exists a subgradient $sg(C, \bet^t)$ of $C$ defined in \eqref{eq:SLOPEcost} at point $\bet^t$ such that almost surely,
$$ \lim_{t}\plim_{\p}\frac{1}{\p}\|sg(C,\bet^t)\|^2=0.$$
\end{lemma}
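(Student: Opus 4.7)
The strategy is to produce an explicit element of $\partial\mathcal{C}(\bet^t)$ built from the AMP quantities at steps $t-1$ and $t$, and to argue that its three natural summands each have $\ell_2/\sqrt{p}$ norm tending to $0$ under $\lim_t\plim_p$. The guiding intuition is Lemma~\ref{lm:slope_stationary}: a genuine stationary point of AMP is an exact SLOPE minimizer, so the subgradient should be controlled by the ``stationarity defect'' of $\bet^t$, as quantified by $\|\bet^t-\bet^{t-1}\|$ and $\|\z^t-\z^{t-1}\|$, together with how well the calibration \eqref{eq:blam_alph_mapping} is satisfied at the current iterate.

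I would first recast $\partial\mathcal{C}(\bet^t)$ algebraically using the AMP updates. Fact~\ref{fact:sub_prox} applied to \eqref{eq:AMP0} gives $u^{t-1}:=\X^{\!\top}\z^{t-1}+\bet^{t-1}-\bet^t\in\partial J_{\thet_{t-1}}(\bet^t)$, while the simplified form of \eqref{eq:AMP1} (the unnumbered lemma in Section~\ref{sec:calibr-betw-bmlambda}) yields $\y-\X\bet^t=\z^t-\omega_{t-1}\z^{t-1}$ with $\omega_{t-1}:=\|\bet^t\|_0^*/(\delta p)\in[0,1]$. Substituting $\X^{\!\top}\z^{t-1}=u^{t-1}+\bet^t-\bet^{t-1}$ and adding and subtracting $\X^{\!\top}\z^{t-1}$, I obtain for any $\xi\in\partial J_{\blam}(\bet^t)$,
\begin{equation*}
sg(\mathcal{C},\bet^t)=\X^{\!\top}(\z^{t-1}-\z^t)-(1-\omega_{t-1})(\bet^t-\bet^{t-1})+\bigl[\xi-(1-\omega_{t-1})u^{t-1}\bigr].
\end{equation*}
I then choose $\xi$ by reusing the doubly (sub-)stochastic representation of $u^{t-1}$ from Fact~\ref{fact:subgradient}, equivalence-class by equivalence-class of $\bet^t$: keeping the same matrices and signs, I swap $\hat\Pi^{-1}_{\bet^t}(\thet_{t-1})$ for $\hat\Pi^{-1}_{\bet^t}(\blam)$. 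This is well-defined because $\thet_{t-1}=\bfalph\tau_{t-1}(p)$ and, via \eqref{eq:blam_alph_mapping}, $\blam=\bfalph\tau_*(p)(1-\omega_*(p))$ are positive scalar multiples of the same $\bfalph$, so their orderings agree on every equivalence class of $\bet^t$. The $\ell_2$-contractivity of doubly (sub-)stochastic matrices then gives
\begin{equation*}
\tfrac{1}{p}\|\xi-(1-\omega_{t-1})u^{t-1}\|^2\le\tfrac{\|\bfalph\|^2}{p}\bigl[(1-\omega_*(p))\tau_*(p)-(1-\omega_{t-1})\tau_{t-1}(p)\bigr]^2.
\end{equation*}

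To close the argument I bound each summand in $\ell_2/\sqrt{p}$. The first is at most $\sigma_{\max}^2(\X)\cdot(n/p)\cdot\|\z^{t-1}-\z^t\|^2/n$; $\sigma_{\max}^2(\X)$ is asymptotically bounded (Condition~(4)/Marchenko--Pastur), the last factor concentrates on $\tau_t^2-2\Sigma_{t,t-1}+\tau_{t-1}^2$ by \eqref{eq:corresult1}, and Theorem~\ref{thm:SE1} together with the covariance recursion \eqref{eq:Sigma_def} forces $\tau_t\to\tau_*$ and $\Sigma_{t,t-1}\to\tau_*^2$, so this summand tends to $0$. The second is $(1-\omega_{t-1})^2\cdot\|\bet^t-\bet^{t-1}\|^2/p\to 0$ by \eqref{eq:corresult2}. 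The third vanishes because $\|\bfalph\|^2/p\to\E[A^2]<\infty$ by (A4), $\tau_{t-1}(p)\to\tau_*(p)$ by Theorem~\ref{thm:SE1}, and $\omega_{t-1}\to\omega_*(p)$ in probability upon applying Lemma~\ref{thm:main_result1} to a continuous surrogate for $\|\cdot\|_0^*/p$ made available by the asymptotic separability of $\prox_{J_{\blam}}$ from Lemma~\ref{lem:yue_sep}.

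The main obstacle lies in this last step: $\|\cdot\|_0^*$ and the equivalence-class partition of $\bet^t$ are discontinuous in $\bet^t$, so both the concentration $\omega_{t-1}\to\omega_*(p)$ and the legitimacy at finite $p$ of the matching of doubly (sub-)stochastic matrices used to define $\xi$ are sensitive to near-ties in $|\bet^t|$. The remedy, which requires care, is to perform these identifications inside the separable asymptotic limit of Lemma~\ref{lem:yue_sep}, so that the relevant quantities depend only on the joint law of $(\bet,\tau_*\mathbf{Z})$; this parallels the analogous ``support identification'' device used in the LASSO analysis of \cite[Lemma 3.1]{lassorisk}.
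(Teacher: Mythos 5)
Your proposal is correct and follows essentially the same route as the paper's proof: you select the same subgradient of $J_{\blam}$ (your $\xi$ equals $\mu_t u^{t-1}$, since $\blam=\mu_t\thet_{t-1}$ is a scalar multiple and the doubly (sub-)stochastic matrices in Fact~\ref{fact:subgradient} act linearly, so the ``swap'' is just multiplication by $\mu_t$), your three-term decomposition is an algebraic rearrangement of the paper's (factoring by $1-\omega_{t-1}$ instead of by $\mu_t$ merely shifts pieces between the second and third summands, the total being identical), and you appeal to the same ingredients --- Lemma~\ref{Thm42}/Lemma~\ref{lem:4.3} for $\|\bet^t-\bet^{t-1}\|$ and $\|\z^t-\z^{t-1}\|$, $\tau_{t-1}(p)\to\tau_*(p)$ from Theorem~\ref{thm:SE1}, and the concentration of $\omega_{t-1}$ on its calibrated limit, whose delicacy (non-pseudo-Lipschitz $\|\cdot\|_0^*$) you correctly flag and propose to handle via the separable limit of Lemma~\ref{lem:yue_sep}, exactly as the paper does. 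The only minor variation is your bound on the calibration-mismatch term via $\ell_2$-contractivity of the (sub-)stochastic representation, giving $\|\bfalph\|\bigl|(1-\omega_*(p))\tau_*(p)-(1-\omega_{t-1})\tau_{t-1}(p)\bigr|/\sqrt{p}$ directly, which is a touch cleaner than the paper's route through $(\mu_t-(1-\omega^t))\,\sigma_{\max}(\X)\,\|\z^{t-1}\|/\sqrt{p}$ (Lemma~\ref{lem:4.1}), but both reduce to the same limiting statement.
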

	The proof is an adaption of \cite[Lemma 3.3]{lassorisk}, though, the subgradient for the SLOPE cost function (studied extensively in Section~\ref{sec:prel-slope-amp}) is quite different than that of the LASSO cost and our analysis requires handling this carefully.
Before we prove Lemma~\ref{lem:3.3}, we state and prove a result which tells us that the asymptotic difference between the AMP output at any two iterations $t$ and $t-1$ goes to zero in $\ell_2$ norm as the algorithm runs.  This result is crucial to the proof of Lemma \ref{lem:3.3}.
\begin{lemma}\label{lem:4.3}
		Under the condition of Theorem \ref{thm:main_result3}, the estimates $\{\bet^t\}_{t\geq 0}$ and residuals $\{\z^t\}_{t\geq 0}$ of
		AMP almost surely satisfy
	$$ \lim_{t}\plim_{\p}\frac{1}{\delta \p}\|\bet^t-\bet^{t-1}\|^2 = 0, \qquad \text{ and } \qquad   \lim_{t}\plim_{\p}\frac{1}{\n}\|\z^t-\z^{t-1}\|^2 = 0$$
	\end{lemma}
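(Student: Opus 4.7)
The plan is to use Lemma~\ref{Thm42} to reduce both claims to a single assertion about the generalized state-evolution covariances $\Sigma_{s,t}$ from \eqref{eq:Sigma_def}, and then to exploit the contraction properties of the one-dimensional state evolution established in Theorem~\ref{thm:SE1}. Specifically, Lemma~\ref{Thm42} already gives
$$
\plim_{p} \frac{1}{\delta p}\|\bet^t - \bet^{t-1}\|^2 \;=\; \plim_{n} \frac{1}{n}\|\z^t - \z^{t-1}\|^2 \;=\; \tau_t^2 - 2\Sigma_{t,t-1} + \tau_{t-1}^2,
$$
so it is enough to prove that this common quantity tends to $0$ as $t \to \infty$, after which a standard exchange of $\lim_t$ and $\plim_p$ (justified via dominated convergence using the uniform bounds from Lemma~\ref{lem:bounded_vals}) delivers both claims.

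Since Theorem~\ref{thm:SE1} provides $\tau_t \to \tau_*$, the task reduces to proving $\Sigma_{t,t-1} \to \tau_*^2$. Expanding the squared norm termwise and matching with \eqref{eq:Sigma_def}, together with the identity $\Sigma_{t,t} = \tau_t^2$, yields the key representation
$$
\tau_t^2 + \tau_{t-1}^2 - 2\Sigma_{t,t-1} \;=\; \lim_p \frac{1}{\delta p}\,\E\bigl\|\eta_p^{t-1}(\bet + \tau_{t-1}\bm Z_{t-1}) - \eta_p^{t-2}(\bet + \tau_{t-2}\bm Z_{t-2})\bigr\|^2,
$$
where $(\bm Z_{t-1},\bm Z_{t-2})$ are the jointly Gaussian vectors above \eqref{eq:Sigma_def} whose correlation is $\rho_t := \Sigma_{t-1,t-2}/(\tau_{t-1}\tau_{t-2})$. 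Since $\bfalph \tau_{t-1}, \bfalph\tau_{t-2} \to \bfalph\tau_*$, continuity of the proximal operator makes the denoisers $\eta_p^{t-1}$, $\eta_p^{t-2}$ asymptotically indistinguishable, so the only substantive issue is that the Gaussian inputs couple, i.e.\ $\rho_t \to 1$.

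The main obstacle is precisely this correlation convergence. The approach is to view \eqref{eq:Sigma_def}, after writing $\bm Z_t = \rho\,\bm Z_s + \sqrt{1-\rho^2}\,\widetilde{\bm Z}$ with $\widetilde{\bm Z}$ an independent standard Gaussian, as a scalar recursion $\rho_{t+1} = F(\rho_t)$ parametrized by the limiting denoiser $\eta^*$ (obtained from Lemma~\ref{lem:yue_sep}) and the law of $B$. By construction $\rho = 1$ is a fixed point of $F$ and corresponds to $\Sigma_{s+1,t+1} = \tau_*^2$. The strict contraction estimate $|F'(1)| < 1$ is the analogue of the bound $|\partial \F/\partial \tau^2| < 1$ at $\tau = \tau_*$ proved in Theorem~\ref{thm:SE1}, and it should follow by the same Stein-type differentiation identity applied to the asymptotically separable denoiser of Lemma~\ref{lem:yue_sep}. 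Combined with continuity and monotonicity of $F$ in $\rho$, iterating gives $\rho_t \to 1$ by the very fixed-point argument already used in the proof of Theorem~\ref{thm:SE1}, yielding $\Sigma_{t,t-1} \to \tau_*^2$. This is the non-separable analogue of \cite[Lemma 4.3]{lassorisk}; the new element compared to that proof is invoking Lemma~\ref{lem:yue_sep} to replace the coordinatewise LASSO soft-thresholder with the effective scalar denoiser for SLOPE before applying the Stein identity.
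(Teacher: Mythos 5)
Your reduction via Lemma~\ref{Thm42} to proving $\tau_t^2 - 2\Sigma_{t,t-1} + \tau_{t-1}^2 \to 0$, and then showing the cross-covariance converges to $\tau_*^2$ through a contraction at the fixed point driven by the calibration $\blam>\bm 0$, is exactly the paper's route, carried out in Lemma~\ref{lem:5.7}. Two points to tighten: no interchange of $\lim_t$ and $\plim_p$ is required---Lemma~\ref{Thm42} already produces a deterministic expression in $t$ to which $\lim_t$ is applied directly; and your scalar recursion $\rho_{t+1}=F(\rho_t)$ is autonomous only once the variances have settled, so the paper instead tracks the triple $\bm y_t=(\tau_{t-1}^2,\,\tau_t^2,\,\tau_{t-1}^2+\tau_t^2-2\Sigma_{t,t-1})$ as a three-dimensional dynamical system and shows the Jacobian at $(\tau_*^2,\tau_*^2,0)$ has spectral radius $\max\{\F'(\tau_*^2),\,G_*'(0)\}<1$, where your $F'(1)$ coincides with the paper's $G_*'(0)$ under the reparametrization $y_3=2\tau_*^2(1-\rho)$ and the bound $G_*'(0)<1$ comes from the calibration rather than from a Stein identity.
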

\begin{proof}[Proof of Lemma~\ref{lem:4.3}]
This result uses Lemma~\ref{Thm42}, which characterizes the large system limit of $\frac{1}{\n} \norm{\z^t - \z^{t-1}}^2$ and $\frac{1}{\delta \p} \norm{\bet^{t+1} - \bet^{t}}^2$ as both being equal to $\tau_t^2 - 2\Sigma_{t, t-1}+ \tau_{t-1}^2$ where $\Sigma_{t, t-1}$ is the generalized state evolution sequence defined in \eqref{eq:Sigma_def}.  Then Lemma \ref{lem:5.7} (which is stated and proved in Appendix \ref{app_5743}) shows that $\lim_t \, (\tau_t^2 - 2\Sigma_{t, t-1}+ \tau_{t-1}^2) = 0$.
\end{proof}

	\begin{proof}[Proof of Lemma~\ref{lem:3.3}]
	For any vector $\bm \nu^t \in \partial J_{\blam}( \bet^t)$, note that $\bm \nu^t - \mathbf{X}^{\top}(\mathbf{y} - \mathbf{X}\bet^t)$ is a valid subgradient belonging to the set $\partial \mathcal{C}(\bet^t)$ as defined in Fact \ref{Csub}.  Moreover, by AMP \eqref{eq:AMP1}, $\mathbf{y} - \mathbf{X} \bet^{t}  = \mathbf{z}^t - \omega^t\mathbf{z}^{t-1}$ with $ \omega^t :=\frac{1}{\delta \p} [\nabla \eta^{t-1}(\bet^{t-1} + \mathbf{X}^\top \mathbf{z}^{t-1})]$. Therefore we can write,
	\be
	\begin{split}
		&\bm \nu^t - \mathbf{X}^{\top}(\mathbf{y} - \mathbf{X} \bet^t) =\bm \nu^t - \mathbf{X}^{\top}( \mathbf{z}^t - \omega^t\mathbf{z}^{t-1}) =\bm \nu^t - \mathbf{X}^{\top}( \mathbf{z}^t - \mathbf{z}^{t-1}) -( 1- \omega^t)   \mathbf{X}^{\top}\mathbf{z}^{t-1} \\
		& =(\bm \nu^t  -\mu_t \mathbf{X}^{\top} \mathbf{z}^{t-1})- \mathbf{X}^{\top}( \mathbf{z}^t - \mathbf{z}^{t-1})+(\mu_t  -( 1- \omega^t) )  \mathbf{X}^{\top}\mathbf{z}^{t-1},
		\label{sub_grad_array}
	\end{split}
	\ee
	where we define $\mu_t : = {\langle  \blam, \thet_{t-1} \rangle}/{\| \thet_{t-1}\|^2}$ as the ratio of $\blam$ to $\thet_{t-1}$ so that $\blam=\mu_t\thet_{t-1}$ (here $\thet_{t-1}:=\bfalph\tau_{t-1}$ and recall that $\bfalph$ is calibrated to be parallel to $\blam$). It follows that
	$\partial J_{\blam}(\bm x)=\mu_t \, \partial J_{\thet_{t-1}}(\bm x).$

	Now, by the definition of the proximal operator used in \eqref{eq:AMP0} and by Fact \ref{fact:sub_prox}, we have that
$(\mathbf{X}^{\top} \mathbf{z}^{t-1} + \bet^{t-1}) - \bet^t \in \partial J_{\thet^{t-1}}( \bet^t).$
	Hence we choose $\bm \nu^t$ to be the specific subgradient defined by 
	\begin{align}
	\bm \nu^t=\mu_t(\X^\top \z^{t-1}+\bet^{t-1}-\bet^t)\in\partial J_{\blam}(\bet^t),
	\label{v_definition}
	\end{align}
	which leads to
	$\bm \nu^t-\mu_t \X^{\top} \z^{t-1}=\mu_t ( \bet^{t-1} - \bet^t ).$
	Plugging into \eqref{sub_grad_array},
	\be
		\bm \nu^t - \X^{\top}(\y - \X \bet^t) = \mu_t (\bet^{t-1} - \bet^t ) - \X^{\top}( \z^t - \z^{t-1})+(\mu_t  -( 1- \omega^t) )  \X^{\top}\z^{t-1}.
		\label{sub_grad_array2}
	\ee
	Then taking the norm, dividing by $\sqrt{\p}$, and using the triangular inequality, we have
	\ben
		\frac{1}{\sqrt{\p}}\norm{\bm \nu^t - \X^{\top}(\y - \X \bet^t)} \leq \frac{\mu_t}{\sqrt{\p}}\norm{\bet^{t-1} - \bet^t} + \frac{1}{\sqrt{\p}}\norm{\X^{\top}( \z^t - \z^{t-1})}+\frac{(\mu_t  -( 1- \omega^t) )}{\sqrt{\p}}\norm{\X^{\top}\z^{t-1}}.
	\een	
Using Lemma \ref{Thm42}, that $\sigma_{\max}(\X)$ is almost surely bounded as $\p\to\infty$ (cf.\ Theorem \ref{prop:marchenko-pastur}), and that $\lim_{t}\lim_{p}\mu_t=1 - \lim_{\p}\frac{1}{\delta \p} \E || \prox_{J_{\bm A(\p)\tau_*}}(\mathbf{B}+ \tau_* \mathbf{Z}) ||_0^*$ as in \eqref{eq:lambda_func} is finite, the first two terms on the right side of the above $\goto 0$. Finally, for the third term, Lemma \ref{lem:4.1} gives $\lim_t \plim_p\|z^t\|/\sqrt{\p} = \tau_*$, and together with the calibration formula \eqref{eq:lambda_func}, that $\sigma_{\max}(\X)$ is almost surely bounded as $\p\to\infty$, and the definition of $\omega$ in the proof of Lemma \ref{lm:slope_stationary}, we find $\lim_{t}\lim_{\p}(\mu_t  -( 1- \omega^t))=0,$ and thus the third term $\goto 0$.  As $\bm \nu^t - \X^{\top}(\y - \X \bet^t) \in \partial \mathcal{C}(\bet^t)$, the proof is complete.
\end{proof}


\subsection{Condition (3)} \label{sec:assumption3}
We take $\bm\nu^t$ to be the subgradient defined in \eqref{v_definition} and since $t$ is fixed, we drop the superscript $t$ writing $\bm \nu := \bm \nu^t$. 
Recall the sets $s_t(c_2)$ and $S_t(c_2)$ defined in Condition (3).  
Then for $s'$ being \emph{any} set of maximal atoms in $[\p]$ with $|s'| \leq c_3 \p$ and $S' := \{i \in I: I\in s'\}$, we would like to show $\sigma_{min}(\mathbf{X}_{S_t(c_2) \cup S'}) \geq c_4$.
This holds by Proposition \ref{propo:PSD}, stated below, whose proof is the main challenge. We state the proposition and then we identify two auxiliary lemmas, Lemma \ref{lemma:MinS} and \ref{lemma:ConvergenceSupport}, that will be used to ultimately prove Proposition \ref{propo:PSD}.
\begin{proposition}
	\label{propo:PSD}
	There exist constants $c_2\in(0,1)$, $c_3$, $c_4>0$ and $t_{\rm min}<\infty$ such that,
	for any $t\ge t_{\rm min}$, and set $S_t$ defined in Condition (3)
	\begin{eqnarray*}
	\min_{s'}\big\{\sigma_{\rm min}(\X_{S_{t}(c_2)\cup S'})\, :
	S'\subseteq [p]\, ,
	\;|s'|\le c_3 p\, ,
	\; S' = \{i \in I: I \in s'\}\big\}
	\ge c_4\,
	\end{eqnarray*}
	eventually almost surely as $p\to\infty$.
\end{proposition}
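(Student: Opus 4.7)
The plan is to follow the two-stage template of \cite[Lemma 3.4]{lassorisk}, replacing the separable LASSO analysis at each step by the atom/equivalence-class analysis developed in Section~\ref{sec:prel-slope-amp}. Stage one will show that $\limsup_{p\to\infty} |S_t(c_2)|/p \le \gamma(c_2) < \delta$ almost surely for $c_2$ small and $t$ large; stage two will combine this with a uniform Gaussian singular-value estimate to conclude.

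For stage one, I would use the explicit representation $\bm\nu^t = \mu_t(\X^\top \z^{t-1} + \bet^{t-1} - \bet^t)$ from \eqref{v_definition} together with the joint pseudo-Lipschitz AMP characterization of Lemma~\ref{eq:lem_covariances} to describe the large-$p$ behavior of $(\bet^t,\bm\nu^t)$ via the SLOPE prox of $\bet + \tau_{t-1}\bm Z$ and its associated subgradient. The asymptotic separability of the prox (Lemma~\ref{lem:yue_sep}) reduces the atom-level condition $|\bm\nu^t_I|\succeq [\mathcal P(\hat\Pi^{-1}_{\bet^t}(\blam))]_I(1-c_2)$ that defines $S_t(c_2)$ to a coordinate-wise statement, and then Lemma~\ref{lem:beta_expectation} and the Law of Large Numbers give $\lim_p |S_t(c_2)|/p = \gamma_t(c_2)$ almost surely, with $\gamma_t(c_2)\to\gamma(c_2)$ as $t\to\infty$ and $\gamma(c_2)$ continuous in $c_2$. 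The admissibility condition $f(\bfalph)<\delta$ of Theorem~\ref{thm:SE1}, together with the calibration \eqref{eq:lambda_func} and the unique-magnitude identity \eqref{eqn:non flat}, will identify $\gamma(0)$ with the quantity $f(\bfalph)$; hence $\gamma(0)<\delta$, and one may choose $c_2,c_3$ so that $\gamma(c_2)+c_3 \le \delta-\eta$ for some $\eta>0$.

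For stage two, once $|S_t(c_2)\cup S'| \le (\delta-\eta)p = (1-\eta/\delta)n$ holds uniformly over admissible $S'$, the claim reduces to a standard random-matrix estimate: by \textbf{(A1)} and Marchenko-Pastur concentration (Theorem H.\ref{prop:marchenko-pastur}), for any fixed $K\subseteq[p]$ with $|K|\le (1-\eta/\delta)n$ one has $\sigma_{\min}(\X_K)\ge c_4' > 0$ with probability $1-e^{-\Omega(n)}$, where $c_4'$ depends only on $\eta/\delta$. A union bound over the $\binom{p}{(\delta-\eta)p}$ choices of $K$, whose log-cardinality grows only linearly in $p$, is absorbed by the exponential tail and yields the claimed $c_4$ eventually almost surely.

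The hard part will be stage one, specifically identifying $\gamma(0)$ with a quantity strictly less than $\delta$. Unlike LASSO, where the active set has the clean form $\{i:|B+\tau_*Z|\ge \alpha\tau_*\}$ and $\alpha>\alpha_{\min}(\delta)$ gives the bound directly, the SLOPE case mixes non-zero atoms, on which $|\bm\nu_I^t|$ is pinned to the atom-level value in $\mathcal P(\hat\Pi^{-1}_{\bet^t}(\blam))$, with zero atoms, on which $|\bm\nu_I^t|$ lies in the Birkhoff sub-polytope $\mathcal P_0$. Translating this atom-level condition into a scalar probability and then matching it with $f(\bfalph)$ via the stationarity identity of Lemma~\ref{lm:slope_stationary} is where the main technical effort will sit; once that identification is in place the rest follows by continuity in $c_2$ and the Gaussian union-bound argument above.
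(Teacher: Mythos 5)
Your stage one plan (bounding $\limsup_p |S_t(c_2)|/p$ strictly below $\delta$ via the AMP state-evolution characterization, asymptotic separability, and the calibration) is essentially the paper's first step, modulo a small misidentification: the paper gets $\gamma(0)<\delta$ not from $f(\bfalph)<\delta$ but from the calibration identity \eqref{eq:blam_alph_mapping}, which together with $\blam>\bm 0$ forces $\lim_p\tfrac{1}{n}\E\|\prox_{J_{\bfalph\tau_*}}(\mathbf B+\tau_*\mathbf Z)\|_0^*<1$. The quantity $f(\bfalph)$ in \eqref{eq:littlef_def} is strictly \emph{smaller} than that star-count, so $f(\bfalph)<\delta$ does not immediately give what you need; the argument must go through the sign of $\blam$.

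The genuine gap is in stage two. You propose a naive union bound over all $\binom{p}{(\delta-\eta)p}$ choices of a column subset $K$ of size $\approx(\delta-\eta)p$, arguing that the exponential tail of $\sigma_{\min}(\X_K)$ absorbs the entropy. This fails: the entropy is $\sim p\,h(\delta-\eta)$, which is bounded away from zero as $\eta\to 0$, while the large-deviation rate for $\sigma_{\min}$ of an $n\times k$ Gaussian block with $k/n=1-\eta/\delta$ degrades like $\eta^2$. Since $\eta$ is forced to be small whenever the state-evolution bound $\gamma(c_2)$ is close to $\delta$ (which is exactly the hard regime), the entropy dominates the tail and the union bound diverges. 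This is the same reason one cannot prove a restricted-isometry-type statement for all subsets of size a fixed fraction of $p$.

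What the paper (following \cite[Lemma 3.4]{lassorisk}) does instead is condition on the AMP filtration $\mathfrak{S}_t$: under this conditioning the random but $\mathfrak{S}_t$-measurable set $S_t(c_2)$ is pinned, $\X\big|_{\mathfrak{S}_t}$ has the explicit decomposition $\bm E_t+\bm P_{\bm M}^\perp\tilde\X\bm P_{\bm Q}^\perp$ of \eqref{eq:cond_A}, and the union bound is taken only over the small add-on sets $S'$ with $|s'|\le a_1 p$, whose entropy $h(a_1)$ can be driven below the concentration rate by choosing $a_1$ small. This produces Lemma~\ref{lemma:MinS}, whose lower bound $a_2(c,t)$ depends on $t$; the final ingredient, which your proposal omits entirely, is the support-stability statement Lemma~\ref{lemma:ConvergenceSupport}, used to fix $t_{\min}$ and absorb $S_t(c_2)\setminus S_{t_{\min}}(c_2)$ into the $S'$ slot so that the final $c_4$ is uniform in $t$. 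Without both the conditional decomposition and the support-stability step, the argument does not close.
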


The proof of Proposition \ref{propo:PSD} will use two auxiliary lemmas,  Lemma \ref{lemma:MinS} and \ref{lemma:ConvergenceSupport}, stated below.

\begin{lemma}
	\label{lemma:MinS}
	Let the set $s_t$ be measurable on the $\sigma$-algebra $\mathfrak{S}_t$ generated by $\{\z^0,\dots, \z^{t-1}\}$ and	$\{\bet^0+\X^*\z^0,\dots,\bet^{t-1}+\X^*\z^{t-1}\}$ and assume $|s_t|\le p(\delta-c)$ for some $c>0$. Define $S_t\subseteq [\p]$ as $\{i\in I \text{ for some } I\in s_t\}$. Then there exists $a_1=a_1(c)>0$ (independent of $t$) and $a_2=a_2(c,t)>0$ (depending on $t$ and $c$) such that
	\begin{eqnarray*}
	\min_{s'}\big\{\sigma_{\rm min}(\X_{S_t\cup S'})\, :S'\subseteq [\p]\, ,
	\;|s'|\le a_1 p\, ,
	\; S' = \{i \in I: I \in s'\}\big\}
	\ge a_2\,,
	\end{eqnarray*}
	eventually almost surely as $\p\to\infty$.
\end{lemma}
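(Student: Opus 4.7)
The plan is to adapt Bolthausen's conditioning technique---familiar from the Bayati--Montanari analysis of AMP---combined with a Weyl-type perturbation bound and a union bound over column subsets. Since $s_t$ is $\mathfrak{S}_t$-measurable, I first decouple its dependence on $\X$ by conditioning on $\mathfrak{S}_t$ and then analyze the residual Gaussian piece of $\X$.

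Because $\X$ has i.i.d.\ $\mathcal{N}(0,1/n)$ entries and $\mathfrak{S}_t$ is generated by finitely many linear functionals of $\X$, a standard Gaussian conditioning gives
\[
\X \mid \mathfrak{S}_t \;\stackrel{d}{=}\; \X^{\mathrm{cond}} + \widetilde{\X},
\]
where $\X^{\mathrm{cond}} := \E[\X \mid \mathfrak{S}_t]$ has rank at most $2t$ and $\widetilde{\X}$ is an independent Gaussian piece supported on the orthogonal complement of the conditioning subspace. Lemma \ref{lem:bounded_vals} and Lemma \ref{lem:4.1} show that the AMP quantities generating $\X^{\mathrm{cond}}$ are almost surely bounded for each fixed $t$, so $\sigma_{\max}(\X^{\mathrm{cond}}) \le B(t,c) < \infty$ eventually almost surely. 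Weyl's inequality then gives $\sigma_{\min}(\X_T) \ge \sigma_{\min}(\widetilde{\X}_T) - B(t,c)$ for any column-subset $T$, reducing the task to producing a uniform lower bound on $\sigma_{\min}(\widetilde{\X}_T)$ over admissible $T = S_t \cup S'$.

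For a deterministic $T$ with $|T| \le n(1-\kappa)$, Marchenko--Pastur-type concentration yields $\sigma_{\min}(\widetilde{\X}_T) \ge c(\kappa) > 0$ with probability exponentially close to one, and a union bound over all such $T$ succeeds provided $\kappa > 0$ is chosen large enough to dominate the combinatorial entropy $\log\binom{p}{\lfloor n(1-\kappa)\rfloor}$. Picking $\kappa$ so that $c(\kappa) \ge 2B(t,c)$ then produces the desired $a_2 = a_2(c,t)$, with the $t$-dependence entering only through the correction term $B(t,c)$.

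The main obstacle---and where the SLOPE argument departs from its LASSO analogue---is to prove that $T = S_t \cup S'$ actually satisfies $|T| \le n(1-\kappa)$ eventually almost surely. The hypothesis bounds $|s_t|$ (the number of equivalence classes) but not $|S_t| = \sum_{I \in s_t}|I|$, and a single maximal atom could in principle contain many indices. I would address this by proving $|S_t| \le p(\delta - c/2)$ eventually almost surely, using the state-evolution characterization (Lemma \ref{thm:main_result1}) together with the asymptotic separability of the prox operator (Lemma \ref{lem:yue_sep}): the asymptotic fraction of indices lying in any selected equivalence class of $\bet^t$ can be read off from the limiting scalar denoiser associated with the pair $(B + \tau_{t-1}Z,\, A\tau_{t-1})$, and the strict positivity of $\blam$ assumed in \textbf{(A4)} prevents pathological concentration of Lebesgue mass on tied values (compare \eqref{eqn:non flat}). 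Once $|S_t| \le p(\delta - c/2)$ is secured, setting $a_1 = c/4$ forces $|S_t \cup S'| \le n - (c/4)p$, and the union-bound step above closes the argument with $a_1 = a_1(c)$ independent of $t$.
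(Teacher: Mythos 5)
Your proposal shares the paper's starting point---Bolthausen conditioning, writing $\X\mid\mathfrak{S}_t$ as a conditional mean plus an independent Gaussian piece, followed by a union bound over $s'$ with entropy $e^{\p h(a_1)}$---and you correctly flag the genuine SLOPE-specific obstacle: the hypothesis controls $|s_t|$ (number of equivalence classes), not $|S_t|=\sum_{I\in s_t}|I|$. But the remedy you propose goes in the wrong direction and would not close the argument.

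The paper does \emph{not} bound $|S_t|$. Instead, it reinterprets the quantity being controlled: in the proof one reads
$\sigma_{\rm min}(\X_{S_t\cup S'}) = \min\{\|\X\bm v\| : \|\bm v\|=1,\ \supp^*(\bm v)\subseteq s\cup s'\}$, i.e.\ the minimum is taken only over unit vectors that are constant (in magnitude) on each maximal atom in $s\cup s'$. The projector $\bm P_s$ onto this set of vectors has rank $|s|$, \emph{not} $|S|$, so every dimension count in Lemmas \ref{lemma:Pitagora} and \ref{lemma:Event2} uses $|s|+|s'|\le p(\delta-c)+a_1p<n$ without ever bounding the index count. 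Proving $|S_t|\le p(\delta-c/2)$, as you propose via state evolution and asymptotic separability, is not expected to be true: a single equivalence class (in particular the zero class of $\bet^t$) can hold an order-$p$ fraction of the indices, which is precisely why the LASSO support argument was said to fail here. If $|S_t|>n$ the literal smallest singular value of the column submatrix is $0$, so any plan that requires $|S_t\cup S'|<n$ and then invokes Marchenko--Pastur for the \emph{unconstrained} submatrix cannot recover the statement.

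There is a second, independent gap. Your Weyl step $\sigma_{\rm min}(\X_T)\ge\sigma_{\rm min}(\widetilde{\X}_T)-B(t,c)$ requires $\sigma_{\rm min}(\widetilde{\X}_T)>B(t,c)$, but $\sigma_{\rm min}(\widetilde{\X}_T)$ has a hard ceiling (it is at most $\approx 1/\sqrt{\delta}-\sqrt{|T|/n}$), while $B(t,c)=\sigma_{\max}(\X^{\mathrm{cond}})$ is an $O(1)$ quantity that need not be small even though $\X^{\mathrm{cond}}$ has rank at most $2t$. Choosing $\kappa$ so that $c(\kappa)\ge 2B(t,c)$ is therefore not possible in general. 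The paper avoids this by \emph{not} treating the two pieces additively: Lemma \ref{lemma:Pitagora} proves the deterministic part $\bm E\bm v-\bm P_{\bm M}\tilde{\X}\bm P_{\bm Q}^\perp\bm v$ and the random part $\tilde{\X}\bm P_{\bm Q}^\perp\bm v$ are nearly orthogonal (via Proposition \ref{propo:Concentration}), yielding the Pythagorean-type lower bound $\|\bm E\bm v+\bm P_{\bm M}^\perp\tilde{\X}\bm P_{\bm Q}^\perp\bm v\|^2\ge \frac{\gamma}{4\delta}(\|\bm E\bm v-\bm P_{\bm M}\tilde{\X}\bm P_{\bm Q}^\perp\bm v\|^2+\|\tilde{\X}\bm P_{\bm Q}^\perp\bm v\|^2)$. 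To repair your argument you would need both (i) to replace the goal ``bound $|S_t|$'' with ``restrict to the image of $\bm P_s$'' so that the relevant dimension is $|s_t|+|s'|$, and (ii) to replace Weyl's inequality with the near-orthogonality argument, splitting into the cases $\|\bm P_{\bm Q}^\perp\bm v\|$ small vs.\ bounded below, as in Lemma \ref{lemma:ConcreteMinS}.
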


\begin{proof}
The proof of Lemma \ref{lemma:MinS} is given in Appendix \ref{app_34}. The key difference in SLOPE case (Lemma \ref{lemma:MinS}) and LASSO case (cf.\ \cite[Lemma 3.4]{lassorisk}) is the concept of equivalence classes of indices. On a high level, the set $s$ describes some structure in the support space $S$ and such structure restricts the dimension of some linear spaces in the proof of Lemma \ref{lemma:MinS}.
\end{proof}

\begin{lemma}{\cite[Lemma 3.5]{lassorisk}}
	\label{lemma:ConvergenceSupport}
	Fix $\gamma\in(0,1)$ and let the sequence $\{S_t(\gamma)\}_{t\ge 0}$
	be defined as before.
	For any $\xi>0$ there exists $t_*=t_*(\xi,\gamma)<\infty$ such that,
	for all $t_2\ge t_1\ge t_*$ fixed, we have
	\begin{eqnarray}
	\frac{1}{\p} |S_{t_2}(\gamma)\setminus S_{t_1}(\gamma)|  <\xi\,,
	\end{eqnarray}
	eventually almost surely as $p\to\infty$.
\end{lemma}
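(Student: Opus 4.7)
The plan is to exploit the fact that, in the large-$p$ limit, membership in $S_t(\gamma)$ is governed by a scalar condition on the $i$-th coordinates of the signal and the state-evolution Gaussian, and that this condition stabilizes once the AMP iteration has essentially reached its fixed point. The Cauchy bound will then follow from comparing $S_{t_1}(\gamma)$ and $S_{t_2}(\gamma)$ to a common limiting set $S_*(\gamma)$.

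First I would rewrite the defining condition of $S_t(\gamma)$ using the explicit subgradient from \eqref{v_definition}: $\nu_i^t = \mu_t\bigl([\X^\top\z^{t-1}+\bet^{t-1}]_i - \beta_i^t\bigr)$ with $\bet^t=\prox_{J_{\thet_{t-1}}}(\X^\top\z^{t-1}+\bet^{t-1})$. Lemma \ref{thm:main_result1} tells us that the joint empirical law of $(\bet,\X^\top\z^{t-1}+\bet^{t-1})$ converges to that of $(\B,\B+\tau_{t-1}\bm Z)$, and the asymptotic separability of the prox (Lemma \ref{lem:yue_sep}) shows that the coordinates of $\bet^t$, of the chosen $\hat\blam \in \mathcal{P}(\hat\Pi^{-1}_{\bet^{t}}(\blam))$, and hence of $\bm \nu^t$, all become functions applied coordinatewise to $B_i + \tau_{t-1}Z_i$. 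Consequently, up to vanishing error, $\mathbb{1}\{i\in S_t(\gamma)\}$ equals $\mathbb{1}\{G_{t-1}(B_i,Z_i)\ge(1-\gamma)\Lambda_{t-1}(B_i,Z_i)\}$ for explicit scalar functions $G_{t-1}$ and $\Lambda_{t-1}$ determined by the limiting denoiser $h_{t-1}$ and the threshold distribution $A\tau_{t-1}$.

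Next I would combine $\tau_t\to\tau_*$ from Theorem \ref{thm:SE1}, the stabilization $\|\bet^t-\bet^{t-1}\|/\sqrt{p}\to 0$ and $\|\z^t-\z^{t-1}\|/\sqrt{n}\to 0$ from Lemma \ref{lem:4.3}, and the continuity of $h_t$ in $\tau_t$ (guaranteed by the interchange-of-limits argument following Theorem \ref{thm:SE1}) to conclude that $(G_{t-1},\Lambda_{t-1})\to(G_*,\Lambda_*)$ as $t\to\infty$. The indicator $\mathbb{1}\{G_{t-1}\ge(1-\gamma)\Lambda_{t-1}\}$ then converges pointwise to $\mathbb{1}\{G_*\ge(1-\gamma)\Lambda_*\}$ off the equality boundary $\{G_*=(1-\gamma)\Lambda_*\}$. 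Absolute continuity of $Z$ ensures this boundary has Lebesgue measure zero under the law of $B+\tau_*Z$ for all but countably many $\gamma\in(0,1)$; for an exceptional $\gamma$, a monotone perturbation $\gamma\mapsto\gamma\pm\epsilon$ combined with the monotonicity of $\gamma\mapsto|S_t(\gamma)|$ delivers the conclusion.

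Finally, invoking dominated convergence together with Lemma \ref{lem:beta_expectation} to pass from asymptotically separable empirical averages to expectations yields $\plim_{p}|S_t(\gamma)\triangle S_*(\gamma)|/p\to 0$ as $t\to\infty$, where $S_*(\gamma):=\{i:G_*(B_i,Z_i)\ge(1-\gamma)\Lambda_*(B_i,Z_i)\}$. The triangle inequality $|S_{t_2}\setminus S_{t_1}|\le|S_{t_2}\triangle S_*|+|S_{t_1}\triangle S_*|$ then produces the claimed bound with $t_*=t_*(\xi,\gamma)$ chosen large enough so that each term is below $\xi/2$ eventually almost surely. The main obstacle is the discontinuity of the indicator at the level $(1-\gamma)\Lambda_*$: beyond the level-set argument sketched above, one must verify that the scalar functions $G_t,\Lambda_t$ are uniformly Lipschitz in $t$ on a compact range of $\tau$, which is exactly where Lemma \ref{lem:4.3} and the uniform-Lipschitz control of $\prox_{J_{\bfalph\tau}}$ established in property \textbf{(P1)} come into play.
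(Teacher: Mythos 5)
Your sketch has the right raw ingredients — the Cauchy property from Lemma~\ref{lem:4.3}, the state evolution convergence $\tau_t\to\tau_*$, the asymptotic separability from Lemma~\ref{lem:yue_sep}, and the observation that the level set $\{|\nu_i|=(1-\gamma)\hat\lambda_i\}$ must carry no mass — but the route through a fixed ``limiting set'' $S_*(\gamma)$ is not the one the paper (via \cite[Lemma~3.5]{lassorisk}) uses, and as written it has a genuine gap. The object $S_*(\gamma):=\{i:G_*(B_i,Z_i)\ge(1-\gamma)\Lambda_*(B_i,Z_i)\}$ lives on the abstract $(B,Z)$-probability space, whereas $S_{t_1}(\gamma)$ and $S_{t_2}(\gamma)$ are subsets of $[p]$ determined by the single AMP run on the given $(\X,\w,\bet)$; the Gaussians describing the asymptotic law of $\nu^{t_1}$ and $\nu^{t_2}$ are \emph{different} (jointly Gaussian with correlation $\Sigma_{t_1-1,t_2-1}/(\tau_{t_1-1}\tau_{t_2-1})$, which tends to $1$ only as $t_1,t_2\to\infty$), so there is no single $(B_i,Z_i)$ to which every $S_t(\gamma)$ can be compared. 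Making your triangle-inequality step rigorous would require exactly the joint control afforded by the generalized state evolution Lemma~\ref{eq:lem_covariances}, at which point the detour through $S_*(\gamma)$ gives nothing.

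The argument the paper appeals to avoids this entirely by comparing the two iterations directly with a two-threshold decomposition: for $\gamma''>\gamma$,
\[
S_{t_2}(\gamma)\setminus S_{t_1}(\gamma)\;\subseteq\;\bigl(S_{t_2}(\gamma)\setminus S_{t_1}(\gamma'')\bigr)\;\cup\;\bigl(S_{t_1}(\gamma'')\setminus S_{t_1}(\gamma)\bigr).
\]
The second set is the annulus $\{(1-\gamma'')\hat\lambda_i\le|\nu_i^{t_1}|<(1-\gamma)\hat\lambda_i\}$, which has small empirical proportion once $\gamma''-\gamma$ is small, by the single-iteration state evolution and the absence of an atom at the threshold (the point you raise at the end of your sketch). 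For the first set, each $i$ there satisfies $|\nu_i^{t_2}-\nu_i^{t_1}|>(\gamma''-\gamma)\lambda_{\min}$, so a simple Markov-type count bounds its cardinality by $\|\nu^{t_2}-\nu^{t_1}\|^2/\bigl((\gamma''-\gamma)\lambda_{\min}\bigr)^2$; using $\bm\nu^t=\mu_t(\X^\top\z^{t-1}+\bet^{t-1}-\bet^t)$ from \eqref{v_definition} and Lemma~\ref{lem:4.3}, $\|\nu^{t_2}-\nu^{t_1}\|/\sqrt{p}\to 0$ as $t_1,t_2\to\infty$, which closes the argument. I recommend restructuring your write-up around this decomposition; the boundary-avoidance and Cauchy-property ideas you already have are then exactly the two facts being used.
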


\begin{proof}
For LASSO, this result was given in \cite[Lemma 3.5]{lassorisk}, and for SLOPE, the proof stays largely the same so we don't repeat it here.  The major difference is that where the work in \cite{lassorisk} can appeal to AMP analysis in \cite{amp2}, for SLOPE, we appeal to similar results given in \cite{nonseparable} (e.g.\ Lemma~\ref{eq:lem_covariances}).
\end{proof}

\begin{proof}[Proof of Proposition \ref{propo:PSD}]
The subgradient in Condition (2) is given by $sg(\mathcal{C}, \bet^{t}) := \bm\nu^t - \X^\top(\y - \X \bet^{t}) $ where $\bm\nu^t  \in \partial J_{\blam}(\bet^{t})$ is the subgradient defined in the Condition (2) proof at Eq.\ \eqref{v_definition}.
Recall,
$S_t(c_2) = \{i \in I: |\bm\nu^{t}_I| \succeq  \mathcal{P}([\hat{\Pi}^{-1}_{\bet^{t} }(\blam)]_I) (1 - c_2)\}$. We include a  simple visualization for the set $S_t(c_2)$ in Figure~\ref{eq:fig_viz}.  We have plotted the subgradient $\bm\nu^{t}_I=(-1, 2)$ for (zero) equivalence class $I=\{1, 2\}$ when $\blam=(4,1)$ and $\bet^t=(0,0)$.  Then indices of $|\bm\nu^{t}_I|$, namely $(1,2)$ are in $S_{t}(c_2)$ unless $c_2< 0.4$.
\begin{figure}[!h]
	\centering
	\includegraphics[width=6cm]{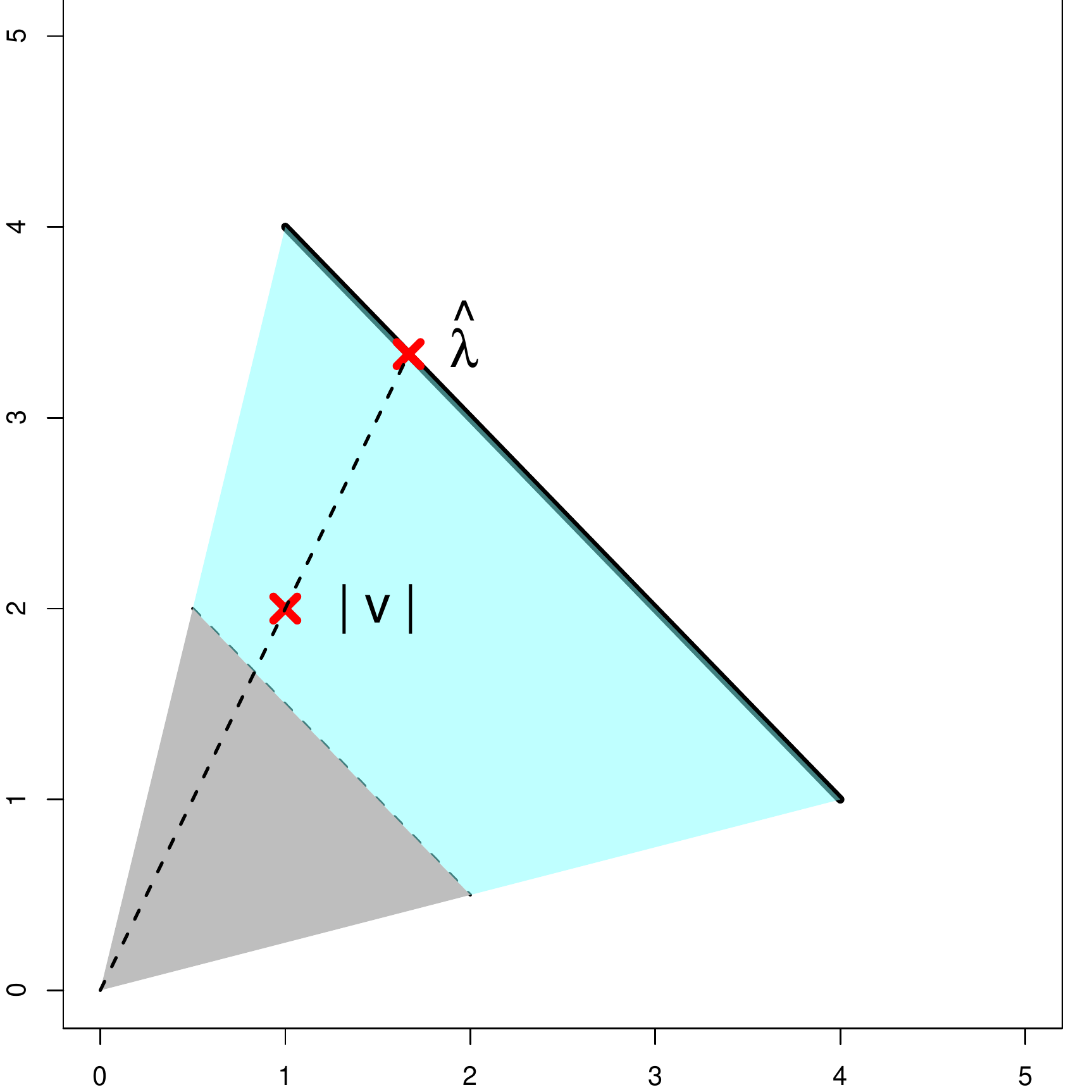}
	\includegraphics[width=6cm]{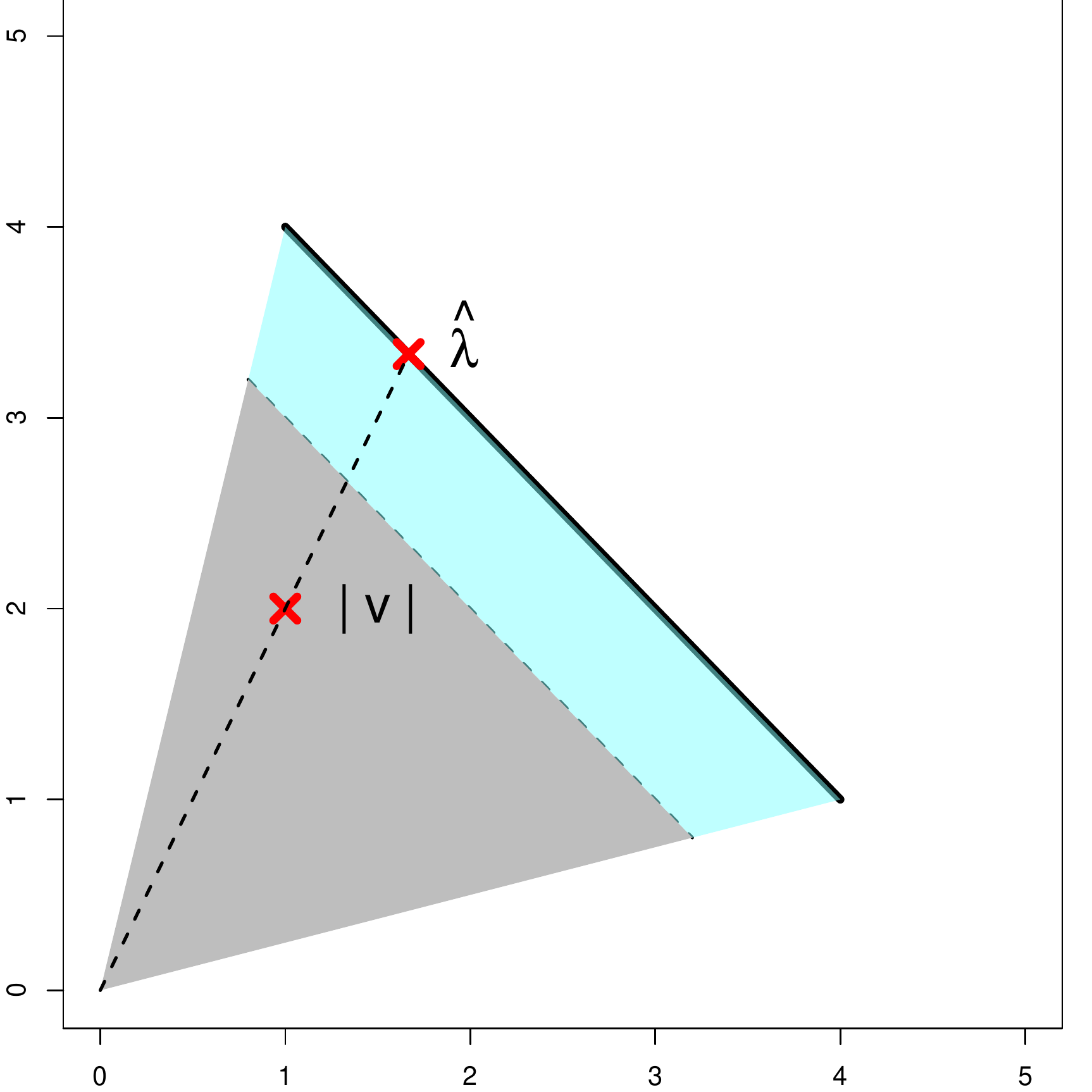}
	\caption{Left: $c_2=0.5$; Right: $c_2=0.2$; Blue area is $\{\bm\nu\in\partial J_{\blam}(0,0):|\bm\nu|\succeq (1 - c_2)\mathcal{P}(\lambda_1,\lambda_2)\}$ and grey area is complement of blue area in $\partial J_{\blam}(0,0)$.}
\label{eq:fig_viz}
\end{figure}

We know from the proof of Lemma \ref{lem:3.3} Eq.\ \eqref{v_definition} that $\bm \nu^{t}=\mu_{t}  (\X^{\top} \z^{t-1} + \bet^{t-1} - \bet^{t} ) \in \mu_{t}   J_{\thet^{t} }( \bet^{t} )$ where $\mu_t : = {\langle  \blam, \thet_{t-1} \rangle}/{\| \thet_{t-1}\|^2}$ and $\blam=\mu_t\thet^{t-1}$.  
Therefore, summing over all equivalence classes $I$,
\be
\begin{split}
\label{eq:set_equiv}
|s_t(c_2)| &= \sum_{I}\mathbb{I}\{|\boldsymbol \nu^{t}_I| \succeq   \mathcal{P}([\hat{\Pi}^{-1}_{ \bet^{t} }( \blam)]_I) (1 - c_2)\} \\
&= \sum_{I}\mathbb{I}\Big\{|\bet^{t} - [\mathbf{X}^{\top} \mathbf{z}^{t-1}] - \bet^{t-1}|_I \succeq   \mathcal{P}([\hat{\Pi}^{-1}_{\bet^{t}}(\thet^{t-1})]_I) (1 - c_2) \Big\}.
\end{split}
\ee
As detailed in the proof of Lemma~\ref{lem:opt_lem}, for non-zero equivalence classes, let $\hat{\blam}_I=|\bm\nu_I|$, and for the zero equivalence class, let $\hat{\blam}_{I}\geq|\bm\nu_{I}|$, meaning $\hat{\blam}_I$ is parallel to $|\bm\nu_I|$ for each equivalence class $I$ of $\bet^t$.
That such a $\hat{\blam}$ exists in the set $\mathcal{P}(\hat\Pi_{\bet^t}^{-1}(\blam))$ follows since $\bm \nu$ is a valid subgradient of $J_{\blam}(\bet^{t})$ (see Fact ~\ref{fact:subgradient}). We can then simplify the set definitions of $s_t(c_2)$ and $S_t(c_2)$ to be $s_t(c_2) := \{I \subset [\p]: |\bm \nu_I| \geq  (1 - c_2)\hat{\blam}_I\}$ and $S_t(c_2) := \{i : |\nu_i| \geq  (1 - c_2)\hat{\lambda}_i\}$, where both use equivalence classes, $I$, defined for $\bet^t$.  Then since $\blam=\mu_t\thet^{t-1}$, we also let $\hat{\thet}^{t-1}$ be defined such that $\hat{\blam}=\mu_t \hat{\thet}^{t-1}$.

Therefore, by \eqref{eq:set_equiv},
$
|s_t(c_2)| = \sum_{I}\mathbb{I}\{|\bet^{t} - [\mathbf{X}^{\top} \mathbf{z}^{t-1}] - \bet^{t-1}|_I \geq \hat{\thet}^{t-1}_I  (1 - c_2) \}.
$
In the notation of \eqref{eq:hqbm_def_AMP},
$\bet^{t} - [\X^{\top} \z^{t-1}] - \bet^{t-1} =  \bm h^{t} + \eta^{t-1}(\bet - \bm h^{t}) - \bet$ and
$\bet^{t} =  \eta^{t-1}(\bet - \bm h^{t})$
and therefore by \eqref{eq:set_equiv},
\begin{align*}
%
 |s_t(c_2)| &= \sum_{I}\mathbb{I}\Big\{| \bm h^{t} + \eta^{t-1}(\bet - \bm h^{t}) - \bet|_I\geq \hat{\thet}^{t-1}_I (1 - c_2) \Big\}.
\end{align*}
Now, we note that Lemma \ref{eq:lem_covariances} implies weak convergence of the empirical distribution of $\bm h^{t}$ to $\tau_{t-1} \mathbf{Z}_{t-1}$  for $ \mathbf{Z}_{t-1}$ a vector of i.i.d.\ standard Gaussian and $\tau_{t-1}$ given by the state evolution~\eqref{eq:SE2}.  Therefore a careful argument using continuous approximations to indicators gives,
\be
\begin{split}
\label{eq:port}
& \plim_\p \frac{1}{\p}  \sum_{I} \mathbb{I}\Big\{| \bm h^{t} + \eta^{t-1}(\bet - \bm h^{t}) - \bet|_I \geq  \hat{\thet}^{t-1}_I (1 - c_2) \Big\} \\
&= \lim_\p \E_{\mathbf{Z}_{t-1}}\Big\{\frac{1}{\p} \sum_{I}   \mathbb{I}\Big\{| \tau_{t-1} \mathbf{Z}_{t-1} + \eta^{t-1}(\bet -  \tau_{t-1} \mathbf{Z}_{t-1}) - \bet|_I \geq  \hat{\thet}^{t-1}_I (1 - c_2) \Big\}\Big\}, 
\end{split}
\ee
where in the right side of the above, the equivalence classes $I$ are taken with respect to $\eta^{t-1}(\bet -  \tau_{t-1} \mathbf{Z}_{t-1})$ and $\hat{\thet}^{t-1}_I $ as equal to or larger than $| \tau_{t-1} \mathbf{Z}_{t-1} + \eta^{t-1}(\bet -  \tau_{t-1} \mathbf{Z}_{t-1}) - \bet|_I$ depending on whether $I$ is the zero equivalence class or not. We justify the substitution of $\tau_{t-1}\mathbf{Z}_{t-1}$ for $\bm h^t$ by approximating the sum of indicators with a function that counts the number of elements in $\eta^{t-1}(\bet-\bm h^t)$ that are strictly greater than its neighbour. Then this function converges to a continuous and bounded function, the function that measures the proportion of $\eta^{t-1}$ that is non-flat, to which we apply the Portmanteau Theorem (cf. \cite{SLOPEasymptotic}, Lemma 1(b) in \cite{amp2} and Lemma F.3(b) in \cite{lassorisk}).

Now, using \eqref{eq:port}, we can simplify:
\be
\begin{split}
& \plim_\p \frac{1}{\p} |s_{t}(c_2)| = \lim_\p \frac{1}{\p} \sum_{I}  \PP_{\mathbf{Z}_{t-1}}\Big(| \tau_{t-1} \mathbf{Z}_{t-1} - \eta^{t-1}(\bet -  \tau_{t-1} \mathbf{Z}_{t-1}) - \bet|_I \geq \hat{\thet}_I^{t-1}(1 - c_2) \Big),
\label{eq:prob_v1}
\end{split}
\ee
and we study the probability on the right side of the above, for a fixed equivalence class $I$, writing $\eta^{t-1}(\bet -  \tau_{t-1} \mathbf{Z}_{t-1})$ to be $\eta^{t-1}$, dropping the input.
\be
\begin{split}
	& \PP\Big(| \tau_{t-1} \mathbf{Z}_{t-1} + \eta^{t-1} - \bet|_I \geq \hat{\thet}^{t-1}_I (1 - c_2)\Big) \\
	&= \PP\Big(|\tau_{t-1} \mathbf{Z}_{t-1} + \eta^{t-1} -\bet|_I \geq \hat{\thet}^{t-1}_I (1 - c_2), \eta^{t-1}_1= \bm 0 \Big) \\
	&\qquad +\PP\Big(| \tau_{t-1} \mathbf{Z}_{t-1} + \eta^{t-1}- \bet|_I \geq \hat{\thet}^{t-1}_I (1 - c_2), \eta^{t-1}_I \neq \bm 0 \Big)
	\\
	&\overset{(a)}{=} \PP\Big(\hat{\thet}^{t-1}_I \geq|\bet -  \tau_{t-1} \mathbf{Z}_{t-1}|_I  \geq \hat{\thet}^{t-1}_I (1 - c_2) \Big)+\PP\Big( \hat{\thet}^{t-1}_I   \geq \hat{\thet}^{t-1}_I (1 - c_2)\Big) \PP(\eta^{t-1}_I \neq \bm 0).
	\\
	&=\PP\Big(\hat{\thet}^{t-1}_I \geq |\bet -  \tau_{t-1} \mathbf{Z}_{t-1} |_I  \geq \hat{\thet}^{t-1}_I (1 - c_2) \Big) +\PP(\eta^{t-1}_I \neq 0).\label{eq:STsetsize1}
\end{split}
\ee
In the above, step $(a)$ follows when $ \eta^{t-1}_I= [\prox_{J_{\thet^{t-1}}}(\bet -  \tau_{t-1} \mathbf{Z}_{t-1})]_I = \bm 0$, since we must have $| \bet -  \tau_{t-1} \mathbf{Z}_{t-1} |_I\leq \hat{\thet}^{t-1}_I$, and when $ \eta^{t-1}_I \neq \bm 0$, by Fact~\ref{fact:sub_prox} and Fact \ref{fact:subgradient}, we know that $|\eta^{t-1}(\bet - \tau_{t-1} \mathbf{Z}_{t-1}) - (\bet - \tau_{t-1} \mathbf{Z}_{t-1}) |_I \in \mathcal{P}([\hat{\Pi}^{-1}_{ \eta^{t-1}}(\thet^{t-1})]_I)$.

It obvious that one can make the first probability arbitrarily small by bringing $c_2$ to $0$. To see this, say $1\in I$ and notice that $\mathcal{P}([\hat{\Pi}^{-1}_{\eta^{t-1}}(\thet^{t-1})]_I)$ always has Lebesgue measure 0 because it is a subset of the hyperplane $\{\bm x\in\R^\p: \sum_{j\in I}x_j=\sum_{j\in I}\theta^{t-1}_j\}$.

On the other hand, notice that 
\[\sum_{I}\PP([\eta^{t-1}(\bet -  \tau_{t-1} \mathbf{Z}_{t-1})]_I\neq \bm 0)=\sum_{I}\E\{\mathbb{I}([\eta^{t-1}(\bet -  \tau_{t-1} \mathbf{Z}_{t-1})]_I\neq \bm 0)\}=\E_{ \mathbf{Z}_{t-1}}\|\eta^{t-1}(\bet -  \tau_{t-1} \mathbf{Z}_{t-1})\|_0^*,\]
and that $\eta^{t-1}$ is asymptotically separable by Lemma \ref{lem:yue_sep}.  Define $h^{t-1}(x)=h(x; B + \tau_{t-1} Z, \Theta^{t-1})$ with $\Theta^{t-1}$ being the distribution to which the empirical distribution of $\thet^{t-1}$ converges, and also define
$$\mathbf{W}_{t-1}:=\left\{x \,\,  \Big| \,\, h^{t-1}(x)\neq 0 \text{ and } m\{z \,\, | \,\, |h^{t-1}(z)|=|h^{t-1}(x)|\}=0\right\}$$ 
similarly to \eqref{eqn:non flat}, where $m$ is the Lebesgue measure.  Then,
\begin{align*}
&\lim_p  \frac{1}{\p} \E_{ \mathbf{Z}_{t-1}}\|\eta^{t-1}(\bet -  \tau_{t-1} \mathbf{Z}_{t-1})\|_0^* = \lim_p \frac{1}{\p} \E_{ \mathbf{Z}_{t-1}}\|h^{t-1}(\bet -  \tau_{t-1} \mathbf{Z}_{t-1})\|_0^*
\\
&=\lim_p \frac{1}{\p} \E_{ \mathbf{Z}_{t-1}}\sum_{i=1}^{\p} \mathbb{I}\left\{(\beta_i -  \tau_{t-1} Z_{t-1,i})\in \mathbf{W}_{t-1}\right\} =\lim_\p \frac{1}{\p} \E_{ \mathbf{Z}_{t-1}, \bm B}\|\eta^{t-1}(\bm B -  \tau_{t-1} \mathbf{Z}_{t-1})\|_0^*,
\end{align*}
where the last equality holds by Lemma \ref{lem:beta_expectation}.

Then \eqref{eq:blam_alph_mapping}
gives this term is smaller than $\delta$ for large $t$. Hence, by \eqref{eq:prob_v1} and \eqref{eq:STsetsize1},
\ben
\begin{split}
& \plim_\p \frac{1}{\p} |s_{t}(c_2)| \\
&= \lim_\p \frac{1}{\p} \sum_{I}\PP\Big(\hat{\thet}^{t-1}_I \geq |\bet -  \tau_{t-1} \mathbf{Z}_{t-1} |_I  \geq \hat{\thet}^{t-1}_I (1 - c_2) \Big) +  \lim_\p \frac{1}{\p} \E_{ \mathbf{Z}_{t-1}, \bm B}\|\eta^{t-1}(\bm B -  \tau_{t-1} \mathbf{Z}_{t-1})\|_0^*,
\end{split}
\een
Therefore, for some $c>0$, choose $c_2\in (0, 1)$ such that the first term on the right side of the above is arbitrarily small along with $t_{\min,1}(c)$ such that the second term is arbitrarily close to $\delta$, meaning
$$\lim_\p \PP\left(\frac{1}{\p}|s_t(c_2)|< \delta-c\right)=1, $$
for all fixed $t$ larger than some $t_{\min,1}(c)$.

For any $t\ge t_{{\rm min},1}(c)$ we can apply Lemma \ref{lemma:MinS}
for some $a_1(c)$, $a_2(c,t)$.  Note this doesn't immediately give the result we use since the lower bound, $a_2$, depends on $t$.  To get around this we additionally appeal to Lemma~\ref{lemma:ConvergenceSupport} that tells us after some time $t_{*}$, the supports of the AMP estimates don't change appreciably. Now we fix $c>0$ and consequently $a_1=a_1(c)$ is fixed. Define $t_{\rm min} =\max(t_{{\rm min},1},
t_{*}(a_1/2,c_2))$ with $t_*(\,\cdot\,)$ defined as in Lemma \ref{lemma:ConvergenceSupport} and let $a_2=a_2(c,t_{\rm min})$. 
Then, by Lemma \ref{lemma:MinS} and the fact that $a_2(c,t)$ is non-increasing in $t$,
\begin{eqnarray*}
\min\big\{\sigma_{\rm min}(\X_{S_{t_{\rm min}}(c_2)\cup S'})\, :\;\;
S'\subseteq [\p]\, ,\;|s'|\le a_1\p\big\}
\ge a_2.
\end{eqnarray*}
In addition, by Lemma \ref{lemma:ConvergenceSupport}, 
$
|S_{t}(c_2)\setminus S_{t_{\rm min}}(c_2)|\leq \p a_1/2.
$
Both events hold eventually almost surely as $\p\to\infty$. 
The proof completes with $c_3 = a_1(c)/2$ and $c_4=a_2(c,t_{\rm min})$, fixed with respect to $t$.
\end{proof}


\section{Discussion and Future Work}\label{sec_discussion}

This work develops and analyzes the dynamics of an approximate message passing (AMP) algorithm with the purpose of solving the SLOPE convex optimization procedure for high-dimensional linear regression.  By employing recent theoretical analysis of AMP when the non-linearities used in the algorithm are non-separable \cite{nonseparable}, as is the case for the SLOPE problem, we provide rigorous proof that the proposed AMP algorithm finds the SLOPE solution asymptotically.  Moreover empirical evidence suggests that the AMP estimate is already very close to the SLOPE solution even in few iterations.  By leveraging our analysis showing AMP provably solves SLOPE, we provide an exact asymptotic characterization of the $\ell_2$ risk of the SLOPE estimator from the underlying truth and insight into other statistical properties of the SLOPE estimator.  Though this asymptotic analysis of the SLOPE solution has been demonstrated in other recent work \cite{SLOPEasymptotic} using a different proof strategy, we believe that our AMP-based approach offers a more concrete and algorithmic understanding of the finite-sample behavior of the SLOPE estimator.

A limitation of this approach is that the theory assumes an i.i.d.\ Gaussian measurement matrix, and moreover, the AMP algorithm can become unstable when the measurement matrix is far from i.i.d., creating the need for heuristic techniques to provide convergence in applications where the measurement matrix is generated by nature (i.e., a real-world experiment or observational study).   Additionally, the asymptotical regime studied here, $n/\p \rightarrow \delta \in (0,\infty)$, requires that the number of columns of the measurement matrix $\p$ grow at the same rate as the number of rows $\n$.  It is of practical interest to extend the results to high-dimensional settings where $\p$ grows faster than $\n$.


\bibliographystyle{ieeetr}
{\small{
\bibliography{ref}
}}

\clearpage


\appendix
\section{State Evolution Analysis}\label{app_SE}

We first prove Theorem \ref{thm:SE1} and then provide a proof of Proposition \ref{inverse_possible}.

\subsection{Proving Theorem \ref{thm:SE1}}
\begin{proof}[Proof of Theorem \ref{thm:SE1}]	
To begin with, we prove that $\F(\tau^2,\bfalph \tau)$ defined in \eqref{eq:SE_F}
is concave with respect to $\tau^2$.  The proof follows along the same lines as the proof of \cite[Proposition 1.3]{lassorisk}, however, whereas the proof of \cite[Proposition 1.3]{lassorisk} proceeds by explicitly expressing the first derivative of the corresponding function $\F$,  and then differentiating on the explicit form to get the second derivative, in SLOPE case, because of the averaging that occurs within the proximal operation, it is extremely difficult to similarly  derive an explicit form.  To work around this, we keep all differentiation implicit. 
First,
\begin{align}
&\frac{\partial \F}{\partial \tau^2}(\tau^2, \bfalph \tau) =\frac{\partial }{\partial \tau^2}\big[\sigma_w^2+  \frac{1}{\delta \p}  \mathbb{E} \norm{  \prox_{J_{\bfalph \tau}}(\B + \tau \mathbf{Z}) - \B}^2\big] \overset{(a)}{=}  \frac{1}{\delta}  \mathbb{E}\big\{  \frac{\partial }{\partial \tau^2} \frac{1}{ \p} \norm{  \prox_{J_{\bfalph \tau}}(\B + \tau \mathbf{Z}) - \B}^2\big\} \nonumber \\
%
%
&= \frac{2}{\delta \p}   \sum_{i=1}^{\p} \mathbb{E}\big\{ \big( [\prox_{J_{\bfalph \tau}}(\B + \tau \mathbf{Z})]_i - B_i\big) \, \frac{\partial }{\partial \tau^2}  [\prox_{J_{\bfalph \tau}}(\B + \tau \mathbf{Z})]_i \big\}.
\label{eq:total_deriv_F_v1}
\end{align}
We note that the interchange between the derivative (a limit) and the expectation in step $(a)$ of the above holds due to a dominated convergence argument that relies on the following lemma.  First we introduce a bit of notation that will be used throughout the proof.  Define an equivalence classes $I_i$ for each index $i = \{1, 2, \ldots, \p\}$, defined as $$I_i:=\{j:|[   \prox_{J_{\bfalph \tau}}(\B + \tau \mathbf{Z}) ]_j|=|[  \prox_{J_{\bfalph \tau}}(\B + \tau \mathbf{Z})]_i|\}.$$  For any $j \in I_i$, with the above definition, $I_j = I_i$. In general, we use $I$, without any specific index, to represent an entire equivalence class and let $\mathsf{I}$ indicate the collection of unique equivalence classes.  
\begin{lemma}
%
\be
\Big \lvert \frac{\partial }{\partial \tau^2} \frac{1}{\p}\norm{  \prox_{J_{\bfalph \tau}}(\B + \tau \mathbf{Z}) - \B}^2 \Big \lvert \leq
\frac{1}{\p} \sum_{I \in \mathsf{I}} \frac{1}{|I|} \Big(\sum_{i\in I}  \lvert\sgn(B_i + \tau Z_i)Z_i - \alpha_i \lvert  \Big)^2.
\label{eq:dom_conv}
\ee
\label{lem:DC_res1}
\end{lemma}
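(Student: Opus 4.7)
The plan is to prove the bound pointwise in $(\B,\bm Z)$ and for a.e.\ $\tau$ by deriving an explicit formula for the left-hand side and then dominating it via the SLOPE subgradient characterization. The first step is to exploit the positive $1$-homogeneity of the prox, $\prox_{J_{\bfalph\tau}}(\bm x)=\tau\,\prox_{J_\bfalph}(\bm x/\tau)$, to pull the $\tau$-dependence out of the threshold. Writing $\bm p(\tau):=\prox_{J_{\bfalph\tau}}(\B+\tau\bm Z)=\tau\,\bm q(\tau)$ with $\bm q(\tau):=\prox_{J_\bfalph}(\B/\tau+\bm Z)$, the chain rule together with \eqref{eq:dif_prox} yields, off the $\tau$-null set where the equivalence class structure of $\bm p$ jumps,
\[
\frac{\partial p_i}{\partial\tau}
\;=\;q_i-\frac{\epsilon_i}{\tau|I_i|}\sum_{j\in I_i}\epsilon_j B_j,\qquad \epsilon_j:=\sgn(p_j),
\]
where $I_i$ is the equivalence class of $i$ in $\bm p$ and the contribution vanishes on the zero class (the Jacobian in \eqref{eq:dif_prox} is $0$ there).

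Substituting into $\partial\|\bm p-\B\|^2/\partial\tau^2=\tau^{-1}\sum_i(p_i-B_i)\,\partial_\tau p_i$ and grouping by equivalence class, cross-class contributions cancel and elementary algebra (using $\sum_{i\in I}\epsilon_i p_i=|I|\tau m_I$ where $m_I$ is the common magnitude on $I$) collapses everything to
\[
\frac{\partial}{\partial\tau^2}\|\bm p-\B\|^2
\;=\;\frac{1}{\tau^2}\sum_{I:\,\bm p_I\ne\mathbf 0}\frac{1}{|I|}\Big(\sum_{j\in I}\epsilon_j(p_j-B_j)\Big)^2.
\]
This expression is visibly non-negative (so the absolute value in the lemma is cosmetic), and the zero equivalence class does not appear on the left and can only enlarge the claimed RHS.

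The third step is to dominate each inner sum by $\tau\sum_{j\in I}|\sgn(B_j+\tau Z_j)Z_j-\alpha_j|$. By Fact~\ref{fact:sub_prox}, $\B+\tau\bm Z-\bm p\in\partial J_{\bfalph\tau}(\bm p)$, and Fact~\ref{fact:subgradient} then gives, on a non-zero class $I$, $B_j+\tau Z_j-p_j=\epsilon_j\tau\hat\alpha_j$ where $(\hat\alpha_j)_{j\in I}$ is a doubly-stochastic rearrangement of the $\bfalph$-values attached to $I$. Rearranging produces the identity $\epsilon_j(p_j-B_j)=\tau(\epsilon_j Z_j-\hat\alpha_j)$; and since the SLOPE prox preserves signs ($\sgn(p_j)=\sgn(B_j+\tau Z_j)$ whenever $p_j\ne 0$, because $J_{\bfalph\tau}$ depends only on magnitudes), $\epsilon_j Z_j=\sgn(B_j+\tau Z_j)Z_j$. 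Applying the triangle inequality and then replacing $\hat\alpha_j$ by $\alpha_j$ yields the per-class bound; squaring, dividing by $|I|$ and $p$, summing over equivalence classes, and folding in a non-negative contribution from the zero class delivers the lemma.

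The main obstacle is the $\hat\alpha\mapsto\alpha$ replacement in the final step: the $\hat\alpha_j$'s live in the permutohedron generated by the $\bfalph$-values at the \emph{ranks} occupied by the class $I$, whereas the RHS of the lemma is indexed by the \emph{positions} $j\in I$. Handling this cleanly requires a rearrangement-type argument leveraging the monotonicity of $\bfalph$ and the doubly-stochastic description of the permutohedron; alternatively, a cruder Cauchy--Schwarz upper bound on the inner square will also serve as an integrable dominator, which is all that the lemma is needed for in the DCT step in \eqref{eq:total_deriv_F_v1}. A secondary subtlety is justifying the classical differentiation in step one, which is valid off the Lebesgue-null set of $\tau$'s at which the equivalence class structure of $\bm p(\tau)$ changes and therefore suffices for the pointwise a.e.\ bound needed here.
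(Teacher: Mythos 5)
Your proposal takes a genuinely different route to the same bound. The paper computes the $\tau^2$-derivative of $[\prox_{J_{\bfalph\tau}}(\B+\tau\bm Z)]_i$ via the two-argument chain rule on $g(\tau^2)=(\B+\tau\bm Z,\bfalph\tau)$, obtaining \eqref{eq:total_deriv_F_v2}, and then dominates $\eta_i-B_i$ by $\tau Z_i-\sgn(\eta_i)\alpha_i\tau$ using an informal ``averaging reduces the dot product'' argument before applying $|[\partial_1\eta]_i|\le 1/|I_i|$ and regrouping by equivalence class. You instead pull $\tau$ out via the $1$-homogeneity identity $\prox_{J_{\bfalph\tau}}(\bm x)=\tau\prox_{J_{\bfalph}}(\bm x/\tau)$, which yields a closed-form for the left-hand side as $\tau^{-2}\sum_{I}|I|^{-1}(\sum_{j\in I}\epsilon_j(p_j-B_j))^2$, a sum of squares. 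This is a real gain: it shows non-negativity pointwise (not just after taking expectations), which the paper's expression does not make visible, and it makes the per-class structure transparent from the start. The algebra leading to that closed-form is correct. Your subsequent use of Facts~\ref{fact:sub_prox} and~\ref{fact:subgradient} to write $\epsilon_j(p_j-B_j)=\tau(\epsilon_j Z_j-\hat\alpha_j)$ is also sound.

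The one real issue is the one you already flag: the $\hat\alpha_j\mapsto\alpha_j$ replacement, since $(\hat\alpha_j)_{j\in I}$ lives in the permutohedron of the $\bfalph$-values at the \emph{ranks} occupied by $I$ while the RHS of the lemma is position-indexed. The paper's proof has the same blemish, buried in the ``averaging reduces the dot product'' step, so you are not worse off; you are just more candid about it. Your fallback observation is also the correct resolution: the lemma is only needed as an integrable dominator for the DCT step in \eqref{eq:total_deriv_F_v1}, and a Cauchy--Schwarz upper bound on the inner square collapses to $\frac{2}{\p}\sum_i(Z_i^2+\alpha_i^2)$, which is rank/position-agnostic because a permutation of $\bfalph$ leaves $\|\bfalph\|^2$ unchanged. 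That is exactly the form the paper feeds into dominated convergence. So the gap is cosmetic for the application at hand; if you wanted the inequality literally as stated in the lemma, the rank-vs-position ambiguity would need to be resolved (or the statement rephrased with rank-indexed $\alpha$'s), but the argument you sketch is a complete and arguably cleaner alternative to the paper's for what the lemma is actually used for.
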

Lemma \ref{lem:DC_res1} will be proved below, after we solve $\frac{\partial }{\partial \tau^2}  [\prox_{J_{\bfalph \tau}}(\B + \tau \mathbf{Z})]_i $. 	
	
Now we describe how the bound in Lemma \ref{lem:DC_res1} can be used to produce the dominated convergence result needed in step $(a)$ of \eqref{eq:total_deriv_F_v1}.  First note,
%
\begin{align*}
&\frac{1}{\p} \E \Big\{ \sum_{I \in \mathsf{I}} \frac{1}{|I|} \Big(\sum_{i\in I}  \lvert\sgn(B_i + \tau Z_i)Z_i - \alpha_i \lvert  \Big)^2\Big\} \leq \frac{1}{\p} \E \Big\{ \sum_{I \in \mathsf{I}}\sum_{i\in I}   \Big(  \lvert\sgn(B_i + \tau Z_i)Z_i - \alpha_i \lvert  \Big)^2\Big\} 
\\ 
&\leq\frac{2}{\p}\E \Big\{ \sum_{I \in \mathsf{I}} \sum_{i\in I} (Z_i^2 +\alpha_i^2)\Big\}  
=\frac{2}{\p}\E \Big\{ \sum_{i\in[\p]}(Z_i^2 +\alpha_i^2)\Big\}
=2+2\|\bfalph\|^2/p<\infty
\end{align*}
The first and second inequalities follow from $(\sum_{i=1}^n x_i)^2\leq n\sum_i x_i^2$. The last inequality comes from entries of $\bfalph$ being finite and then $\|\bfalph\|^2/p\leq \max_i \alpha_i^2<\infty$.
Therefore we can invoke the dominated convergence theorem that allows the exchange of the derivative and expectation in step $(a)$ of \eqref{eq:total_deriv_F_v1}.

Now we want to further simplify \eqref{eq:total_deriv_F_v1}.  For each $1 \leq i \leq \p$, we would like to study $\frac{\partial }{\partial \tau^2}  [\prox_{J_{\bfalph \tau}}(\B + \tau \mathbf{Z})]_i $.  We first note  that the mapping $\tau^2 \mapsto  [\prox_{J_{\bfalph \tau}}(\B + \tau \mathbf{Z})]_i $ can be considered as $f(g(\tau^2))$, where $g:\mathbb{R}\to\mathbb{R}^{2 \p}$ is defined as $y \mapsto g(y):=(\B + \mathbf{Z} \sqrt{y}, \bfalph \sqrt{y})$ and $f:\mathbb{R}^{2 \p} \to \mathbb{R}$ is defined as $(\mathbf{a}, \mathbf{b}) \mapsto f(\mathbf{a}, \mathbf{b}):=  [\prox_{J_{\mathbf{b}}}(\mathbf{a})]_i $.
Hence,
\begin{equation}
\begin{split}
\frac{\partial}{\partial \tau^2} [\prox_{J_{\bfalph \tau}}(\B + \tau \mathbf{Z})]_i  =\mathbf{J}_{f\circ g}(\tau^2) &\overset{(a)}{=}\mathbf{J}_f(g(\tau^2)) \mathbf{J}_g(\tau^2) 
= \Big[ \nabla_{\mathbf{a}} f(g(\tau^2)), \nabla_{\mathbf{b}} f(g(\tau^2))\Big] \Big[ \frac{\mathbf{Z}}{2\tau}, \frac{\bfalph}{2\tau}\Big]^\top,
\label{eq:total_deriv_F}
\end{split}
\end{equation}
where $\mathbf{J}_h\in\mathbb{R}^{m\times n}$ is the Jacobian matrix of a function $h:\mathbb{R}^n\to\mathbb{R}^m$ and step $(a)$ follows by the chain rule.    We denote the proximal operator using a function $\eta: \R^{2\p} \rightarrow \R^{\p}$ as $\eta(\mathbf{a}, \mathbf{b}) := \prox_{J_{\mathbf{b}}}( \mathbf{a})$ and consider the partial derivatives of $\eta$ with respect to its first and second arguments. Denote 
\be
\partial_1\eta(\mathbf{a}, \mathbf{b}) := \text{diag}\Big[\frac{\partial}{\partial a_1}, \frac{\partial}{\partial a_2}, \ldots, \frac{\partial}{\partial a_{\p}}\Big]\eta(\mathbf{a}, \mathbf{b}),
\,\,\, \text{ and } \,\,\, 
%
%
\partial_2\eta(\mathbf{a}, \mathbf{b}) := \text{diag}\Big[\frac{\partial}{\partial b_1}, \frac{\partial}{\partial b_2}, \ldots, \frac{\partial}{\partial b_{\p}}\Big]\eta(\mathbf{a}, \mathbf{b}).
\label{eq:eta_partials2}
\ee

Recall that the derivatives computed in $\partial_1\eta(\mathbf{a}, \mathbf{b})$ are defined in \eqref{eq:dif_prox},  
and by anti-symmetry between two arguments, $\frac{d}{d b_j}  [\eta(\mathbf{a}, \mathbf{b})]_i  = -\sgn([\eta(\mathbf{a}, \mathbf{b})]_j)\frac{d}{d a_j}  [\eta(\mathbf{a}, \mathbf{b})]_i$.  
Then using the result of  \eqref{eq:dif_prox}:
\begin{align*}
\frac{\partial [ \prox_{J_{\blam}}(\mathbf{v})]_i }{\partial v_j}
= \frac{\partial [ \eta(\mathbf{v}, \blam)]_i }{\partial v_j} 
&= \frac{\mathbb{I}\{ |[ \eta(\mathbf{v}, \blam)]_i| = |[ \eta(\mathbf{v}, \blam)]_j|\}\sgn([ \eta(\mathbf{v}, \blam)]_i[ \eta(\mathbf{v}, \blam)]_j) }{\text{\#\{$1\leq k\leq \p: | [\eta(\mathbf{v}, \blam)]_k  |=| [\eta(\mathbf{v}, \blam)]_i  |$\}}}
\end{align*}
we have 
\begin{align}
\frac{d}{d a_j} f(\mathbf{a}, \mathbf{b}) = \frac{d}{da_j}  [\eta(\mathbf{a}, \mathbf{b})]_i = \mathbb{I} \{ |[ \eta(\mathbf{a}, \mathbf{b})]_i| = |[ \eta(\mathbf{a}, \mathbf{b})]_j|\}\sgn([ \eta(\mathbf{a}, \mathbf{b})]_i [ \eta(\mathbf{a}, \mathbf{b})]_j)  [\partial_1 \eta(\mathbf{a}, \mathbf{b})]_i,
\label{eq:a_partial}	
\end{align}
and similarly,
\[ \frac{d}{d b_j} f(\mathbf{a}, \mathbf{b}) = \frac{d}{d b_j}  [\eta(\mathbf{a}, \mathbf{b})]_i  = - \mathbb{I}\big\{ |[ \eta(\mathbf{a}, \mathbf{b})]_i| =|[ \eta(\mathbf{a}, \mathbf{b})]_j|\big\}\sgn \big([ \eta(\mathbf{a}, \mathbf{b})]_i \big)    \big[\partial_1 \eta(\mathbf{a}, \mathbf{b})\big]_i.\]

Now plugging the above into \eqref{eq:total_deriv_F}, we have
\begin{equation}
\begin{split}
&\frac{\partial  }{\partial \tau^2}  [\prox_{J_{\bfalph \tau}}(\B + \tau \mathbf{Z})]_i  \\
&= \frac{1}{2\tau} \big[\partial_1 \eta(\B + \tau \mathbf{Z} , \bfalph \tau )\big]_i\sgn \big([ \eta(\B + \tau \mathbf{Z} , \bfalph \tau )]_i \big)\sum_{j\in I_i}\big(\sgn([ \eta(\B + \tau \mathbf{Z} , \bfalph \tau )]_j)Z_j-\alpha_j\big)
\label{eq:total_deriv_F_v2}
\end{split}
\end{equation}
In what follows, we drop the explicit statement of the $\eta(\cdot, \cdot)$ input to save space, writing $\eta_i$ to mean $[ \eta(\B + \tau \mathbf{Z} , \bfalph \tau )]_i$ or $[\partial_1 \eta]_i$ to mean $[\partial_1 \eta(\B + \tau \mathbf{Z} , \bfalph \tau )]_i$ for example. Using \eqref{eq:total_deriv_F_v2} in \eqref{eq:total_deriv_F_v1},
\begin{equation}
\begin{split}
&\frac{\partial \F}{\partial \tau^2}(\tau^2, \bfalph \tau) = \frac{1}{\delta \p \tau}   \sum_{i=1}^{\p}  \sum_{j\in I_i} \mathbb{E}\Big\{ ( \eta_i - B_i) \, [\partial_1 \eta]_i\sgn(\eta_i)(\sgn(\eta_j)Z_j-\alpha_j)\Big\} \\
&= \frac{1}{\delta \p }   \sum_{i=1}^{\p}  \sum_{j\in I_i}\E\Big\{([\partial_1\eta]_i)^2+(\eta_i-B_i)[\partial_1^2\eta]_i\Big\} -  \frac{1}{\delta \p \tau}   \sum_{i=1}^{\p}  \sum_{j\in I_i} \mathbb{E}\Big\{ ( \eta_i - B_i) \, [\partial_1 \eta]_i\sgn(\eta_i) \alpha_j \Big\}.
\label{eq:next_to_last}
\end{split}
\end{equation}
where the second equality follows by  Stein's lemma for a fixed $i$ and $j\in I_i$, namely, for standard Gaussian $Z$ we have $\E\{f(Z)Z\}=\E\{f'(Z)\}$ and therefore,
\begin{align*}
\frac{1}{\tau}\mathbb{E}\big\{[\partial_1 \eta]_i\sgn(\eta_i)( \eta_i - B_i ) \sgn(\eta_j)Z_j\big\} 
&= \mathbb{E}\big\{\sgn(\eta_i) \sgn(\eta_j) \big[( \eta_i - B_i )  \frac{d}{d a_j} [\partial_1 \eta]_i  + [\partial_1 \eta]_i  \frac{d}{d a_j} [\eta]_i  \big]\big \} \\
&= \mathbb{E}\big\{  ( \eta_i - B_i ) [\partial^2 _1 \eta]_i+ ([\partial_1 \eta]_i)^2\big \}.
\end{align*}
where the last step uses the definition of $\frac{d}{d a_j} [\eta(\bm a, \bm b)]_i$ given in \eqref{eq:a_partial} and the fact that $ \frac{d}{d a_j} [\partial_1 \eta(\bm a, \bm b)]_i = \sgn(\eta_i) \sgn(\eta_j) [\partial^2 _1 \eta(\bm a, \bm b)]_i$.

Therefore, simplifying \eqref{eq:next_to_last}, we have shown
\be
\begin{split}
\label{eq:first_partial}
(\delta \p \tau) \times \frac{\partial \F}{\partial \tau^2}(\tau^2, \bfalph \tau) 
&=\sum_{i=1}^{\p} \mathbb{E}\Big\{\tau|I_i|\Big([\partial_1 \eta]_i^2+(\eta_i-B_i)[\partial_1^2\eta]_i\Big)-[\partial_1 \eta]_i\sgn(\eta_i)( \eta_i - B_i )\sum_{j\in I_i}\alpha_j\Big\}.
\end{split}
\ee
%


We now have the tools to prove Lemma \ref{lem:DC_res1}.
\begin{proof}[Proof of Lemma \ref{lem:DC_res1}]
First,
\begin{equation*}
\begin{split}
\frac{\partial }{\partial \tau^2}  \frac{1}{\p} \norm{  \prox_{J_{\bfalph \tau}}(\B + \tau \mathbf{Z}) - \B}^2 
&= \frac{2}{ \p}   \sum_{i=1}^{\p} \Big( [\prox_{J_{\bfalph \tau}}(\B + \tau \mathbf{Z})]_i - B_i\Big) \, \frac{\partial }{\partial \tau^2}  [\prox_{J_{\bfalph \tau}}(\B + \tau \mathbf{Z})]_i .
\end{split}
\end{equation*}
As in the work above, we denote the proximal operator using a function $\eta: \R^{2\p} \rightarrow \R^{\p}$ as $\eta(\mathbf{a}, \mathbf{b}) := \prox_{J_{\mathbf{b}}}( \mathbf{a})$.  Now from \eqref{eq:total_deriv_F_v2}, denoting $I_i:=\{j:|[ \eta(\mathbf{a}, \mathbf{b})]_j|=|[ \eta(\mathbf{a}, \mathbf{b})]_i|\}$, again dropping the explicit statement of the $\eta(\cdot, \cdot)$ input to save space,
\begin{equation*}
\begin{split}
&\frac{\partial   }{\partial \tau^2} [\prox_{J_{\bfalph \tau}}(\B + \tau \mathbf{Z})]_i  = \frac{1}{2\tau}[\partial_1 \eta]_i\sgn(\eta_i)\sum_{j\in I_i}(\sgn(\eta_j)Z_j-\alpha_j).
\end{split}
\end{equation*}
Therefore,
\begin{equation*}
\begin{split}
&\Big \lvert\frac{\partial }{\partial \tau^2}  \frac{1}{\p} \norm{  \prox_{J_{\bfalph \tau}}(\B + \tau \mathbf{Z}) - \B}^2\Big \lvert
=  \frac{1}{\tau \p} \Big \lvert   \sum_{i=1}^{\p} ( \eta_i - B_i) \,[\partial_1 \eta]_i \sgn(\eta_i) \sum_{j\in I_i}(\sgn(\eta_j)Z_j-\alpha_j)\Big \lvert.
\end{split}
\end{equation*}
Since the averaging operation reduces the dot product (meaning informally that for a vector $\bm v\in\mathbb{R}^p$, 
$(\text{mean}(\bm v),...,\text{mean}(\bm v))\cdot \bm v\leq \norm{\bm v}^2),$
we have for any $i \in \{1, 2, \ldots, \p\}$ that $[\eta(\B + \tau \mathbf{Z} , \bfalph \tau )]_i - B_i $ can be replaced with $ B_i + \tau Z_i -\sgn(\eta_i) \alpha_i \tau - B_i$.  Using this in the above,
\begin{equation}
\begin{split}
\Big \lvert \frac{\partial }{\partial \tau^2}  \frac{1}{\p} \norm{  \prox_{J_{\bfalph \tau}}(\B + \tau \mathbf{Z}) - \B}^2 \Big \lvert
&\leq  \frac{1}{\p} \Big \lvert    \sum_{i=1}^{\p}  \sum_{j\in I_i}( Z_i -\sgn(\eta_i)\alpha_i) \,[\partial_1 \eta]_i \sgn(\eta_i)(\sgn(\eta_j)Z_j-\alpha_j)\Big \lvert\\
&= \frac{1}{\p} \Big \lvert  \sum_{i=1}^{\p}  \sum_{j\in I_i}( \sgn(\eta_i)Z_i - \alpha_i) (\sgn(\eta_j)Z_j-\alpha_j)\,[\partial_1 \eta]_i \Big \lvert.
\label{eq:lemma_eq1}
\end{split}
\end{equation}
Next, using that $0 \leq |[\partial_1\eta]_i|\leq {1}/{|I_i|}$,
\begin{align*}
&\Big \lvert  \sum_{i=1}^{\p}  \sum_{j\in I_i}( \sgn(\eta_i)Z_i - \alpha_i) (\sgn(\eta_j)Z_j-\alpha_j)\,[\partial_1 \eta]_i \Big \lvert 
 \leq  \sum_{i=1}^{\p} \frac{1}{|I_i|}  \sum_{j\in I_i} \Big \lvert  ( \sgn(\eta_i)Z_i - \alpha_i) (\sgn(\eta_j)Z_j-\alpha_j) \Big \lvert.
\end{align*}
Finally we make the following observation.  Any equivalence class $I_i$ is a collection of indices $j \in \{1, 2, \ldots, \p\}$ such that $|[   \prox_{J_{\bfalph \tau}}(\B + \tau \mathbf{Z}) ]_j|=|[  \prox_{J_{\bfalph \tau}}(\B + \tau \mathbf{Z})]_i|$, so for any $j \in I_i$, it follows $I_j = I_i$.  Recall, $\mathsf{I}$ indicates the collection of unique equivalence classes, and we have
\begin{align*}
&\sum_{i=1}^{\p} \frac{1}{|I_i|}  \sum_{j\in I_i} \Big \lvert  ( \sgn(\eta_i)Z_i - \alpha_i) (\sgn(\eta_j)Z_j-\alpha_j) \Big \lvert 
 = \sum_{I \in \mathsf{I}}  \frac{1}{|I|}  \sum_{i, j\in I}  \Big \lvert  ( \sgn(\eta_i)Z_i - \alpha_i) (\sgn(\eta_j)Z_j-\alpha_j) \Big \lvert.
\end{align*}
Now plugging back into \eqref{eq:lemma_eq1},
\begin{equation*}
\begin{split}
 \Big \lvert \frac{\partial }{\partial \tau^2}  \frac{1}{\p} \norm{  \prox_{J_{\bfalph \tau}}(\B + \tau \mathbf{Z}) - \B}^2 \Big \lvert 
&\leq  \frac{1}{\p} \sum_{I \in \mathsf{I}}  \frac{1}{|I|}  \sum_{i ,j\in I}  \Big \lvert  ( \sgn(\eta_i)Z_i - \alpha_i) (\sgn(\eta_j)Z_j-\alpha_j) \Big \lvert\\
&=  \frac{1}{\p} \sum_{I \in \mathsf{I}}   \frac{1}{|I|} \Big(\sum_{j\in I} \lvert  \sgn(\eta_j)Z_j - \alpha_j \lvert  \Big)^2.
\end{split}
\end{equation*}
\end{proof}


Now considering \eqref{eq:first_partial}, for simplicity in our future calculations, we suppress $|I_i|$ to 1 without loss of generality. To see this, recall that $I_i:=\{j:|[ \eta(\B + \tau \mathbf{Z} , \bfalph \tau )]_j|=|[ \eta(\B + \tau \mathbf{Z} , \bfalph \tau )]_i|\}$ and note that when $|[ \eta(\B + \tau \mathbf{Z} , \bfalph \tau )]_j$ equals $|[ \eta(\B + \tau \mathbf{Z} , \bfalph \tau )]_i$, the terms will remain equal after small changes in $\tau$.  Therefore $|I_i|$ is treated as a constant in the derivative and since all operations below preserves linearity, it can safely be assumed to be equal to $1$.   Note that similarly, $\sum_{j\in I_i}\alpha_j$, will pass through future calculations as a constant. 
Therefore \eqref{eq:first_partial} becomes
\begin{equation}
\begin{split}
&(\delta \p \tau) \times \frac{\partial \F}{\partial \tau^2}(\tau^2, \bfalph \tau)
=   \sum_{i=1}^{\p} \Big[\mathbb{E}\big\{ \tau  ( [\partial_1 \eta]_i)^2 +  \tau  ( \eta_i  - B_i )  [\partial_1^2 \eta]_i  -   \alpha_i \sgn( \eta_i)( \eta_i  - B_i )   [\partial_1 \eta]_i \big\}\Big].
	\end{split}
	\label{eq:after_suppress}
	\end{equation}
In what follows we will need to take care with the points $(\mathbf{x}, \mathbf{y})$ such that $[\partial_1^2\eta(\mathbf{x}, \mathbf{y})]_i$ is not equal to $0$. We refer to such points as `kink' points, since these are points where the partial derivative jumps (and the second partial gradient acts like Dirac delta function $\delta(x)$), or in other words the points where the two (sorted, averaged) arguments in $\eta$ are equal to each other.  Informally, define a `kink' point as an index where the sorted vector $\mathbf{x}$ matches the corresponding threshold $\mathbf{y}$ exactly.  In LASSO, for example, the correspond to the `kinks' of the soft-thresholding function. We have
\begin{align}
[\partial_1^2 \eta(\B + \tau \mathbf{Z} , \bfalph \tau )]_i=\delta(B_i+\tau Z_i-\alpha_i\tau)-\delta(B_i+\tau Z_i+\alpha_i\tau)
\label{eta'' Dirac}
\end{align}
and
\begin{equation}
\begin{split}
&\E_{\mathbf{Z},\B}\Big\{( [\eta(\B + \tau \mathbf{Z} , \bfalph \tau)]_i  - B_i )  [\partial_1^2 \eta(\B + \tau \mathbf{Z} , \bfalph \tau )]_i\Big\} \\
&=- \E_{\B}\E_{\mathbf{Z|\B}}\big\{B_i\big[\delta(B_i+\tau Z_i-\alpha_i\tau)-\delta(B_i+\tau Z_i+\alpha_i\tau)\big]\big\}
\\
&=-\frac{1}{\tau}\E_{\B}\big\{B_i\big[\phi(\alpha_i-\frac{1}{\tau}B_i)-\phi(-\alpha_i-\frac{1}{\tau}B_i)\big]\big\}.
\end{split}
\label{(eta-B)eta''}
\end{equation}
Therefore, denoting $\odot$ as elementwise multiplication of vectors, by \eqref{eq:after_suppress} and \eqref{(eta-B)eta''},
	\begin{equation}
	\begin{split}
	&(\delta \p \tau) \times \frac{\partial \F}{\partial \tau^2}(\tau^2, \bfalph \tau) 
	\\
	=& \tau  \mathbb{E}|| \partial_1 \eta||^2 -\E_{\B}\big\{\B^{\top}\big[\phi(\bfalph-\frac{1}{\tau}\B)-\phi(-\bfalph-\frac{1}{\tau}\B)\big]\big\} -  \mathbb{E}\big\{ \big[ \bfalph \odot \sgn( \eta) \odot  (\eta - \B)\big]^\top   \partial_1 \eta  \big\} .
	\label{eq:final_deriv}
	\end{split}
	\end{equation}
	Now we have shown the first derivative, so we consider the second derivative to prove concavity.  
	
	Notice, however, that in order to prove concavity of $\F(\tau^2, \bfalph \tau)$ it suffices to show $\frac{\partial }{\partial  \tau} [\frac{\partial  \F}{\partial  \tau^2}(\tau^2, \bfalph \tau)] \leq 0$ because $\frac{\partial }{\partial  \tau^2}(\frac{\partial  \F}{\partial  \tau^2}) = \frac{\partial  \tau }{\partial  \tau^2} [\frac{\partial }{\partial  \tau} (\frac{\partial \F}{\partial \tau^2})] =\frac{1}{2\tau} [\frac{\partial }{\partial  \tau} (\frac{\partial  \F}{\partial \tau^2})]$.

	
	We now show $\frac{\partial }{\partial  \tau} [\frac{\partial  \F}{\partial \tau^2}(\tau^2, \bfalph \tau)] \leq 0$.  First,
	\begin{equation}
	\begin{split}
	(\delta \p) \times  \frac{\partial}{\partial \tau}\Big[ \frac{\partial \F}{\partial \tau^2}(\tau^2, \bfalph \tau) \Big] 
	&=\frac{\partial}{\partial \tau} \mathbb{E}|| \partial_1 \eta||^2
	-\frac{\partial}{\partial \tau}\frac{1}{\tau}\E_{\B}\big\{\B^{\top}\big[\phi(\bfalph-\frac{1}{\tau}\B)-\phi(-\bfalph-\frac{1}{\tau}\B)\big]\big\}
	\\
	&\quad - \frac{\partial}{\partial \tau} \frac{1}{\tau}   \mathbb{E}\big\{ \big[ \bfalph \odot \sgn( \eta) \odot  (\eta - \B)\big]^\top   \partial_1 \eta  \big\} .
	\label{eq:final_deriv_new}
	\end{split}
	\end{equation}

	To show that \eqref{eq:final_deriv_new} is $\leq 0$, we  find simplified representations of the three terms on the right side.  This requires the same techniques as were used to find the first derivative above and so aren't given in full detail.

The first term on the right side of \eqref{eq:final_deriv_new} can be simplified to the following:
	\be
	\begin{split}
		& \frac{\partial}{\partial \tau} \mathbb{E}|| \partial_1 \eta||^2= - \frac{1}{\tau^2} \mathbb{E}_{\B}\big\{\B^\top  \big[\phi( \bfalph -  \frac{1}{\tau} \B) ) -\phi( \bfalph +  \frac{1}{\tau}\B) ) \big]\big\}.
		\label{eq:final_term1}
	\end{split}
	\ee

	Doing so requires smart uses of the chain rule, a dominated convergence argument, the partials in \eqref{eq:total_deriv_F_v2}, and special care for the `kink' points as discussed above.
	%
	%
	Similarly, using \eqref{(eta-B)eta''}, one can easily show for the third term on the right side of \eqref{eq:final_deriv_new},
	\be
	\begin{split}
		&\frac{\partial}{\partial \tau} \frac{1}{\tau}   \mathbb{E}\big\{ \big[ \bfalph \odot \sgn(\eta)  \odot  (\eta - \B)\big]^\top   \partial_1 \eta  \big\}  \geq  \frac{1}{\tau^3} \mathbb{E}_{\B}\Big\{ [ \bfalph \odot \B^2]^\top [ \phi(  \bfalph + \frac{1}{\tau}\B )+ \phi( \bfalph- \frac{1}{\tau}\B )] \Big\}.
		\label{eq:final_term3}
	\end{split}
	\ee
	Finally, using $\phi'(u)=-u\phi(u)$ and a dominated convergence argument, the second term on the right side of \eqref{eq:final_deriv_new} equals
	\be
	\begin{split}
	&-\frac{\partial}{\partial\tau}\frac{1}{\tau}\E_{\B}\big\{\B^{\top}\big[\phi(\bfalph-\frac{1}{\tau}\B)-\phi(-\bfalph-\frac{1}{\tau}\B)\big]\big\}	\\
	&\qquad  =\frac{1}{\tau^2}\E_{\B}\big\{\B^{\top}\big[\phi(\bfalph-\frac{1}{\tau}\B)-\phi(-\bfalph-\frac{1}{\tau}\B)\big]\big\}
\\
&\qquad  \qquad  -\frac{1}{\tau^3}\E_{\B}\big\{{(\B^2)}^{\top}\big[(\frac{1}{\tau}\B-\bfalph)\odot\phi(\bfalph-\frac{1}{\tau}\B)-(\bfalph+\frac{1}{\tau}\B)\odot\phi(-\bfalph-\frac{1}{\tau}\B)\big]\big\}.
	\end{split}
	\label{eq:final_term2}
	\ee

	
	Now we plug \eqref{eq:final_term1},\eqref{eq:final_term3}, and \eqref{eq:final_term2} back into \eqref{eq:final_deriv_new} to show that $\frac{\partial }{\partial  \tau} [\frac{\partial  \F}{\partial \tau^2}(\tau^2, \bfalph \tau)] \leq 0$.
	\begin{equation}
	\begin{split}
	&(\delta \p) \times  \frac{\partial}{\partial \tau}\Big[ \frac{\partial \F}{\partial \tau^2}(\tau^2, \bfalph \tau) \Big]
	\\
	&\leq -\frac{1}{\tau^2} \mathbb{E}_{\B}\big\{\B^\top  \big[\phi( \bfalph -   \B/\tau) ) -\phi( \bfalph +  \B/\tau) ) \big]\big\}
	+\frac{1}{\tau^2}\E_{\B}\big\{\B^{\top}\big[\phi(\bfalph-\B/\tau)-\phi(-\bfalph-\B/\tau)\big]\big\}
	\\
	&\qquad -\frac{1}{\tau^3}\E_{\B}\big\{{(\B^2)}^{\top}\big[(\B/\tau-\bfalph)\odot\phi(\bfalph-\B/\tau)-(\bfalph+\B/\tau)\odot\phi(-\bfalph-\B/\tau)\big]\big\}
	\\&
	\qquad - \frac{1}{\tau^3} \mathbb{E}_{\B}\Big\{ [ \bfalph \odot \B^2]^\top [ \phi(  \bfalph + \B/\tau )+ \phi( \bfalph- \B/\tau )] \Big\}
	\\
	&= - \frac{1}{\tau^4} \mathbb{E}_{\B}\big\{[\B^3]^\top \big[\phi( \bfalph - \B/\tau ) -\phi( \bfalph +\B/\tau ) \big]\big\}.
	\label{eq:final_deriv_overall}
	\end{split}
	\end{equation}
	We justify non-positivity of \eqref{eq:final_deriv_overall} by showing that the  elementwise term inside the expectation is less than or equal to $0$.   First assume $B_i \geq 0$, then $\alpha_i - B_i/\tau \leq \alpha_i +B_i/\tau$ and $\phi( \alpha_i - B_i/\tau )\geq \phi( \alpha_i +B_i/\tau )$. The other case $B_i \leq 0$ follows similarly.

	Now \eqref{eq:final_deriv_overall}, implies
	$\frac{\partial}{\partial \tau}\big[ \frac{\partial \F}{\partial \tau^2}(\tau^2, \bfalph \tau) \big] \leq 0$ and therefore, we have shown that $\F(\tau^2,\bfalph \tau)$ defined in \eqref{eq:SE_F}, is concave with respect to $\tau^2$.
	
	
	Next we show that $\tau^2\mapsto \F(\tau^2,\bfalph\tau)$ is strictly increasing.   To do so, it is sufficient to show that  $\frac{\partial \F}{\partial\tau^2} (\tau^2, \bfalph \tau)$ is positive as $\tau \rightarrow \infty$ because the concavity implies that $\frac{\partial \F}{\partial\tau^2} (\tau^2, \bfalph \tau)$ is non-increasing. 
Define $f(\bfalph):=\delta \lim_{\tau\to\infty} \frac{\partial \F}{\partial\tau^2} (\tau^2, \bfalph \tau)$.   First recall that $ \frac{\partial \F}{\partial\tau^2} (\tau^2, \bfalph \tau)$ is given in \eqref{eq:first_partial}.  In particular,
\be
\begin{split}
&\delta \frac{\partial \F}{\partial \tau^2}(\tau^2, \bfalph \tau) 
=\frac{1}{\p} \sum_{i=1}^{\p} \mathbb{E}\Big\{|I_i|\Big([\partial_1 \eta]_i^2+(\eta_i-B_i)[\partial_1^2\eta]_i\Big)- \frac{1}{\tau}[\partial_1 \eta]_i\sgn(\eta_i)( \eta_i - B_i )\sum_{j\in I_i}\alpha_j\Big\},
\label{eq:new_Fpartial}
\end{split}
\ee
Then taking $\tau\rightarrow \infty$  in the above, it is easy to see that  $f(\bfalph)$ is equivalent to setting $\B=\mathbf{0}$ in $\eta(\B + \tau \mathbf{Z} , \bfalph \tau )$ and using that $\eta(\tau \mathbf{Z} , \bfalph \tau )= \tau \eta(\mathbf{Z} , \bfalph)$ (implying that $\partial_1 \eta(\tau \mathbf{Z} , \bfalph \tau )=\partial_1 \eta(\mathbf{Z} , \bfalph)$).  We note that using a simplification of $[\partial_1^2\eta]_i$ as in \eqref{eta'' Dirac}-\eqref{(eta-B)eta''}, means that this term will go to zero as $\tau \rightarrow \infty$.
Therefore, using $\sgn(\eta( \mathbf{Z} , \bfalph  )) \odot  \eta(  \mathbf{Z} , \bfalph )  = | \eta(  \mathbf{Z} , \bfalph ) |$, 
	\begin{equation*}
	\begin{split}
	&f(\bfalph) = \frac{1}{\p}  \sum_{i=1}^{\p} \mathbb{E}\Big\{[D(\eta(\mathbf{Z},\bfalph))]_i ([\partial_1 \eta( \mathbf{Z} , \bfalph  )]_i)^2- [\partial_1 \eta( \mathbf{Z} , \bfalph  )]_i \lvert [\eta( \mathbf{Z} , \bfalph  )]_i \lvert \sum_{j : |[\eta(\mathbf{Z},\bfalph)]_j|=|[\eta(\mathbf{Z},\bfalph)]_i|}\alpha_j\Big\}.
	\end{split}
	\end{equation*}
In the above we have used the following definition: for a vector $\bm v \in \R^{\p}$, define $\bm D$ elementwise as $[\bm D(\bm v)]_i:=\#\{j : |v_j|=|v_i|\}=|I_i|$ if  $v_i\neq 0$ and $\infty$ otherwise.  Using that $\partial_1 \eta(\mathbf{Z} , \bfalph)=\frac{1}{\bm D(\eta(\mathbf{Z},\bfalph))}$, 
	\begin{equation}
	\begin{split}
	f(\bfalph)
	&=   \frac{1}{\p}  \sum_{i=1}^{\p} \E \Big \{\Big(1- |[\eta(\mathbf{Z},\bfalph)]_i|\sum_{j : |[\eta(\mathbf{Z},\bfalph)]_j|=|[\eta(\mathbf{Z},\bfalph)]_i|}\alpha_j \Big)\frac{1}{\bm [D(\eta(\mathbf{Z},\bfalph))]_i}\Big\} 
	\label{eq:f_alpha}
	\end{split}
	\end{equation}
	This simplification can be efficiently computed because only $|\eta(\mathbf{Z},\bfalph)|$ and $\bfalph$ need to be memorized. 
	
	Now considering \eqref{eq:f_alpha}, let $\bfalph \rightarrow \infty$ and note that since $|\mathbf{Z}| < \bfalph$ almost surely as $\bfalph \to \infty$, it follows that $\eta(  \mathbf{Z} , \bfalph )  = \partial_1  \eta(  \mathbf{Z} , \bfalph ) = \mathbf{0}$.  Therefore $
	\lim_{\bfalph \rightarrow \infty} f(\bfalph)  = 0$.  By a very similar argument to the proof of concavity, it is easy to see $f'(\bfalph)<0$, and together these facts imply $f(\bfalph)>0$ for all $\bfalph$.  The monotonicity of $\F$ is now obvious: since $\F$ is concave (implying $ \frac{\partial \F}{\partial\tau^2} (\tau^2, \bfalph \tau)$ is non-increasing) and strictly increasing for $\tau^2$ large enough, it is increasing everywhere.  Moreover, the monotonicity of $\F$ implies the monotonicity of the sequence $\{\tau_t^2(\p)\}_{t \geq 0}$.
	
	Finally we show that there exists a unique $\tau_*$ such that $\F (\tau_*^2, \bfalph \tau_*) = \tau^2_*$, from which it follows that the monotone sequence $\{\tau_t^2(\p)\}_{t \geq 0}$ converges to $\tau_*^2(\p)$ as $t \rightarrow \infty$.  First, by \eqref{eq:f_alpha}, we know
		$f(\mathbf{0})=  \mathbb{E}\norm{   \partial_1 \eta(\tau \mathbf{Z} ,\mathbf{0}) }^2/\p =   \mathbb{E}\norm{  \bm 1}^2/\p  =1.$
	%
This, along with the fact that $f'(\bfalph)<0$, tells us that $0 < f(\bfalph) < 1$ for all $\bfalph$.  Recall the definition of the set $\bm A_{\min}$, namely $\bm A_{\min}:=\{\bfalph: f(\bfalph)=\delta\}$.  We know that this set is non-empty since the LASSO case shows $\bfalph=(\alpha_{\min},\cdots,\alpha_{\min})$ belongs to $\bm A_{\min}$ where $\alpha_{\min}$ is the unique non-negative solution of $(1+\alpha^2)\Phi(-\alpha)-\alpha\phi(\alpha)=\delta/2$.
We write $\bfalph \succeq \bm A_{\min}$ to mean $\bfalph$ is larger than at least one element in $\bm A_{\min}$, where we consider one vector $\bm v$ to be larger than another vector $\bm u$ if $v_i\geq u_i$ for all $i$ and $v_j>u_j$ for some $j$.

To complete the proof, we show that $\F(\tau^2, \bfalph \tau) > \tau^2$ for small enough $\tau^2$ and $\F(\tau^2, \bfalph \tau) < \tau^2$ for large enough $\tau^2$. Therefore, there is at least one $\tau_*$ such that $\F(\tau_*^2, \bfalph \tau_*) = \tau_*^2$ since $\F$ is continuous in $\tau$. 
It follows from the concavity of $\F$ that the solution is unique and the sequence of iterates $\tau_t^2(\p)$ converge to $\tau_*^2(\p)$.
We first show that $\F(\tau^2, \bfalph \tau) > \tau^2$ for small enough $\tau^2$.  Consider the function $G(\tau^2) := \F(\tau^2, \bfalph \tau) - \tau^2$. Recalling the definition of  $\F(\tau^2,\bfalph \tau)$ in \eqref{eq:SE_F},
 namely,
	$
	\F(\tau^2, \bfalph \tau) = \sigma_w^2+   \mathbb{E} \norm{  \prox_{J_{\bfalph \tau}}(\B + \tau \mathbf{Z}) - \B}^2/(\delta \p),
	$
clearly $\F(0,\bm 0)=\sigma_w^2\geq 0$ and therefore $G(0) =\sigma_w^2\geq 0$ (with equality only if $\sigma_w^2= 0$).
Now we show that $\F(\tau^2, \bfalph \tau) < \tau^2$ for large enough $\tau^2$. Since $f(\bfalph)$ is decreasing in $\bfalph$, for $\bfalph \succeq \bm A_{\min}$, it must be that $f(\bfalph) < \delta$.  Moreover, $\lim_{\tau\to\infty} \frac{\partial \F}{\partial\tau^2} (\tau^2, \bfalph \tau) = \frac{1}{\delta} f(\bfalph) \leq 1$ for $\bfalph \succeq \bm A_{\min}$. Therefore, $\lim_{\tau\to\infty} \frac{\partial G}{\partial\tau^2}(\tau^2) \leq 0$ meaning $G$ is eventually decreasing (as $\tau^2$ grows) for any $\bfalph \succeq \bm A_{\min}$.  Also, $G(\tau^2)$ is concave and therefore for $\tau^2$ large enough we will have $G(\tau^2) < 0$, in which case $\F(\tau^2, \bfalph \tau) < \tau^2$.

Finally, $\big \lvert \frac{\partial \F}{\partial\tau^2}(\tau^2,\bfalph\tau) \big \lvert $ evaluated at at $\tau^2 = \tau_*^2$ is upper bounded by $1$ when $\bfalph \succeq \bm A_{\min}$, as the concavity of $\F$ implies that $ \frac{\partial \F}{\partial\tau^2}(\tau^2,\bfalph\tau)$ is strictly decreasing in $\tau^2$ along with $\lim_{\tau\to\infty} \frac{\partial \F}{\partial\tau^2} (\tau^2, \bfalph \tau) = \frac{1}{\delta} f(\bfalph) \leq 1$ when $\bfalph \succeq \bm A_{\min}$.  If this were not the case then there would be multiple fixed points.

\end{proof}

\subsection{Proving Proposition \ref{inverse_possible}}

\begin{proof}[Proof of Proposition \ref{inverse_possible}]
	
This proof is a generalized result of \cite[Proposition 1.4]{lassorisk} (originally proved in \cite{donoho2011noise}) and \cite[Corollary 1.7]{lassorisk}. Here we fixed $\p$ and denote $\tau(\p)$ as $\tau$.  

Recall in the proof of Theorem \ref{thm:SE1} we have shown the following facts: \textbf{(A)} $0 < \lim_{\tau^2\goto\infty}\frac{\partial \F}{\partial \tau^2} (\tau^2,\bfalph\tau)<1$; \textbf{(B)} $\tau^2\mapsto \F(\tau^2,\bfalph\tau)$ is concave; \textbf{(C)} $\tau^2\mapsto \F(\tau^2,\bfalph\tau)$ is strictly increasing; and \textbf{(D)} $ \frac{\partial \F}{\partial \tau^2}(\tau^2,\bfalph\tau)$ evaluated at $\tau = \tau_*$, which we denote $ \frac{\partial \F}{\partial \tau^2}(\tau_*^2,\bfalph\tau_*)$ is such that $0 < \frac{\partial \F}{\partial \tau^2}(\tau_*^2,\bfalph\tau_*) < 1$.

First we claim $\bfalph\mapsto\tau_*^2(\bfalph)$ is continuously differentiable on $\mathbb{R}_{+}^p$.
This follows from the implicit function theorem on function $G(\bfalph, \tau^2):=\tau^2-\F(\tau^2,\bfalph \tau)$ and from Fact \textbf{(D)}: $G$ is continuously differentiable and $0<\frac{\partial G}{\partial \tau^2}<1$. Hence $\tau^2$ can be written as $\tau^2(\bfalph)$ which is continuously differentiable.
Defining
$
	g(\bfalph,\tau^2):= \bfalph\tau\big[1 -
	\frac{1}{\n} \E \norm{  \prox_{J_{\bfalph \tau}}(\B + \tau \mathbf{Z})}_0^*\big],
$
notice that $\blam(\bfalph)=g(\bfalph,\tau_*^2(\bfalph))$. Clearly $g$ is continuously differentiable in $\bfalph$ and so is $\bfalph\mapsto\blam(\bfalph)$.

In the next step, we consider $\bfalph \succeq \bm A_{\min}(\delta)$ such that $\bfalph \goto \bm a_{\min}$ for some $\bm a_{\min} \in \bm A_{\min}(\delta)$ (denote as $\bfalph\downarrow \bm A_{\min}(\delta)$). We claim $\tau_*^2(\bfalph)\to +\infty$ as $\bfalph\downarrow \bm A_{\min}(\delta)$. Recall, $f(\bfalph):=\delta \lim_{\tau\to\infty} \frac{\partial \F}{\partial\tau^2} (\tau^2, \bfalph \tau)$ (cf.\ Theorem \ref{thm:SE1}).  Then by concavity of $\F(\tau^2,\bfalph \tau)$ in $\tau$,
\begin{align*}
&\tau_{*}^2 = \F(\tau_{*}^2,\bfalph \tau_*)\geq \F(0,\bm 0)+\tau_*^2 \lim_{\tau^2\to\infty}\frac{\partial \F}{\partial \tau^2}(\tau^2,\bfalph \tau) =\F(0,\bm 0)+\frac{1}{\delta}\tau_*^2 f(\bfalph) \quad  \Rightarrow  \quad \tau_*^2\geq \frac{\F(0,\bm 0)}{1-f(\bfalph)/\delta}
\end{align*}
Recall $\F(0,\bm 0)=\sigma_w^2$ and $f(\bm a_{\min})=\delta$ for any $\bm a_{\min} \in \bm A_{\min}(\delta)$. Hence $\tau_*^2(\bfalph)\to +\infty$ as $\bfalph\downarrow \bm A_{\min}(\delta)$.

Define $\ell(\bfalph):= 1 - \frac{1}{n}\E\| \prox_{J_{\bfalph \tau_*}}(\mathbf{B}+\tau_* \mathbf{Z})\|_0^*$. Then when $\tau_*^2(\bfalph)\to +\infty$ as $\bfalph\downarrow \bm A_{\min}(\delta)$,
\begin{eqnarray*}
	\ell_*:=\lim_{\bfalph\to \bm a_{\min}} \ell(\bfalph)= \lim_{\bfalph\to \bm a_{\min}}  \Big(1 - \frac{1}{n}\E\| \prox_{J_{\bfalph \tau_*}}(\tau_* \mathbf{Z})\|_0^*\Big) = 1-\frac{1}{n}\E\| \prox_{J_{\bm a_{\min}}}(\mathbf{Z})\|_0^*
	\, .
\end{eqnarray*}
We claim that $\ell_*<0$. Using the definition of the vector $\bm D$ and the set $\bm A_{\min}(\delta)$ in \eqref{eq:littlef_def},
\begin{align*}
\ell_*&=1-\frac{1}{n}\E\| \prox_{J_{\bm a_{\min}}}(\mathbf{Z})\|_0^*=1-\frac{1}{\delta}\E\Big\langle\frac{1}{\bm D(\prox_{J_{\bm a_{\min}}}(\mathbf{Z}))}\Big \rangle\\
&< 1-\frac{1}{\delta \p}\sum_i\E\Big\{\frac{1}{[\bm D(\prox_{J_{\bm a_{\min}}}(\mathbf{Z}))]_i}\Big(1-\sum_{j\in I_i}[\bm a_{\min}]_j\cdot  |[\prox_{J_{\bm a_{\min}}}(\mathbf{Z})]_i|\Big)\Big\}=0,
\end{align*} 
where (writing $\bm\eta$ to mean $\prox_{J_{\bm a_{\min}}}(\mathbf{Z})$ and $\bm D$ to mean $\bm D(\eta)$) the inequality in the above uses the fact that
\begin{align*}
&\frac{1}{\bm D_i} - \frac{1}{\bm D_i}\Big(1-\sum_{j\in I_i}[\bfalph_{\min}]_j |\bm\eta_i|\Big)=\frac{1}{\bm D_i}\sum_{j\in I_i}[\bfalph_{\min}]_j |\bm\eta_i|
\geq 0.
\end{align*}
Notice in the above, the equality only holds when $\bm\eta_i = 0$ but $\bm\eta\neq \bm 0$ almost surely. Therefore, using that $\blam(\bfalph)=g(\bfalph,\tau_*^2(\bfalph)) =  \bfalph \tau_*(\bfalph)\big[1 -
	\frac{1}{\n} \E \norm{  \prox_{J_{\bfalph \tau_*(\bfalph)}}(\B + \tau_*(\bfalph) \mathbf{Z})}_0^*\big],$
\begin{align}
	\lim_{\bfalph\downarrow \bm A_{\min}(\delta)}\blam(\bfalph) = \ell_*\cdot\lim_{\bfalph\downarrow \bm A_{\min}(\delta)}	\bfalph\tau_*(\bfalph)= -\infty\, .
	\label{range1}
\end{align}
Finally we consider the case $\bfalph\goto\infty$ and observe $\tau_*^2(\bfalph)\to \sigma_w^2+\E\{B^2\}/\delta$. To see this, notice that $\F(\tau^2,\bfalph \tau)\goto\sigma_w^2 +\E\{B^2\}/\delta$ as $\bfalph\goto\infty$ since $\tau_*^2(\bfalph) = \F(\tau_*^2(\bfalph),\bfalph \tau_*(\bfalph))$ is bounded above.
Moreover, since $\tau_*(\bfalph)$ is bounded, $\bfalph \tau_*(\bfalph)$ is unbounded as $\bfalph\goto\infty$ and we have $\lim_{\bfalph\to\infty}\ell(\bfalph)=1$ whence
\begin{align}
	\lim_{\bfalph\to\infty}\blam(\bfalph) = 1\cdot\lim_{\bfalph\to\infty}
	\bfalph\tau_*(\bfalph) = \infty\, .
	\label{range2}
\end{align}

We pause here to summarize that $\bfalph\mapsto\blam(\bfalph)$ is continuously differentiable on the domain $\{\bfalph: \bfalph\succeq \bm A_{\min}(\delta)\}$ with $\blam(\bm A_{\min}(\delta))=-\infty$ and $\lim_{\bfalph\to\infty}\blam(\bfalph)=+\infty$.
\\\\
Now to prove the inverse mapping $\blam\mapsto\bfalph(\blam)$ is continuous and non-decreasing when $\p\goto\infty$, we claim that the invertibility of $\bfalph\mapsto\blam(\bfalph)$ is sufficient.  Precisely, \textbf{(1)} invertibility implies strict monotonicity; \textbf{(2)} monotonicity plus \eqref{range1} and \eqref{range2} implies both $\bfalph\mapsto\blam(\bfalph)$ and $\blam\mapsto\bfalph(\blam)$ are increasing; and \textbf{(3)} continuity of $\bfalph\mapsto\blam(\bfalph)$ implies continuity of $\blam\mapsto\bfalph(\blam)$.

Now we prove the invertibility by contradiction. Assume that there are two distinct
such values $\bfalph_1$, $\bfalph_2$ satisfying $\widetilde{\blam}=\blam(\bfalph_1)=\blam(\bfalph_2)$. Apply Theorem \ref{thm:main_result3}
to both $\bfalph(\widetilde{\blam})=\bfalph_1, \bfalph_2$ with $\psi(\bm x,\bm y) = \langle(\bm x-\bm y)^2\rangle$. Then, together with Corollary \ref{corol1},
\begin{eqnarray*}
	\plim_{\p\to\infty}\|\hat{\bet}-\bet\|^2/\p =\plim_{\p\to\infty} \E\langle\|\prox_{J_{\bfalph\tau_*}}(\bet+\tau_* \mathbf{Z}\,;\,
	\bfalph\tau_*)-\bet\|_2^2\rangle = \delta(\tau_*^2-\sigma_w^2)\, .
\end{eqnarray*}
Since $\plim_{\p\to\infty}\|\hat{\bet}-\bet\|^2/\p$ is independent of $\bfalph$, the right side gives 
$
\tau_*(\bfalph_1) = \tau_*(\bfalph_2).
$
Next apply Theorem \ref{thm:main_result3} with $\psi(\bm x,\bm y) = \langle|\bm x|\rangle$, giving
$
	\plim_{\p\to\infty}\|\hat{\bet}\|_1/\p =\plim_{\p\to\infty} \E\langle\|\prox_{J_{\bfalph\tau_*}}(\bet+\tau_* \mathbf{Z}\,;\,
	\bfalph\tau_*)\|_1\rangle \, .
$
Obviously, for $\tau_*$ and $p$ fixed,
$\thet\mapsto\E\langle\|\prox_{J_{\bfalph\tau_*}}(\bet+\tau_* \mathbf{Z}\,;\,
\thet)\|_1\rangle$
 is strictly decreasing in $\thet$. Therefore $\bfalph_1\tau_*(\bfalph_1)=\bfalph_2\tau_*(\bfalph_2)$ implying $\bfalph_1=\bfalph_2$, since $\tau_*(\bfalph_1) = \tau_*(\bfalph_2)$, which is a contradiction.
 
\end{proof}


\section{Verifying Properties \textbf{(P1)} and \textbf{(P2)}}\label{app_props}

In this appendix we demonstrate that  the properties \textbf{(P1)} and \textbf{(P2)} given in Section \ref{sec_mainproof} and relating to the denoiser $\eta_{\p}^t(\cdot)$ defined in \eqref{eq:eta_def} are true.

\begin{proof}[Verifying Properties \textbf{(P1)} and \textbf{(P2)}]
Property \textbf{(P1)} follows since $\eta_{\p}^t(\cdot) = \prox_{J_{\bfalph \tau_t}}(\cdot)$, as it is easy to show that proximal operators are Lipschitz continuous with Lipschitz constant one. Namely
\[||\eta_{\p}^t(\mathbf{v}_1) - \eta_{\p}^t(\mathbf{v}_2)|| = ||\prox_{J_{\bfalph \tau_t}}(\mathbf{v}_1) - \prox_{J_{\bfalph \tau_t}}(\mathbf{v}_2)|| \leq ||\mathbf{v}_1 - \mathbf{v}_2||.\]

Next we show that property \textbf{(P2)} is true.  We restate property \textbf{(P2)} for convenience: for any $s, t$ with $(\mathbf{Z}, \mathbf{Z}')$ a pair of length-$\p$ vectors such that $(Z_i, Z'_i)$ are i.id.\ $\sim \mathcal{N}(0, \mathbf{\Sigma})$ for $i \in [\p]$ where $\mathbf{\Sigma}$ is any $2 \times 2$ covariance matrix, the following limits exist and are finite.
\begin{align}
&\plim_{\p\to\infty}\frac{1}{\p} \norm{\bet}, \quad \qquad \plim_{\p\to\infty}\frac{1}{\p} \mathbb{E}_{\mathbf{Z}}[\bet^\top \eta_{\p}^t(\bet + \mathbf{Z})], \quad \qquad  \plim_{\p\to\infty}\frac{1}{\p} \mathbb{E}_{\mathbf{Z}, \mathbf{Z}'}[\eta_{\p}^s(\bet + \mathbf{Z}')^\top \eta_{\p}^t(\bet + \mathbf{Z})].  \label{eq:lim1}
\end{align}

We first note that the first limit in \eqref{eq:lim1} exists by Assumption \textbf{(A2)} and the strong law of large numbers.  We focus on the other two limits.  These results follow by \cite[Proposition 1]{SLOPEasymptotic} given in Lemma \ref{lem:yue_sep} and  the following lemma, which is a classic result in probability theory.

\begin{lemma}[Doob's $L^1$ maximal inequality, \cite{doob1953stochastic} Chapter VII, Theorem 3.4] \label{eq:maximal}
Let $ X_1, X_2, \dots, X_p $ be a sequence of nonnegative i.i.d.\ random variables such that $ \mathbb{E}[X_1\max\{0, \log(X_1)\}] < \infty $. Then,
$$
\mathbb{E}\Big[\sup_{p \geq 1} \Big\{\frac{1}{p}(X_1+X_2+\cdots+ X_p)\Big\}\Big] \leq \frac{e}{e-1}(1+\mathbb{E}[X_1\max\{0, \log(X_1)\}]).
$$
\end{lemma}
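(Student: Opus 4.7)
The plan is to follow the classical two-step route pioneered by Doob: establish a weak-type $(1,1)$ maximal inequality and then integrate it through a layer-cake representation, closing the argument with a Young-type inequality. Writing $S_p := X_1 + \cdots + X_p$ and $M := \sup_{p \geq 1} S_p/p$, the first (and hardest) step is to show
$$
\lambda \, \mathbb{P}(M > \lambda) \leq \mathbb{E}[X_1 \mathbb{1}_{\{M > \lambda\}}]
$$
for every $\lambda > 0$. The cleanest route is to view $(X_i)$ as a stationary ergodic sequence and apply the Hardy--Littlewood / Wiener maximal inequality for the ergodic averages; alternatively, one can proceed directly via a reverse-martingale / stopping-time decomposition on the excursions of $S_p/p$ above level $\lambda$, using the i.i.d.\ structure to replace $S_p$ on the stopping set by $p X_1$ in expectation by exchangeability.

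Given the weak-type bound, the second step is routine. Layer-cake plus Fubini give
$$
\mathbb{E}[M] \;=\; \int_0^\infty \mathbb{P}(M > \lambda) \, d\lambda \;\leq\; 1 + \int_1^\infty \frac{1}{\lambda} \mathbb{E}\bigl[X_1 \mathbb{1}_{\{M > \lambda\}}\bigr] \, d\lambda \;=\; 1 + \mathbb{E}\bigl[X_1 \log^+ M\bigr],
$$
where $\log^+ x := \max\{0,\log x\}$.

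The third step closes the bound. Apply the Young--Fenchel inequality associated with the convex pair $\phi(u) = u\log u - u + 1$ and $\phi^*(v) = e^v - 1$, or equivalently the elementary fact that for $a \geq 0$ and $b \geq 1$,
$$
a \log^+ b \;\leq\; a \log^+ a + \tfrac{1}{e}\, b.
$$
Taking $a = X_1$ and $b = M$ and inserting the resulting inequality
$\mathbb{E}[X_1 \log^+ M] \leq \mathbb{E}[X_1 \log^+ X_1] + \tfrac{1}{e}\mathbb{E}[M]$
into the previous display gives
$\mathbb{E}[M] \leq 1 + \mathbb{E}[X_1 \log^+ X_1] + \tfrac{1}{e}\mathbb{E}[M]$, which rearranges to the claimed bound with constant $e/(e-1)$ after checking that $\mathbb{E}[M] < \infty$ can be assumed (by truncation) so that the subtraction is legitimate.

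The main obstacle is the weak-type inequality of the first step; everything after it is bookkeeping. I would prefer to cite the ergodic maximal theorem to dispatch it, since tracking the sharp constant through a stopping-time argument for $S_p/p$ (which is a reverse submartingale rather than a forward submartingale) is delicate, and a naive appeal to Doob's forward $L^1$ maximal inequality applied to $S_p$ itself would not yield the correct normalization by $p$. A secondary technical issue is the finiteness of $\mathbb{E}[M]$ needed to justify the final rearrangement; this is handled by replacing $M$ with $M \wedge N$, running the full argument, and letting $N \to \infty$ via monotone convergence.
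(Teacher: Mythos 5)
Your proof is correct, but it takes a genuinely different route from the paper's. You re-derive the $L\log L$ bound from scratch: a weak-type $(1,1)$ inequality for the running maximum (which you propose to get from the Wiener/Hardy--Littlewood ergodic maximal theorem, noting that for i.i.d.\ sequences the shift is measure-preserving and the averages are exactly $S_p/p$), then a layer-cake integration to reach $\mathbb{E}[M]\le 1+\mathbb{E}[X_1\log^+ M]$, and then the Young-type inequality $a\log^+ b\le a\log^+ a+b/e$ to close, with a truncation $M\wedge N$ to legitimize the rearrangement. The paper instead observes that $M_p=S_p/p$ is a \emph{reverse} martingale (they write ``submartingale''), applies Doob's $L\log L$ submartingale maximal inequality as a black box to the finite reversed sequence $M_{p'},\dots,M_1$, uses convexity/Jensen to pass from $\mathbb{E}[M_{p'}\log^+ M_{p'}]$ to $\mathbb{E}[X_1\log^+ X_1]$, and lets $p'\to\infty$ via Fatou. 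Both are valid. Yours is more self-contained (no appeal to the $L\log L$ theorem itself) at the cost of invoking the ergodic maximal inequality to establish the weak-type bound, which you sketch rather than prove. One small note: you slightly overstate the delicacy of the reverse-martingale alternative --- because $(S_p/p)$ is a genuine backward \emph{martingale}, the reversed sequence $M_{p'},M_{p'-1},\dots,M_1$ is a forward martingale whose last element is $M_1=X_1$, so Doob's $L\log L$ inequality already gives the bound in terms of $\mathbb{E}[X_1\log^+ X_1]$ directly, with no constant-chasing required; the only extra step the paper needs is the (easy) Jensen observation, which it includes.
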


\begin{proof}
Let $ M_p = \frac{1}{p}(X_1+X_2+\cdots+ X_p) $. Then the sequence $ \{M_p\} $ is a submartingale and hence by Doob's maximal inequality,
$$
\mathbb{E}\Big[\sup_{p' \geq p \geq 1} M_p\Big] \leq \frac{e}{e-1}(1+\mathbb{E}[M_{p'}\max\{0, \log(M_{p'})\}]).
$$
Note the mapping $ x \mapsto x\max\{0, \log x \} $ is convex and hence $ \mathbb{E}[M_{p'}\max\{0, \log(M_{p'})\}]) \leq \mathbb{E}[X_1\max\{0, \log(X_1)\}] $.
The result follows by Fatou's lemma and by noting that $ \sup_{p' \geq p \geq 1} M_p \uparrow \sup_{p \geq 1} M_p $ as $ p' \rightarrow \infty $.
\end{proof}

Before we prove that the second and third limits in \eqref{eq:lim1} exist and are finite, we state one more result that will be helpful in the proof.  This result uses Lemma \ref{eq:maximal} along with a Dominated Convergence argument to study expectations taken with respect to $(\bm Z, \bm Z')$ like those in \eqref{eq:lim1}.
\begin{lemma}
\label{lem:expectations}
Consider a function $\psi_{\p}: \R^{\p} \times \R^{\p} \times  \R^{\p} \rightarrow \R$ such that for iterations $s, t \geq 0,$
\be
 \frac{1}{\p} \Big \lvert \psi_{\p}(\bet, \eta^s_{\p}( \bet + \bm Z), \eta^t_{\p}( \bet + \bm Z')) - \psi_{\p}(\bet, h^s( \bet + \bm Z), h^t( \bet + \bm Z')) \Big \lvert  \rightarrow 0, \quad \text{ as } \quad \p \rightarrow \infty,
 \label{eq:exp_assumption1}
 \ee
where $h^s, h^t$ are the unspecified functions of Lemma~\ref{lem:yue_sep}, and $(\bm Z, \bm Z')$ are independent Gaussian vectors having zero-mean and independent entries with finite variance.  Assume, for some constant $L > 0$ not depending on $\p$,
\be
 \frac{1}{\p} \Big \lvert  \psi_{\p}(\bet, \eta^s_{\p}( \bet + \bm Z), \eta^t_{\p}( \bet + \bm Z'))- \psi_{\p}(\bet, h^s( \bet + \bm Z), h^t( \bet + \bm Z')) \Big \lvert \leq L\Big(1 + \frac{\norm{\bet}^2}{\p} + \frac{\norm{\bm Z}^2}{\p} + \frac{\norm{\bm Z'}^2}{\p}\Big).
 \label{eq:exp_assumption2}
 \ee
Then, as $\p \rightarrow \infty$,
\be
 \frac{1}{p} \Big \lvert \E_{\bm Z, \bm Z'}\Big\{ \psi_{\p}(\bet, \eta^s_{\p}( \bet + \bm Z), \eta^t_{\p}( \bet + \bm Z)) \Big\} - \E_{\bm Z, \bm Z'} \Big\{\psi_{\p}(\bet, h^s( \bet + \bm Z), h^t( \bet + \bm Z'))\Big\} \Big \lvert  \rightarrow 0.
 \label{eq:exp_result1}
 \ee
\end{lemma}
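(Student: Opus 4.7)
The plan is to reduce the statement to a dominated convergence argument on the probability space of the Gaussian noise variables $(\bm Z,\bm Z')$. First, by the triangle inequality (bringing the absolute value inside the integral), the left-hand side of \eqref{eq:exp_result1} is bounded above by $\E_{\bm Z,\bm Z'}[D_p]$, where $D_p$ denotes the $p^{-1}$-scaled integrand appearing on the left of \eqref{eq:exp_assumption1}. Hypothesis \eqref{eq:exp_assumption1} delivers pointwise convergence $D_p\to 0$ (on the underlying product probability space, with $\bm Z,\bm Z'$ viewed as truncations of infinite i.i.d.\ sequences), so the task reduces to justifying an interchange of limit and expectation.

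The main obstacle is that the pointwise bound in \eqref{eq:exp_assumption2} depends on $p$, which rules out a direct appeal to DCT with a fixed dominating function. To get around this, I would pass to the $p$-uniform envelope
\[
G := L\Big(1 + \sup_{p\geq 1} \frac{\|\bet\|^2}{p} + \sup_{p\geq 1} \frac{\|\bm Z\|^2}{p} + \sup_{p\geq 1} \frac{\|\bm Z'\|^2}{p}\Big),
\]
which pointwise dominates $D_p$ for every $p$, and then verify $\E_{\bm Z,\bm Z'}[G]<\infty$ almost surely in $\bet$. This is precisely where Lemma \ref{eq:maximal} (Doob's $L^1$ maximal inequality) enters: applied to the i.i.d.\ sums $\sum_i Z_i^2$ and $\sum_i (Z_i')^2$, together with the elementary fact that $\E[Z_1^2\max\{0,\log Z_1^2\}]<\infty$ for a standard normal, it yields the finiteness of $\E[\sup_p\|\bm Z\|^2/p]$ and $\E[\sup_p\|\bm Z'\|^2/p]$. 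The $\bet$-contribution is deterministic under $\E_{\bm Z,\bm Z'}$ and a.s.\ finite by the strong law of large numbers under assumption \textbf{(A2)}; if one later wished to take an outer expectation over $\bet$, the same Doob inequality handles it, since \textbf{(A2)} supplies exactly the required $x\log x$-moment on $B$.

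With the integrable envelope $G$ in hand, a standard application of the Dominated Convergence Theorem on the joint probability space of $(\bm Z,\bm Z')$ yields $\E_{\bm Z,\bm Z'}[D_p]\to 0$, which combined with the triangle inequality reduction of the first paragraph gives the desired convergence \eqref{eq:exp_result1}. The only subtle point — and indeed the reason Lemma \ref{eq:maximal} is recorded in the paper — is the uniform-in-$p$ integrability of Gaussian norm-squared averages, which relies on the $x\log x$-type moment hypothesis built into Doob's inequality; the remainder of the argument is an entirely routine DCT application.
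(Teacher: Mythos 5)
Your proposal is correct and follows essentially the same route as the paper: pass to a $p$-uniform envelope by taking a supremum over $p$ of the bound in~\eqref{eq:exp_assumption2}, use Doob's $L^1$ maximal inequality (Lemma~\ref{eq:maximal}) to verify its integrability, and then invoke dominated convergence together with the pointwise convergence~\eqref{eq:exp_assumption1}. The one small point of divergence---observing that under $\E_{\bm Z,\bm Z'}$ the $\bet$-term is deterministic and its a.s.\ boundedness in $p$ already follows from the strong law---is a harmless simplification that does not change the argument.
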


\begin{proof}
We begin by showing that 
$\E_{\bm Z, \bm Z'}  \big\{\sup_{p\geq 1} \frac{1}{p} \big \lvert \psi_{\p}(\bet, \eta^s_{\p}( \bet + \bm Z), \eta^t_{\p}( \bet + \bm Z'))  \big \lvert \big\} < \infty.$ 
Using \eqref{eq:exp_assumption2}, it is clear that this expectation is finite almost surely if
\ben
\mathbb{E}\Big[\sup_{p \geq 1} \Big\{\frac{1}{p}\norm{\mathbf{Z}(\p)}^2\Big\}\Big] < \infty, \quad \mathbb{E}\Big[\sup_{p \geq 1} \Big\{\frac{1}{p}\norm{\mathbf{Z}'(\p)}^2\Big\}\Big] < \infty,  \quad \text{ and } \quad  \mathbb{E}\Big[\sup_{p \geq 1} \Big\{\frac{1}{p}\norm{\bet(\p)}^2\Big\}\Big] < \infty,
\een
where we have made the dependence of the vectors on the dimension $\p$ explicit.
But Lemma \ref{eq:maximal} immediately implies the above since $ \mathbb{E}[B^2\max\{0, \log B\}] < \infty $ by assumption \textbf{(A2)}.

Now by dominated convergence we have,
\begin{align*}
&\E_{\bm Z, \bm Z'}  \Big \{\plim_p \frac{1}{p}  \Big \lvert \psi_{\p}(\bet, \eta^s_{\p}( \bet + \bm Z), \eta^t_{\p}( \bet + \bm Z')) - \psi_{\p}(\bet, h^s( \bet + \bm Z), h^t( \bet + \bm Z')) \Big \lvert\Big\} \\
&= \plim_p \frac{1}{p} \E_{\bm Z, \bm Z'} \Big \lvert \psi_{\p}(\bet, \eta^s_{\p}( \bet + \bm Z), \eta^t_{\p}( \bet + \bm Z')) - \psi_{\p}(\bet, h^s( \bet + \bm Z), h^t( \bet + \bm Z')) \Big \lvert\\
&\geq \plim_p \frac{1}{p} \Big \lvert \E_{\bm Z, \bm Z'}\Big\{ \psi_{\p}(\bet, \eta^s_{\p}( \bet + \bm Z), \eta^t_{\p}( \bet + \bm Z)) \Big\} - \E_{\bm Z, \bm Z'} \Big\{\psi_{\p}(\bet, h^s( \bet + \bm Z), h^t( \bet + \bm Z'))\Big\} \Big \lvert.
\end{align*}
Then the above implies the desired result \eqref{eq:exp_result1} from assumption \eqref{eq:exp_assumption1}.
\end{proof}

First consider the second limit in \eqref{eq:lim1}. By Cauchy-Schwarz, \eqref{eq:yue_limit} of Lemma \ref{lem:yue_sep} implies that
$\big \lvert \bet^\top \eta^t_{\p}( \bet + \bm Z) - \bet^\top h^t( \bet + \bm Z) \big \lvert/\p  \rightarrow 0,$ as $\p \rightarrow \infty.$
This follows because
\[ \big \lvert \bet^\top \eta^t_{\p}( \bet + \bm Z) - \bet^\top h^t( \bet + \bm Z) \big \lvert/\p \leq \norm{\bet} \norm{\eta^t_{\p}( \bet + \bm Z) - h^t( \bet + \bm Z)}/\p.\]
Then the right side of the above $\goto 0$ with growing $\p$ because ${\norm{\bet}}/{\sqrt{\p}}$ limits to a constant as justified above (this is the limit in \eqref{eq:lim1}), and the other term $\goto 0$ by  \eqref{eq:yue_limit} of  Lemma \ref{lem:yue_sep}.  This means that assumption \eqref{eq:exp_assumption1} of Lemma~\ref{lem:expectations} is satisfied.  Assumption \eqref{eq:exp_assumption2} of Lemma~\ref{lem:expectations} is also satisfied since both $ \eta^t_{\p} $ and $ h^t $ are Lipschitz(1), by Cauchy-Schwarz inequality.
Therefore Lemma~\ref{lem:expectations} implies
$  \big \lvert  \E_{\bm Z} \{\bet^\top \eta^t_{\p}( \bet + \bm Z)\} -  \E_{\bm Z} \{\bet^\top h^t( \bet + \bm Z) \} \big \lvert/\p \rightarrow 0$, as $\p \rightarrow \infty.$
Therefore,
\begin{align*}
 \plim_{\p\to\infty} \mathbb{E}_{\mathbf{Z}}[\bet^\top\eta_{\p}^t(\bet + \mathbf{Z})]/\p
 &=\plim_{\p\to\infty} \sum_{i=1}^{\p}\beta_{0,i}  \mathbb{E}_{Z}\{h^t (\beta_{0,i}+Z_i)\}/\p= \mathbb{E}[B h^t(B+Z)],
\end{align*}
where $B, Z$ are univariate.
By the Cauchy-Schwarz inequality, $\E[Bh^t(B+Z)] < \infty$ if $\E[B^2]<\infty$ and $\E[h^t(B+Z)^2]<\infty$. Since $\E[B^2] = \sigma_{\bet}^2 <\infty$ is given by our assumption, it suffices to show $\E[h^t(B+Z)^2]<\infty$.  But this follows from the fact that $h^t(\cdot)$ is Lipschitz(1) and therefore $\E[h^t(B+Z)^2]<\E[(B+Z)^2] \leq \E[B^2] + \E[Z^2] = \sigma_{\bet}^2  + \Sigma_{11} < \infty.$

Finally consider the third limit in \eqref{eq:lim1}.  Similarly to the work in studying the second limit in \eqref{eq:lim1}, we will appeal to Lemma~\ref{lem:expectations}. First we will show that
\begin{align}
  \big \lvert \eta_{\p}^s(\bet + \mathbf{Z}')^\top \eta_{\p}^t(\bet + \mathbf{Z}) - h^s(\bet + \mathbf{Z}')^\top h^t(\bet + \mathbf{Z})  \big \lvert/\p  \rightarrow 0,  \quad \text{ as } \quad \p \rightarrow \infty,
 \label{eq:P2_res2}
 \end{align}
meaning that assumption \eqref{eq:exp_assumption1} of Lemma~\ref{lem:expectations} is satisfied.  Then, again, assumption \eqref{eq:exp_assumption2} of Lemma~\ref{lem:expectations} is satisfied since both $ \eta^t_{\p} (\cdot)$ and $ h^t (\cdot)$ are Lipschitz(1), using Cauchy-Schwarz. 

Now we want to prove \eqref{eq:P2_res2}.  
By repeated applications of Cauchy-Schwarz it is not hard to show,
\begin{align*}
&\plim_{\p} \big \lvert \eta_{\p}^s(\bet + \mathbf{Z}')^\top \eta_{\p}^t(\bet + \mathbf{Z}) - h^s(\bet + \mathbf{Z}')^\top h^t(\bet + \mathbf{Z})  \big \lvert  /\p \\
&\leq \plim_{\p} \norm{h^s(\bet + \mathbf{Z}')} \norm{ \eta_{\p}^t(\bet + \mathbf{Z})   -  h^t(\bet + \mathbf{Z})}/\p +  \plim_{\p} \norm{h^t(\bet + \mathbf{Z})} \norm{ \eta_{\p}^s(\bet + \mathbf{Z}')   -  h^s(\bet + \mathbf{Z}')}/\p \\
&\qquad + \plim_{\p} \norm{\eta_{\p}^s(\bet + \mathbf{Z}') - h^s(\bet + \mathbf{Z}')} \norm{\eta_{\p}^t(\bet + \mathbf{Z}) - h^t(\bet + \mathbf{Z})}/\p .
\end{align*}
Now, \eqref{eq:P2_res2} follows since the right side of the above goes to $0$ as $\p$ grows.  This follows since, by  \eqref{eq:yue_limit} of  Lemma \ref{lem:yue_sep}, as $\p \rightarrow \infty,$
\[\norm{\eta_{\p}^s(\bet + \mathbf{Z}') - h^s(\bet + \mathbf{Z}')} /\sqrt{\p} \rightarrow 0 \quad \text{ and } \quad \norm{ \eta_{\p}^t(\bet + \mathbf{Z})   -  h^t(\bet + \mathbf{Z})}/\sqrt{\p} \rightarrow 0.\]
Moreover, since $h^s(\cdot)$ and $h^t(\cdot)$ are separable, by the Law of Large Numbers,
\ben
\begin{split}
&\plim_{\p} \norm{h^s(\bet + \mathbf{Z}')}^2/\p = \plim_{\p}  \sum_{i=1}^{\p} [h^s(\beta_{i} + Z'_i)]^2/\p = \mathbb{E}[(h^s(B + Z'))^2]   < \infty, \\
&\plim_{\p} \norm{h^t(\bet + \mathbf{Z})}^2/\p= \plim_{\p}\sum_{i=1}^{\p} [h^t(\beta_{i} + Z_i)]^2/\p = \mathbb{E}[(h^t(B + Z))^2]   < \infty,
\end{split}
\een
where the inequalities follow since $\mathbb{E}[(h^s(B + Z'))^2] \leq \mathbb{E}[(B + Z')^2] \leq \sigma_{\bet}^2 + \Sigma_{22} < \infty$ and $\mathbb{E}[(h^t(B + Z))^2] \leq \mathbb{E}[(B + Z)^2] \leq \sigma_{\bet}^2 + \Sigma_{11} < \infty$.  This proves \eqref{eq:P2_res2} and therefore we can apply Lemma~\ref{lem:expectations}.

Then Lemma~\ref{lem:expectations} implies, 
\[ \big \lvert  \E_{\bm Z, \bm Z'}\{\eta_{\p}^s(\bet + \mathbf{Z}')^\top \eta_{\p}^t(\bet + \mathbf{Z})\} -  \E_{\bm Z, \bm Z'}\{h^s(\bet + \mathbf{Z}')^\top h^t(\bet + \mathbf{Z}) \} \big \lvert/\p \rightarrow 0,  \quad \text{ as } \quad \p \rightarrow \infty.\]
But now, using the above, we find that
\begin{align*}
 \plim_{\p\to\infty} \E_{\bm Z, \bm Z'}\{\eta_{\p}^s(\bet + \mathbf{Z}')^\top \eta_{\p}^t(\bet + \mathbf{Z})\}/\p 
 &=\plim_{\p\to\infty} \sum_{i=1}^{\p} \E_{\bm Z, \bm Z'}\{h^s(\beta_{i} + Z'_i) h^t(\beta_{i} + Z_i) \}/\p  \\
 = \mathbb{E}[h^s(B+Z') h^t(B+Z)],
\end{align*}
where $B, Z',$ and $Z$ are univariate and $\E[h^s(B+Z')h^t(B+Z)] < \infty$ by Cauchy-Schwarz and the fact that $h^s(\cdot)$ and $h^t(\cdot)$ are Lipschitz(1).  Namely, this gives the bound
\begin{align*}
&\Big(\E[h^s(B+Z')h^t(B+Z)]\Big)^2 \leq \E[(h^s(B+Z'))^2] \E[(h^t(B+Z))^2]  \leq \E[(B+Z')^2] \E[(B+Z)^2] \\
&=  ( \E[B^2]+ \E[Z'^2] )(\E[B^2]+ \E[Z^2] )= (\sigma_{\bet}^2+\Sigma_{22})(\sigma_{\bet}^2+\Sigma_{11}) <\infty.
\end{align*}
We have now shown that property \textbf{(P2)} is true.

\end{proof}


\section{Proof of Fact \ref{fact:limits} }\label{app_fact}
\begin{proof}
The fact follows from the asymptotic separability of the proximal operator \cite[Proposition 1]{SLOPEasymptotic} (restated in Lemma~\ref{lem:yue_sep}) and the dominated convergence theorem \cite{royden1968real} allowing for interchange of limit and expectation. We sketch the proof of the existence of the limit in \eqref{eq:SE2} (and the result for the limit in \eqref{eq:lambda_func} follows similarly).  By Lemma \ref{lem:yue_sep}, the weak convergence of $\bfalph(\p)$ to $A$, and the Weak Law of Large Numbers, one can argue that
	\begin{align}
	\lim_{\p}  \norm{  \prox_{J_{\bfalph(p) \tau_*}}(\mathbf{B} + \tau_* \mathbf{Z}) - \mathbf{B}}^2/(\delta \p) =\mathbb{E} \{( h(B + \tau_* Z) - B)^2\}/\delta,
	\end{align}
	where $h(\cdot) := h(\cdot; B + \tau_* Z, A \tau_*)$ is the unspecified, separable function of Lemma \ref{lem:yue_sep}. This is consistent with [Lemma 29, \cite{SLOPEasymptotic}]. The limit in \eqref{eq:SE2} exists if $\mathbb{E} \{( h(B + \tau_* Z) - B)^2\}/\delta<\infty$ and
	\begin{align*}
	&\mathbb{E} \{( h(B + \tau_* Z) - B)^2\} \leq 2\mathbb{E} \{ h(B + \tau_* Z)^2 + B^2 \} \leq 2\mathbb{E} \{(B + \tau_* Z)^2 + B^2\}
	\\
	&\leq 2\mathbb{E} \{ 2 B^2 + 2\tau_*^2  Z^2 + B^2\} =6\mathbb{E} \{B^2\}+4\tau_*^2<\infty.
	\end{align*}
	Here the first and third inequalities follow from $(x-y)^2\leq 2(x^2+y^2)$ and the second inequality follows from $h$ being Lipschitz(1): $|h(x)|=|h(x)-h(0)|\leq |x-0|=|x|$.
\end{proof}

 
\section{Proof of Lemma~\ref{lem:bounded_vals}} \label{app:bounded_vals}


\begin{proof}
First, the proof of \eqref{eq:res1} follows from Theorem~\ref{thm:main_result1}.  To see this, note that by \eqref{eq:AMP0}, we have
$\bet^{t+1}=\prox_{J_{\thet_t}}(\X^\top \z^t+\bet^t) =\eta^t_{\p}(\X^\top \z^t+\bet^t),$
and therefore we apply Theorem~\ref{thm:main_result1} with uniformly pseudo-Lipschitz function $\psi_{\p} (  \bet^{t} + \X^\top \z^t, \bet) = \norm{\eta^t_{\p} (\bet^{t} + \X^\top \z^t)}^2/\p$ to get
\begin{align}
\plim_{\p} \,\, \norm{ \bet^{t}}^2/\p \overset{p}{=}\plim_{\p} \,\, \mathbb{E}_{\mathbf{Z}}[  \norm{\eta^t_{\p} (  \bet + \tau_t \mathbf{Z})}^2]/\p ,
\label{eq:lim_1}
\end{align}
for $\mathbf{Z}  \sim \mathcal{N}(0, \mathbb{I}_{\p})$.  By the Lipschitz property of $\eta^t_{\p}$ (Assumption \textbf{(A4)}), we have
$\mathbb{E}_{\mathbf{Z}}[  \norm{\eta^t_{\p} (  \bet + \tau_t \mathbf{Z})}^2]  \leq \mathbb{E}_{\mathbf{Z}}[  \norm{  \bet + \tau_t \mathbf{Z}}^2] \leq 2 \norm{\bet}^2 + 2 \p \tau_t^2.$
Plugging into \eqref{eq:lim_1}, we find
$
%
\plim_{\p} \norm{ \bet^{t}}^2/\p \overset{p}{=} 2\plim_{\p}  \norm{\bet}^2/\p  + 2 \tau_t^2 =   2\sigma_{\bet}^2 + 2 \tau_t^2,
$
where the final inequality follows by Assumption \textbf{(A2)}.


Now consider the $\widehat{\bet}$ result in \eqref{eq:res2}. First, note that by definition $\mathcal{C}(\widehat{\bet}) \leq \mathcal{C}(\mathbf{0})$ where the cost function $\mathcal{C}(\cdot)$ is defined in \eqref{eq:SLOPE_est}.  Using that 
\be
\mathcal{C}(\mathbf{0}) = \frac{1}{2}\| \y \|^2 = \frac{1}{2}\| \X \bet + \w \|^2 \leq \| \X \bet\|^2 + \|\w \|^2 \leq \sigma^2_{\max}(\X) \| \bet\|^2 + \|\w \|^2,
\label{eq:C0_bound}
\ee
where $ \sigma_{\max}(\X)$ is the maximum singular value of $\X$.  We note that this value, $ \sigma_{\max}(\X)$, is bounded almost surely as $\p \rightarrow \infty$ using standard estimates on the singular values of random matrices since $\X$ has i.i.d.\ Gaussian entries by Assumption \textbf{(A1)} (see, for example, \cite[Lemma F.2]{nonseparable}).  Therefore,
\be
\plim_{\p} \, \mathcal{C}(\widehat{\bet})/\p  \leq \plim_{\p} \, \sigma^2_{\max}(\X) \| \bet\|^2/\p  + \plim_{\p} \,  \|\w \|^2/\p  \leq \textsf{B}_{max} \sigma_{\bet}^2 + \sigma_{w}^2,
\label{eq:C0_limit}
\ee
where we've defined $\textsf{B}_{max}$ to be a bound on the limit of the maximum singular value, i.e.\ $ \lim_{\p} \sigma^2_{\max}(\X) \leq \textsf{B}_{max}$, and the final inequality holds by Assumptions \textbf{(A2)} and \textbf{(A3)}.


Now we will relate $\frac{1}{\p}\norm{\widehat{\bet}}^2$ to $\frac{1}{\p}\mathcal{C}(\widehat{\bet})$ and other terms lower-bounded by a  constant with high probability. We write $\widehat{\bet} =\widehat{\bet}^{\perp} + \widehat{\bet}^{\parallel}$ where $\widehat{\bet}^{\perp} \in ker(\X)^{\perp}$ and $\widehat{\bet}^{\parallel} \in ker(\X)$.   Since $\widehat{\bet}^{\parallel} \in \ker(\X)$ and $\ker(\X)$ is a random subspace of size $\p - n = \p(1 - \delta)$, by Kashin Theorem (Theorem H.\ref{thm:Kashin}.), we have that for some constant $\nu_1 = \nu_1(\delta)$, with high probability
\begin{equation}
 \|\widehat{\bet}^{\parallel}\|_2^2  \leq \nu_1 \|\widehat{\bet}^{\parallel}\|_1^2/\p .
\label{eq:kashin}
\end{equation}  
Then we have the following bound
\begin{equation}
\begin{split}
\norm{\widehat{\bet}}^2 = \norm{\widehat{\bet}^{\parallel}}^2 +  \norm{\widehat{\bet}^{\perp}}^2 &\overset{(a)}{\leq} \nu_1\norm{\widehat{\bet}^{\parallel}}_1^2/\p  + \norm{\widehat{\bet}^{\perp}}^2 \overset{(b)}{\leq}  2\nu_1  \norm{\widehat{\bet}}_1^2/\p  + (2 \nu_1 +1) \norm{\widehat{\bet}^{\perp}}^2, 
\label{eq:hatx_split}
\end{split}
\end{equation}
where step $(a)$ holds by \eqref{eq:kashin} and step $(a)$ by the Triangle Inequality and Cauchy-Schwarz as follows
\[\norm{\widehat{\bet}^{\parallel}}_1^2  =\norm{\widehat{\bet} - \widehat{\bet}^{\perp}}_1^2 \leq (\norm{\widehat{\bet}}_1 + \norm{\widehat{\bet}^{\perp}}_1)^2 \leq 2 \norm{\widehat{\bet}}_1^2 + 2 \norm{\widehat{\bet}^{\perp}}_1^2 \leq 2 \norm{\widehat{\bet}}_1^2 + 2 \p \norm{\widehat{\bet}^{\perp}}^2.\]
%
%
%
Now we bound the second term on the right side of \eqref{eq:hatx_split}. Define $\hat{\sigma}_{min}(\X)$ as the minimum non-zero singular value of $\X$.  By standard results in linear algebra, $\hat{\sigma}^2_{min}(\X) \norm{\widehat{\bet}^{\perp}}^2 \leq \norm{\X \widehat{\bet}^{\perp}}^2$.  Therefore,
\ben
\begin{split}
\hat{\sigma}^2_{min}(\X) \norm{\widehat{\bet}^{\perp}}^2 \leq \norm{\X \widehat{\bet}^{\perp}}^2  \leq   \norm{\X \widehat{\bet}^{\perp} - \y + \y}^2 \leq 2  \norm{\y - \X \widehat{\bet}^{\perp}}^2 +  2  \norm{\y}^2 &\leq 2\mathcal{C}(\widehat{\bet})  +  2\mathcal{C}(\mathbf{0}) \leq 2\mathcal{C}(\mathbf{0}).
%
\end{split}
\een
Therefore, using \eqref{eq:C0_bound} and \eqref{eq:C0_limit}, we have
\be
\begin{split}
\plim_p \frac{1}{\p} \norm{\widehat{\bet}^{\perp}}^2 \leq \plim_p \frac{\frac{2}{\p} \mathcal{C}(\mathbf{0})}{\hat{\sigma}^2_{min}(\X)} \leq \frac{2(\textsf{B}_{max} \sigma_{\bet}^2 + \sigma_{w}^2)}{ \textsf{B}_{min}}.
\label{eq:xperp_norm}
\end{split}
\ee
where we've defined $\textsf{B}_{min}$ to be a bound on the limit of the minimum non-zero singular value, i.e.\ $ \lim_{\p} \hat{\sigma}^2_{min}(\X) \geq \textsf{B}_{min}.$


Now we bound the first term on the right side of \eqref{eq:hatx_split}.  Recall the definition of the sort-ed $\ell_1$ norm, i.e.\ $J_{\blam}(\bm b) = \sum \lambda_i|\bm b|_{(i)}$, then using $\lambda_{min} := \lim_p \min(\blam)$ to lower bound the threshold values,
\[\lambda_{min} \norm{\widehat{\bet}}_1 =  \sum \lambda_{min} |\widehat{\bet}_{i}| =  \sum \lambda_{min} |\widehat{\bet}|_{(i)}  \leq  \sum \lambda_i|\widehat{\bet} |_{(i)} = J_{\blam}(\widehat{\bet}) \leq \mathcal{C}(\widehat{\bet}) \leq \mathcal{C}(\mathbf{0}). \]
Then, using \eqref{eq:C0_bound} and \eqref{eq:C0_limit}, we see
\be
\begin{split}
\plim_p \frac{1}{\p} \norm{\widehat{\bet}}_1 \leq \plim_p\frac{1}{ \lambda_{min}} \Big(\frac{1}{\p}\mathcal{C}(\mathbf{0}) \Big)\leq \frac{1}{ \lambda_{min}}(\textsf{B}_{max} \sigma_{\bet}^2 + \sigma_{w}^2).
\label{eq:xperp_norm}
\end{split}
\ee
By \eqref{eq:xperp_norm}, along with the upper bound in  \eqref{eq:hatx_split}, we have
\ben
\plim_p \frac{\norm{\widehat{\bet}}^2}{\p}   \leq  2\nu_1 \plim_p  \frac{ \norm{\widehat{\bet}}_1^2}{\p^2}   + (2 \nu_1 +1) \plim_p \frac{ \norm{\widehat{\bet}^{\perp}}^2}{\p}  \leq \Big[\frac{2\nu_1 (\textsf{B}_{max} \sigma_{\bet}^2 + \sigma_{w}^2)}{ \lambda_{min}}\Big]^2 +  \frac{2(2 \nu_1 +1)(\textsf{B}_{max} \sigma_{\bet}^2 + \sigma_{w}^2)}{ \textsf{B}_{min}}.
\een
\end{proof}


\section{Proof of Lemma \ref{lem:4.3}}\label{app_5743}

The proof of Lemma \ref{lem:4.3} relies on the following result, Lemma \ref{lem:5.7}, about the exponential rate of the convergence of the state evolution sequence defined in \eqref{eq:Sigma_def}.  We state and prove Lemma \ref{lem:5.7}, and Lemma \ref{lem:4.3} is proved afterward.
	
\begin{lemma}\label{lem:5.7}
	Assume $\bfalph>\bm A_{\min}(\delta)$ and let $\{\Sigma_{s,t}\}_{s, t \geq 0}$ be defined by the recursion \eqref{eq:Sigma_def} with initial condition \eqref{eq:E0_def}. Then there exists constants $B_1, r_1>0$ such that for all $t\geq 0$, letting $\tau_* := \lim_t \tau_t$,
	\begin{align*}
	|\Sigma_{t,t}-\tau_*^2|&\leq B_1 e^{-r_1 t},
	\qquad \text{ and } \qquad
	|\Sigma_{t,t+1}-\tau_*^2|\leq B_1 e^{-r_1 t}.
	\end{align*}
\end{lemma}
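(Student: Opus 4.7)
The plan is to establish both bounds from a single contraction property of the state-evolution map at its fixed point (Theorem~\ref{thm:SE1}) together with the Lipschitzness of the SLOPE proximal operator (Property (P1)). Since $\Sigma_{t,t}=\tau_t^2$ by definition, the first inequality is purely about the scalar iteration $\tau_{t+1}^2=\F(\tau_t^2,\bfalph\tau_t)$, whereas the second must also control the off-diagonal covariance $\Sigma_{t,t+1}$.

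For the diagonal bound, set $G(u):=\F(u,\bfalph\sqrt{u})$ so that $\tau_{t+1}^2=G(\tau_t^2)$. Theorem~\ref{thm:SE1} gives $|\partial\F/\partial\tau^2|_{\tau_*^2}<1$, and combined with concavity of $\F$ in $\tau^2$ and continuity of $G'$, there is a neighborhood of $\tau_*^2$ and a constant $\rho\in(0,1)$ such that $|G(u)-\tau_*^2|\leq\rho|u-\tau_*^2|$ on this neighborhood. Monotone convergence $\tau_t^2\to\tau_*^2$ (also from Theorem~\ref{thm:SE1}) guarantees that the iterates eventually enter the neighborhood, and Banach iteration yields $|\tau_t^2-\tau_*^2|\leq B_0\rho^t$.

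For the off-diagonal bound, introduce the non-negative quantity $\Delta_{t+1}:=\tau_{t+1}^2+\tau_t^2-2\Sigma_{t,t+1}$, which by Lemma~\ref{Thm42} is the asymptotic normalized squared distance between successive AMP estimates; directly from \eqref{eq:Sigma_def},
\[
\Delta_{t+1}=\lim_p\frac{1}{\delta p}\,\mathbb{E}\bigl\|\eta^t_p(\B+\tau_t\mathbf{Z}_t)-\eta^{t-1}_p(\B+\tau_{t-1}\mathbf{Z}_{t-1})\bigr\|^2.
\]
The identity $\Sigma_{t,t+1}-\tau_*^2=\tfrac12(\tau_t^2-\tau_*^2)+\tfrac12(\tau_{t+1}^2-\tau_*^2)-\tfrac12\Delta_{t+1}$ then reduces the second inequality to a bound of the form $\Delta_{t+1}\leq B''\rho^t$. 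A triangle-inequality split will bound $\Delta_{t+1}$ by a threshold-change term of order $\|\bfalph\|^2(\tau_t-\tau_{t-1})^2$ (from Lipschitz dependence of $\prox_{J_{\bfalph\tau}}$ on $\tau$) plus a Gaussian-input term that Property (P1) reduces to a smooth function of $(\tau_t,\tau_{t-1},\Sigma_{t-1,t})$. Linearising near the fixed point should yield an affine recursion $\Delta_{t+1}\leq\rho'\Delta_t+\gamma\rho^t$ with $\rho'\in(0,1)$, whose solution decays exponentially.

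The main obstacle is producing the strict contraction factor $\rho'<1$ for the off-diagonal recursion: because $\eta^t_p$ is non-separable, the coordinate-wise Gaussian integration-by-parts used in the analogous LASSO argument \cite[Lemma~5.7]{lassorisk} is unavailable. To bypass this I will invoke Lemma~\ref{lem:yue_sep} to rewrite $\Delta_{t+1}$ in the large-$p$ limit as a smooth scalar integral in $(\tau_t^2,\tau_{t-1}^2,\Sigma_{t-1,t})$, then verify that its partial derivative in the last coordinate at the fixed point equals (up to sign) $\partial\F/\partial\tau^2(\tau_*^2,\bfalph\tau_*)$, so that the same sub-unit bound from Theorem~\ref{thm:SE1} drives both decays. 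A uniform bound on this Jacobian in a neighborhood of $(\tau_*^2,\tau_*^2,\tau_*^2)$ then delivers the two estimates with a common pair $(B_1,r_1)$.
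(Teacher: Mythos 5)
Your overall decomposition is sound and matches the paper's strategy: the diagonal bound follows from a one-dimensional contraction at the fixed point of $\F$ (Theorem~\ref{thm:SE1}), and the off-diagonal bound reduces, via the identity you write, to showing that $\Delta_{t+1}=\tau_{t+1}^2+\tau_t^2-2\Sigma_{t,t+1}$ decays exponentially. (The paper encodes exactly this quantity as the third coordinate $y_{t,3}$ of a three-dimensional dynamical system $\bm y_t = (\tau_{t-1}^2,\tau_t^2,y_{t,3})$ and analyzes its Jacobian at $(\tau_*^2,\tau_*^2,0)$.)

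There is, however, a genuine gap in your final step. You assert that the linearized slope of $\Delta_{t+1}$ in the last coordinate at the fixed point ``equals (up to sign) $\partial\F/\partial\tau^2(\tau_*^2,\bfalph\tau_*)$,'' so that Theorem~\ref{thm:SE1} alone yields the off-diagonal contraction. This is not true. The relevant slope is
\begin{equation*}
G_*'(0)\;=\;\lim_p\,\frac{1}{\delta p}\,\E\big\|\partial_1\eta_p(\B+\tau_*\bm Z;\bfalph\tau_*)\big\|^2,
\end{equation*}
i.e.\ the mean squared Jacobian of the denoiser with respect to its input only, whereas $\partial\F/\partial\tau^2$ differentiates through both the input and the $\tau$-scaled threshold $\bfalph\tau$ and carries additional contributions from the ``kink'' points and the $\bfalph$-dependence (cf.\ Eq.~\eqref{eq:first_partial}; even the first term there is $|I_i|[\partial_1\eta]_i^2$, not $[\partial_1\eta]_i^2$). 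The paper in fact treats these as two \emph{distinct} eigenvalues of the Jacobian, namely $\F'(\tau_*^2)$ and $G_*'(0)$, and they require separate arguments. The bound $G_*'(0)<1$ does not come from Theorem~\ref{thm:SE1}; it comes from the calibration identity~\eqref{eq:blam_alph_mapping}: $\blam>\bm 0$ forces $\frac{1}{\delta p}\E\|\prox_{J_{\bfalph\tau_*}}(\B+\tau_*\bm Z)\|_0^*<1$, and $\E\|\partial_1\eta_p\|^2/(\delta p)$ is bounded by this star-sparsity. Without this independent argument your affine recursion $\Delta_{t+1}\le\rho'\Delta_t+\gamma\rho^t$ has no control on $\rho'$. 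A secondary omission: before linearizing near $(\tau_*^2,\tau_*^2,0)$ you must verify that the pair $(\bm Z_{t-1},\bm Z_t)$ remains a valid jointly Gaussian pair throughout the iteration, i.e.\ that $\Sigma_{t-1,t}>0$ (equivalently $\Delta_t<\tau_{t-1}^2+\tau_t^2$) is preserved. The paper proves this by induction using a positivity result for monotone functions of positively correlated Gaussians; your sketch does not address it, and it is needed for the recursion to stay well-defined.
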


\begin{proof}
Throughout the proof, we use the $\{\eta_{\p}^t\}_{\p \in  \mathbb{N}_{>0}}$ notation introduced in Section~\ref{sec_mainproof} and defined in \eqref{eq:eta_def} with a slight modification to explicitly state the thresholds. Namely, we consider a sequence of denoisers $\eta_{\p}: \R^{\p \times \p} \rightarrow \R^{\p}$ to be those that apply the proximal operator $\prox_{J_{\bfalph \tau_t}}(\cdot)$ defined in \eqref{eq:prox}, i.e.\ $\eta_{\p}(\mathbf{v}; \bfalph \tau_t) := \prox_{J_{\bfalph \tau_t}}( \mathbf{v})$ for a vector $\mathbf{v} \in \R^{\p}$.

Then, per the definition in \eqref{eq:Sigma_def}, we have 
\ben
\Sigma_{s+1, t+1} =  \sigma_w^2 + \lim_{\p}   \mathbb{E}\big\{[\eta_{\p}(\mathbf{B} + \tau_s \mathbf{Z}_s; \bfalph \tau_s) - \mathbf{B}]^\top [\eta_{\p}(\mathbf{B} + \tau_t \mathbf{Z}_t; \bfalph \tau_t) - \mathbf{B}]\big\}/(\delta \p),
\een
where $\mathbf{B} \sim B$ i.i.d.\ elementwise, independent of  length$-\p$ jointly Gaussian vectors $ \mathbf{Z}_s$ and $\mathbf{Z}_r$ having $\mathbb{E}[\mathbf{Z}_s] = \mathbb{E}[\mathbf{Z}_r] = \mathbf{0}$, with covariance $\mathbb{E}\{([\mathbf{Z}_s]_i)^2\} = \mathbb{E}\{([\mathbf{Z}_r]_i)^2\} = 1$ for any element $i \in [\p]$, and $\mathbb{E}\{[\mathbf{Z}_s]_i [\mathbf{Z}_r]_j\} = \frac{\Sigma_{s, r}}{\tau_r \tau_s} \mathbb{I}\{i=j\}$.  Recall, $\Sigma_{t, t} = \tau_t^2$ defined in \eqref{eq:SE2} and by Theorem~\ref{thm:SE1} we know that $\{E_{t,t}\}_{t \geq 0}$ is monotone and converges to $\tau_*^2$ as $t \rightarrow \infty$.  To prove exponential convergence of $\{E_{t-1,t}\}_{t \geq 0}$ as claimed in the lemma statement, we construct a discrete dynamical system below.

	For $t \geq 1$, define the vector $\bm y_t=(y_{t,1},y_{t,2},y_{t,3}) \in \R^3$ as
	\begin{eqnarray}
	y_{t,1} \equiv \Sigma_{t-1,t-1}=\tau_{t-1}^2\, ,\;\;\;
	y_{t,2} \equiv \Sigma_{t,t}=\tau_{t}^2\, ,
	\;\;\;\;
	y_{t,3} \equiv \Sigma_{t-1,t-1}-2\Sigma_{t,t-1}+\Sigma_{t,t}\, .
	\end{eqnarray}
A careful argument shows that the vector $\bm y_t=(y_{t,1},y_{t,2},y_{t,3})$
	belongs to $\mathbb{R}_+^3$.  Essentially this requires showing that a matrix $R_T := $ as in \cite[Lemma 5.8]{lassorisk} is strictly positive definite.	Using the definition of the $\Sigma$ recursion in $\eqref{eq:Sigma_def}$, it is immediate to see that this sequence is updated according to the mapping $\bm y_{t+1} = G(\bm y_t)$ where
	\begin{eqnarray}
	G_1(\bm y_t) & \equiv & y_{t,2}\, ,\\
	G_2(\bm y_t) & \equiv & \sigma_w^2 + \lim_{\p}   \mathbb{E}\big\{\norm{\eta_{\p}(\mathbf{B} + \sqrt{y_{t,2}}\mathbf{Z}_t; \bfalph \sqrt{y_{t,2}}) - \mathbf{B}}^2\big\}/(\delta \p), 
	\label{eq:G2}\\
	G_3(\bm y_t) & \equiv &   \lim_{\p}  \mathbb{E}\big\{\norm{\eta_{\p}(\mathbf{B} + \sqrt{y_{t,2}}\mathbf{Z}_t; \bfalph \sqrt{y_{t,2}}) - \eta_{\p}(\mathbf{B} + \sqrt{y_{t,1}}\mathbf{Z}_{t-1}; \bfalph \sqrt{y_{t,1}})}^2\big\}/(\delta \p),
	%
	\end{eqnarray}
	where $(\mathbf{Z}_t, \mathbf{Z}_{t-1)}$ are length$-\p$ jointly Gaussian vectors, independent of $\mathbf{B} \sim B$ i.i.d.\ elementwise, having $\mathbb{E}[\mathbf{Z}_t] = \mathbb{E}[\mathbf{Z}_{t-1}] = \mathbf{0}$ and with covariance
	$\mathbb{E}\{([\mathbf{Z}_t]_i)^2\} = \mathbb{E}\{([\mathbf{Z}_{t-1}]_i)^2\} = 1$ for any element $i \in [\p]$, and $\mathbb{E}\{[\mathbf{Z}_t]_i [\mathbf{Z}_{t-1}]_j\} = \frac{\Sigma_{t, t-1}}{\tau_t \tau_{t-1}} \mathbb{I}\{i=j\}$.  Notice that $\E\{ \norm{\sqrt{y_{t,2}}\bm Z_t- \sqrt{y_{t,1}} \bm Z_{t-1}}^2\}=y_{t,3}$, where we emphasize that $G_3(\bm y_t)$ depends on $y_{t,3}$ through the covariance of $\bm Z_t$ and $\bm Z_{t-1}$.  Moreover, if $\sigma_w^2 > 0$, then $y_{t,1}$ and $y_{t,2}$ are both strictly positive and by the map defined above it is easy to see that $y_{t,3}$ for all $t \geq 0$.
%
	This mapping is defined for $y_{t,3}\le 2(y_{t,1}+y_{t,2})$.
	
	In the following, we will show by induction on $t$, for $t \geq 1$, that the stronger
	inequality $y_{t,3}< (y_{t,1}+y_{t,2})$ holds. The initial condition implied by Eq.\ \eqref{eq:E0_def} is
	\begin{align*}
	y_{1,1} & = \sigma_w^2 +  \E[B^2]/\delta,\qquad
	y_{1,2}  = \sigma_w^2 +  \lim_{\p}   \mathbb{E}\big\{\norm{\eta_{\p}(\mathbf{B} + \tau_0 \mathbf{Z}_0; \bfalph \tau_0) - \mathbf{B}}^2\big\}/(\delta \p),\\
	y_{1,3}  &=  \lim_{\p}  \mathbb{E}\big\{\norm{\eta_{\p}(\mathbf{B} + \tau_0 \mathbf{Z}_0; \bfalph \tau_0)}^2\big\}/(\delta \p),
	\end{align*}
	It follows that
	\begin{align*}
	y_{1,1}  + y_{1,2} - y_{1,3} 
	& = 2 \sigma_w^2 +   2\lim_{\p} \mathbb{E}\big\{\mathbf{B}^{\top} \big(\mathbf{B} - \eta_{\p}(\mathbf{B} + \tau_0 \mathbf{Z}_0; \bfalph \tau_0)\big)\big\} /(\delta \p) \\
	&= 2 \sigma_w^2 +   2\lim_{\p} \mathbb{E}_{\bm B}\big\{\mathbf{B}^{\top} \big(\mathbf{B} - \mathbb{E}_{\bm Z_0} \{\eta_{\p}(\mathbf{B} + \tau_0 \mathbf{Z}_0; \bfalph \tau_0)\}\big)\big\} /(\delta \p).
	\end{align*}
	Using the above, it is easy to show
	$y_{1,3}<y_{1,1}+y_{1,2}$.  This follows since $\mathbb{E}_{\bm B}\big\{\mathbf{B}^{\top} \big(\mathbf{B} - \mathbb{E}_{\bm Z_0} \{\eta_{\p}^0(\mathbf{B} + \tau_0 \mathbf{Z}_0)\}\big)\big\}$ is asymptotically separable using Lemma~\ref{lem:yue_sep} and because  the function $x \mapsto x-
	\E_Zh^0(x+\tau_0 Z)$ is monotone increasing.  It follows 
	that $ \lim_{\p}  \mathbb{E}_{\bm B}\big\{\mathbf{B}^{\top} \big(\mathbf{B} - \mathbb{E}_{\bm Z_0} \{\eta_{\p}^0(\mathbf{B} + \tau_0 \mathbf{Z}_0)\}\big)\big\}/(\delta \p)>0$.
	
	Suppose that $y_{t,3} < y_{t,1} + y_{t,2}$, we want to show $y_{t+1,3} < y_{t+1,1} + y_{t+1,2}$.  By the induction hypothesis, $\mathbb{E}\{[\mathbf{Z}_t]_i [\mathbf{Z}_{t-1}]_i\} = \frac{y_{t,1} + y_{t,2} - y_{t,3}}{2\sqrt{y_{t,1} y_{t,2}}} > 0$, so elementwise $\mathbf{Z}_t$ and $\mathbf{Z}_{t-1}$ are positively correlated.
	\be
	\begin{split}
		\label{eq:induct1}
		&y_{t+1,1}+y_{t+1,2}-y_{t+1,3} \\
		&= 2\sigma_w^2 +  \lim_{\p} 2 \mathbb{E}\big\{[\eta_{\p}(\mathbf{B} + \sqrt{y_{t,2}}\mathbf{Z}_t; \bfalph\sqrt{y_{t,2}}) - \mathbf{B} ]^{\top} [\eta_{\p}(\mathbf{B} + \sqrt{y_{t,1}}\mathbf{Z}_{t-1}; \bfalph\sqrt{y_{t,1}})-\mathbf{B} ]\big\}/(\delta \p).
	\end{split}
	\ee
Notice that $x\mapsto\eta(b+c\cdot x \, ; \, \theta)-b$ is monotone for any constants $b$ and $c>0$ and consider the following result: for $ g $, a monotone function, and $ X_1 $ and $ X_2 $, two positively correlated standard Gaussians, $ \E[g(X_1)g(X_2)] \geq 0 $. This is a special case of a theorem in \cite{pitt1982positively}, which shows $ \E[g(X_1)g(X_2)] \geq \E[g(X_1)]\E[g(X_2)] = (\E[g(X_1)])^2 > 0 $.
Then since $\mathbf{Z}_t$ and $\mathbf{Z}_{t-1}$ are positively correlated, $ \mathbb{E}\big\{[\eta_{\p}(\mathbf{B} + \sqrt{y_{t,2}}\mathbf{Z}_t; \bfalph\sqrt{y_{t,2}}) - \mathbf{B} ]^{\top} [\eta_{\p}(\mathbf{B} + \sqrt{y_{t,1}}\mathbf{Z}_{t-1}; \bfalph\sqrt{y_{t,1}})-\mathbf{B} ]\big\} \ge 0$, which yields $y_{t+1,3}< (y_{t+1,1}+y_{t+1,2})$. 

	We can hereafter therefore assume $y_{t,3}< y_{t,1}+y_{t,2}$
	for all $t$.
	
	We will consider the above iteration for arbitrary
	initialization $y_0$ (satisfying  $y_{0,3}< y_{0,1}+y_{0,2}$)
	and will show the following three facts:
	\begin{itemize}
		\item[]{\bf Fact (i).} $y_{t,1},y_{t,2}\to\tau_*^2$ as $t\to\infty$.
		Further the convergence is monotone.
		\item[]{\bf Fact (ii).} If $y_{0,1}=y_{0,2}=\tau_*^2$ and $y_{0,3}\le 2\tau_*^2$,
		then $y_{t,1}=y_{t,2}=\tau_*^2$ for all $t$ and $y_{t,3}\to 0$.
		\item[]{\bf Fact (iii).} The Jacobian $J=J_{G}(y_*)$ of $G$
		at $y_* = (\tau_*^2,\tau_*^2,0)$ has spectral radius $\sigma(J)<1$.
	\end{itemize}
	By simple compactness arguments, Facts (i) and (ii) imply $y_t\to y_*$
	as $t\to\infty$. (Notice that $y_{t,3}$ remains bounded
	since $y_{t,3}\le (y_{t,1}+y_{t,2})$ and by the convergence
	of $y_{t,1},y_{t,2}$.)
	Fact (iii) implies that convergence is exponentially
	fast.
	
	\textbf{\emph{Proof of Fact (i).}} Notice that $y_{t,2}$ evolves independently by $y_{t+1,2} = G_2(y_t) = F(y_{2,t},\bfalph\sqrt{y_{2,t}})$,
	with $F(\,\cdot\,,\,\cdot\,)$ the state evolution mapping introduced in \eqref{eq:SE_F}. It follows from Proposition 1.3 that $y_{t,2}\to \tau_*^2$ monotonically for any initial condition.
	Since $y_{t+1,1} = y_{t,2}$, the same happens for $y_{t,1}$.
	
	\textbf{\emph{Proof of Fact (ii).}} Consider the function 
	\[G_*(x) =G_3(\tau_*^2,\tau_*^2,x)=\lim_{\p} \mathbb{E}\big\{\norm{\eta_{\p}(\mathbf{B} + \tau_*\mathbf{Z}_t; \bfalph \tau_*) - \eta_{\p}(\mathbf{B} + \tau_*\mathbf{Z}_{t-1}; \bfalph \tau_*)}^2\big\}/(\delta \p),\]
	where
\[\mathbb{E}\{[\mathbf{Z}_t]_i [\mathbf{Z}_{t-1}]_i\} = \frac{y_{t,1} + y_{t,2} - y_{t,3}}{2\sqrt{y_{t,1} y_{t,2}}} = \frac{2\tau_*^2 - x}{2\tau_*^2}\] 
is no longer time-dependent. This function is defined for $x\in[0,2\tau_*^2]$. Further $G_*$ can be represented as follows in terms of the independent random vectors $\bm Z$, $\bm W \sim N(0,\mathbb{I})$:
	\begin{eqnarray*}
		&G_*(x) = \lim_{\p} \frac{1}{\delta \p} \mathbb{E}\big\{\norm{\eta_{\p}(\mathbf{B} + \bm Z \sqrt{\tau_*^2-\frac{1}{4}x}+ \bm W(\frac{1}{2}\sqrt{x}); \bfalph \tau_*) - \eta_{\p}(\mathbf{B} +\bm Z \sqrt{\tau_*^2-\frac{1}{4}x}- \bm W (\frac{1}{2}\sqrt{x}) ; \bfalph \tau_*)}^2\big\},
		%
	\end{eqnarray*}
	where 
	\[(\tau_*\mathbf{Z}_{t-1}, \tau_*\mathbf{Z}_t) \overset{d}{=}\Big(\bm Z \sqrt{\tau_*^2-\frac{1}{4}x} - \bm W (\frac{1}{2}\sqrt{x}) ,\bm Z \sqrt{\tau_*^2-\frac{1}{4}x}+  \bm W (\frac{1}{2}\sqrt{x})\Big). \]
	Obviously $G_*(0) = 0$. A simple Taylor expansion about the first argument around $\mathbf{B}$ yields (recall higher derivatives of $\eta$ are 0 almost everywhere)
	\begin{align*}
	G_*(x) &=  \lim_{\p}  \E\Big\{\norm{\eta_p(\mathbf{B}; \bfalph \tau_*)+ \Big(\bm Z \sqrt{\tau_*^2-\frac{1}{4}x}+ \bm W(\frac{1}{2}\sqrt{x})\Big) \odot \partial_1\eta_p(\mathbf{B}; \bfalph \tau_*) \\
	&\hspace{3cm} -\eta_p(\mathbf{B}; \bfalph \tau_*)- \Big(\bm Z \sqrt{\tau_*^2-\frac{1}{4}x}- \bm W(\frac{1}{2}\sqrt{x})\Big) \odot \partial_1\eta_p(\mathbf{B}; \bfalph \tau_*)]}^2\Big\}/(\delta \p)
	\\
	&=  \lim_{\p} x \E\big\{ \norm{\bm W \odot \partial_1\eta_p(\mathbf{B}; \bfalph \tau_*)]}^2\big\}/(\delta \p)=  \lim_{\p} x \E\big\{ \norm{\partial_1\eta_p(\mathbf{B}; \bfalph \tau_*)]}^2\big\}/(\delta \p).
	\end{align*}
Using the above, we study $G'_*(x)$.  First, we can exchange the limit and differentiation because $f_p(x):=x \E\big\{ \norm{\partial_1\eta_p(\mathbf{B}; \bfalph \tau_*)]}^2\big\}/(\delta \p)$ converges uniformly to $f(x) :=\lim_{\p} x \E\big\{ \norm{\partial_1\eta_p(\mathbf{B}; \bfalph \tau_*)]}^2\big\}/(\delta \p)$. To see this, notice $f_p, f$ are linear in $x$ and defined on $[0,2\tau_*^2]$. 
Hence for every $\epsilon>0$, there exists $p_0$ such that 
\begin{align*}
|f_{p_0}(x)-f(x)|&=x \Big \lvert \frac{1}{\delta \p_0} \E\big\{ \norm{\partial_1\eta_{p_0}(\mathbf{B}; \bfalph \tau_*)]}^2\big\} -\lim_{\p}  \frac{1}{\delta \p} \E\big\{ \norm{\partial_1\eta_p(\mathbf{B}; \bfalph \tau_*)]}^2\big\} \Big \lvert \\
&\leq 2\tau_*^2 \Big \lvert \frac{1}{\delta \p_0} \E\big\{ \norm{\partial_1\eta_{p_0}(\mathbf{B}; \bfalph \tau_*)]}^2\big\} -\lim_{\p}  \frac{1}{\delta \p} \E\big\{ \norm{\partial_1\eta_p(\mathbf{B}; \bfalph \tau_*)]}^2\big\} \Big \lvert<\epsilon.
\end{align*}
	By uniform convergence we have,
	\begin{align*}
	G'_*(x)&= \lim_{\p}  \frac{1}{\delta \p} \E\big\{ \norm{\partial_1\eta_p(\mathbf{B}; \bfalph \tau_*)]}^2\big\} =G'_*(0)\leq \lim_{\p}  \frac{1}{\delta \p} \sum_{i=1}^{\p} \E\big\{  [\partial_1\eta_p(\mathbf{B}; \bfalph \tau_*)]_i\big\}. 
	\end{align*}
	Hence $G_*'(0)<1$, using \eqref{eq:blam_alph_mapping} since $\blam>\bm 0$. Then $y_{t,3}= [G_*'(0)]^ty_{0,3}\to
	0$ as $t \goto \infty$ as claimed.  
	
	\textbf{\emph{Proof of Fact (iii).}}
	By the definition of $G$, the Jacobian is given by
	\begin{align*}
	J_{G}(y_*) = \left(\begin{array}{ccc}
	0 & 1 & 0\\
	0 & \F'(\tau_*^2) & 0\\
	a & G_*'(0) & b\\
	\end{array}\right)
	\end{align*}
	denoting $\F'(\tau_*^2) \equiv
\frac{\partial \F}{\partial \tau^2}(\tau^2,\bfalph \tau)$ evaluated at $\tau^2=\tau^2_*$ with $a$ and $b$ constants whose values are not important to the proof. Computing the eigenvalues of the Jacobian,
	we get
	$
	\sigma(J) = \max\big\{\, \F'(\tau_*^2)\,,\, G_*'(0) \,\big\}.
	$
	Since $G_*'(0)<1$ proved above and $\F(\tau_*^2)<1$
	by Theorem~\ref{thm:SE1}, the claim follows.
\end{proof}
%
%
\begin{proof}[Proof of Lemma \ref{lem:4.3}]
We show that Lemma \ref{lem:4.3} follows by Lemmas \ref{lem:5.7} and \ref{Thm42}. By Lemma~\ref{Thm42},
\begin{align*}
\plim_{\n} \big(\norm{\z^t - \z^{t-1}}^2/n - (\tau_t^2 - 2\Sigma_{t, t-1}+ \tau_{t-1}^2)\big) = 0, \\
\plim_{\p} \big(\norm{\bet^{t+1} - \bet^{t}}^2/(\delta \p) - (\tau_t^2 - 2\Sigma_{t, t-1}+ \tau_{t-1}^2)\big) = 0,
\end{align*}
and so it is sufficient to show that $\lim_t (\tau_t^2 - 2\Sigma_{t, t-1}+ \tau_{t-1}^2) = 0$.  Note that this follows from Lemma \ref{lem:5.7} since $\tau_t^2 = \Sigma_{t,t}$ and $\tau_{t-1}^2 = \Sigma_{t-1, t-1}$ both converge to $\tau_*^2$ as does $\Sigma_{t, t-1}.$

\end{proof}


\section{Technical Details for the Condition (3) Proof}\label{app_34}
We first introduce some notation and ideas that will be used throughout the proof. The proof is similar to \cite[Section 5.3]{lassorisk}, with the key difference being the concept of equivalence classes as described in Section \ref{sec:prel-slope-amp}.

We now introduce a more general recursion than the AMP algorithm in \eqref{eq:AMP0}-\eqref{eq:AMP1}.  Given $\w \in \mathbb{R}^{\n}$ and $\bet \in \mathbb{R}^{\p}$, define the  column vectors $\mathbf{h}^{t+1}, \mathbf{q}^{t+1} \in \mathbb{R}^{\p}$ and $\mathbf{b}^t, \mathbf{m}^t \in \mathbb{R}^{\n}$, recursively, for $t \geq 0$ as follows, starting with initial condition $\bet^0=0$ and $\z^0=\mathbf{y}$. 
\begin{equation}
\begin{split}
\mathbf{h}^{t+1} = \bet - (\mathbf{X}^\top \z^t + \bet^t), \qquad &  \mathbf{q}^t  =\bet^{t} - \bet, \qquad \mathbf{b}^t = \w- \z^t,\qquad  \mathbf{m}^t  =-\z^t.
\end{split}
\label{eq:hqbm_def_AMP}
\end{equation}
Note that these definitions of $\bm h^t$ and $\bm m^t$ match those used in Section~\ref{app_AMP}.

Denoting $[\bm u| \bm v]$ to mean the matrix of concatenating vectors $\bm u,  \bm v$ horizontally, we define
\be
\begin{split}
\underbrace{[\mathbf{h}^1+\mathbf{q}^0|\cdots|\mathbf{h}^t+\mathbf{q}^{t-1}]}_{\bm A_t}
&=\X^\top \underbrace{[\mathbf{m}^0|\cdots|\mathbf{m}^{t-1}]}_{\bm M_t},
\\
\underbrace{[\mathbf{b}^0|\mathbf{b}^1+\kappa_1\mathbf{m}^0|\cdots|\mathbf{b}^{t-1}+\kappa_{t-1}\mathbf{m}^{t-2}]}_{\bm Y_t}
&=\X\underbrace{[\mathbf{q}^0|\cdots|\mathbf{q}^{t-1}]}_{\bm Q_t},
\label{eq:matrix_defs}
\end{split}
\ee
where the scalars $\kappa_t$ are defined as 
$\kappa_t :=  -[\nabla \eta^{t-1}(\bet-\mathbf{h}^{t-1})]/\n.$

Define the $\sigma$-algebra generated by $\mathbf{b}^0,\cdots,\mathbf{b}^{t-1},\mathbf{m}^0,\cdots,\mathbf{m}^{t-1},\mathbf{h}^1,\cdots,\mathbf{h}^{t},\mathbf{q}^0,\cdots,\mathbf{q}^{t}$ as $\mathfrak{S}_t$. Then \cite{amp2, nonseparable}, says that the conditional distribution of the random matrix $\X$ given $\mathfrak{S}_t$ is
\begin{align}
\label{eq:cond_A}
\X|_{\mathfrak{S}_t}\overset{d}{=}\bm E_t+ \bm P_{\bm M_t}^\perp\tilde{\X} \bm P_{\bm Q_t}^\perp,
\end{align}
where $\tilde{\X}\overset{d}{=}\X$ is independent of the conditioning sigma-algebra $\mathfrak{S}_t$ and $\bm E_t=\E(\X|\mathfrak{S}_t)$ is given by:
\begin{align*}
\bm E_t:=\bm Y_t(\bm Q_t^\top \bm Q_t)^{-1}\bm Q_t^\top+\bm M_t(\bm M_t^\top \bm M_t)^{-1}\bm A_t^\top+\bm M_t(\bm M_t^\top \bm M_t)^{-1}\bm M_t^\top \bm Y_t(\bm Q_t^\top \bm Q_t)^{-1}\bm Q_t^\top.
\end{align*}
In \eqref{eq:cond_A}, we use the notation $\bm P_{\bm M_t}^\perp=\mathbb{I}- \bm P_{\bm M_t}$ and$\bm P_{\bm Q_t}^\perp=\mathbb{I}- \bm P_{\bm Q_t}$ where $\bm P_{\bm Q_t}$ and $\bm P_{\bm M_t}$ are orthogonal projectors onto column spaces of $\bm Q_t, \bm M_t$ respectively. From now on, since $t$ is fixed, we will drop the subscript $t$ when it is clear.  A proof of \eqref{eq:cond_A} can be found in \cite[Lemma 11]{amp2}.  We note that there are no differences in this conditional distribution in the nonseparable case, since the analysis (in both cases) is just that of an i.i.d.\ Gaussian matrix conditional on linear constraints.

Given the above notations, we claim that Lemma \ref{lemma:MinS} is implied by the following statement.
\begin{lemma}
	\label{lemma:ConcreteMinS}
	Let $s$ be a set of maximal atoms in $[\p]$ such that $|s|\le \p(\delta-\gamma)$,
	for some $\gamma>0$. Then there exists $\alpha_1=\alpha_1(\gamma)>0$
	(independent of $t$) and
	$\alpha_2=\alpha_2(\gamma,t)>0$ (depending on $t$ and $\gamma$)
	with
	\begin{eqnarray*}
		\PP\Big\{\min_{\|\bm v\|=1,\,\supp^*(\bm v)\subseteq s}\big\| \bm E \bm v +
		\bm P_{\bm M}^{\perp}\tilde{\X} \bm P_{\bm Q}^{\perp} \bm v\big\|\le \alpha_2\,\Big|\,
		\mathfrak{S}_t\Big\}\le \, e^{-p\alpha_1}\, ,
	\end{eqnarray*}
	eventually almost surely as $p\to\infty$, with $\bm E \bm v = \bm Y(\bm Q^*\bm Q)^{-1}\bm Q^*\bm P_{\bm Q} \bm v + \bm M(\bm M^* \bm M)^{-1} \bm X^*\bm P_{\bm Q}^{\perp} \bm v$.
\end{lemma}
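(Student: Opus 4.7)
The plan is to adapt the proof of the LASSO analog \cite[Lemma 3.6]{lassorisk}, accommodating the star-support structure of SLOPE. Starting from the conditional decomposition $\X\bm v = \bm E\bm v + \bm P_{\bm M}^\perp \tilde{\X}\bm P_{\bm Q}^\perp \bm v$ supplied by \eqref{eq:cond_A}, I would split the analysis according to $\beta := \|\bm P_{\bm Q}^\perp \bm v\|$, so that either the deterministic term $\bm E\bm v$ (when $\beta$ is small, forcing $\bm v$ into the column space of $\bm Q$) or the fresh Gaussian term (when $\beta$ is not too small) furnishes the required lower bound.

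When $\beta \ge \kappa$ for a small constant $\kappa = \kappa(\gamma)>0$, the conditional distribution of $\bm P_{\bm M}^\perp \tilde{\X}\bm P_{\bm Q}^\perp \bm v$ is Gaussian with covariance $(\beta^2/\n)\bm P_{\bm M}^\perp$, so its norm concentrates around $\beta\sqrt{(\n-t)/\n}$ with subgaussian tails of rate $\n$. When $\beta<\kappa$, the vector $\bm v$ lies nearly in the column space of $\bm Q$: writing $\bm v \approx \bm Q\bm w$, we have $\bm E\bm v \approx \bm Y\bm w$, and the state-evolution analysis of Lemmas \ref{lem:5.7} and \ref{Thm42} controls the Gram matrices $\bm Y^\top \bm Y/\n$ and $\bm Q^\top \bm Q/\p$ through the covariance sequence $\Sigma_{s,r}$, delivering $\|\bm E\bm v\|\ge c(t)$ whenever $\bm v$ is sufficiently aligned with the column space of $\bm Q$.

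To upgrade these pointwise bounds to a uniform estimate over $T := \{\bm v:\|\bm v\|=1,\ \supps(\bm v)\subseteq s\}$, I would cover $T$ by a finite net. The set decomposes as $\bigcup_{\bm\epsilon}V_{\bm\epsilon}\cap S^{\p-1}$, where $\bm\epsilon$ ranges over sign patterns consistent with atoms in $s$ and $V_{\bm\epsilon} := \text{span}\{\sum_{i\in I}\epsilon_i\bm e_i : I\in s\}$ is an $|s|$-dimensional subspace whose generators are orthogonal since atoms are disjoint. Each unit sphere $V_{\bm\epsilon}\cap S^{\p-1}$ admits a $\nu$-net of size $(3/\nu)^{|s|}$; combined with Lipschitz continuity of $\bm v \mapsto \|\bm P_{\bm M}^\perp\tilde{\X}\bm P_{\bm Q}^\perp\bm v\|$ and Gaussian concentration at each net point, the union bound closes at rate $\p\alpha_1$ since $|s|/\n \le (\delta-\gamma)/\delta$ is bounded away from $1$.

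The main obstacle is the sign-pattern count: a naive union bound over $\bm\epsilon$ introduces a factor of $2^{|S|}$ with $|S| = \sum_{I \in s}|I|$ possibly much larger than $|s|$, which would overwhelm the Gaussian tails. To bypass this, I would recast the problem via the normalized atom-sum vectors $\bm u_{I,\bm\epsilon_I} := \frac{1}{\sqrt{|I|}}\sum_{i\in I}\epsilon_i\X_i$, whose joint distribution factors through the $|I|$-dimensional subspaces $\text{span}\{\X_i:i\in I\}$. Conditioning on these subspaces, the effective randomness in $[\bm u_{I,\bm\epsilon_I}]_{I\in s}$ is that of an $\n\times|s|$ Gaussian matrix, whose minimum singular value is lower-bounded via Theorem H.\ref{prop:marchenko-pastur} by a constant depending only on $|s|/\n$. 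Making this reduction rigorous, and then combining it with the state-evolution bound from the small-$\beta$ regime, is the crux of the proof.
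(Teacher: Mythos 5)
Your case split on $\beta = \|\bm P_{\bm Q}^{\perp}\bm v\|$ mirrors the paper's: the small-$\beta$ regime is handled deterministically through the Gram-matrix bounds of Lemma~\ref{lemma:EvBound}, the large-$\beta$ regime probabilistically via the fresh Gaussian $\tilde\X$. You diverge in the mechanism for obtaining uniformity over $\bm v$: you propose a covering net on the sign-pattern subspaces $V_{\bm\epsilon}$, whereas the paper combines the Pythagorean decomposition of Lemma~\ref{lemma:Pitagora} (via the uniform random-subspace concentration of Proposition~\ref{propo:Concentration}) with the minimum-singular-value estimate of Lemma~\ref{lemma:Event2} on $\tilde\X$ restricted to $\im(\bm P_{\bm Q}^{\perp}\bm P_s)$, both resting on $\bm P_s$ having rank $|s|$. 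Both templates are reasonable.

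You have correctly spotted a subtlety that the paper's sketches gloss over. The averaging projector $\bm P_s$ satisfies $\bm P_s\bm v=\bm v$ only when $\bm v$ is constant in \emph{signed} value on each atom; a mixed-sign vector with $\supps(\bm v)\subseteq s$ is not fixed by $\bm P_s$, so $\bm P_{\bm Q}^{\perp}\bm v\notin\im(\bm P_{\bm Q}^{\perp}\bm P_s)$ and the written argument of Lemma~\ref{lemma:Event2} (and similarly Lemma~\ref{lemma:Pitagora}) covers only the all-positive pattern within each atom. Extending to all $2^{|S|-|s|}$ patterns is a genuine additional step in SLOPE that is invisible in LASSO, and the paper does not make it explicit.

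However, your proposed remedy is not sound as stated. Conditioning on $\text{span}\{\X_i:i\in I\}$ leaves no residual randomness in $\bm u_{I,\bm\epsilon_I}=|I|^{-1/2}\sum_{i\in I}\epsilon_i\X_i$: once the signs are fixed, it is a deterministic function of the columns, so the ``effective $n\times|s|$ Gaussian'' you invoke does not materialize. What does work is precisely the union bound you dismiss. For a fixed pattern $\bm\epsilon$, the matrix $[\bm u_{I,\bm\epsilon_I}]_{I\in s}$ genuinely is an $n\times|s|$ Gaussian, and the small-ball estimate $\PP\{\sigma_{\min}<t\}\lesssim (Ct)^{n-|s|+1}$ combined with the hypothesis $n-|s|\ge\gamma p$ absorbs the $\le 2^{p-|s|}$ admissible sign patterns at a positive scale $t=t(\gamma,\delta)$ (exponentially small in $1/\gamma$ but independent of $p$). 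Carrying this out quantitatively — and verifying that the concentration step in Proposition~\ref{propo:Concentration} likewise survives the sign-pattern union — is the content still missing both from your proposal and from the paper's own sketch.
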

We prove such implication in the next section now.

\begin{proof}[Proof of Lemma \ref{lemma:MinS}]
The proof is adapted from \cite[Section 5.3.1]{lassorisk}.
First note that by Borel-Cantelli, it is sufficient to show
that, for $s$ measurable on $\mathfrak{S}_t$ and
$|s|\le \p(\delta-c)$ there exist $a_1=a_1(c)>0$ and $a_2=a_2(c,t)>0$, such that
\begin{eqnarray*}
	\PP\Big\{\min_{|s'|\le a_1\p}\;
	\min_{\|\bm v\|=1, \,\, \supp^*(\bm v)\subseteq s\cup s'}\|\X \bm v\|< a_2\Big\} \le {1}/{\p^2}\,,
\end{eqnarray*}
for all $\p$ large enough, using $\sigma_{\rm min}(\X_{S_t\cup S'}) = \min_{\|\bm v\|=1, \,\, \supp^*(\bm v)\subseteq s\cup s'}\|\X \bm v\|$.  To shorten notation, the set $\{\|\bm v\|=1, \,\, \supp^*(\bm v)\subseteq s\cup s'\}$ is denoted $\bm v(s')$.  Now, conditioning on $\mathfrak{S}_t$, by a union bound,
\be
\begin{split}
\label{eq:entropy}
&\PP\{\min_{|s'|\le a_1\p} \min_{\bm v(s')}\| \X \bm v\| <a_2\big|{\mathfrak{S}_t}\}\leq \sum_{|s'|\le a_1\p}\PP\{\min_{\bm v(s')}\| \X \bm v\|<a_2\big|{\mathfrak{S}_t}\}\\
&\leq \Big[\sum_{k=1}^{a_1\p}{\p\choose k}\Big]\max_{|s'|\le \p a_1}\PP\{\min_{\bm v(s')}\| \X \bm v\|<a_2\big|{\mathfrak{S}_t}\} \le e^{\p h(a_1)}\max_{|s'|\le a_1 \p}\PP\{\min_{\bm v(s')}\| \X \bm v\|<a_2\big|{\mathfrak{S}_t}\}\,,
\end{split}
\ee
where $h(a)=-a\log a-(1-a)\log(1-a)$ is the binary entropy function (cf. \cite[Chapter 10, Corollary 9]{macwilliams1977theory}). Therefore, using iterated expectation and \eqref{eq:entropy},
\begin{align*}
\PP\Big\{\min_{|s'|\le a_1\p}\; 
\min_{\bm v(s')}\|\X \bm v\|< a_2\Big\} &= \E\Big\{\PP\Big\{\min_{|s'|\le a_1\p}\;
\min_{\bm v(s')}\|\X \bm v\|< a_2\Big|\,\mathfrak{S}_t\Big\}
\Big\} \\
&\le e^{\p h(a_1)}\E\Big\{\max_{|s'|\le a_1 \p}\PP\Big\{
\min_{\bm v(s')}\|\X \bm v\|< a_2\Big|\,\mathfrak{S}_t\Big\}
\Big\}\,,
\end{align*}
Now, we fix $a_1<c/2$ in such a way that $h(a_1)\le \frac{1}{2}\alpha_1(\frac{c}{2})$
and let $a_2=\frac{1}{2}\alpha_2(\frac{c}{2},t)$ where $\alpha_1$ and $\alpha_2$ are defined by Lemma \ref{lemma:ConcreteMinS}.
Then,
\begin{align*}
\PP\Big\{\min_{|s'|\le a_1\p}\; 
\min_{\bm v(s')}\|\X \bm v\|< a_2\Big\} &\leq e^{\frac{1}{2}\p \alpha_1(\frac{c}{2})} \E\Big\{\max_{|s'|\le a_1 \p}\PP\Big\{
\min_{\|\bm v\|=1, \,\, \supp^*(\bm v)\subseteq s\cup s'}\|\X \bm v\|< \frac{1}{2}
\alpha_2(\frac{c}{2},t)\Big|\,\mathfrak{S}_t\Big\}
\Big\}\\
&\leq e^{\frac{1}{2}\p \alpha_1(\frac{c}{2})}\;\E\Big\{\max_{|s''|\le \p(\delta-\frac{c}{2})}\PP\Big\{
\min_{\|\bm v\|=1, \,\, \supp^*(\bm v)\subseteq s''}\|\X \bm v \|<\frac{1}{2}
\alpha_2(\frac{c}{2},t)\Big|\,\mathfrak{S}_t\Big\}
\Big\} \, .
\end{align*}
Finally, using (cf. \cite[Lemma 5.1]{lassorisk}),
\begin{align}
\X \bm v|_{\mathfrak{S}} \stackrel{d}{=} \bm Y(\bm Q^*\bm Q)^{-1}\bm Q^*\bm P_{\bm Q} \bm v + \bm M(\bm M^* \bm M)^{-1} \bm X^*\bm P_{\bm Q}^{\perp} \bm v +
\bm P_{\bm M}^{\perp}\tilde{\X} \bm P_{\bm Q}^{\perp} \bm v \, .
\end{align}
to estimate $\X \bm v$ and applying Lemma \ref{lemma:ConcreteMinS}, we get,
for all $\p$ large enough,
\begin{align*}
\PP\Big\{\min_{|s'|\le a_1\p}\; 
\min_{\bm v(s')}\|\X \bm v\|< a_2\Big\} \leq  e^{\frac{1}{2}\p\alpha_1}\;\E\big\{\max_{|s''|\le \p(\delta-\frac{c}{2})} e^{-\p\alpha_1}
\big\}\le {1}/{\p^2} \, .
\end{align*}
\end{proof}
Now we prove Lemma \ref{lemma:ConcreteMinS}, using a proof that 
%
%
%
is similar to that of \cite[Section 5.3.2]{lassorisk}. We first state some lemmas that will be used in the proof, but we will not migrate the full proofs from \cite{lassorisk} for the sake of brevity. Instead, we describe the key points of proofs with an emphasis on the technical differences for the SLOPE problem and provide pointers to the original proofs.

The concept of maximal atoms are reflected in these lemmas via the sets $s$ and correspondingly $\bm P_s$, where $\bm P_s$ is the $\p \times \p$ projector matrix onto the subspace of vectors whose $\supp^*$ equals $s$. In the LASSO case where $\supp^*\equiv\supp$ and $s\equiv S$, the projector is orthogonal, but in general, we must define $\bm P_s[\cdot,j]=\frac{1}{|I|}\sum_{i\in I} \bm e_i$ for $j \in I$ where $\bm P_s[\cdot,j]$ is the $j^{th}$ column of $\bm P_s$ for $1 \leq j \leq \p$ and $\bm e_i$ is the $i^{th}$ vector of the standard basis. 
For example, when $\p=4$ and $s=\{\{1\},\{2,4\}\}$,
$$
\bm P_s=
\begin{pmatrix}
1&0&0&0
\\
0&1/2&0&1/2
\\
0&0&0&0
\\
0&1/2&0&1/2
\end{pmatrix}.
$$
Such a projector is not necessarily orthogonal and its rank is described via $|s|$ (the number of equivalence classes), not via $|S|$ (the number of non-zero elements) as for the LASSO. We may view this projector as an orthogonal projector onto the subspace of maximal atoms: for a maximal atom $I\in s$, the projector maps elements whose indices belong to $I$ onto their average value.

We begin with the auxiliary lemmas.
\begin{lemma}{[Adapted from \cite[Lemma 5.4]{lassorisk}]}
	\label{lemma:Pitagora}
	Let $s$ be a set of maximal atoms in $[\p]$ such that $|s|\le p(\delta-\gamma)$,
	for some $\gamma>0$. Recall that
	$\bm E \bm v =\bm Y(\bm Q^\top \bm Q)^{-1} \bm Q^\top \bm P_{\bm Q} \bm v + \bm M(\bm M^\top \bm M)^{-1}\bm A^\top \bm P_{\bm Q}^{\perp} \bm v$
	and consider the event
	\begin{align*}
		&\varepsilon_1 := \\
		&\Big\{ \big\| \bm E \bm v +
		\bm P_{\bm M}^{\perp}\tilde{\X} \bm P_{\bm Q}^{\perp} \bm v\big\|^2\ge  \frac{\gamma}{4\delta}\big\| \bm E \bm v
		-\bm P_{\bm M} \tilde{\X} \bm P_{\bm Q}^{\perp} \bm v\big\|^2+
		\frac{\gamma}{4\delta}\big\|\tilde{\X} \bm P_{\bm Q}^{\perp} \bm v\big\|^2\, \;\forall \, \bm v
		\mbox{ s.t. } \|\bm v\|=1\mbox{ and } \supp^*(\bm v)\subseteq s\Big\}.
	\end{align*}
	Then there exists $a=a(\gamma)>0$ such that $\PP\{\varepsilon_1|\mathfrak{S}_t\}\ge 1-e^{-pa}$.
		\label{leme1}
\end{lemma}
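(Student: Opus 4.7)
The plan is to condition on $\mathfrak{S}_t$ and then exploit the distributional identity \eqref{eq:cond_A}. After this conditioning $\bm E$, $\bm M$, $\bm Q$ are fixed, and $\tilde{\X}$ is an independent copy of $\X$ with i.i.d.\ $\mathcal{N}(0,1/n)$ entries. The critical structural fact is that $\bm P_{\bm M}\tilde{\X}\bm P_{\bm Q}^{\perp}$ and $\bm P_{\bm M}^{\perp}\tilde{\X}\bm P_{\bm Q}^{\perp}$ are jointly Gaussian and independent, since $\bm P_{\bm M}$ and $\bm P_{\bm M}^{\perp}$ are orthogonal projections in $\R^n$. Moreover $\bm P_{\bm M}^{\perp}\tilde{\X}\bm P_{\bm Q}^{\perp}$ effectively acts as a Gaussian matrix of dimension at least $(n-t)\times(p-t)$, which for finite $t$ is a ``tall'' random matrix, while $\bm P_{\bm M}\tilde{\X}\bm P_{\bm Q}^{\perp}$ lives in a low-dimensional range of rank at most $t$.

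I would next use Pythagoras with respect to the orthogonal decomposition $\R^n=\mathrm{col}(\bm M)\oplus\mathrm{col}(\bm M)^{\perp}$ to expand each of the three norms in the definition of $\varepsilon_1$. Writing $\bm r:=\tilde{\X}\bm P_{\bm Q}^{\perp}\bm v$ and splitting $\bm r=\bm P_{\bm M}\bm r+\bm P_{\bm M}^{\perp}\bm r$, this yields
\begin{align*}
\|\bm E\bm v+\bm P_{\bm M}^{\perp}\bm r\|^2&=\|\bm P_{\bm M}\bm E\bm v\|^2+\|\bm P_{\bm M}^{\perp}\bm E\bm v+\bm P_{\bm M}^{\perp}\bm r\|^2,\\
\|\bm E\bm v-\bm P_{\bm M}\bm r\|^2&=\|\bm P_{\bm M}\bm E\bm v-\bm P_{\bm M}\bm r\|^2+\|\bm P_{\bm M}^{\perp}\bm E\bm v\|^2,\\
\|\bm r\|^2&=\|\bm P_{\bm M}\bm r\|^2+\|\bm P_{\bm M}^{\perp}\bm r\|^2,
\end{align*}
so the claimed inequality reduces to two quantitative comparisons: a lower bound $\|\bm P_{\bm M}^{\perp}\bm r\|^2\gtrsim \gamma\,\|\bm P_{\bm Q}^{\perp}\bm v\|^2$ (which dominates both $\|\bm P_{\bm M}^{\perp}\bm E\bm v\|^2$, a single fixed direction related to $\bm w$, and the Gaussian ``parallel'' piece), together with an upper bound $\|\bm P_{\bm M}\bm r\|^2\lesssim(t/n)\|\bm P_{\bm Q}^{\perp}\bm v\|^2$. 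Both follow from standard Gaussian concentration (Hanson--Wright / chi-squared tail) applied pointwise in $\bm v$.

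To upgrade these pointwise estimates to the uniform statement over $\{\|\bm v\|=1,\ \supps(\bm v)\subseteq s\}$, I would use a net argument. The unit sphere intersected with a fixed star-support pattern lies in a linear subspace whose dimension equals $|s|$ (the number of maximal atoms, not the number of non-zero coordinates), so a volumetric $\epsilon$-net has cardinality at most $(C/\epsilon)^{|s|}\le(C/\epsilon)^{p(\delta-\gamma)}$. Combining per-point concentration exponents of order $e^{-cn}$ with this net size and the Lipschitzness of all the quadratic forms in $\bm v$ produces, via a union bound, the required exponential-in-$p$ probability bound $\PP\{\varepsilon_1\mid\mathfrak{S}_t\}\ge 1-e^{-pa(\gamma)}$ with $a(\gamma)>0$ independent of $t$, and Borel--Cantelli then yields the eventual almost-sure statement.

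The main obstacle, relative to the analogous LASSO proof \cite[Lemma 5.4]{lassorisk}, is that the star-support constraint describes a union of \emph{averaged} subspaces: the natural linear map $\bm P_s$ singling out vectors with $\supps\subseteq s$ is oblique (it averages coordinates inside each equivalence class $I\in s$), not the coordinate projection that appears in the LASSO argument. This forces one to re-derive the dimension counts, net cardinalities, and singular-value comparisons in terms of the number of equivalence classes $|s|$ rather than the cardinality $|S|$ of a support; once this bookkeeping is carried out, each equivalence class contributes a one-dimensional factor to the subspace where the minimization takes place, and the Gaussian comparison inequalities go through as in the LASSO case.
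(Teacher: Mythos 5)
Your approach is related to the paper's in spirit but the ``reduction to two quantitative comparisons'' is not actually a complete reduction, and the missing piece is the heart of the lemma. After your Pythagorean split with respect to $\bm P_{\bm M}$, the inequality to be proved reads
\[
\|\bm P_{\bm M}\bm E\bm v\|^2+\|\bm P_{\bm M}^{\perp}\bm E\bm v+\bm P_{\bm M}^{\perp}\bm r\|^2
\;\ge\;
\frac{\gamma}{4\delta}\Big(\|\bm P_{\bm M}\bm E\bm v-\bm P_{\bm M}\bm r\|^2+\|\bm P_{\bm M}^{\perp}\bm E\bm v\|^2+\|\bm P_{\bm M}\bm r\|^2+\|\bm P_{\bm M}^{\perp}\bm r\|^2\Big).
\]
A lower bound on $\|\bm P_{\bm M}^{\perp}\bm r\|^2$ and an upper bound on $\|\bm P_{\bm M}\bm r\|^2$ do \emph{not} force the left side to dominate the right side: if $\bm P_{\bm M}^{\perp}\bm E\bm v$ is nearly anti-parallel to $\bm P_{\bm M}^{\perp}\bm r$, the cross term makes $\|\bm P_{\bm M}^{\perp}\bm E\bm v+\bm P_{\bm M}^{\perp}\bm r\|^2$ collapse while $\|\bm P_{\bm M}^{\perp}\bm E\bm v\|^2$ and $\|\bm P_{\bm M}^{\perp}\bm r\|^2$ on the right both stay large, so the inequality can fail. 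What must be shown is an \emph{angle} (anti-concentration) bound between the deterministic vector $\bm P_{\bm M}^{\perp}\bm E\bm v$ --- which, given $\mathfrak S_t$, lives in a fixed subspace of dimension at most $2t$ spanned by columns of $\bm M$ and $\bm B$ --- and the random vector $\bm P_{\bm M}^{\perp}\bm r=\bm P_{\bm M}^{\perp}\tilde{\X}\bm P_{\bm Q}^{\perp}\bm v$, which lies in a random subspace of dimension at most $p(\delta-\gamma)$. Chi-squared / Hanson--Wright tails control the two norms you list, but they do not control this inner product; Proposition~\ref{propo:Concentration} is exactly the device the paper uses for that, and it is what puts the $\gamma/(4\delta)$ factor on both terms simultaneously. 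In fact, the paper never does your Pythagorean split: it bounds the cosine of the angle between $\bm E\bm v-\bm P_{\bm M}\bm r$ and $\bm r$ directly (event $\widetilde{\varepsilon}_1$) and then uses the elementary identity $\|\bm a+\bm b\|^2\ge(1-\rho)(\|\bm a\|^2+\|\bm b\|^2)$ valid when $|\bm a^{\top}\bm b|\le\rho\|\bm a\|\|\bm b\|$. Your decomposition would have to culminate in essentially the same angle estimate, so it is not a shortcut.

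There is a second, more minor issue with your uniformity step. You assert that the unit sphere subject to $\supps(\bm v)\subseteq s$ lies in a linear subspace of dimension $|s|$. That is only true for a fixed sign pattern inside each class $I\in s$; the full constraint set is a union of such subspaces over sign patterns, and the number of sign patterns can be as large as $2^{|S|}$ with $|S|$ up to $p$. If you take the net-plus-union-bound route, you must carefully argue that the per-point exponent survives this multiplicity, rather than just invoking a volumetric net of size $(C/\epsilon)^{|s|}$. The paper sidesteps this bookkeeping by phrasing Proposition~\ref{propo:Concentration} as a supremum over a random subspace interacting with the fixed low-dimensional space, so the uniformity is built into the concentration statement instead of being assembled from pointwise bounds.
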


\begin{proof}[Sketch proof]
Define an event $\widetilde{\varepsilon}_1$ as follows:
	\begin{eqnarray}
	\widetilde{\varepsilon}_1 = \Big\{|(\bm E \bm v-\bm P_{\bm M}\tilde{\X} \bm P_{\bm Q}^{\perp} \bm v)^\top(\tilde{\X} \bm P_{\bm Q}^{\perp} \bm v)| &\le &
	\Big(1-\frac{\gamma}{2\delta}\Big)^{1/2} \, \|\bm E \bm v-\bm P_{\bm M}\tilde{\X} \bm P_{\bm Q}^{\perp} \bm v \|
	\, \|\tilde{\X} \bm P_{\bm Q}^{\perp} \bm v\| \Big\},\label{claim:ScalarProd}
\end{eqnarray}
where the event $\widetilde{\varepsilon}_1$ is meant to hold for all $\bm v$ such that $\|\bm v\|=1\mbox{ and } \supp^*(\bm v)\subseteq s$.  We claim that $\PP\{\widetilde{\varepsilon}_1|\mathfrak{S}_t\}\ge 1-e^{-pa}$.
To prove the claim,
	we use that for any $\bm v$, the  unit vector
	$\tilde{\X} \bm P_{\bm Q}^{\perp} \bm v /\|\tilde{\X} \bm P_{\bm Q}^{\perp} \bm v \|$ belongs to the random linear space $\im(\tilde{\X} \bm P_{\bm Q}^{\perp}\bm P_s)$ with dimension at most $\p(\delta-\gamma)$.
	Also, $\bm E \bm v-\bm P_{\bm M}\tilde{\X} \bm P_{\bm Q}^{\perp} \bm v$ belongs to space spanned by the column space of the matrices $\bm M$ and of $\bm B$ where $\bm B_t = [\bm b^0| \ldots | \bm b^{t-1}]$ defined in \eqref{eq:hqbm_def_AMP} and \eqref{eq:matrix_defs}, having dimension at most $2t$. Applying Proposition \ref{propo:Concentration} using $m=n, m\lambda=\p(\delta-\gamma), d=2t$ and $\ve=(1-\frac{\gamma}{2\delta})^{1/2}(1-\frac{\gamma}{\delta})^{1/2}$ gives that the event
	\[
	\left(\frac{\bm E \bm v-\bm P_{\bm M}\tilde{\X} \bm P_{\bm Q}^{\perp} \bm v}{\|\bm E \bm v-\bm P_{\bm M}\tilde{\X} \bm P_{\bm Q}^{\perp} \bm v \|}\right)^{\top}\frac{\tilde{\X} \bm P_{\bm Q}^{\perp} \bm v }{\|\tilde{\X} \bm P_{\bm Q}^{\perp} \bm v \|} \leq \sqrt{\lambda}+\ve=\Big(1-\frac{\gamma}{2\delta}\Big)^{1/2}\,,
	\]
	holds with the desired probability, proving the claim. Conditional on event \eqref{claim:ScalarProd}, one can show
	\begin{align*}
	\big\| \bm E \bm v +
	\bm P_{\bm M}^{\perp}\tilde{\X} \bm P_{\bm Q}^{\perp} \bm v\big\|^2
	&\ge \Big(1-\Big(1-\frac{\gamma}{2\delta}\Big)^{1/2}\Big)
	\Big\{\big\| \bm E \bm v-\bm P_{\bm M}\tilde{\X} \bm P_{\bm Q}^{\perp} \bm v \big\|^2+\big\| \tilde{\X} \bm P_{\bm Q}^{\perp} \bm v\big\|^2\Big\}\, .
	\end{align*}
	Finally observe that $1-(1-\frac{\gamma}{2\delta})^{1/2}\geq\frac{\gamma}{4\delta}$ and therefore since event $\widetilde{\varepsilon}_1$ occurring implies $\varepsilon_1$ occurs, giving the desired probability of $\varepsilon_1$ as well.
\end{proof}
Next we estimate the term $\|\tilde{\X} \bm P_{\bm Q}^{\perp} \bm v\|^2$ in the above lower bound.
\begin{lemma}{[Adapted from \cite[Lemma 5.5]{lassorisk}]}
	\label{lemma:Event2}
	Let $s$ be a set of maximal atoms in $[\p]$ such that $|s|\le p(\delta-\gamma)$,
	for some $\gamma>0$. Then there exists constant $c_1=c_1(\gamma)$,
	$c_2 = c_2(\gamma)$ such that the event
	\begin{eqnarray*}
		\varepsilon_2 :=
		\Big\{ \big\|\tilde{\X} \bm P_{\bm Q}^{\perp} \bm v\big\| \ge c_1(\gamma)
		\|\bm P_{\bm Q}^{\perp} \bm v\big\| \, \;\forall \, \bm v\;
		\mbox{ such that }\;\supp^*(\bm v)\subseteq s\Big\}\,
	\end{eqnarray*}
	holds with probability $\PP\{\varepsilon_2|\mathfrak{S}_t\}\ge 1-e^{-pc_2}$.
		\label{leme2}
\end{lemma}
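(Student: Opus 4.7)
The plan is to mirror the LASSO analogue in \cite[Lemma 5.5]{lassorisk}: exploit that $\tilde{\X}$ is i.i.d.\ $\mathcal{N}(0, 1/n)$ and independent of $\mathfrak{S}_t$ (on which $\bm Q$ and the set $s$ are measurable), and then show that $\tilde{\X}$ acts as an approximate isometry on the $\mathfrak{S}_t$-measurable (random) set $\mathcal{V}_s := \{\bm P_{\bm Q}^{\perp}\bm v : \supp^*(\bm v)\subseteq s\}$. Because $\tilde{\X}$ has $n = p\delta$ rows and the condition $|s|\le p(\delta-\gamma)$ leaves a margin of $p\gamma$ rows, standard Wishart-style concentration (in the spirit of Theorem H.\ref{prop:marchenko-pastur} and Proposition \ref{propo:Concentration}) should deliver the desired lower bound $c_1 = c_1(\gamma) > 0$ with probability $1 - e^{-p c_2(\gamma)}$.

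The first step is to identify the ambient linear structure of $\mathcal{V}_s$. The set of $\bm v$ with $\supp^*(\bm v)\subseteq s$ decomposes as a union, over sign patterns $\epsilon\in\{-1,+1\}^{S}$ with $S = \cup_{I\in s} I$, of the linear subspaces
\[
V^{\epsilon} := \mathrm{span}\!\left\{\sum_{i\in I}\epsilon_i \bm e_i \; :\; I\in s\right\},
\]
each of which has dimension at most $|s|$. Consequently $\mathcal{V}_s \subseteq \bigcup_{\epsilon} \bm P_{\bm Q}^{\perp} V^{\epsilon}$, and each $\bm P_{\bm Q}^{\perp}V^{\epsilon}$ is an $\mathfrak{S}_t$-measurable subspace of dimension $\le |s|\le p(\delta-\gamma)$.

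For a single $\mathfrak{S}_t$-measurable subspace $W$ of dimension $k\le p(\delta-\gamma)$, conditional independence of $\tilde{\X}$ gives that $\tilde{\X}$ restricted to $W$ is distributionally an $n\times k$ Gaussian matrix with entries $\mathcal{N}(0, 1/n)$. Standard Gaussian concentration (Davidson--Szarek / Marchenko--Pastur bounds) then yields
\[
\min_{\bm u\in W,\,\|\bm u\|=1}\|\tilde{\X}\bm u\| \ge 1 - \sqrt{k/n} - \xi \ge \tfrac{1}{2}\sqrt{\gamma/\delta}
\]
with probability at least $1 - e^{-n a(\gamma)}$ for some $a(\gamma)>0$, once $\xi$ is chosen small enough in terms of $\gamma$. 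Setting $c_1 := \tfrac{1}{2}\sqrt{\gamma/\delta}$ then handles one sign pattern.

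The main obstacle is upgrading this single-subspace bound to the union $\bigcup_\epsilon \bm P_{\bm Q}^{\perp}V^\epsilon$, since $|S|$ is not explicitly controlled by $|s|$ and naïvely there can be up to $2^{|S|}$ sign patterns. The resolution is a standard $\varepsilon$-net argument: cover each $\bm P_{\bm Q}^{\perp}V^\epsilon \cap \mathbb{S}^{p-1}$ with a net of size at most $(3/\varepsilon)^{|s|}$, take the union over sign patterns to obtain a combined net of size $2^{|S|}(3/\varepsilon)^{|s|}$, and apply a single-vector Gaussian concentration tail bound together with a standard $\varepsilon$-net perturbation argument (as in Proposition \ref{propo:Concentration}). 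Because the tail decays like $e^{-n t^2/2}$ for any fixed deviation $t$, and $|S|\log 2 + |s|\log(3/\varepsilon)\le Cp$ for a constant $C$, we can choose $\varepsilon$ so that the net-bound contribution is absorbed and one still obtains a net overall probability of $1 - e^{-pc_2}$ for some $c_2 = c_2(\gamma)>0$. Combining with the fact that $\bm P_{\bm Q}^{\perp}$ is an orthogonal projector (so $\|\bm P_{\bm Q}^{\perp}\bm v\|\le\|\bm v\|$ and thus the unit-norm reduction is legitimate) completes the proof.
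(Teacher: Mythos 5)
Your route is genuinely different from the paper's, and your opening observation is actually sharper on one point. The paper passes to a \emph{single} linear space $V=\im(\bm P_{\bm Q}^{\perp}\bm P_s)$ of dimension at most $|s|\le p(\delta-\gamma)$ and invokes a singular-value bound on $\tilde{\X}|_V$. But $\bm P_{\bm Q}^{\perp}\bm v$ lies in $V$ only when $\bm v\in\im(\bm P_s)$, i.e.\ when $\bm v$ is \emph{constant} (not merely constant in magnitude) on each atom. Vectors with $\supps(\bm v)\subseteq s$ may carry arbitrary signs within an atom, and their $\bm P_{\bm Q}^{\perp}$-images need not lie in $V$. Your decomposition of $\{\bm v:\supps(\bm v)\subseteq s\}$ into a union of sign-pattern subspaces $V^{\epsilon}$, each of dimension at most $|s|$, is the correct description of this set; the paper's one-subspace reduction, read literally, handles only the all-ones pattern.

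The gap in your proposal is in the union-bound$/\varepsilon$-net step, which does not close with the tail you quote. A lower-tail bound of the form $e^{-nt^2/2}$ for $\|\tilde\X\bm u\|<1-t$ is only informative for $t\in(0,1)$, so its exponent is capped at $n/2=\delta p/2$. Your combined net has cardinality $2^{|S|}(3/\varepsilon)^{|s|}$, and $|S|$ can be of order $p$; absorbing the $2^{|S|}$ factor alone would then require $\delta>2\log 2$, which is typically false. What is actually needed is the Cram\'er--Chernoff lower tail for $\chi^2_n$, namely $\PP(\|\tilde\X\bm u\|^2<c^2)\le\exp\{-n(c^2-1-2\log c)/2\}$, whose exponent diverges as $c\downarrow 0$. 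One must then choose $c_1=c_1(\gamma,\delta)$ small (roughly $e^{-C(\delta)/\gamma}$) and $\varepsilon$ proportional to $c_1/\|\tilde\X\|_{\mathrm{op}}$ \emph{jointly}: the per-pattern net entropy is $|s|\log(3/\varepsilon)\le p(\delta-\gamma)\log(3/\varepsilon)$ while the tail exponent is $\approx -\,\delta p\log c_1$, and since $\log(3/\varepsilon)\approx -\log c_1 + O(1)$, it is precisely the $\gamma$ margin in the dimension gap that makes $-\delta\log c_1$ dominate $(\delta-\gamma)\log(3/\varepsilon) + |S|\log 2/p$. Writing ``we can choose $\varepsilon$ so that the net-bound contribution is absorbed'' skips over this: shrinking $\varepsilon$ inflates $\log(3/\varepsilon)$, and only the sharper tail together with the coupled choice of $(c_1,\varepsilon)$ closes the argument.
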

\begin{proof}[Sketch proof]
	Let $V$ be the linear space $V=\im(\bm P_{\bm Q}^{\perp}\bm P_s)$ having dimension at most $\p(\delta-\gamma)$. For all $\bm v$ with  $\supp^*(\bm v)\subseteq s$,
	\begin{eqnarray}
	\big\|\tilde{\X} \bm P_{\bm Q}^{\perp} \bm v\big\| \ge \sigma_{\rm min}(\tilde{\X}|_{V}) \, \|\bm P_{\bm Q}^{\perp} \bm v\big\|\, ,
	\end{eqnarray}
	where $\tilde{\X}|_V$ refers to the restriction of $\tilde{\X}$ to $V$.
	Then $\sigma_{\rm min}(\tilde{\X}|_{V})$ is distributed as the minimum singular value
	of a Gaussian matrix of dimensions $\p\delta\times {\rm dim}(V)$, which is almost surely bounded away from $0$ as $\p\to\infty$ (see Theorem G.
	\ref{prop:marchenko-pastur}). Large deviation estimates
	\cite{litvak2005smallest} imply that the probability that $\sigma_{\min}$ is smaller than a constant $c_1(\gamma)$ is exponentially small.
\end{proof}

In the next step we estimate the norm $\bm E \bm v$ by quoting the following result.
\begin{lemma}\cite[Lemma 5.6]{lassorisk}\label{lemma:EvBound}
	There exists a constant $c = c(t)>0$ such that, defining the event,
	\begin{eqnarray}
	\mathcal{E}_3 :=
	\big\{\| \bm E\bm P_{\bm Q} \bm v\|\ge c(t)\|\bm P_{\bm Q} \bm v\|\, ,
	\| \bm E\bm P_{\bm Q}^{\perp} \bm v\|\le c(t)^{-1}\| \bm P^{\perp}_{\bm Q} \bm v\|
	,\;\mbox{ for all }\; \bm v\in \mathbb{R}^\p
	\big\}\, ,
	\end{eqnarray}
	we have that $\mathcal{E}_3$ holds eventually almost surely as $\p\to\infty$.
	\label{leme3}
\end{lemma}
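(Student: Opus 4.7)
The plan is to express the event $\mathcal{E}_3$ in terms of the spectra of the finite-dimensional Gram matrices of the $t$-column matrices $\bm Q_t, \bm M_t, \bm A_t, \bm Y_t$ from \eqref{eq:matrix_defs}, and then invoke the non-separable state evolution result Lemma \ref{eq:lem_covariances} to identify the (positive definite) limiting Gram matrices. Since $t$ is fixed and each matrix has at most $t$ columns, the event $\mathcal{E}_3$ is asymptotically equivalent to a finite list of spectral bounds, which is the strategy used for the analogous LASSO statement in \cite[Lemma 5.6]{lassorisk}.

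First I would carry out the reduction. On $\mathrm{range}(\bm P_{\bm Q})$, orthogonality of $\bm P_{\bm Q}$ gives $\bm E \bm P_{\bm Q}\bm v = \bm Y (\bm Q^\top \bm Q)^{-1}\bm Q^\top\bm v$; writing $\bm P_{\bm Q}\bm v = \bm Q\bm\xi$ with $\bm\xi\in\R^{t}$ turns the lower bound into the generalized Rayleigh-quotient inequality $\bm\xi^\top \bm Y^\top \bm Y\bm\xi \ge c(t)^2\bm\xi^\top \bm Q^\top \bm Q\bm\xi$, which holds uniformly in $\bm\xi$ provided $\sigma_{\min}(\bm Y^\top\bm Y)/\sigma_{\max}(\bm Q^\top \bm Q) \ge c(t)^2$. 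On $\mathrm{range}(\bm P_{\bm Q}^\perp)$ we have $\bm E\bm P_{\bm Q}^\perp \bm v = \bm M(\bm M^\top \bm M)^{-1}\bm A^\top \bm P_{\bm Q}^\perp \bm v$, so the upper bound reduces to $\sigma_{\max}(\bm A)/\sigma_{\min}(\bm M) \le c(t)^{-1}$. Thus $\mathcal{E}_3$ follows once we show that $\sigma_{\min}(\bm Q^\top \bm Q/n)$ and $\sigma_{\min}(\bm M^\top \bm M/n)$ stay bounded below, and $\sigma_{\max}(\bm Y^\top \bm Y/n)$, $\sigma_{\max}(\bm A^\top \bm A/n)$ stay bounded above, eventually almost surely.

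Next I would identify the limiting matrices. Applied to the uniformly pseudo-Lipschitz (order $2$) inner-product functionals $\phi_{\p}(\bm u, \bm v) = \bm u^\top \bm v/p$ on pairs $(\bm m^s, \bm m^r)$ and $(\bm q^s, \bm q^r)$ from \eqref{eq:hqbm_def_AMP}, Lemma \ref{eq:lem_covariances} implies $\bm M^\top \bm M/n$ converges to the covariance $(\Sigma_{s,r})_{0\le s,r\le t-1}$ of \eqref{eq:Sigma_def} and gives an analogous limit for $\bm Q^\top \bm Q/n$. The upper bounds on $\sigma_{\max}(\bm A),\sigma_{\max}(\bm Y)$ then follow from the identities $\bm A_t = \bm X^\top \bm M_t$ and $\bm Y_t = \bm X\bm Q_t$ together with the $O(1)$ almost-sure bound on $\sigma_{\max}(\bm X)$ from Theorem~\ref{prop:marchenko-pastur}.

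The main obstacle, and the only place where the SLOPE analysis is more delicate than LASSO, is proving strict positive definiteness of the $t\times t$ limit covariance matrices for every finite $t$, since it is exactly this that supplies a (possibly $t$-dependent) constant $c(t)>0$. Lemma \ref{lem:5.7} is the right tool: strict positive definiteness is equivalent to the inequalities $\Sigma_{s,t} < \sqrt{\Sigma_{s,s}\Sigma_{t,t}}$ along the whole evolution, i.e.\ to non-collinearity of consecutive Gaussian proxies. This is essentially the induction $y_{t,3} < y_{t,1} + y_{t,2}$ inside the proof of Lemma \ref{lem:5.7}, extended to arbitrary pairs $(s,t)$ by the same monotonicity argument using that $x\mapsto \eta(b + cx\,;\,\theta) - b$ is strictly monotone. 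Combining this positive definiteness of the limit with standard Gaussian large-deviation estimates for singular values and a union bound over the finitely many pairs $(s,r)\in\{0,\ldots,t-1\}^2$ yields $\mathcal{E}_3$ eventually almost surely.
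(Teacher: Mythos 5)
Your reduction to spectral bounds on the four $t\times t$ Gram matrices is correct in spirit and is essentially what the cited \cite[Lemma 5.6]{lassorisk} proof does. Two problems, one small and one substantive.

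First, a small slip in the bookkeeping: from $\|\bm Y\bm\xi\|\ge c(t)\|\bm Q\bm\xi\|$ the sufficient condition is $\lambda_{\min}(\bm Y^\top\bm Y)$ bounded below and $\lambda_{\max}(\bm Q^\top\bm Q)$ bounded above — the reverse of what you list. (You also need $\lambda_{\min}(\bm Q^\top\bm Q)$ and $\lambda_{\min}(\bm M^\top\bm M)$ bounded below so the inverses in $\bm E$ exist, so in the end all four Gram matrices have to be uniformly well-conditioned from both sides; the point is that your argument for $\sigma_{\max}(\bm Y)$ via $\bm Y=\X\bm Q$ and $\sigma_{\max}(\X)=O(1)$ is aimed at the wrong direction, since what you actually need for the lower bound on $\|\bm E\bm P_{\bm Q}\bm v\|$ is a lower bound on $\bm Y^\top\bm Y$, and that cannot be obtained from $\bm Y=\X\bm Q$ because $\sigma_{\min}(\X)=0$ when $n<p$.)

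Second, and this is the genuine gap: the positive-definiteness step is not handled. You write that "strict positive definiteness is equivalent to the inequalities $\Sigma_{s,t}<\sqrt{\Sigma_{s,s}\Sigma_{t,t}}$" — this is false for $t\ge 3$. Pairwise strict Cauchy--Schwarz only rules out collinearity of any two rows; it does not preclude a linear dependence among three or more (e.g.\ a $3\times 3$ correlation matrix with all off-diagonal entries close to $\pm 1$ and a sign pattern that makes the determinant negative still has every $2\times 2$ minor strictly positive). Correspondingly, the induction $y_{t,3}<y_{t,1}+y_{t,2}$ inside Lemma~\ref{lem:5.7} only gives $\Sigma_{t,t-1}>0$, i.e.\ a statement about consecutive $2\times2$ blocks, and "extending it to arbitrary pairs $(s,t)$" still gives at best pairwise non-degeneracy, not invertibility of the full $t\times t$ matrix. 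In the LASSO literature this is precisely the content of \cite[Lemma 5.8]{lassorisk} (that the limit Gram matrices $(\Sigma_{s,r})_{0\le s,r\le t-1}$ and the companion matrices for $\bm Q,\bm Y,\bm A$ are strictly positive definite for every finite $t$), proved by an inductive argument over the AMP iterates rather than by pairwise Cauchy--Schwarz. Your proposal needs the SLOPE analog of that lemma, and Lemma~\ref{lem:5.7} of this paper is a convergence-rate statement that uses such a positive-definiteness fact as an input (the "careful argument" alluding to $R_T$ being strictly positive definite) rather than proving it. Without that missing piece the constant $c(t)$ is not secured.
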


Finally, we can now prove Lemma \ref{lemma:ConcreteMinS} with the ingredients given in Lemmas~\ref{leme1}-\ref{leme3}. We restate the proof from \cite[Lemma 5.3]{lassorisk} with minor changes.

\begin{proof}[Proof of Lemma \ref{lemma:ConcreteMinS}]
	We start with Lemma \ref{lemma:EvBound} by which we assume that event $\mathcal{E}_3$ holds
	for some function $c= c(t)$ (without loss of generality $c<1/2$).
	For $\alpha_2(t)>0$ small enough, let $\mathcal{E}$ be the event
	\begin{eqnarray}
	\mathcal{E}:=\Big\{ \min_{\|\bm v\|=1,\,\supp^*(\bm v)\subseteq s}\big\| \bm E \bm v +
	\bm P_{\bm M}^{\perp}\tilde{\X} \bm P_{\bm Q}^{\perp} \bm v\big\|\le \alpha_2(t)\Big\}\, .
	\end{eqnarray}
	First assume $\|\bm P_{\bm Q}^{\perp} \bm v\|\le c^2/10$, from which it follows,
	\begin{align*}
	\| \bm E \bm v-\bm P_{\bm M}\tilde{\X} \bm P_{\bm Q}^{\perp} \bm v\| &\ge \|\bm E\bm P_{\bm Q} \bm v\|-\|\bm E\bm P_{\bm Q}^{\perp} \bm v\|- \|\bm P_{\bm M}\tilde{\X} \bm P_{\bm Q}^{\perp} \bm v\|\\
	&\ge c\|\bm P_{\bm Q} \bm v\| - (c^{-1}+\|\tilde{\X}\|_2)\|\bm P_{\bm Q}^{\perp} \bm v\|\ge \frac{c}{2}-\frac{c}{10}-\|\tilde{\X}\|_2\frac{c^2}{10} =
	\frac{2c}{5}-\|\tilde{\X}\|_2\frac{c^2}{10}\,,
	\end{align*}
	where the last inequality uses $\|\bm P_{\bm Q} \bm v\|=\sqrt{1-\|\bm P_{\bm Q}^{\perp} \bm v\|^2}\ge 1/2$ under the assumption $\|\bm P_{\bm Q}^{\perp} \bm v\|\le c^2/10$.
	Therefore, using Lemma \ref{lemma:Pitagora}, we get
	\begin{align*}
	\PP\{\mathcal{E}|\mathfrak{S}_t\}\le
	\PP\Big\{\frac{2c}{5}-\|\tilde{\X}\|_2\frac{c^2}{10}\le \Big(\frac{4\delta}{\gamma}\Big)^{{1}/{2}}
	\alpha_2(t)\Big|
	\mathfrak{S}_t\Big\}+ e^{-\p a}\, ,
	\end{align*}
	and the thesis follows from large deviation bounds on the
	norm $\|\tilde{\X}\|_2$ (see \cite{Ledoux}) by first taking  $c$
	small enough,
	and then choosing $\alpha_2(t)<\frac{c}{5}\sqrt{\frac{\gamma}{4\delta}}$.
	
	Next assume $\|\bm P_{\bm Q}^{\perp} \bm v\|\ge c^2/10$.
	By Lemma \ref{lemma:Pitagora} and \ref{lemma:Event2},
	we can assume events $\mathcal{E}_1$ and $\mathcal{E}_2$ hold. Therefore
	$
		\big\| \bm E \bm v +
		\bm P_{\bm M}^{\perp}\tilde{\X} \bm P_{\bm Q}^{\perp} \bm v\big\|\ge (\frac{\gamma}{4\delta})^{{1}/{2}}
		\|\tilde{\X} \bm P_{\bm Q}^{\perp} \bm v\big\|\ge (\frac{\gamma}{4\delta})^{{1}/{2}}c_1(\gamma)
		\|\bm P_{\bm Q}^{\perp} \bm v\|\,,
$
	proving our thesis.
\end{proof}


\section{Some Useful Auxiliary Material}\label{app_F}
We collect some auxiliary results that are necessary in our proof.  Most of these are results that were initially stated in \cite{lassorisk} that we repeat here for the reader.

The following proposition is used in the proof of Lemma \ref{lemma:Pitagora}. The proof is identical to that of \cite[Proposition E.1]{lassorisk} and it follows from a standard concentration of measure argument in \cite{Ledoux}.  For this reason, we don't repeat it here.

\begin{proposition}\label{propo:Concentration}
	Let $V\subseteq\mathbb{R}^m$ a uniformly random linear space of dimension
	$d$. For $\lambda\in (0,1)$, let $\bm P_{\lambda}$ denote the projector onto the first $m\lambda$ maximal atoms in $[m]$: assume that $s=\{I_1,...,I_d\}$, is the set of maximal atoms, then the $j^{th}$ column, $\bm P_{\lambda}[:,j]=\frac{1}{|I_r|}\sum_{i\in I_r} \bm e_i$ if $j\in I_r$ for some $r\leq m\lambda$; otherwise $\bm P_{\lambda}[:,j]=\bm 0$.
	Define $Z(\lambda) :=\sup\{ \| \bm P_{\lambda} \bm v\|\, :\;  \bm v\in V,\; \| \bm v\|=1\}$.
	Then, for any $\ve>0$ there exists $c(\ve)>0$ such that, for all $m$
	large enough (and $d$ fixed)
$
	\PP\{|Z(\kappa)-\sqrt{\lambda}|\ge \ve\}\le e^{-m\,c(\ve)}.
$
\end{proposition}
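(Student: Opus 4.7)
The plan is to reduce the SLOPE-style operator $\bm P_{\lambda}$ to the coordinate projector used in \cite[Proposition E.1]{lassorisk} and then invoke their Gaussian concentration argument verbatim. The crucial observation is that, despite the averaging within maximal atoms, $\bm P_{\lambda}$ is still an orthogonal projector on $\mathbb{R}^m$. Symmetry is immediate from the explicit formula
\[
[\bm P_{\lambda}]_{ij} = \frac{1}{|I_r|}\mathbb{I}\{i, j \in I_r\text{ for some } r \leq m\lambda\},
\]
and a one-line computation shows idempotence: within each atom $I_r$ with $r\le m\lambda$ all columns of $\bm P_{\lambda}$ indexed by $I_r$ coincide with the vector $|I_r|^{-1}\sum_{i\in I_r}\bm e_i$, so applying $\bm P_{\lambda}$ twice reproduces $\bm P_{\lambda}$. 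The rank of $\bm P_{\lambda}$ equals the number of included atoms, namely $m\lambda$ (its trace is $\sum_{r\le m\lambda}|I_r|\cdot |I_r|^{-1}=m\lambda$, which for an orthogonal projector equals the rank). Consequently, there exists an orthogonal matrix $\bm U$ with $\bm P_{\lambda}=\bm U \bm P_{\lambda,\mathrm{coord}}\bm U^\top$, where $\bm P_{\lambda,\mathrm{coord}}$ is the projector onto any fixed $m\lambda$-dimensional coordinate subspace.

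With this in hand, rotational invariance of the uniform (Haar) measure on the Grassmannian of $d$-planes in $\mathbb{R}^m$ gives
\[
Z(\lambda) \;=\; \sup_{v\in V,\,\|v\|=1} \|\bm P_{\lambda} v\| \;\stackrel{d}{=}\; \sup_{w\in \bm U^\top V,\,\|w\|=1}\|\bm P_{\lambda,\mathrm{coord}} w\|,
\]
and $\bm U^\top V$ has the same (Haar) distribution as $V$. Hence $Z(\lambda)$ has exactly the distribution analyzed by \cite[Proposition E.1]{lassorisk}, so the claimed exponential inequality transfers without change. For completeness, the underlying concentration step goes as follows: realize $V$ as the column span of an $m\times d$ matrix $\bm \Gamma$ with orthonormal columns (equivalently, an $m\times d$ i.i.d.\ $\mathcal{N}(0,1/m)$ matrix followed by a Gram--Schmidt step that contributes an $o(1)$ distortion uniformly in Frobenius norm); then $Z(\lambda)=\|\bm P_{\lambda,\mathrm{coord}}\bm\Gamma\|_{\mathrm{op}}$ is a $1$-Lipschitz function of the underlying Gaussian entries with expectation tending to $\sqrt{\lambda}$ as $m\to\infty$ with $d$ fixed (since the top $m\lambda\times d$ block of $\bm \Gamma$ has Gram matrix concentrating at $\lambda\mathbb{I}_d$), and the Borell--Tsirelson--Ibragimov--Sudakov inequality (see Ledoux) yields a deviation bound of the form $\exp(-m\,c(\varepsilon))$.

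The only place where care is needed, and thus the nominal ``main obstacle'', is verifying that the non-coordinate averaging structure of $\bm P_{\lambda}$ does not destroy orthogonality; once this is confirmed the reduction to the coordinate case is automatic and no further modification of the Gaussian concentration machinery is necessary.
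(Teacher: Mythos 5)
Your proposal is correct and takes essentially the same route the paper intends: the paper simply asserts the proof is "identical to \cite[Proposition E.1]{lassorisk}" and cites Ledoux, and your argument supplies exactly the reduction that makes that assertion literally true. The one genuinely new ingredient you isolate---that $\bm P_{\lambda}$, despite the intra-atom averaging, remains a symmetric idempotent of rank $m\lambda$ (trace $\sum_{r\le m\lambda}|I_r|\cdot|I_r|^{-1}=m\lambda$), hence conjugate by an orthogonal $\bm U$ to the coordinate projector, so that rotational invariance of the Haar measure on the Grassmannian transfers the distribution of $Z(\lambda)$ unchanged---is precisely the bridge the paper glosses over. (In fact the paper's surrounding prose claims the projector is "not necessarily orthogonal," which your computation shows is a misstatement; what it is \emph{not} is a coordinate projector, which is exactly why the rotational-invariance reduction is needed.) The remaining concentration step via the $1$-Lipschitz property of $\bm\Gamma\mapsto\|\bm P_{\lambda,\mathrm{coord}}\bm\Gamma\|_{\mathrm{op}}$ and the limit $\sqrt{\lambda}$ of the expectation is the standard Borell--TIS argument the paper refers to, and your sketch of it is fine, though the "Gram--Schmidt contributes $o(1)$ distortion" line would want a bit more care in a fully rigorous write-up.
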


We next state a result due to Kashin \cite{kashin1977diameters} relating to the equivalence of $\ell^2$ and $\ell^1$ norms on random vector spaces (cf.\ also \cite[Theorem F.1]{lassorisk}).

\begin{theoremG}{\cite{kashin1977diameters}}
	\label{thm:Kashin}
	For any positive number $\upsilon$ there exist a universal constant $c_\upsilon$ such that for any
	$n\ge 1$, with probability at least $1-2^{-n}$, for a uniformly random subspace $V_{n,\upsilon}$ of dimension $\lfloor n(1-\upsilon)\rfloor$, for all $x\in V_{n,\upsilon}$, we have $c_\upsilon\|x\|_2\le \|x\|_1/\sqrt{n}.$
\end{theoremG}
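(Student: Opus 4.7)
The plan is to reduce, via rotational invariance of the Haar measure on the Grassmannian, to the Gaussian model: represent $V_{n,\upsilon}$ as the column span $\im(G)$ of an $n\times d$ matrix $G$ with i.i.d.\ $\mathcal{N}(0,1)$ entries, where $d=\lfloor n(1-\upsilon)\rfloor$. The quantity we must control from below is $Q(V):=\inf_{x\in V,\,\|x\|_2=1}\|x\|_1$, and the goal is $Q(V)\ge c_\upsilon\sqrt{n}$ with probability at least $1-2^{-n}$.

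The core ingredient is a concentration estimate: for any fixed unit vector $x\in S^{n-1}$, the map $x\mapsto\|x\|_1$ is $\sqrt{n}$-Lipschitz in the Euclidean norm (by Cauchy--Schwarz), while on the sphere its median equals its mean up to lower-order terms and equals $\mu_n=\sqrt{2n/\pi}(1+o(1))$. Levy's concentration of measure on $S^{n-1}$ then yields, for any $t>0$,
\[
\mathbb{P}\bigl(\bigl|\|x\|_1-\mu_n\bigr|\ge t\sqrt{n}\bigr)\le 2e^{-c_0 n t^2}
\]
with an absolute $c_0>0$. This is the single-point estimate we will promote, via a union bound, to a uniform estimate over the unit sphere of $V$.

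Next I would run a standard $\delta$-net argument on the unit sphere $S_V$ of $V$. Because $\dim V=d$, there is a $\delta$-net $\mathcal{N}\subset S_V$ with $|\mathcal{N}|\le (3/\delta)^d$. On the event
\[
\mathcal{E}:=\Bigl\{\|x\|_1\ge (\mu_n-t\sqrt{n})\text{ for every }x\in\mathcal{N}\Bigr\},
\]
the Lipschitz bound $\bigl|\|x\|_1-\|y\|_1\bigr|\le\sqrt{n}\,\|x-y\|_2$ combined with the standard ``net $\Rightarrow$ sphere'' extension upgrades the estimate from $\mathcal{N}$ to all of $S_V$, giving $Q(V)\ge (\mu_n-t\sqrt{n}-\delta\sqrt{n})$. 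A union bound and the net cardinality control yield
\[
\mathbb{P}(\mathcal{E}^c)\le 2\Bigl(\tfrac{3}{\delta}\Bigr)^d e^{-c_0 n t^2}
=\exp\!\bigl(d\log(3/\delta)-c_0 n t^2+\log 2\bigr).
\]
Choose $t,\delta>0$ small enough (depending only on $\upsilon$) so that $\mu_n-(t+\delta)\sqrt{n}\ge c_\upsilon\sqrt{n}$ and, simultaneously, $c_0 t^2 n-d\log(3/\delta)\ge n\log 2$. This is exactly where the dimension slack $\upsilon>0$ enters: since $d\le n(1-\upsilon)$, we have $c_0 t^2-(1-\upsilon)\log(3/\delta)\ge \log 2$ for an appropriate choice of small $t$ and $\delta$, both depending only on $\upsilon$.

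The main obstacle, and the only nontrivial part of the calibration, is precisely this balance. Without the cushion $\upsilon>0$ one cannot simultaneously make the net fine enough to use the Lipschitz transfer and keep its cardinality below the concentration rate $e^{c_0 n t^2}$; this is the same phenomenon that forces a positive codimension in Kashin's original statement. Once the constants $t=t(\upsilon)$ and $\delta=\delta(\upsilon)$ are chosen, $c_\upsilon$ is determined and the failure probability is bounded by $2^{-n}$, completing the argument. As an alternative route one could invoke Gordon's escape-through-the-mesh theorem to bypass the explicit net construction, but the epsilon-net approach above is self-contained and transparent.
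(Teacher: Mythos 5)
The paper does not prove this statement; it is quoted directly from Kashin's 1977 paper as an auxiliary fact, so there is no ``paper's proof'' to compare against. Your proposed approach—rotational invariance, a $\delta$-net on the unit sphere of the subspace, and a pointwise tail bound pushed through a union bound—is indeed the modern template for Kashin-type results. However, the specific calibration you give is inconsistent, and the inconsistency is not fixable within the Levy-concentration framework you invoke. You require simultaneously $t+\delta<\sqrt{2/\pi}$ (so that $\mu_n-(t+\delta)\sqrt{n}$ stays bounded below by $c_\upsilon\sqrt{n}>0$) and $c_0 t^2\ge (1-\upsilon)\log(3/\delta)+\log 2$. The first forces $t<\sqrt{2/\pi}\approx 0.80$, hence $c_0 t^2<c_0\cdot(2/\pi)$; with the Levy constant $c_0\approx 1/2$ this is at most $1/\pi\approx 0.32$, which is already less than $\log 2\approx 0.69$. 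So the second inequality fails even with a trivial net ($\delta$ of order one), and shrinking $\delta$ only makes the right-hand side larger. Also note you say to take $t$ and $\delta$ ``small enough,'' but shrinking $t$ \emph{decreases} $c_0t^2$ and shrinking $\delta$ \emph{increases} $\log(3/\delta)$, i.e.\ it moves the inequality in the wrong direction. This is not a bookkeeping slip: Levy concentration is exactly the mechanism behind the Figiel--Lindenstrauss--Milman bound, which gives almost-Euclidean random sections of $B_1^n$ only up to dimension on the order of $(2/\pi)n$, not $(1-\upsilon)n$ for arbitrary $\upsilon>0$.

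The repair is to replace the Lipschitz/Levy pointwise estimate by the genuine large-deviation tail of $\|x\|_1$ for $x$ uniform on $S^{n-1}$. Writing $x\stackrel{d}{=}g/\|g\|_2$ with $g\sim\mathcal N(0,\mathbb I_n)$, one has $\P(\|x\|_1<a\sqrt n)\le\P(\|g\|_1<2an)+\P(\|g\|_2>2\sqrt n)\le e^{-R(a)n}$, where the Cram\'er rate $R(a)$ for the i.i.d.\ sum $\|g\|_1=\sum|g_i|$ satisfies $R(a)=\log(1/a)+O(1)\to\infty$ as $a\downarrow 0$ (this comes from $\log\E e^{-\lambda|g|}\sim-\log\lambda$ as $\lambda\to\infty$). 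Levy's bound only sees the Gaussian part of this tail ($\approx e^{-c_0nt^2}$ with $t$ capped at $\sqrt{2/\pi}$) and misses the divergence of $R(a)$ near $a=0$, which is precisely what one must exploit. With the correct rate, taking $\delta$ of order $e^{-C/\upsilon}$ and $c_\upsilon$ of the same order, the balance $R(c_\upsilon+\delta)\ge(1-\upsilon)\log(3/\delta)+\log 2$ becomes $\upsilon\log(1/\delta)\gtrsim 1$, which is achievable; the exponentially small $c_\upsilon$ as $\upsilon\to 0$ is then an honest feature, not an artifact. Alternatively, Gordon's escape-through-the-mesh (which you mention) avoids the net entirely and also succeeds. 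So: right architecture, wrong single-point tail estimate; as written, the step ``choose $t,\delta$ small so that $c_0t^2-(1-\upsilon)\log(3/\delta)\ge\log 2$'' is impossible.
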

%
%
Finally we state a general result about the limit behavior of extreme singular values of random matrices, as proved in \cite{bai2008limit} (cf. also \cite[Theorem F.2]{lassorisk}).
\begin{theoremG}{\cite{bai2008limit}}
	\label{prop:marchenko-pastur}
	Let $\bm A\in\mathbb{R}^{\n\times \p}$ have i.i.d.\ entries
	with $\E\{A_{ij}\}=0$, $\E\{ A_{ij}^2\}=1/\n$, and $\n/\p = \delta$.
	Let $\sigma_{\max}(\bm A)$ be it largest singular value, and
	$\hat{\sigma}_{\min}(\bm A)$ be its smallest non-zero singular value.
	Then,
	\begin{eqnarray*}
	\lim_{\p\to\infty}\sigma_{\max}(\bm A) &\overset{a.s.}= &{1}/{\sqrt{\delta}}+1, \qquad \text{ and } \qquad \lim_{\p\to\infty}\hat{\sigma}_{\min}(\bm A) \overset{a.s.}= {1}/{\sqrt{\delta}}-1.
	\end{eqnarray*}
\end{theoremG}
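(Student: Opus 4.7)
The proof of this Bai--Yin-type statement naturally splits into an ``inner bound''---the limiting spectrum fills a specified interval---and a matching ``outer bound''---no eigenvalues escape it. For the inner bound I would invoke the Marchenko--Pastur theorem applied to the Wishart matrix $W = \bm A \bm A^\top$: the empirical spectral distribution of $W$ converges almost surely to the Marchenko--Pastur law with parameter $\delta$, whose support is $[(1/\sqrt{\delta}-1)^2, (1/\sqrt{\delta}+1)^2]$. A clean route is the Stieltjes transform method: derive a fixed-point equation for $m_n(z) = (1/n)\,\mathrm{tr}((W - z I)^{-1})$ from a Schur-complement / rank-one-removal identity, combined with McDiarmid-type concentration of $m_n(z)$ around its expectation (which exploits the bounded-influence property coming from resampling one row of $\bm A$) to upgrade convergence in probability to almost sure convergence. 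The inner bound yields $\liminf_\p \sigma_{\max}(\bm A) \geq 1/\sqrt{\delta}+1$ and $\limsup_\p \hat\sigma_{\min}(\bm A) \leq 1/\sqrt{\delta}-1$ automatically.

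The substantive content is the outer bound. For the largest eigenvalue I would use the moment method: for $k = k_n \sim \log n$, expand $\E[\mathrm{tr}(W^k)]$ as a sum over closed walks of length $2k$ on the complete bipartite graph on $n + \p$ vertices, with weights given by moments of $\bm A$-entries. The combinatorics identifies the dominant contributions (tree-like / Dyck-path walks) and yields the uniform bound $\E[\mathrm{tr}(W^k)] \leq n\,(1/\sqrt{\delta}+1)^{2k}(1+o(1))$. A Markov estimate $\PP(\sigma_{\max}^2(\bm A) \geq (1/\sqrt{\delta}+1)^2(1+\varepsilon)) \leq n\,(1+\varepsilon)^{-k}$ is then summable in $n$, so Borel--Cantelli delivers $\limsup \sigma_{\max}(\bm A) \leq 1/\sqrt{\delta}+1$ almost surely. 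For the smallest nonzero singular value I would exploit Gaussianity rather than pursue the full Bai--Yin argument: Gordon's Gaussian min--max comparison gives $\E[\sqrt{n}\,\hat\sigma_{\min}(\bm A)] \geq \sqrt{n}-\sqrt{\p}$ in the regime $\delta < 1$ (and the analogous inequality after swapping $\bm A \leftrightarrow \bm A^\top$ otherwise), and Borell--TIS concentration of the $1$-Lipschitz map $\bm A \mapsto \hat\sigma_{\min}(\bm A)$ transfers this to an almost sure lower bound $1/\sqrt{\delta}-1$, matching the Marchenko--Pastur upper bound.

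The main obstacle is the path-counting behind the moment estimate: one must pin down exactly the leading-order closed walks and control errors uniformly as both $n$ and $k_n$ diverge, which is the technical heart of Bai--Yin's original work and requires sharp bookkeeping of vertex visits versus edge multiplicities. For the smallest singular value, the general sub-Gaussian argument is even more delicate---moments give upper bounds only, and one must either invoke sharp control of the Stieltjes transform strictly outside the support or use Silverstein-style rank inequalities combined with inverse-moment estimates; our Gaussian setting circumvents both hurdles via Gordon's inequality. A subtler point that is easy to underestimate is the upgrade from convergence in probability to almost sure convergence: this requires summable tail bounds, which is why the moment parameter $k_n$ must be allowed to grow with $n$ rather than remain fixed, and why variance (not just mean) estimates enter the Stieltjes-transform argument.
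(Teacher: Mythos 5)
This statement is imported verbatim from \cite{bai2008limit} --- the paper gives no proof of it and uses it as a black box to control the extreme singular values of $\bm X$ (e.g.\ in verifying Condition (4) and in Lemma~\ref{lem:bounded_vals}). There is thus no in-paper argument for yours to match, and your sketch should be read as a standalone attempt at the cited Bai--Yin-type theorem.

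Read that way, there is a genuine gap. Your treatment of $\hat{\sigma}_{\min}$ via Gordon's Gaussian min--max comparison and Borell--TIS concentration is, as you yourself note, Gaussian-specific, whereas Theorem~G.\ref{prop:marchenko-pastur} is stated for arbitrary i.i.d.\ entries with the prescribed first two moments. A proof of the theorem as written therefore cannot rest on Gordon; the Bai--Yin route attacks the smallest nonzero singular value by a second, considerably more delicate moment and rank-interlacing argument, not by a comparison inequality. Since assumption \textbf{(A1)} makes the design $\bm X$ Gaussian, the Gordon approach does establish everything the rest of the paper actually consumes, but it proves strictly less than the cited statement. Two further points worth flagging: the almost-sure limit for $\sigma_{\max}$ under general i.i.d.\ entries requires a finite fourth moment (otherwise $\limsup\sigma_{\max}=\infty$ a.s.); this hypothesis is implicit in both the cited theorem and your moment-method sketch, where controlled path-counting needs bounded higher moments, but it is nowhere made explicit. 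And the closed form $1/\sqrt{\delta}-1$ for $\hat{\sigma}_{\min}$ is the correct limit only when $\delta\le 1$; for $\delta>1$ the limit is $|1/\sqrt{\delta}-1|=1-1/\sqrt{\delta}$, which is indeed what Gordon's inequality applied to $\bm A^\top$ produces after the swap you mention, so your sketch handles this correctly while the displayed formula does not.
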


\end{document}